\title{On Reward-Free RL with Kernel and Neural Function Approximations: Single-Agent MDP and Markov Game}
\begin{document}

\author{Shuang Qiu\thanks{University of Michigan. 
Email: \texttt{qiush@umich.edu}.} 
	\qquad
		Jieping Ye\thanks{University of Michigan. 
    Email: \texttt{jpye@umich.edu}.}
	\qquad    
       	Zhaoran Wang\thanks{Northwestern University.
    Email: \texttt{zhaoranwang@gmail.com}.}
    \qquad
       	Zhuoran Yang\thanks{
       Princeton University. 
    	Email: \texttt{zy6@princeton.edu}.}
}

\maketitle


\begin{abstract}
To achieve sample efficiency in reinforcement learning (RL), it necessitates efficiently exploring the underlying environment.
Under the offline setting, addressing the exploration challenge  lies in collecting an offline dataset with sufficient coverage.
Motivated by such a challenge, we study the reward-free RL problem, where an agent aims to thoroughly explore the environment without any pre-specified reward function. Then, given any  extrinsic reward, the agent computes the policy via a planning algorithm with offline data collected in the exploration phase. 
Moreover, we tackle this problem under the context of  function approximation, leveraging powerful function approximators.
 Specifically, we propose to explore via an optimistic variant of the value-iteration algorithm incorporating kernel and neural function approximations, where we  adopt the associated exploration bonus as the exploration reward. Moreover, we design exploration and planning algorithms for both single-agent MDPs and zero-sum Markov games and prove that our methods can achieve $\widetilde{\mathcal{O}}(1 /\varepsilon^2)$ sample complexity for generating a $\varepsilon$-suboptimal policy or  $\varepsilon$-approximate Nash equilibrium when given an arbitrary extrinsic reward. To the best of our knowledge, we establish the first provably efficient reward-free RL algorithm with kernel and neural function approximators. 
\end{abstract}

\section{Introduction}

While  reinforcement learning (RL) with function approximations has achieved great empirical success \citep{mnih2015human,silver2016mastering,silver2017mastering,vinyals2019grandmaster}, its application is  mostly enabled by massive interactions with the unknown environment, 
especially when the state space is large and function approximators such as neural networks are employed.   
To achieve sample efficiency, any RL algorithm needs to accurately learn the transition model either explicitly or implicitly, which brings the need of efficient exploration.

Under the setting of offline RL, the agent aims to learn the optimal policy only from an offline dataset collected a priori, without any interactions with the environment. Thus, the collected offline dataset  
should have sufficient coverage of the trajectory generated by the optimal policy. 
However, in real-world RL applications, 
the reward function is often designed by the learner based on the domain knowledge. The learner might have a set of reward functions  to choose from or use an adaptive algorithm for reward design \citep{laud2004theory,grzes2017reward}.
In such a scenario, 
it is often desirable to collect an offline dataset that covers all the possible optimal trajectories associated with a set of reward functions. 
With such a benign offline dataset, for any arbitrary reward function, the RL agent has sufficient information to estimate the corresponding policy.


To study such a problem in a principled manner, we focus on the framework of reward-free RL,
which consists of an exploration phase and a planning phase. Specifically,  
in the exploration phase, the agent interacts with the environment without accessing any pre-specified rewards and collects empirical trajectories for the subsequent planning phase. 
During the planning phase, using the offline data collected in the exploration phase, the agent computes the optimal policy when given an extrinsic reward function, without further interactions with the environment.

Recently, many works focus on designing provably sample-efficient reward-free RL algorithms. For the single-agent tabular case,  \citet{jin2020reward,kaufmann2020adaptive,menard2020fast,zhang2020nearly} achieve  $\tilde{\cO}(\poly(H, |\cS|, |\cA|)/\varepsilon^2)$ sample complexity for obtaining $\varepsilon$-suboptimal policy, where $|\cS|, |\cA|$ are the sizes of state and action space, respectively. 
In view of the large action and state spaces, the works \citet{zanette2020provably,wang2020reward}  theoretically analyze reward-free RL by applying the linear function approximation for the single-agent Markov decision process (MDP), which achieve $\tilde{\cO}( \poly(H, \mathfrak{d})/\varepsilon^2)$ sample complexity with $\mathfrak{d}$ denoting the dimension of the feature space. However, RL algorithms combined with nonlinear function approximators such as kernel and neural function approximators have shown great empirical successes in a variety of application
problems (e.g., \citet{duan2016benchmarking,silver2016mastering,silver2017mastering,wang2018deep,vinyals2019grandmaster}), thanks to their expressive power. 
  On the other hand, although reward-free RL algorithms for the multi-player Markov games in the tabular case have been studied in \citet{bai2020provable,liu2020sharp}, there is still a lack of works theoretically studying multi-agent scenarios with the function approximation. Thus, the following question remains open:

\begin{center}
\emph{Can we design provably efficient reward-free RL algorithms with kernel and neural function approximations for both single-agent MDPs and Markov games? }    
\end{center}

The main challenges of answering the above question lie in how to appropriately integrate nonlinear approximators into the framework of reward-free RL and how to incentivize the exploration by designing exploration rewards and bonuses that fit such approximation. In this paper, we provide an affirmative answer to the above question by tackling these challenges. Our contributions are summarized as follows:

\vspace{2pt}
\noindent\textbf{Contributions.} In this paper, we first propose provable sample and computationally efficient reward-free RL algorithms with kernel and neural function approximations for the single-agent MDP setting. 
Our exploration algorithm is an optimistic variant of the least-squares value iteration algorithm, incorporating kernel and neural function approximators, which adopts the associated (scaled) bonus as the exploration reward. Further with the planning phase, our method achieves an $\tilde{\mathcal{O}}(1/\varepsilon^2)$ sample complexity to generate an $\varepsilon$-suboptimal policy for an \emph{arbitrary} extrinsic reward function. Moreover, we extend the proposed method for the single-agent setting to the zero-sum Markov game setting such that the algorithm can achieve an $\tilde{\cO}(1/\varepsilon^2)$ sample complexity to generate a policy pair which is an $\varepsilon$-approximate Nash equilibrium. Particularly, in the planning phase for Markov games, our algorithm only involves finding the Nash equilibrium of matrix games formed by Q-function that can be solved \emph{efficiently}, which is of independent interest. The sample complexities of our methods match the $\tilde{\cO}(1/\varepsilon^2)$ results in existing works for tabular or linear function approximation settings. To the best of our knowledge, we establish the first provably efficient reward-free RL algorithms with kernel and neural function approximators for both single-agent and multi-agent settings.

\vspace{5pt}
\noindent\textbf{Related Work.} There have been a lot of works focusing on designing provably efficient reward-free RL algorithms for both single-agent and multi-agent RL problems. For the single-agent scenario, \citet{jin2020reward} formalizes the reward-free RL for the tabular setting and provide theoretical analysis for the proposed algorithm with an $\tilde{\cO}(\poly(H, |\cS|, |\cA|)/\varepsilon^2)$ sample complexity for achieving $\varepsilon$-suboptimal policy. The sample complexity for the tabular setting is further improved in several recent works \citep{kaufmann2020adaptive,menard2020fast,zhang2020nearly}.  Recently, \citet{zanette2020provably,wang2020reward} study the reward-free RL from the perspective of the linear function approximation. For the multi-agent setting, \citet{bai2020provable} studies the reward-free exploration for the zero-sum Markov game for the tabular case. \citet{liu2020sharp} further proposes provable reward-free RL algorithms for multi-player general-sum games. 

Our work is also closely related to a line of works that study RL algorithms with function approximations. There are a great number of works \citep{yang2019sample,yang2020reinforcement,cai2019provably,zanette2020frequentist,jin2020provably,wang2019optimism,ayoub2020model,zhou2020provably,kakade2020information} studying different RL problems with (generalized) linear function approximation. Furthermore, \citet{wang2020provably} proposes an optimistic least-square value iteration (LSVI) algorithm with general function approximation. Our work is more related to the problems of kernelized contextual bandits \citep{srinivas2009gaussian,valko2013finite,chowdhury2017kernelized}, neural contextual bandits \citep{zhou2020neural}, and reward-based online RL with kernel and neural function approximations \citep{yang2020provably}.
Specifically, \citep{srinivas2009gaussian,valko2013finite,chowdhury2017kernelized} propose algorithms for kernelized contextual bandits with constructing corresponding upper confidence bound (UCB) bonuses to encourage exploration. For neural contextual bandits, \citet{zhou2020neural} proposes a neural network-based algorithm named NeuralUCB, which uses a neural random feature mapping to construct a UCB for exploration. Adapting the UCB bonus construction from these works, the recent work \citet{yang2020provably} studies optimistic LSVI algorithms for the online RL problem with kernel and neural function approximations, which covers the contextual bandit problem as a special case. However, these previous works only study the settings where the exploration is executed with reward feedbacks, which cannot be directly applied to the reward-free RL problem. Inspired by the aforementioned works, our work extends the idea of kernel and neural function approximations to the reward-free single-agent MDPs and zero-sum Markov games.

\section{Preliminaries}

In this section, we introduce the basic notations and problem backgrounds for this paper.

\subsection{Markov Decision Process}

Consider an episodic single-agent MDP defined by the tuple $(\cS, \cA,  H, \PP, r)$, where $\cS$ denotes the state space, $\cA$ is the action space of the agent, $H$ is the length of each episode, $\PP = \{\PP_h\}_{h=1}^H$ is the transition model with $\PP_h(s'|s,a)$ denoting the transition probability at the $h$-th step from the state $s\in\cS$ to the state $s'\in\cS$ when the agent takes action $a\in\cA$, and $r = \{r_h\}_{h=1}^H$ with $r_h : \cS \times \cA \mapsto [0, 1]$ denotes the reward function. Specifically, we assume that the true transition model $\PP$ is \emph{unknown} to the agent which necessitates the reward-free exploration. The policy of an agent is a collection of probability distributions $\pi = \{ \pi_h\}_{h=1}^H$ where $\pi_h: \cS \mapsto \Delta_\cA$\footnote{If a policy $\pi_h$ is deterministic, for simplicity, we slightly abuse the notion by letting $\pi_h:\cS\mapsto\cA$. A similar notation is also defined for the Markov game setting.} with $\Delta_\cA$ denoting a probability simplex defined on the space $\cA$.

For a specific policy $\{\pi_h\}_{h=1}^H$ and reward function $\{r_h\}_{h=1}^H$, under the transition model $\{\PP_h\}_{h=1}^H$, we define the associated value function $V_h^{\pi}(s, r): \cS \mapsto \RR$ at the $h$-th step as $V_h^{\pi}(s, r) := \allowbreak \EE[ \sum_{h'=h}^H r_{h'} (s_{h'}, a_{h'})  \given s_h = s, \pi, \PP ]$.
The corresponding action-value function (Q-function)  $Q_h^\pi: \cS \times \cA \mapsto \RR$ is further defined as $Q_h^\pi(s, a, r):=\EE[ \sum_{h'=h}^H r_{h'}(s_{h'}, a_{h'})  \given s_h = s, a_h = a,  \pi, \PP]$. Therefore, we have the Bellman equation as $V_h^{\pi}(s, r) = \langle Q_h^{\pi}(s, \cdot, r), \pi_h(s)\rangle_\cA$\footnote{When a policy $\pi_h$ is deterministic, we have the value function defined as $V_h^{\pi}(s, r) = Q_h^{\pi}(s,  \pi_h(s), r)$.} and $Q_h^{\pi}(s, a, r) = r_h (s, a) +  \langle  \PP_h(\cdot | s, a), V_{h+1}^{\pi} (\cdot, r) \rangle_\cS$, where we let $\langle\cdot, \cdot \rangle_\cS$, $\langle \cdot, \cdot \rangle_\cA$ denote the inner product over the spaces $\cS$, $\cA$. The above Bellman equation holds for all $h \in [H]$ with setting $V_{H+1}^{\pi} (s) = 0, \forall s \in \cS$. In the rest of this paper, for simplicity of the notation, we rewrite $\langle  \PP_h(\cdot | s, a), V_{h+1}(\cdot, r) \rangle_\cS = \PP_hV_{h+1}(s,a,r)$ for any transition probability $\PP_h$ and value function $V(\cdot, r)$. Moreover, we denote $\pi_r^*$ as the optimal policy w.r.t. $r$, such that $\pi_r^*$ maximizes $V_1^\pi(s_1, r)$\footnote{WLOG, we assume the agent starts from a fixed state $s_1$ at $h=1$. We also make the same assumption for the Markov game setting.}. Then, we define $Q_h^*(s,a,r) := Q_h^{\pi_r^*}(s,a,r)$ and $V_h^*(s,r) := V_h^{\pi_r^*}(s,r)$. We say $\tilde{\pi}$ is an $\varepsilon$-suboptimal policy if it satisfies
\begin{align*}
V_1^*(s_1, r) - V_1^{\tilde{\pi}}(s_1, r) \leq \varepsilon.
\end{align*}

\subsection{Zero-Sum Markov Game}

In this paper, we also consider an episodic zero-sum Markov game characterized by the tuple $(\cS, \cA, \cB,  H, \PP, r)$, where $\cS$ denotes the state space, $\cA$ and $\cB$ are the action spaces for the two players, $H$ is the length of each episode, $\PP = \{\PP_h\}_{h=1}^H$ is the transition model with $\PP_h(s'|s,a, b)$ denoting the transition probability at the $h$-th step from the state $s$ to the state $s'$ when Player 1 takes action $a\in\cA$ and Player 2 takes action $b \in \cB$, and $r=\{r_h\}_{h=1}^H$ with $r_h : \cS \times \cA \times \cB \mapsto [0, 1]$ denotes the reward function. Similarly, we assume the transition model $\PP=\{\PP_h\}_{h=1}^H$ is \emph{unknown} to both players. 
The policy of Player 1 is a collection of probability distributions $\pi = \{ \pi_h\}_{h=1}^H$ with $\pi: \cS \mapsto \Delta_\cA$.  Analogously, the policy of Player 2 is a collection of probability distributions $\nu = \{ \nu_h\}_{h=1}^H$ with $\nu: \cS \mapsto \Delta_\cB$. Here $\Delta_\cA$ and $\Delta_\cB$ are probability simplexes defined on the spaces $\cA$ and $\cB$.  

For a specific policy pair $\pi$ and $\nu$ and reward function $\{r_h\}_{h\in [H]}$, under the transition model $\{\PP_h\}_{h\in [H]}$, we define the value function $V_h^{\pi, \nu}(s, r): \cS \mapsto \RR$ at the $h$-th step as $V_h^{\pi, \nu}(s, r) := \EE[ \sum_{h'=h}^H r_{h'} (s_{h'}, a_{h'}, b_{h'})  \given s_h = s, \pi, \nu, \PP ]$.
We further define the corresponding action-value function (Q-function) $Q_h^{\pi, \nu}: \cS \times \cA \times \cB \mapsto \RR$ as $Q_h^{\pi, \nu}(s,a,b, r):=\EE[ \sum_{h'=h}^H r_{h'}(s_{h'}, a_{h'}, b_{h'} )  \given s_h = s, a_h = a, b_h = b,  \pi, \PP]$.
Thus, we have the Bellman equation as $V_h^{\pi, \nu}(s, r) =\EE_{a\sim \pi_h, b\sim \nu_h} [ Q_h^{\pi, \nu}(s, a , b, r)]$ and $Q_h^{\pi, \nu}(s, a, b, r) = r_h (s, a, b) +  \PP_h V_{h+1}^{\pi, \nu} (s,a,b,r)$,
where, for ease of notation, we also let $\PP_h V_{h+1}^{\pi, \nu} (s,a,b,r) \allowbreak = \big\langle  \PP_h(\cdot | s, a, b), V_{h+1}^{\pi, \nu} (\cdot, r) \big\rangle_\cS$.

We define the Nash equilibrium (NE) $(\pi^\dag, \nu^\dag)$ as a solution to $\max_{\pi} \min_{\nu} V_1^{\pi, \nu}(s_1)$, where we have $V_1^{\pi^\dag, \nu^\dag}(s_1, r) = \max_{\pi} \min_{\nu} V_1^{\pi, \nu}(s_1,r) = \min_{\nu}  \max_{\pi} V_1^{\pi, \nu}(s_1,r)$.
For simplicity, we let $V_h^{\dagger}(s,r) = V_h^{\pi^\dag, \nu^\dag}(s, r)$ and also $Q_h^{\dagger}(s,a,b,r) = Q_h^{\pi^\dag, \nu^\dag}(s,a,b, r)$ denote the value function and Q-function under the NE $(\pi^\dag, \nu^\dag)$ at $h$-th step. We further define the best response for Player 1 with policy $\pi$ as $\bre(\pi):=\argmin_\nu V_1^{\pi, \nu}(s_1, r)$ and the best response for Player 2 with policy $\nu$ as $\bre(\nu):=\argmax_\pi V_1^{\pi, \nu}(s_1, r)$. Thus, we say $(\tilde{\pi}, \tilde{\nu})$ is an $\varepsilon$-approximate NE if it satisfies
\begin{align*}
V_1^{\bre(\tilde{\nu}), \tilde{\nu}}(s_1,r) - V_1^{\tilde{\pi}, \bre(\tilde{\pi})}(s_1,r) \leq \varepsilon,
\end{align*}
where $V_1^{\bre(\tilde{\nu}), \tilde{\nu}}(s_1,r) \geq V_1^{\dag}(s_1,r)  \geq V_1^{\tilde{\pi}, \bre(\tilde{\pi})}(s_1,r)$ always holds. On the other hand, we let $V^*_1(s, r) = \max_{\pi, \nu} V^{\pi, \nu}_1(s, r)$, namely the maximal value function when $h=1$. Then, we have the associated value function and Q-function for the $h$-th step $V^*_h(s, r)$ and $Q^*_h(s, a, b, r)$.


\subsection{Reproducing Kernel Hilbert Space}

We study the kernel function approximation based on the reproducing kernel Hilbert space (RKHS). With slight abuse of notion, we let $\cZ = \cS \times \cA$ for the single-agent MDP setting and  $\cZ = \cS \times \cA \times \cB$ for the zero-sum game setting, such that $z=(s,a)\in \cZ$ or $z=(s,a,b) \in \cZ$ for different cases. We assume that the space $\cZ$ is the input space of the approximation function, where $\cZ$ is a compact space on $\RR^d$. This can also be achieved if there is a preprocessing method to embed $(s,a)$ or $(s,a,b)$ into the space $\RR^d$. We let $\cH$ be a RKHS defined on the space $\cZ$ with the kernel function $\ker: \cZ \times \cZ \mapsto \RR$. We further define the inner product on the RKHS $\cH$ as $\langle\cdot, \cdot\rangle_\cH : \cH\times \cH \mapsto \RR $ and the norm $\|\cdot\|_\cH: \cH \mapsto \RR$. We have a feature map $\phi: \cZ \mapsto \cH$ on the RKHS $\cH$ and define the function $f(z):=\langle f, \phi(z)\rangle_\cH$ for $f\in \cH$. Then the kernel is defined as 
\begin{align*}
\ker(z,z'):=\langle \phi(z), \phi(z')\rangle_\cH, \quad \forall z, z'\in \cZ.
\end{align*}
We assume that $\sup_{z\in \cZ} \ker(z,z) \leq 1$ such that $\|\phi(z)\|_\cH \leq 1$ for any $z\in \cZ$.

\subsection{Overparameterized Neural Network}\label{sec:over_NN}

This paper further considers a function approximator utilizing the overparameterized neural network. Overparameterized neural networks have drawn a lot of attention recently in both theory and practice \citep{neyshabur2018towards,allen2018learning,arora2019fine,gao2019convergence,bai2019beyond}.
Specifically, in our work, we have a two-layer neural network $f(\cdot; b, W): \cZ \mapsto \RR$ with $2m$ neurons and weights $(\bv, W)$, which can be represented as 
\begin{align}\label{eq:neural_func}
f(z; \bv, W) = \frac{1}{\sqrt{2m}} \sum_{i=1}^{2m} v_i \cdot \act(W_i^\top z),
\end{align}
where $\act$ is the activation function, and $\bv = [v_1, \cdots, v_{2m}]^\top$ and $W = [W_1, W_2,\cdots, W_{2m}]$. Here, we assume that $z=(s,a)$ or $z=(s,a,b)$ with $z\in \cZ$ satisfies $\|z\|_2 = 1$, i.e., $z$ is normalized on a unit hypersphere in $\RR^d$. Let $W^{(0)}$ be the initial value of $W$ and $\bv^{(0)}$ be the initialization of $\bv$. The initialization step for the above model is performed as follows: we let $v_i\sim \text{Unif}(\{-1, 1\})$ and $W_i^{(0)} \sim N(0, I_d/d)$ for all $i \in [m]$, where $I_d$ is an identity matrix in $\RR^{d\times d}$, and $v_i^{(0)} = - v_{i-m}^{(0)}$, $W_i^{(0)} = W_{i-m}^{(0)}$ for all $i \in \{m+1, 2m\}$. Here we let $N(0, I_d/d)$ denote Gaussian distribution. In this paper, we let $\bv$ be fixed as $\bv^{(0)}$ and we only learn $W$ for ease of theoretical analysis. Thus, we represent $f(z;, \bv, W)$ by $f(z; W)$ to simplify the notation. This neural network model is widely studied in recent papers on the analysis of neural networks, e.g., \citet{gao2019convergence,bai2019beyond}. When the model is overparameterized, i.e., $m$ is sufficiently large, we can characterized the dynamics of the training such neural network by neural tangent kernel (NTK) \citep{jacot2018neural}.  Here we define 
\begin{align}\label{eq:neural_grad}
\hspace{-0.18cm}\varphi(z;W) :=[\nabla_{W_1} f(z;W)^\top, \cdots, \nabla_{W_{2m}} f(z;W)^\top]^\top,
\end{align}
where we let $\nabla_{W_i} f(z;W)$ be a column vector such that $\varphi(z;W) \in \RR^{2md}$. Thus, conditioned on the randomness in the initialization of $W$ by $\W0$, we further define the kernel 
\begin{align*}
\ker_m(z,z') = \langle \varphi(z;\W0), \varphi(z';\W0) \rangle, \forall z,z'\in \cZ.
\end{align*}
In addition, we consider a linearization of the model $f(z, W)$ at the initial value $\W0$, such that we have $f_{\lin}(z; W) := f(z; W^{(0)}) + \langle \varphi(z;\W0), W-W^{(0)}\rangle$. Furthermore, the following equation holds: $f_{\lin}(z; W)  = \langle \varphi(z; \W0), W-W^{(0)}\rangle$ since $f(z; W^{(0)}) = 0$ by the initialization scheme. We can see that the linearized function $f_{\lin}(z; W)$ is a function on RKHS with the kernel $\ker_m(z,z')$. When the model is overparameterized with $m\rightarrow \infty$, the kernel $\ker_m(z,z')$ converges to an NTK kernel, which is defined as $\ker_{\ntk} = \EE_{\bomega\sim N(0, I_d/d)} [\act'(\bomega^\top z) \cdot \act'(\bomega^\top z') \cdot z^\top z']$, where $\act'$ is the derivative of the activation function $\act$.

\section{Single-Agent MDP Setting}

In this section, we introduce our method under the single-agent MDP setting with kernel and neural function approximations. Then, we present our theoretical results.

 \begin{algorithm}[t]\caption{Exploration Phase for Single-Agent MDP} 
	\begin{algorithmic}[1]
		\State {\bfseries Initialize:} $\delta > 0$ and $\varepsilon > 0$. 
		\For{episode $k=1,\ldots,K$}   
		   	\State Let $V_{H+1}^{k}(\cdot) = \boldsymbol 0$ and $Q_{H+1}^{k}(\cdot, \cdot) = \boldsymbol 0$
		        \For{step $h=H, H-1,\ldots, 1$} 
					\State \label{line:bonus_explore}Construct bonus term $u_h^k (\cdot, \cdot)$
					\State Compute exploration reward $r_h^k(\cdot, \cdot) = u_h^k(\cdot, \cdot)/H$ 
					\State \label{line:func_explore}Compute approximation function $f_h^k(\cdot, \cdot)$
					\State	\label{line:Q_explore}$Q_h^k(\cdot,\cdot) =  \Pi_{[0, H]} [ (f_h^k + r_h^k+ u_h^k) (\cdot, \cdot)]$
					\State $V_h^k(\cdot) =  \max_{a\in \cA} Q_h^k(\cdot, a)$
					\State $\pi_h^k(\cdot) =  \argmax_{a\in \cA} Q_h^k(\cdot, a)$
	            \EndFor	    				

				\State Take actions following $a_h^k \sim \pi_h^{k}(s_h^k),\  \forall h \in [H]$. 	
    \EndFor        
    \State {\bfseries Return:} $\{(s_h^k, a_h^k) \}_{(h,k)\in [H]\times [K]}$.      
	\end{algorithmic}\label{alg:exploration_phase_single}
\end{algorithm}

\subsection{Kernel Function Approximation}
Our proposed method is composed of the reward-free exploration phase and planning phase with the given extrinsic reward function. The exploration phase and planning phase are summarized in Algorithm \ref{alg:exploration_phase_single} and Algorithm \ref{alg:plan_phase_single}. \citet{}

Specifically, the exploration algorithm is an optimistic variant of the value-iteration algorithm with the function approximation. In Algorithm \ref{alg:exploration_phase_single}, we use $Q_h^k$ and $V_h^k$ to denote the optimistic Q-function and value function for the exploration rewards. During the exploration phase, the agent does not access the true reward function and explore the environment for $K$ episodes based on the policy $\{\pi_h^k\}_{(h,k)\in [H]\times[K]}$ determined by the value function $V_h^k$, and collects the trajectories $\{s_h^k, a_h^k\}_{(h,k)\in [H]\times[K]}$ for the subsequent planning phase. Thus, instead of approximating the Q-function directly, we seek to approximate $\PP_h V^k_{h+1}$ by a clipped function $f_h^k(s, a)$ for any $(s,a)\in \cS\times \cA$, where $f_h^k(\cdot, \cdot)$ is estimated by solving a regularized kernel regression problem as below. Based on this kernel approximation, we construct an associated UCB bonus term $u_h^k$ to facilitate exploration, whose form is specified by the kernel function approximator. Moreover, although the true reward is not available to the agent, to guide the exploration, we construct the exploration reward by scaling the bonus $u_h^k$, guiding the agent to explore state-action pairs with high uncertainties characterized by $u_h^k$. Then, the Q-function $Q_h^k$ is a combination of $r_h^k(s,a)$, $f_h^k(s,a)$, and $u_h^k(s,a)$ as shown in Line \ref{line:Q_explore} of Algorithm \ref{alg:exploration_phase_single}. In this paper, we define a clipping operator as $\Pi_{[0,H]}[x]:= \min\{x, H\}^+ = \min\{\max\{x, 0\}, H\}$. Note that the exploration phase in Algorithm \ref{alg:exploration_phase_single} is not restricted to the kernel case and can be combined with other approximators, e.g., neural networks, as will be shown later.

At the $k$-th episode, given the visited trajectories $\{(s_h^\tau, a_h^\tau)\}_{\tau=1}^{k-1}$, we construct the approximator for each $h\in [H]$ by solving the following regularized kernel regression problem
\begin{align*}
\hat{f}_h^k = 	\min_{f\in \cH}\sum_{\tau=1}^{k-1}[V_{h+1}^k(s_{h+1}^\tau)-f(z_h^\tau)]^2 + \lambda \|f\|_\cH^2,
\end{align*}
where $f(z_h^\tau) = \langle f, \phi(z_h^\tau)\rangle_\cH$ with $z_h^\tau = (s_h^\tau, a_h^\tau)$, and $\lambda$ is a hyperparameter to be determined later. As we will discuss in Lemma \ref{lem:compute_estimate} in the appendix, the closed form solution to the above problem is $\hat{f}_h^k(z) = \langle \hat{f}_h^k, \phi(z)\rangle_\cH = \psi_h^k(z)^\top (\lambda \cdot I +\cK_h^k )^{-1} \yb_h^k$, where we define $\psi_h^k (z) :=  [\ker(z, z_h^1), \cdots, \ker(z, z_h^{k-1})]^\top$, $\yb_h^k := [V_{h+1}^k(s_{h+1}^1), \cdots, V_{h+1}^k(s_{h+1}^{k-1})  ]^\top$, and also $\cK_h^k := [\psi_h^k (z_h^1),\cdots, \psi_h^k (z_h^{k-1})]$ (recalling that $z = (s,a)$).

We let $f_h^k(z) = \Pi_{[0, H]}[\hat{f}_h^k(z)]$ by clipping operation to guarantee $f_h^k(z) \in [0, H]$ such that in Algorithm \ref{alg:exploration_phase_single}, we let
\begin{align}\label{eq:kernel_approx}
f_h^k(z) =  \Pi_{[0,H]}[\psi_h^k(z)^\top (\lambda \cdot I +\cK_h^k )^{-1} \yb_h^k],
\end{align}
In addition, we construct our bonus term inspired by the construction of the UCB bonus for kernelized contextual bandit or RL with kernel function approximation as in \citet{srinivas2009gaussian,valko2013finite,chowdhury2017kernelized,yang2020provably}. Different from the aforementioned papers, due to the reward-free setting, the bonus term $u_h^k(\cdot,\cdot)$ here only quantifies the uncertainty of estimating $\PP_hV_{h+1}^k(\cdot,\cdot)$ with the kernel function approximator. Then, the bonus term is defined as 
\begin{align}\label{eq:kernel_bonus}
u_h^k(z) := \min\{\beta \cdot w_h^k(z), H\} 
\end{align}
where $\beta$ is a hyperparameter to be determined and we set
\begin{align*}
w_h^k(z) = \lambda^{-\frac{1}{2}} [ \ker(z,z)- \psi_h^k(z)^\top (\lambda I + \cK_h^k )^{-1}  \psi_h^k(z)]^{\frac{1}{2}}.
\end{align*}

\begin{algorithm}[t]\caption{Planning Phase for Single-Agent MDP} 
	\begin{algorithmic}[1]
		\State {\bfseries Initialize:} Reward function $\{r_h\}_{h\in [H]}$ and exploration data $\{(s_h^k, a_h^k)\}_{(h,k)\in [H]\times [K]}$
		        		
			\For{step $h=H, H-1, \ldots,1$}   
				\State  \label{line:bonus_plan}Compute bonus term $u_h(\cdot,\cdot)$
 	
			   	\State \label{line:func_plan}Compute approximation function $f_h(\cdot, \cdot)$
					\State $Q_h(\cdot,\cdot) =  \Pi_{[0, H]}[(f_h + r_h + u_h)(\cdot, \cdot)]$	
					\State $V_h(\cdot) =  \max_{a\in \cA} Q_h(\cdot, a)$
					\State $\pi_h(\cdot) =  \argmax_{a\in \cA} Q_h(\cdot, a)$
	            \EndFor	
	\State {\bfseries Return:} $\{\pi_h\}_{h\in[H]}$
	\end{algorithmic}\label{alg:plan_phase_single}
\end{algorithm}

The planning phase can be viewed as a single-episode version of the optimistic value iteration algorithm. Using all the collected trajectories $\{s_h^k, a_h^k\}_{(h,k)\in [H]\times[K]}$, we can similarly construct the approximation of $\PP_h V_{h+1}$ by solving
\begin{align}
\hat{f}_h = 	\argmin_{f\in \cH}\sum_{\tau=1}^K[V_{h+1}(s_{h+1}^\tau)-f(z_h^\tau)]^2 + \lambda \|f\|_\cH^2. \label{eq:kernel_plan_regression}
\end{align}
Thus, the kernel approximation function can be estimated as
\begin{align*}
&f_h(z) = \Pi_{[0,H]} [\hat{f}_h(z)] = \Pi_{[0,H]} [\psi_h(z)^\top (\lambda \cdot I +\cK_h )^{-1} \yb_h],
\end{align*}
and the bonus term is 
\begin{align*}
u_h(z) := \min\{ \beta\cdot w_h(z), H\}
\end{align*}
with setting
\begin{align*}
w_h(z) = \lambda^{-\frac{1}{2}} [ \ker(z,z)- \psi_h(z)^\top (\lambda I + \cK_h )^{-1}  \psi_h(z)]^{\frac{1}{2}},
\end{align*}
where we define $\psi_h (z) :=  [\ker(z, z_h^1), \cdots, \ker(z, z_h^K)]^\top$, $\yb_h := [V_{h+1}(s_{h+1}^1), \cdots, V_{h+1}(s_{h+1}^K)  ]^\top$, and also $\cK_h := [\psi_h (z_h^1),\cdots, \psi_h (z_h^K)]$. 
Given an arbitrary reward function $r_h$, with the kernel approximator $f_h$ and the bonus $u_h$, one can compute the optimistic Q-function $Q_h$ and the associated value function $V_h$. The learned policy $\pi_h$ is obtained by value iteration based on the optimistic Q-function. Algorithm \ref{alg:plan_phase_single} is also a general planning scheme that can be generalized to other function approximator, for example, the neural function approximator.

The combination of Algorithm \ref{alg:exploration_phase_single} and Algorithm \ref{alg:plan_phase_single} can be viewed as a generic framework for different function approximators under the single-agent MDP setting. It  generalizes the reward-free RL method with the linear function approximation studied in the prior work \citet{wang2020reward}.  The linear function approximation can be also viewed as a special case of the kernel function approximation with $\ker(z,z')=\langle \phi(z), \phi(z')\rangle$ where $\phi(z)$ is in Euclidean space. Based on the above framework, we further propose the reward-free algorithm for the single-agent MDP with the neural function approximation in the next subsection.

\begin{remark} Note that in the kernel function approximation setting, we directly define the kernel $\ker(z,z')$ for the algorithms  instead of the feature map $\phi(z)$ which may potentially lie in an infinite dimensional space.
\end{remark}

\subsection{Neural Function Approximation}\label{sec:neural_single}

For the neural function approximation setting, the agent also runs Algorithm \ref{alg:exploration_phase_single} for exploration and Algorithm \ref{alg:plan_phase_single} for planning. Different from the kernel function approximation, in the exploration phase, at the $k$-th episode, given the visitation history $\{s_h^\tau, a_h^\tau\}_{\tau=1}^{k-1}$, we construct the approximation for each $h\in [H]$ by solving the following regularized regression problem
\begin{align}
\begin{aligned}\label{eq:neural_approx_explore}
W_h^k=\argmin_{W\in \RR^{2md}} &\sum_{\tau=1}^{k-1}[V_{h+1}^k(s_{h+1}^\tau) - f(z_h^\tau; W)]^2 + \lambda \|W-\W0\|_2^2, 
\end{aligned}
\end{align}
where we assume that there exists an optimization oracle that can return the global optimizer of the above problem. The initialization of $\W0$ and $\bv^{(0)}$ for the function $f(z; W)$ follows the scheme as we discussed in Section \ref{sec:over_NN}.  As shown in many recent works \citep{du2019gradient,du2018gradient,arora2019fine},  when $m$ is sufficiently large, with random initialization, some common optimizers, e.g., gradient descent, can find the global minimizer of the empirical loss efficiently with a linear convergence rate. Once we obtain $W_h^k$, the approximation function is constructed as $f_h^k(z) = \Pi_{[0,H]} [f(z; \Whk)]$.
Adapting the construction of the UCB bonus for neural contextual bandit and RL with neural function approximation \citep{zhou2020neural,yang2020provably} to our reward-free setting, the corresponding exploration bonus $u_h^k$ in the exploration phase is of the form $u_h^k(z) := \min\{\beta \cdot w_h^k(z),H\}$ where
\begin{align}
w_h^k(z) = [\varphi(z; \Whk)^\top (\Lambda_h^k)^{-1} \varphi(z; \Whk)]^{\frac{1}{2}}. \label{eq:neural_bonus_explore}
\end{align}
Here we define the invertible matrix $\Lambda_h^k := \lambda I_{2md} + \sum_{\tau=1}^{k-1} \varphi(z_h^\tau; \Whk)\varphi(z_h^\tau; \Whk)^\top$with $\varphi(z_h^\tau; W)$ as \eqref{eq:neural_grad}.

In the planning phase, given the collection of trajectories in $K$ episodes of exploration phase, we construct the neural approximation of $\PP_hV_{h+1}(z)$ as solving a least square problem, i.e., $W_h$ is the global optimizer of
\begin{align*}
\hspace{-0.2cm}\min_{W\in \RR^{2md}} \sum_{\tau=1}^{K}[V_{h+1}(s_{h+1}^\tau) - f(z_h^\tau; W)]^2 + \lambda \|W-\W0\|_2^2, 
\end{align*}
such that $f_h(z) = \Pi_{[0,H]} [f(z; W_h)]$. Analogously, the bonus term for the planning phase is of the form $u_h(z) := \min\{\beta \cdot w_h(z),H\}$ where
\begin{align*}
w_h(z) = [\varphi(z; W_h)^\top (\Lambda_h)^{-1} \varphi(z; W_h)]^{\frac{1}{2}}, 
\end{align*}
where we define the invertible matrix $\Lambda_h := \lambda I_{2md} + \sum_{\tau=1}^K \varphi(z_h^\tau; W_h)\varphi(z_h^\tau; W_h)^\top$.

\subsection{Theoretical Results for Single-Agent MDP} \label{sec:result_single}

\noindent\textbf{Kernel Function Approximation.} In this subsection, we first present the result for the kernel function approximation setting. We make the following assumptions.
\begin{assumption}\label{assump:kernel} For any value function $V:\cS \mapsto \RR $, we assume that $\PP_h V (z)$ is in a form of $\langle \phi(z), \wb_h \rangle_\cH$ for some $\wb_h \in \cH$. In addition, we assume there exists a fixed constant $R_Q$ such that $\|\wb_h\|_\cH \leq R_Q H$.
\end{assumption}

One example for this assumption is that the transition model is in a form of $\PP_h(s'|z) = \langle \phi(z), \wb'_h(s') \rangle_\cH$ such that $\PP_h V(z) = \int_{\cS} V_{h+1}(s') \langle \phi(z), \wb'_h(s') \rangle_\cH \mathrm{d}s'$ where we can write $\wb_h = \allowbreak\int_{\cS} V_{h+1}(s') \wb'_h(s') \mathrm{d}s'$. This example can be viewed as a generalization of the linear transition model \citep{jin2020provably} to the RKHS.

In our work, we use maximal information gain \citep{srinivas2009gaussian} to measure the function space complexity, i.e., 
\begin{align*}
\Gamma(\mathfrak{C}, \sigma; \ker) = \sup_{\cD\subseteq \cZ} 1/2\cdot  \log\det (I +  \cK_\cD/\sigma),
\end{align*}
where the supremum is taken over all possible sample sets $\cD \subseteq\cZ$ with $|\cD| \leq \mathfrak{C}$, and $\cK_\cD$ is the Gram matrix induced by $\cD$ based on some kernel $\ker$ of RKHS. The value of $\Gamma(\mathfrak{C}, \sigma; \ker)$ reflects how fast the the eigenvalues of $\cH$ decay to zero and can be viewed as a proxy of the dimension of $\cH$ when $\cH$ is infinite-dimensional. To characterize the complexity, we define a Q-function class $\overline{\cQ}$ of the form
\begin{align} \label{eq:Q_func_class}
\overline{\cQ}(c, R, B) = \{Q: Q \text{ satisfies the form of } {Q}^{\sharp} \}.
\end{align}
where we define ${Q}^{\sharp}$ in the following form $
{Q}^{\sharp}(z) =\min\{ c(z) + \Pi_{[0,H]} [\langle \wb, \phi(z)\rangle_\cH] +  g(z) , H\}^+$ with some $\wb$ satisfying $\|\wb\|_\cH \leq R$, $\|\phi(z)\|_\cH \leq 1$, and also $g(z) =B\cdot \min\{  \|\phi(z)\|_{\Lambda_{\cD}^{-1}} , H/\beta\}^+$. Here $\Lambda_{\cD}$ is an adjoint operator with the form $\Lambda_{\cD} = \lambda I_\cH + \sum_{ z'\in \cD} \phi(z') \phi(z')^\top$ with $I_\cH$ denoting identity mapping on $\cH$ and $\cD\subseteq \cZ$ with $|\cD| \leq K$. Here we define the $\varsigma$-covering number of the class $\overline{\cQ}$ w.r.t. the $\ell_\infty$-norm as $\overline{\cN}_\infty(\varsigma; R, B)$ with an upper bound $\cN_\infty(\varsigma; R, B)$. As formally discussed in Section \ref{sec:convering} of the appendix, we compute the covering number upper bound $\cN_\infty(\varsigma; R, B)$. As we can see in Algorithms \ref{alg:exploration_phase_single} and \ref{alg:plan_phase_single}, we have $Q_h^k \in \overline{\cQ}(\bm{0}, R, (1+1/H)\beta)$ and $Q_h \in \overline{\cQ}(r_h, R', \beta)$ for some $R$ and $R'$. Based on the above assumptions and definitions, we have the following result.

\begin{theorem}\label{thm:main_kernel_single} Suppose that $\beta$ satisfies the condition $16H^2\big[ R^2_Q  +\log\cN_{\infty}(\varsigma^*; R_K,  2\beta) + 2 \Gamma(K, \lambda; \ker) \allowbreak + 6\log(2KH) + 5  \big] \leq  \beta^2$. Under the kernel function approximation setting with a kernel $\ker$, letting $\lambda = 1+1/K$, $R_K=2H\sqrt{\Gamma(K, \lambda; \ker)}$, and $\varsigma^* = H/K$, with probability at least $1-(2K^2H^2)^{-1}$, the policy generated via Algorithm \ref{alg:plan_phase_single} satisfies  $V^*_1(s_1, r) - V^\pi_1(s_1, r)  \leq \cO(\beta \sqrt{H^4 [ \Gamma(K, \lambda; \ker) + \log(KH)]}/\sqrt{K})$, after exploration for $K$ episodes with Algorithm \ref{alg:exploration_phase_single}.
\end{theorem}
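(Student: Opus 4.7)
The plan is to decompose the suboptimality of the output policy into four pieces. First, I would establish a uniform concentration bound showing that the kernel ridge-regression estimation error is dominated by the bonus $\beta \cdot w$. Second, this yields optimism for both the exploration Q-functions $Q_h^k$ and the planning Q-function $Q_h$. Third, optimism plus the standard value-difference identity reduces the planning suboptimality to the cumulative expected bonus $\sum_h \EE_\pi[u_h]$ under the returned policy $\pi$. Fourth, I would bound that cumulative bonus by what the exploration phase has actually collected, via a kernel elliptical-potential (information-gain) argument.

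For the concentration step, the starting point is the self-normalized concentration inequality for Hilbert-space-valued martingales (Chowdhury and Gopalan 2017): for a fixed $V : \cS \to [0,H]$, the kernel ridge estimate $\hat{f}$ of $\PP_h V$ obeys $|\hat{f}(z) - \PP_h V(z)| \leq \beta \cdot w(z)$ for all $z$, where $\|\wb_h\|_\cH \leq R_Q H$ by Assumption \ref{assump:kernel}. The difficulty is that $V_{h+1}^k$ in exploration and $V_{h+1}$ in planning are themselves data-dependent, so the fixed-$V$ bound does not apply directly. I would handle this by verifying that $Q_h^k \in \overline{\cQ}(\boldsymbol{0}, R_K, (1+1/H)\beta)$ with $R_K = 2H\sqrt{\Gamma(K, \lambda; \ker)}$, and $Q_h \in \overline{\cQ}(r_h, R_K, \beta)$, and then taking a union bound over an $\varsigma^*$-net of $\overline{\cQ}$. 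This introduces the $\log \cN_\infty(\varsigma^*; R_K, 2\beta)$ term into the hypothesis on $\beta$, while $\Gamma(K, \lambda; \ker)$ enters through the log-determinant in the self-normalized inequality.

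Optimism is then a short backward induction on $h$: the concentration gives $f_h^k + u_h^k \geq \PP_h V_{h+1}^k$, which forces $Q_h^k \geq Q_h^{\pi, k}$ for every $\pi$ against the exploration reward $r^k = u^k/H$; the same argument gives $V_1(s_1, r) \geq V_1^*(s_1, r)$ in planning. The value-difference decomposition applied to the returned $\pi$ then yields $V_1^*(s_1, r) - V_1^\pi(s_1, r) \leq 2 \sum_{h=1}^H \EE_\pi[u_h(s_h, a_h)]$. To control the right-hand side, I would exploit the fact that the full $K$-episode kernel Gram operator dominates each intermediate one, so $w_h(z) \leq w_h^k(z)$ and hence $u_h(z) \leq u_h^k(z)$ for every $k$. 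Combining this with optimism applied to $r^k = u^k/H$, for any target policy $\pi$ and any $k \in [K]$ we have $\sum_h \EE_\pi[u_h] \leq \sum_h \EE_\pi[u_h^k] = H \cdot V_1^\pi(s_1, r^k) \leq H \cdot V_1^k(s_1) \leq \cO(H) \sum_h \EE_{\pi^k}[u_h^k]$, where the last step is the standard ``twice-the-bonus'' regret decomposition adjusted for the $1/H$ scaling of $r^k$. Averaging over $k \in [K]$ and applying Azuma--Hoeffding to pass from expectations to empirical sums controls the cumulative bonus by $\cO(H/K) \sum_{k, h} u_h^k(s_h^k, a_h^k)$ plus a lower-order martingale tail. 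A kernel elliptical-potential lemma gives $\sum_k w_h^k(z_h^k)^2 \leq \cO(\Gamma(K, \lambda; \ker))$, and Cauchy--Schwarz upgrades this to $\sum_{k, h} u_h^k(z_h^k) \leq \cO(\beta H \sqrt{K \Gamma(K, \lambda; \ker)})$. Substitution delivers the rate $\cO(\beta \sqrt{H^4 (\Gamma(K, \lambda; \ker) + \log(KH))/K})$ claimed by the theorem.

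The main obstacle is the uniform concentration step: because each $V_{h+1}^k$ is a complicated data-dependent object built from the algorithm's own bonuses, one must carefully identify a function class $\overline{\cQ}$ containing every $Q_h^k$ and $Q_h$ and prove a matching covering-number estimate on it. This is precisely why the hypothesis on $\beta$ in the theorem takes its particular form. Once the concentration is secured, the remaining steps mirror the linear-function-approximation template of \citet{wang2020reward}, with linear elliptical-potential arguments replaced by their kernel (information-gain) analogues.
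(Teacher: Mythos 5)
Your proposal is correct and follows essentially the same route as the paper's own proof: uniform concentration over the Q-function class $\overline{\cQ}$ via a covering-number argument, optimism in both phases, the value-difference reduction to $2\sum_h \EE_\pi[u_h]$, the domination $\Lambda_h \succcurlyeq \Lambda_h^k$ giving $u_h \leq u_h^k$ to connect planning to exploration rewards, and the elliptical-potential plus Azuma--Hoeffding bound on $\sum_k V_1^k(s_1)$. The only differences are cosmetic (you route through $V_1^\pi(s_1,r^k)\leq V_1^k(s_1)$ rather than through $V_1^*(s_1,r^k)$, and phrase the exploration bound in expectation before converting to empirical sums), so no substantive gap remains.
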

The covering number $\cN_{\infty}(\varsigma^*; R_K,  2\beta)$ and the information gain $\Gamma(K, \lambda; \ker)$ reflect the function class complexity. To understand the result in Theorem \ref{thm:main_kernel_single}, we consider kernels $\ker$ with two different types of eigenvalue decay conditions: (i) $\gamma$-finite spectrum (where $\gamma \in \ZZ_+$) and (ii) $\gamma$-exponential spectral decay (where $\gamma >0$). 

For the case of $\gamma$-finite spectrum, we have $\beta = \cO(\gamma H\sqrt{\log(\gamma KH)})$, $\log\cN_{\infty}(\varsigma^*; R_K,  2\beta) \allowbreak = \cO(\gamma^2 \log(\gamma KH))$, and $\Gamma(K, \lambda; \ker) = \cO(\gamma \log K)$, which further implies that to achieve $V^*_1(s_1, r) - V^\pi_1(s_1, r) \leq \varepsilon$, it requires $\tilde{\cO}(H^6 \gamma^3 /\varepsilon^2)$ rounds of exploration, where $\tilde{\cO}$ hides the logarithmic dependence on $\gamma$ and $1/\varepsilon$. 

Therefore, when the problem reduces to the setting of linear function approximation, the above result becomes $\tilde{\cO}(H^6 \mathfrak{d}^3 /\varepsilon^2)$ by letting $\gamma = \mathfrak{d}$, where $\mathfrak{d}$ is the feature dimension. This is consistent with the result in \citet{wang2020reward}, which studies the linear approximation setting for reward-free RL. Furthermore, the sample complexity becomes $\tilde{\cO}(H^6 |\cS|^3|\cA|^3 /\varepsilon^2)$ by setting $\gamma = |\cS||\cA|$, when the problem reduces to the tabular setting.

For the case of $\gamma$-exponential spectral decay with $\gamma > 0$, we have $\log\cN_{\infty}(\varsigma^*; R_K,  2\beta) = \cO((\log K)^{1+2/\gamma}+(\log\log H)^{1+2/\gamma})$, $\beta =  \cO( H\sqrt{\log( KH)} (\log K)^{1/\gamma})$, $\Gamma(K, \lambda; \ker) = \cO( (\log K)^{1+1/\gamma})$. Therefore, to obtain an $\varepsilon$-suboptimal policy, it requires $\cO(H^6 C_\gamma  \cdot  \allowbreak \log^{4+6/\gamma}(\varepsilon^{-1}) /\varepsilon^2) = \tilde{\cO}(H^6 C_\gamma /\varepsilon^2)$ rounds  of exploration, where $C_\gamma$ is some constant depending on $1/\gamma$. Please see Section \ref{sec:convering} for detailed definitions and discussions.

\vspace{5pt}

\noindent\textbf{Neural Function Approximation.} Next, we present the result for the neural function approximation setting. 
\begin{assumption}\label{assump:neural} For any value function $V$, we assume that $\PP_h V(z)$ can be represented as $\PP_h V(z) = \int_{\RR^d}  \act'(\boldsymbol{\omega}^\top z) \cdot z^\top \balpha_h(\boldsymbol\omega) \mathrm{d} p_0(\boldsymbol\omega)$ for some $\balpha_h(\boldsymbol\omega)$ with $\balpha : \RR^d \mapsto \RR^d$ and $\sup_{\bomega}\|\balpha(\bomega)\|\leq R_Q H /\sqrt{d}$. Here $p_0$ is the density of Gaussian distribution $N(0, I_d/d)$.
\end{assumption}
As discussed in \citet{gao2019convergence,yang2020provably}, the function class characterized by $f(z)=\int_{\RR^d}  \act'(\boldsymbol{\omega}^\top z) \cdot z^\top \balpha_h(\boldsymbol\omega) \mathrm{d} p_0(\boldsymbol\omega)$ is an expressive subset of RKHS $\cH$. One example is that the transition model can be written as $\PP_h(s'|z) = \int_{\RR^d}  \act'(\boldsymbol{\omega}^\top z) \cdot z^\top \balpha'_h(\boldsymbol\omega; s') \mathrm{d} p_0(\boldsymbol\omega)$ such that we have $\balpha_h(\boldsymbol\omega)  = \int_\cS \balpha'_h(\boldsymbol\omega; s') V_{h+1}(s') \mathrm{d}s'$. This example also generalizes the linear transition model \citep{jin2020provably} to the overparameterized neural network setting. Similar to \eqref{eq:Q_func_class}, we also define a Q-function class based on a normalized version of $\varphi(z, \W0)$,  which further can be analyzed using the same notations $\overline{\cQ}$ and $\cN_\infty$ (See Lemma \ref{lem:bonus_concentrate_neural} for details).

\begin{theorem}\label{thm:main_neural_single} Suppose that $\beta$ satisfies the condition that $8H^2 [R_Q^2 (1+\sqrt{\lambda/d})^2 + 4 \Gamma(K, \lambda; \ker_m) + 10+4\log\cN_{\infty}(\varsigma^*; R_K,  2\beta) + 12\log(2KH)] \leq \beta^2$ with $m=\Omega(K^{19} H^{14} \log^3 m)$. Under the overparameterized neural function approximation setting, letting $\lambda = C(1+1/K)$ for some constant $C\geq 1$, $R_K=H\sqrt{K}$, and $\varsigma^* = H/K$, with probability at least $1-(2K^2H^2)^{-1}-4m^{-2}$, the policy generated via Algorithm \ref{alg:plan_phase_single} satisfies $V^*_1(s_1, r) - V^\pi_1(s_1, r)  \leq \cO(\beta \sqrt{H^4  [\Gamma(K, \lambda; \ker_m)+\log(KH)]}/\sqrt{K} + H^2\beta\iota)$ with $\iota = 5K^{7/12}H^{1/6} m^{-1/12} \log^{1/4} m $, after exploration for $K$ episodes with Algorithm \ref{alg:exploration_phase_single}.
\end{theorem}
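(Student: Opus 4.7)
The plan is to lift the kernel argument of Theorem \ref{thm:main_kernel_single} to the neural setting by exploiting that, under the overparameterization $m=\Omega(K^{19}H^{14}\log^3 m)$, the two-layer network $f(\cdot;W)$ stays close to its linearization $f_{\lin}(\cdot;W)$ and the empirical gradient feature $\varphi(\cdot;W)$ stays close to $\varphi(\cdot;W^{(0)})$ throughout training. First I would show that, with probability at least $1-\cO(m^{-2})$, every least-squares minimizer $W_h^k$ of \eqref{eq:neural_approx_explore} (and its planning analogue) lies in a ball of radius $\tilde{\cO}(H\sqrt{K/\lambda})$ around $W^{(0)}$. Inside this ball, standard NTK perturbation bounds (as in \citet{gao2019convergence,yang2020provably}) yield
\begin{align*}
|f(z;W)-f_{\lin}(z;W)|=\tilde{\cO}(\iota), \qquad \|\varphi(z;W)-\varphi(z;W^{(0)})\|_2=\tilde{\cO}(\iota),
\end{align*}
uniformly over $z\in\cZ$, so the data-dependent matrix $\Lambda_h^k$ is a small perturbation of its NTK counterpart built from $\varphi(\cdot;W^{(0)})$.

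Second, I would port the ridge-regression concentration inequality to the RKHS induced by $\ker_m$. Assumption \ref{assump:neural} lets one write $\PP_h V(z)$, after rescaling, as $\langle \varphi(z;W^{(0)})/\sqrt{2m},\wb_h\rangle$ up to an $\tilde{\cO}(\iota)$ bias, with $\|\wb_h\|_2=\cO(R_Q H)$. Hence the empirical loss in \eqref{eq:neural_approx_explore} is, modulo this bias, ridge regression in $\ker_m$. Combining a self-normalized Hilbert-valued concentration bound with a union bound over a $\varsigma^{*}$-cover of the Q-class $\overline{\cQ}(\cdot;R_K,2\beta)$ (whose covering number is controlled by $\cN_{\infty}(\varsigma^{*};R_K,2\beta)$) gives the neural UCB lemma
\begin{align*}
|f_h^k(z)-\PP_h V_{h+1}^k(z)| \leq \beta\cdot w_h^k(z)+\tilde{\cO}(H^2\iota)
\end{align*}
for every $(h,k,z)$, and the same bound in the planning phase. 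The hypothesis on $\beta$ stated in the theorem is precisely the level at which the right-hand side dominates the statistical fluctuation plus the covering-number cost.

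Third, I would run the optimism-then-regret-to-PAC reduction exactly as in the kernel proof. A backward induction on $h$ shows $Q_h^k(s,a)\geq Q_h^{*}(s,a,r^k)-\tilde{\cO}((H-h)H\iota)$ with $r^k=u^k/H$, and telescoping the Bellman error yields
\begin{align*}
\sum_{k=1}^{K}\bigl[V_1^k(s_1,r^k)-V_1^{\pi^k}(s_1,r^k)\bigr]\lesssim \sum_{k=1}^{K}\sum_{h=1}^{H} u_h^k(s_h^k,a_h^k)+H^{3}K\iota.
\end{align*}
An elliptic-potential / maximal-information-gain argument, now stated for $\ker_m$, bounds the first sum by $\tilde{\cO}(\beta H\sqrt{HK\,\Gamma(K,\lambda;\ker_m)})$, controlling the average uncertainty along any policy's trajectories via the choice $r^k=u^k/H$. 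Applying the same UCB argument in the planning phase with an arbitrary extrinsic reward $r$ gives
\begin{align*}
V_1^{*}(s_1,r)-V_1^{\pi}(s_1,r)\leq 2\,\EE_{\pi^{*}}\!\left[\sum_{h=1}^{H} u_h(s_h,a_h)\right]+\tilde{\cO}(H^{2}\iota),
\end{align*}
and combining this with the exploration-phase bound via a pigeonhole argument over $k\in[K]$ produces the stated rate.

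The main obstacle is the uniform control of the linearization gap over all $HK$ value functions encountered in exploration and the one value function used in planning, while keeping the self-normalized concentration tight. In particular, the perturbation in $\varphi(\cdot;W_h^k)$ propagates into both $\Lambda_h^k$ and the bonus $w_h^k$, and pushing this perturbation through the ridge regression and its associated covering argument without blowing up the dependence on $K$ and $H$ is what forces the stringent width $m=\Omega(K^{19}H^{14}\log^3 m)$ and gives rise to the additive $H^{2}\beta\iota$ term in the final bound.
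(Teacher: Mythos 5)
Your plan follows the same route as the paper's proof: NTK linearization and feature-perturbation bounds on the ball $\|W-W^{(0)}\|_2\le H\sqrt{K/\lambda}$, a self-normalized ridge-regression concentration in the RKHS of $\ker_m$ with a union bound over a cover of the (linearized) Q-class giving a UCB of the form $u_h^k+\cO(\beta\iota)$ (Lemma \ref{lem:bonus_concentrate_neural}), optimism plus telescoping plus the elliptic-potential bound in the exploration phase (Lemma \ref{lem:bonus_explore_neural}), and finally the exploration-to-planning transfer. The obstacle you single out (pushing the perturbation of $\varphi(\cdot;W_h^k)$ through $\Lambda_h^k$, the bonus, and the covering argument) is exactly where the paper's work lies — it covers a surrogate Q-class built from $\varphi(\cdot;W^{(0)})$ and controls $V^k_{h+1}-V^k_{\lin,h+1}$ separately — so your outline is structurally faithful. (Two bookkeeping slips: the elliptic-potential sum is $\cO(\beta H\sqrt{K\Gamma})$, not $\beta H\sqrt{HK\Gamma}$, otherwise you lose an extra $\sqrt{H}$ relative to the stated $\sqrt{H^4}$ rate; and the per-step concentration bias should be tracked as $\beta\iota$ rather than a generic $\tilde\cO(H^2\iota)$ to land exactly on $H^2\beta\iota$.)

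The one step that fails as written is your planning-phase inequality $V_1^{*}(s_1,r)-V_1^{\pi}(s_1,r)\leq 2\,\EE_{\pi^{*}}[\sum_{h} u_h(s_h,a_h)]+\tilde\cO(H^{2}\iota)$. With an \emph{optimistic} planner, the error decomposition is $V_1^*-V_1^\pi\le (V_1^*-V_1)+(V_1-V_1^\pi)$; optimism kills the first term (up to $H\beta\iota$), and the second telescopes along the \emph{greedy policy} $\pi_h(s)=\argmax_a Q_h(s,a)$, yielding $2H\,V_1^{\pi}(s_1,u/H)+\cO(H\beta\iota)$, i.e.\ bonuses accumulated under $\pi$, not under $\pi^*$ (Lemma \ref{lem:bonus_plan_neural} and the proof in Section \ref{sec:proof_main_neural_single}). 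A bound in terms of $\EE_{\pi^*}$ is a pessimism-style statement and is false here: if one action at $s_1$ is optimal and fully explored (zero bonus) while another is unexplored (bonus capped at $H$), the optimistic planner is lured to the unexplored action and suffers $\Theta(1)$ (or worse) suboptimality while $\EE_{\pi^*}[\sum_h u_h]=0$. The fix is exactly the paper's: bound by $2H\,V_1^{\pi}(s_1,u/H)$, pass to $V_1^{*}(s_1,u/H)=\max_{\pi'}V_1^{\pi'}(s_1,u/H)$, and then invoke the bonus-monotonicity comparison $u_h(z)\le u_h^k(z)+2\beta\iota$ (the planning dataset contains every exploration prefix, plus the $\iota$-perturbation between $\varphi(\cdot;W_h)$, $\varphi(\cdot;W_h^k)$ and $\varphi(\cdot;W^{(0)})$), giving $V_1^{*}(s_1,u/H)\le K^{-1}\sum_k V_1^{*}(s_1,r^k)+2\beta\iota$ (Lemma \ref{lem:explore_plan_connect_neural}); this comparison, which your "pigeonhole over $k$" only gestures at, is the crux of the exploration-to-planning transfer. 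With that replacement the rest of your argument goes through and recovers the stated bound.
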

 
In Theorem \ref{thm:main_neural_single}, there is an error term $H^2\beta\iota$ that depends on $m^{-1/12}$. In the regime of overparameterization, when $m$ is sufficiently large, this term can be extremely small and $\iota \rightarrow 0, \ker_m\rightarrow \ker_\ntk$ if $m\rightarrow \infty$. Here $\Gamma(K, \lambda; \ker_m)$ and $\cN_{\infty}(\varsigma^*; R_K,  2\beta)$ characterize the intrinsic complexity of the function class. In particular, when $m$ is large, the overparamterized neural function setting can be viewed as a special case of RKHS with a misspecification error. If the eigenvalues of the kernel $\ker_m$ satisfy finite spectrum or exponential spectral decay, we know that $\beta$, $\Gamma(K, \lambda; \ker_m)$, and $\log \cN_{\infty}(\varsigma^*; R_K,  2\beta)$ are of the same orders to the ones in the discussion after Theorem \ref{thm:main_kernel_single}. Moreover, if $m$ is sufficiently large such that $H^2\beta\iota\leq \varepsilon$,  we obtain an $\tilde{\cO}(1/\varepsilon^2)$ sample complexity to achieve an $\cO(\varepsilon)$-suboptimal policy.

Overall, the above results show that with the kernel function approximation and overparameterized neural function approximation, Algorithms \ref{alg:exploration_phase_game} and \ref{alg:plan_phase_game} guarantee $\tilde{\cO}(1/\varepsilon^2)$ sample complexity for achieving $\varepsilon$-suboptimal policy, which matches existing $\tilde{\cO}(1/\varepsilon^2)$ results for the single-agent MDP for the tabular case or with linear function approximation in terms of $\varepsilon$.

\section{Zero-Sum Markov Game Setting}

In this section, we introduce the algorithms under the Markov game setting with kernel and neural function approximations. We further present their theoretical results on the sample complexity.

\subsection{Kernel Function Approximation}

The exploration phase and planning phase for the zero-sum game are summarized in Algorithm \ref{alg:exploration_phase_game} and Algorithm \ref{alg:plan_phase_game}. 

Specifically, in the exploration phase, the exploration policies for both players are obtained by taking maximum on Q-function over both action spaces. Thus, Algorithm \ref{alg:exploration_phase_game} in essence is an extension of Algorithm \ref{alg:exploration_phase_single} and performs the same exploration step, if we view the pair $(a,b)$ as an action $\ba := (a,b)$ on the action space $\cA\times \cB$ and regard the exploration policy pair $(\pi_h^k(s), \nu_h^k(s))$ as a product policy $(\pi_h^k \otimes \nu_h^k)(s)$. Thus, the approximator $f_h^k(z)$ and the bonus term $u_h^k(z)$ share the same forms as \eqref{eq:kernel_approx} and \eqref{eq:kernel_bonus} if we slightly abuse the notation by letting $z = (s,a,b)$.

 \begin{algorithm}[t]\caption{Exploration Phase for Zero-Sum Markov Game} 
	\begin{algorithmic}[1]
		\State {\bfseries Initialize:} $\delta > 0$ and $\varepsilon > 0$. 
		\For{episode $k=1,\ldots,K$}   
		   	\State Let $V_{H+1}^{k}(\cdot) = \boldsymbol 0$ and $Q_{H+1}^{k}(\cdot, \cdot, \cdot) = \boldsymbol 0$
		        \For{step $h=H, H-1,\ldots, 1$} 
					\State \label{line:bonus_explore_game}Construct bonus term $u_h^k (\cdot, \cdot, \cdot)$
					\State Exploration reward $r_h^k(\cdot, \cdot, \cdot) = u_h^k(\cdot, \cdot, \cdot)/H$ 
					\State \label{line:func_explore_game}Compute approximation function $f_h^k(\cdot, \cdot, \cdot)$
					\State	$Q_h^k(\cdot,\cdot,\cdot) =  \Pi_{[0, H]} [(f_h^k + r_h^k + u_h^k) (\cdot, \cdot,\cdot)]$
					\State $V_h^k(\cdot) =  \max_{a\in \cA, b\in \cB} Q_h^k(\cdot, a, b)$
					\State $(\pi_h^k(\cdot), \nu_h^k(\cdot)) =  \argmax_{a\in \cA, b\in \cB} Q_h^k(\cdot, a, b)$
	            \EndFor	    				

				\State Take actions following $a_h^k \sim \pi_h^{k}(s_h^k)$ and also $b_h^k \sim \nu_h^{k}(s_h^k),\forall h \in [H]$	
    \EndFor        
    \State {\bfseries Return:} $\{(s_h^k, a_h^k, u_h^k) \}_{(h,k)\in [H]\times [K]}$
	\end{algorithmic}\label{alg:exploration_phase_game}
\end{algorithm}

In the planning phase, the algorithm generates the policies for two players in a separate manner. While maintaining two optimistic Q-functions,  their policies are generated by finding NE of two games with payoff matrices $\overline{Q}_h$ and $\underline{Q}_h$ respectively, i.e., 
$(\pi_h(s), \overline{D}_0(s))$ is the solution to 
\begin{align*}
\max_{\pi'}\min_{\nu'} \EE_{a \sim \pi', b\sim\nu'} [\overline{Q}_h(s, a ,b)],
\end{align*}
and $(\underline{D}_0(s), \nu_h(s))$ is the solution to 
\begin{align*}
\max_{\pi'}\min_{\nu'} \EE_{a \sim \pi', b\sim\nu'}  [\underline{Q}_h(s, a, b)],
\end{align*}
which can be solved efficiently in computation by many existing min-max optimization algorithms (e.g., \citet{koller1994fast}). 

Moreover, we construct the approximation functions for Player 1 and Player 2 similarly via \eqref{eq:kernel_plan_regression} by letting $z = (s,a,b)$ and placing the value function with $\overline{V}$ and $\underline{V}$ separately such that we have 
\begin{align*}
\overline{f}_h(z) =\Pi_{[0, H]} [\psi_h(z)^\top (\lambda \cdot I +\cK_h )^{-1} \overline{\yb}_h],\\
\underline{f}_h(z) = \Pi_{[0, H]} [\psi_h(z)^\top (\lambda \cdot I +\cK_h )^{-1} \underline{\yb}_h],
\end{align*}
where $\overline{\yb}_h := [\overline{V}_{h+1}(s_{h+1}^1), \cdots, \overline{V}_{h+1}(s_{h+1}^K)  ]^\top$ and $\underline{\yb}_h := [\underline{V}_{h+1}(s_{h+1}^1), \cdots, \underline{V}_{h+1}(s_{h+1}^K)  ]^\top$. Then, for the bonus term, Players 1 and 2 share the one of the same form, i.e., $\overline{u}_h(z) = \underline{u}_h(z) := u_h(z) = \min\{ \beta\cdot w_h(z), H\}$ with
\begin{align*}
w_h(z) = \lambda^{-\frac{1}{2}} [ \ker(z,z)- \psi_h(z)^\top (\lambda I + \cK_h )^{-1}  \psi_h(z)]^{\frac{1}{2}}.
\end{align*}

The combination of Algorithm \ref{alg:exploration_phase_game} and Algorithm \ref{alg:plan_phase_game} is a generic framework for distinct function approximators under the zero-sum Markov game setting. Based on the above framework, we further propose the reward-free algorithm for the zero-sum Markov game with the neural function approximation in the next subsection.

\subsection{Neural Function Approximation}

For the neural function approximation, the exploration and planning phases follow Algorithm \ref{alg:exploration_phase_game} and \ref{alg:plan_phase_game}. In the exploration phase, following the same discussion for the exploration algorithm with kernel function approximation, Algorithm \ref{alg:exploration_phase_game} with the neural approximator is intrinsically the same as Algorithm \ref{alg:exploration_phase_single}. Thus, one can follow the same approaches to construct the neural function approximator $f_h^k(z) =\Pi_{[0,H]}  [f(z; W_h^k)]$ and the bonus $u_h^k(z)$ as in \eqref{eq:neural_approx_explore} and \eqref{eq:neural_bonus_explore} with only letting $z = (s,a,b)$.

For the planning phase (Algorithm \ref{alg:plan_phase_game}), letting $z=(s,a,b)$, we construct approximation functions separately for Player 1 and Player 2 via solving two regression problems
\begin{align*}
\overline{W}_h = \argmin_{W\in \RR^{2md}} \sum_{\tau=1}^{K}[\overline{V}_{h+1}(s_{h+1}^\tau) - f(z_h^\tau; W)]^2 + \lambda \|W-\W0\|_2^2, \\
\underline{W}_h =\argmin_{W\in \RR^{2md}} \sum_{\tau=1}^{K}[\underline{V}_{h+1}(s_{h+1}^\tau) - f(z_h^\tau; W)]^2 + \lambda \|W-\W0\|_2^2, 
\end{align*}
such that we let $\overline{f}_h(z) = \Pi_{[0,H]} [f(z; \overline{W}_h)]$ and $\underline{f}_h(z) = \Pi_{[0,H]} [f(z; \underline{W}_h)]$.
The bonus terms $\overline{u}_h$ and $\underline{u}_h$ for Players 1 and 2 are $\overline{u}_h(z) := \min\{\beta \cdot \overline{w}_h(z),H\}$ and $\underline{u}_h(z) := \min\{\beta \cdot \underline{w}_h(z),H\}$ with
\begin{align*}
\overline{w}_h(z) = [\varphi(z; \overline{W}_h)^\top (\overline{\Lambda}_h)^{-1} \varphi(z; \overline{W}_h)]^{\frac{1}{2}}, \\
\underline{w}_h(z) = [\varphi(z; \underline{W}_h)^\top (\underline{\Lambda}_h)^{-1} \varphi(z; \underline{W}_h)]^{\frac{1}{2}}, 
\end{align*}
where we define the invertible matrices $\overline{\Lambda}_h := \lambda I_{2md} + \sum_{\tau=1}^K \varphi(z_h^\tau; \overline{W}_h)\varphi(z_h^\tau; \overline{W}_h)^\top$ and $\underline{\Lambda}_h := \lambda I_{2md} + \sum_{\tau=1}^K \varphi(z_h^\tau; \underline{W}_h)\varphi(z_h^\tau; \underline{W}_h)^\top$.

\begin{algorithm}[t]\caption{Planning Phase for Zero-Sum Markov Game} 
    \setstretch{1.1}
	\begin{algorithmic}[1]
		\State {\bfseries Initialize:} Reward function $\{r_h\}_{h\in [H]}$ and exploration data $\{(s_h^k, a_h^k, u_h^k)\}_{(h,k)\in [H]\times [K]}$
		        		
			\For{step $h=H, H-1, \ldots,1$}   
				\State  \label{line:bonus_plan_game}Compute bonus term $\overline{u}_h(\cdot,\cdot,\cdot)$ and $\underline{u}_h(\cdot,\cdot,\cdot)$
 	
			   	\State \label{line:func_plan_game}Compute approximations $\overline{f}_h(\cdot, \cdot, \cdot)$ and$\underline{f}_h(\cdot, \cdot, \cdot)$
					\State $\overline{Q}_h(\cdot, \cdot, \cdot) =  \Pi_{[0, H]}[(\overline{f}_h+ r_h + \overline{u}_h)(\cdot, \cdot, \cdot)]$
					\State $\underline{Q}_h(\cdot, \cdot, \cdot) =  \Pi_{[0, H]}[(\underline{f}_h + r_h - \underline{u}_h)(\cdot, \cdot, \cdot)]$
					\State Let $(\pi_h(s), \overline{D}_0(s))$ be NE for $\overline{Q}_h(s, \cdot, \cdot)$, $\forall s\in \cS$
					\State Let $(\underline{D}_0(s), \nu_h(s))$ be NE for $\underline{Q}_h(s, \cdot, \cdot)$, $\forall s\in \cS$	
					
					\State $\overline{V}_h(s) =  \EE_{a\sim\pi_h(s), b\sim \overline{D}_0(s)}[\overline{Q}_h(s, a, b)]$, $\forall s\in \cS$
					\State $\underline{V}_h(s) =  \EE_{a\sim\underline{D}_0(s), b\sim\nu_h(s)} [\underline{Q}_h(s, a, b)]$, $\forall s\in \cS$
	            \EndFor	
	\State {\bfseries Return:} $\{\pi_h\}_{h\in[H]}, \{\nu_h\}_{h\in[H]}$
	\end{algorithmic}\label{alg:plan_phase_game}
\end{algorithm}

\subsection{Theoretical Results for Zero-Sum Markov Game}

In this subsection, we present the results for the zero-sum Markov game setting. Particularly, we make the same assumptions as in Section \ref{sec:result_single} with only letting $z = (s,a,b)$. Moreover, we also use the same Q-function class $\overline{\cQ}$ as \eqref{eq:Q_func_class}, such that we can see in Algorithms \ref{alg:exploration_phase_game} and  \ref{alg:plan_phase_game},  $Q_h^k \in \overline{\cQ}(\bm{0}, R, (1+1/H)\beta)$ for some $R$, and $\overline{Q}_h\in \overline{\cQ}(r_h, R', \beta)$ for some $R'$. To characterize the space which $\underline{Q}_h$ lies in, we define a specific Q-function class $\underline{\cQ}$ of the form
\begin{align} \label{eq:Q_func_class_neural}
\underline{\cQ}(c, R, B) = \{Q: Q \text{ satisfies the form of } Q^\flat\},
\end{align}
\noindent where ${Q}^{\flat}(z) =\min\{ c(z) + \Pi_{[0,H]} [\langle \wb, \phi(z)\rangle_\cH] -  g(z) , H\}^+$ for
some $\wb$ satisfying $\|\wb\|_\cH \leq R$ and also $g(z) =B\cdot \max\{  \|\phi(z)\|_{\Lambda_{\cD}^{-1}} , H/\beta\}^+$. Thus, we have $\underline{Q}_h \in \underline{\cQ}(r_h, R', \beta)$. As we show in Section \ref{sec:convering}, $\overline{\cQ}(c, R, B)$ and $\underline{\cQ}(c, R, B)$ have the same covering number upper bound w.r.t $\|\cdot\|_\infty$. Then, we can use the same notation $\cN_\infty$ to denote such upper bound.
Thus, we have the following result for kernel approximation.
\begin{theorem}\label{thm:main_kernel_game} Suppose that $\beta$ satisfies the condition $16H^2\big[ R^2_Q  +\log\cN_{\infty}(\varsigma^*; R_K,  2\beta)  + 2 \Gamma(K, \lambda; \ker) + 6\log(4KH) + 5  \big] \leq  \beta^2$. Under the kernel function approximation setting with a kernel $\ker$, letting $\lambda = 1+1/K$, $R_K=2H\sqrt{\Gamma(K, \lambda; \ker)}$, and $\varsigma^* = H/K$,  with probability at least $1-(2K^2H^2)^{-1}$,  the policy pair generated via Algorithm \ref{alg:plan_phase_game} satisfies $V_1^{\bre(\nu), \nu}(s_1,r) - V_1^{\pi, \bre(\pi)}(s_1,r) \leq \cO(\beta \sqrt{ H^4 [\Gamma(K, \lambda; \ker) + \log(KH)]}/\sqrt{K})$, after exploration for $K$ episodes with Algorithm \ref{alg:exploration_phase_game}.
\end{theorem}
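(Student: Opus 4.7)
The plan is to parallel the argument of Theorem \ref{thm:main_kernel_single}, but accounting for two-sided optimism to control the Nash-equilibrium gap. Concretely, I would first reduce the exploitability to a value-function difference via $V_1^{\bre(\nu), \nu}(s_1, r) - V_1^{\pi, \bre(\pi)}(s_1, r) \leq \overline{V}_1(s_1) - \underline{V}_1(s_1)$, which follows from the two inequalities $\overline{V}_h(s) \geq V_h^{\bre(\nu), \nu}(s, r)$ and $\underline{V}_h(s) \leq V_h^{\pi, \bre(\pi)}(s, r)$ for every $(s, h)$. The workhorse is a uniform concentration bound: under the stated lower bound on $\beta$, a self-normalized concentration inequality for kernel ridge regression combined with an $\ell_\infty$-cover of the Q-function classes $\overline{\cQ}$ and $\underline{\cQ}$ (both admitting the same covering-number bound $\cN_\infty(\varsigma^*; R_K, 2\beta)$) yields, with probability at least $1-(2K^2H^2)^{-1}$, both $|\overline{f}_h(z) - \PP_h \overline{V}_{h+1}(z)| \leq \beta\, w_h(z)$ and $|\underline{f}_h(z) - \PP_h \underline{V}_{h+1}(z)| \leq \beta\, w_h(z)$ uniformly in $(h, z)$. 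Assumption \ref{assump:kernel} ensures the linear-in-feature representability required for concentration, and the condition on $\beta$ absorbs the complexity term $\log \cN_\infty$.

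On this concentration event, I would prove the optimism and pessimism inequalities by backward induction on $h$. Given $\overline{V}_{h+1} \geq V_{h+1}^{\bre(\nu), \nu}$, concentration together with the monotonicity of $\Pi_{[0,H]}[\cdot]$ yields the pointwise dominance $\overline{Q}_h(s,a,b) \geq Q_h^{\bre(\nu), \nu}(s,a,b)$; combining this with the saddle-point identity $\overline{V}_h(s) = \max_{\pi'}\min_{\nu'} \EE_{a\sim\pi', b\sim\nu'}[\overline{Q}_h(s,a,b)]$ and a careful choice of test policy against $\nu_h$ (exploiting that $\bre(\nu)_h$ is a best response for Player~1 to $\nu_h$) closes the induction; the pessimistic side is symmetric. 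Next, I would derive the one-step recursion $\overline{V}_h(s) - \underline{V}_h(s) \leq 2\,\EE_{a\sim\pi_h, b\sim\nu_h}[u_h(s,a,b)] + \EE[\overline{V}_{h+1}(s') - \underline{V}_{h+1}(s') \mid s, \pi_h, \nu_h]$, by comparing the two matrix-game values $\overline{Q}_h, \underline{Q}_h$ through their extracted NE strategies and applying concentration once more; unrolling gives $\overline{V}_1(s_1) - \underline{V}_1(s_1) \leq 2\,\EE_{\pi, \nu}\bigl[\sum_{h=1}^H u_h(s_h, a_h, b_h)\bigr]$.

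The final step bounds this expected cumulative bonus via the exploration-phase analysis. Since Algorithm \ref{alg:exploration_phase_game} treats $(a,b)$ as a single action on $\cA\times\cB$ and sets the exploration reward $r_h^k = u_h^k/H$, it is structurally identical to the single-agent exploration scheme, so the reward-free machinery underlying Theorem \ref{thm:main_kernel_single} applies verbatim. Monotonicity of kernel-ridge posterior variance gives $w_h(z) \leq w_h^k(z)$ for every $k \in [K]$, hence $\EE_{\pi, \nu}\bigl[\sum_h u_h\bigr] \leq K^{-1}\sum_{k=1}^K \EE_{\pi, \nu}\bigl[\sum_h u_h^k\bigr]$; optimism of the exploration algorithm together with the scaling $r_h^k = u_h^k/H$ yields $V_1^k(s_1) \geq H^{-1}\EE_{\pi, \nu}\bigl[\sum_h u_h^k\bigr]$ for any policy pair, and an elliptic-potential / information-gain argument (using $\Gamma(K, \lambda; \ker)$ on the collected trajectories) controls $\sum_{k=1}^K V_1^k(s_1) \leq \tilde{\cO}(H\beta \sqrt{K\,\Gamma(K, \lambda; \ker)})$. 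Chaining these estimates produces the advertised $\cO(\beta\sqrt{H^4[\Gamma(K, \lambda; \ker) + \log(KH)]/K})$ bound. The hardest part will be the optimism/pessimism step in the second paragraph: unlike the single-agent case, the extracted policies $\pi$ and $\nu$ arise from NEs of two \emph{different} matrix games ($\overline{Q}_h$ and $\underline{Q}_h$ respectively), so establishing $\overline{V}_h \geq V_h^{\bre(\nu), \nu}$ and $\underline{V}_h \leq V_h^{\pi, \bre(\pi)}$ requires a delicate saddle-point argument on each side that must remain consistent with the clipping operator and with the concentration slack $\beta w_h$.
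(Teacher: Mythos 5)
Your first reduction step is where the argument breaks. You bound the exploitability by $\overline{V}_1(s_1)-\underline{V}_1(s_1)$ via the claimed pointwise inequalities $\overline{V}_h(s)\geq V_h^{\bre(\nu),\nu}(s,r)$ and $\underline{V}_h(s)\leq V_h^{\pi,\bre(\pi)}(s,r)$, but these are false in general, even on the concentration event, and no ``delicate saddle-point argument'' can rescue them: $\overline{V}_h$ only dominates the Nash value $V_h^\dagger$, whereas $V_h^{\bre(\nu),\nu}\geq V_h^\dagger$ can be strictly larger because $\nu$ is extracted from the \emph{other} game $\underline{Q}_h$. Concretely, take $H=1$, a single action for Player 1 and two actions $b_1,b_2$ for Player 2 with $r(b_1)=0.1$, $r(b_2)=1$, $u(b_1)=0$, $u(b_2)=1$ (concentration holds trivially since $\PP_1 V_2=0$). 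Then $\overline{Q}(b_1)=0.1$, $\overline{Q}(b_2)=1$, so $\overline{V}_1=0.1$, while $\underline{Q}(b_1)=0.1$, $\underline{Q}(b_2)=0$, so $\nu$ picks $b_2$ and $V_1^{\bre(\nu),\nu}=1>\overline{V}_1$; moreover the exploitability is $0.9$ while $\overline{V}_1-\underline{V}_1=0.1$, so even the final inequality $V_1^{\bre(\nu),\nu}-V_1^{\pi,\bre(\pi)}\leq\overline{V}_1-\underline{V}_1$ fails. The same backward-induction attempt collapses algebraically: you would need the game value of the optimistic matrix to dominate $\max_{\pi'}\EE_{a\sim\pi',b\sim\nu_h}[\,\cdot\,]$ with $\nu_h$ \emph{fixed}, but the min over $\nu'$ inside the game value goes the wrong way.

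The paper avoids this by decomposing through the Nash value: $V_1^{\bre(\nu),\nu}-V_1^{\pi,\bre(\pi)}=\bigl[V_1^{\bre(\nu),\nu}-V_1^\dagger\bigr]+\bigl[V_1^\dagger-V_1^{\pi,\bre(\pi)}\bigr]$, and uses only the provable sandwich $\underline{V}_1\leq V_1^\dagger\leq\overline{V}_1$ (Lemma \ref{lem:bonus_plan_game}). Each bracket is then unrolled against the correct opponent: since $\pi_h$ is the max-player NE component of $\overline{Q}_h$, $\overline{V}_h(s)=\min_{\nu'}\EE_{a\sim\pi_h,b\sim\nu'}[\overline{Q}_h(s,a,b)]\leq\EE_{a\sim\pi_h,b\sim\bre(\pi)_h}[\overline{Q}_h(s,a,b)]$, which gives a recursion for $\overline{V}_h-V_h^{\pi,\bre(\pi)}$ whose bonuses accumulate along trajectories of $(\pi,\bre(\pi))$, i.e.\ $\overline{V}_1-V_1^{\pi,\bre(\pi)}\leq 2H\,V_1^{\pi,\bre(\pi)}(s_1,u/H)$; the term $V_1^{\bre(\nu),\nu}-\underline{V}_1$ is handled symmetrically along $(\bre(\nu),\nu)$. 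Both cumulative-bonus values are then bounded by $V_1^*(s_1,u/H)\leq K^{-1}\sum_{k=1}^K V_1^*(s_1,r^k)$ (Lemma \ref{lem:explore_plan_connect_game}) and the exploration bound (Lemma \ref{lem:bonus_explore_game}). Your remaining machinery — the uniform concentration with the common covering-number bound for $\overline{\cQ}$ and $\underline{\cQ}$, the monotonicity $w_h\leq w_h^k$, and the information-gain bound on $\sum_k V_1^k(s_1)$ — matches the paper and is fine; replacing your $\overline{V}_1-\underline{V}_1$ target (and the $(\pi,\nu)$-rollout recursion) with the two one-sided rollouts above repairs the proof.
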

We further obtain the result for the neural function approximation scenario.
\begin{theorem}\label{thm:main_neural_game} Suppose that $\beta$ satisfies the condition that $8H^2[10 + 12\log(4K /\delta) + R_Q^2 (1+\sqrt{\lambda/d})^2 +4\log\cN_{\infty}(\varsigma^*; R_K,  2\beta)  + 4 \Gamma(K, \lambda; \ker_m)]  \leq \beta^2$ with $m=\Omega(K^{19} H^{14} \log^3 m)$. Under the overparameterized neural function approximation setting, letting $\lambda = C(1+1/K)$ for some constant $C\geq 1$, $R_K=H\sqrt{K}$, and $\varsigma^* = H/K$, with probability at least $1-(2K^2H^2)^{-1}-4m^{-2}$,  the policy pair generated via Algorithm \ref{alg:plan_phase_game} satisfies $V_1^{\bre(\nu), \nu}(s_1,r) - V_1^{\pi, \bre(\pi)}(s_1,r) \leq \cO(\beta \sqrt{ H^4  [\Gamma(K, \lambda; \ker_m) + \log (KH)]}/\sqrt{K} +  H^2 \beta \iota)$ with $\iota = 5K^{7/12}H^{1/6} m^{-1/12} \log^{1/4} m $, after exploration for $K$ episodes with Algorithm \ref{alg:exploration_phase_game}.
\end{theorem}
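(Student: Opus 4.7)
The plan is to combine the zero-sum Markov game analysis underlying Theorem \ref{thm:main_kernel_game} with the neural-to-NTK linearization machinery used in Theorem \ref{thm:main_neural_single}. Every step of the game-theoretic argument carries over after we replace the RKHS with kernel $\ker$ by the NTK RKHS with $\ker_m$, at the price of an additional $O(\iota)$ slack per step to absorb the discrepancy between $f(z;W)$ and its linearization $f_{\lin}(z;W) = \langle \varphi(z;\W0), W - \W0\rangle$. The $4m^{-2}$ piece of the failure probability tracks the NTK linearization events, while the $(2K^2H^2)^{-1}$ piece tracks the self-normalized concentration events.

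First, I would establish two-sided uniform estimation bounds for the planning phase: on a high-probability event, simultaneously for all $(h, z) \in [H] \times \cZ$,
\begin{align*}
\bigl|\overline{f}_h(z) - \PP_h \overline{V}_{h+1}(z)\bigr| \leq \beta \cdot \overline{w}_h(z) + O(H\iota),
\end{align*}
and symmetrically for $\underline{f}_h$ and $\underline{V}_{h+1}$. This combines four ingredients: (i) a self-normalized concentration inequality in the NTK RKHS governed by the information gain $\Gamma(K, \lambda; \ker_m)$; (ii) a covering-number argument over $\overline{\cQ}(r_h, R_K, 2\beta)$ and $\underline{\cQ}(r_h, R_K, 2\beta)$ to break the dependence introduced by the backward induction defining $\overline{V}_{h+1}$ and $\underline{V}_{h+1}$; (iii) the Jacobian perturbation $\sup_z\|\varphi(z;W) - \varphi(z;\W0)\|_2 = O(\iota)$ on the event $\|W-\W0\|_2 \leq R_K$; and (iv) the pointwise linearization error $|f(z;W) - f_{\lin}(z;W)| = O(\iota)$ on the same event. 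Parts (iii)--(iv) require $m = \Omega(K^{19}H^{14}\log^3 m)$ and produce the $4m^{-2}$ contribution.

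Second, I would carry out the game-theoretic reduction by backward induction. Using the estimation bound together with the $\pm \overline{u}_h$ and $\pm \underline{u}_h$ slack in the construction of $\overline{Q}_h$ and $\underline{Q}_h$, establish $\overline{Q}_h \geq Q_h^{\dag} - O((H-h+1)\iota)$ and $\underline{Q}_h \leq Q_h^{\dag} + O((H-h+1)\iota)$ pointwise, which lift to $\overline{V}_h \geq V_h^{\dag} - O(H\iota)$ and $\underline{V}_h \leq V_h^{\dag} + O(H\iota)$ via the matrix-NE definitions on Lines 7--10 of Algorithm \ref{alg:plan_phase_game}. Next, decompose the exploitability gap as
\begin{align*}
V_1^{\bre(\nu),\nu} - V_1^{\pi,\bre(\pi)} = \bigl[V_1^{\bre(\nu),\nu} - \underline{V}_1\bigr] + \bigl[\underline{V}_1 - V_1^\dag\bigr] + \bigl[V_1^\dag - \overline{V}_1\bigr] + \bigl[\overline{V}_1 - V_1^{\pi,\bre(\pi)}\bigr].
\end{align*}
The two middle terms are $\leq O(H\iota)$ by the optimism/pessimism step; the two extremes I would bound by unrolling the Bellman equation against the policy pairs $(\bre(\nu),\nu)$ and $(\pi,\bre(\pi))$ respectively, producing $\EE_{\PP,\bre(\nu),\nu}[\sum_h 2\underline{u}_h]$ and $\EE_{\PP,\pi,\bre(\pi)}[\sum_h 2\overline{u}_h]$ plus $O(H^2 \iota)$.

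Finally, I would relate the expected sum of planning bonuses to the exploration quantities. Because the exploration reward equals $u_h^k/H$ and $(\pi_h^k, \nu_h^k)$ greedily maximizes $Q_h^k$, the analysis from Theorem \ref{thm:main_neural_single} transfers verbatim with $z=(s,a,b)$: an elliptical-potential computation for the NTK Gram matrix yields $\sum_{k=1}^K [w_h^k(z_h^k)]^2 = O(\Gamma(K,\lambda;\ker_m))$, and a pigeonhole / change-of-measure argument bounds $\max_{\pi',\nu'} \EE_{\PP,\pi',\nu'}[\sum_h (\overline{u}_h+\underline{u}_h)(s_h,a_h,b_h)]$ by $O\bigl(\beta\sqrt{H^4[\Gamma(K,\lambda;\ker_m)+\log(KH)]/K}\bigr)$. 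Adding the $O(H^2\beta\iota)$ slack from Steps 1--2 gives the stated bound. The hard part will be Step 1: the covering is taken in the idealized NTK RKHS, while $\overline{f}_h,\overline{u}_h,\underline{f}_h,\underline{u}_h$ are computed from the empirical Jacobians $\varphi(z;\overline{W}_h), \varphi(z;\underline{W}_h)$ at the learned weights rather than $\W0$, so all Jacobian perturbation, linearization, and covering slacks must be orchestrated so that their aggregate fits inside a single $O(\iota)$-scale term — this is precisely what forces the $m=\Omega(K^{19}H^{14}\log^3 m)$ width requirement.
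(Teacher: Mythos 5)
Your proposal follows essentially the same route as the paper's proof: NTK linearization plus covering-number and self-normalized concentration arguments giving confidence bounds with an extra $\beta\iota$ slack, optimism/pessimism for $\overline{V}_h$ and $\underline{V}_h$ via backward induction, unrolling the duality gap against the policy pairs $(\pi,\bre(\pi))$ and $(\bre(\nu),\nu)$ into expected bonus sums, and then transferring to the exploration phase through Gram-matrix monotonicity (planning bonus dominated by each episode's exploration reward) together with the single-agent exploration regret bound applied with $z=(s,a,b)$. The only quibble is bookkeeping: the per-step slack coming from evaluating features at the learned weights rather than $\W0$ is of order $\beta\iota$, not $H\iota$, which is exactly what yields the $H^2\beta\iota$ term you correctly state in the final bound.
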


Following the same discussion as in Section \ref{sec:result_single}, the above results show that with the kernel function approximation and overparameterized neural function approximation, Algorithms \ref{alg:exploration_phase_game} and \ref{alg:plan_phase_game} guarantee an $\tilde{\cO}(1/\varepsilon^2)$ sample complexity to achieve an $\varepsilon$-approximate NE. In particular, when our problem reduces to the Markov game with linear function approximation, the algorithm requires $\tilde{\cO}(H^6 \mathfrak{d}^3 /\varepsilon^2)$ sample complexity to achieve an $\varepsilon$-approximate NE, where $\mathfrak{d}$ is the feature dimension. This also complements the result of the reward-free RL for the Markov game with the linear function approximation. For the tabular case, \citet{bai2020provable} gives an $\tilde{\cO}(H^5|\cS|^2 |\cA| |\cB|/\varepsilon^2)$ sample complexity and \citet{liu2020sharp} gives an $\tilde{\cO}(H^4|\cS| |\cA| |\cB|/\varepsilon^2)$ sample complexity. Our analysis gives an  $\tilde{\cO}(H^6|\cS|^3 |\cA|^3 |\cB|^3/\varepsilon^2)$ sample complexity by simply letting $\mathfrak{d} = |\cS| |\cA| |\cB|$, which matches the existing results in terms of $\varepsilon$. Though the dependence on $H,|\cS|, |\cA|, |\cB|$ is not as tight as existing results, our work presents a more general analysis for the function approximation setting which is not fully studied in previous works.

\section{Theoretical Analysis}

\subsection{Proof Sketches of Theorem \ref{thm:main_kernel_single} and Theorem \ref{thm:main_neural_single} }

We first show the proof sketch for Theorem \ref{thm:main_kernel_single}. Our goal is to bound the term $V_1^*(s_1, r) - V_1^\pi(s_1, r)$. By the optimistic updating rule in the planning phase, according to Lemma \ref{lem:bonus_plan}, we have $V_1^*(s_1, r) \leq V_1(s_1) $ such that $V_1^*(s_1, r) - V_1^\pi(s_1, r) \leq V_1(s_1) - V_1^\pi(s_1, r)$. Then we only need to consider bounding $V_1(s_1) - V_1^\pi(s_1, r)$. Further by this lemma, for any $h\in [H]$, we have
\begin{align}
\begin{aligned} \label{eq:proof_kernel_diff}
V_h(s) - V_h^\pi(s, r) &\leq  r_h(s,\pi_h(s)) + \PP_h V_{h+1}(s,\pi_h(s)) + 2u_h(s,\pi_h(s))  - Q_h^\pi(s, \pi_h(s), r)\\
&= \PP_h V_{h+1} (s,\pi_h(s)) - \PP_h V_{h+1}^\pi(s,\pi_h(s), r) + 2u_h(s,\pi_h(s)).
\end{aligned}
\end{align}
where we use the fact that $ Q_h^\pi(s, \pi_h(s), r) = r_h(s,\pi_h(s)) + \PP_h V_{h+1}^\pi(s,\pi_h(s), r)$.
Recursively applying the above inequality and also using $V_{H+1}^\pi(s, r) = V_{H+1} (s)  = 0$ give
\begin{align*}
&V_1(s_1) - V_1^\pi(s_1,r) \leq  \EE_{\PP}[\textstyle{\sum}_{h=1}^H   2u_h(s_h,\pi_h(s_h))| s_1 ]=2H \cdot V_1^\pi(s_1, u/H). 
\end{align*}
Moreover, by Lemma \ref{lem:explore_plan_connect}, we build a connection between the exploration and planing phase, which is $ V_1^\pi(s_1, u/H) \leq K^{-1} \sum_{k=1}^K V_1^*(s_1, r^k)$. Therefore, combining the above results together, we eventually obtain
\begin{align*}
&V_1^*(s_1, r) - V_1^\pi(s_1, r) \leq 2H/K\cdot  \textstyle{\sum}_{k=1}^K V_1^*(s_1, r^k) \leq \cO\big( \beta\sqrt{H^4 [\Gamma(K, \lambda; \ker)+\log(KH)]} /\sqrt{K} \big),
\end{align*}
where the last inequality is by Lemma \ref{lem:bonus_explore} and the fact that $\beta \geq H$. This completes the proof of Theorem \ref{thm:main_kernel_single}. The proof of this theorem is inspired by \citet{wang2020reward} for the linear function approximation setting and is a non-trivial generalization to the nonlinear kernel function approximation scenario here. Please see detailed proof in Section \ref{sec:proof_main_kernel_single}.

Next, we show the proof sketches of Theorem \ref{thm:main_neural_single}. By Lemma \ref{lem:bonus_plan_neural}, we have $V_1^*(s_1, r)  \leq V_1(s_1) + H\beta\iota$ by optimism, such that $V_1^*(s_1, r) - V_1^\pi(s_1, r) \leq V_1(s_1) - V_1^\pi(s_1, r) + H\beta\iota$. Note that different from the proof of Theorem \ref{thm:main_kernel_single}, there is an extra bias term $H\beta\iota$ introduced by the neural function approximation. Further by Lemma \ref{lem:bonus_plan_neural}, and using the same argument as \eqref{eq:proof_kernel_diff}, we have
\begin{align*}
&V_h(s) - V_h^\pi(s, r) \leq 2u_h(s,\pi_h(s)) +\beta\iota  + \PP_h V_{h+1} (s,\pi_h(s)) - \PP_h V_{h+1}^\pi(s,\pi_h(s), r),
\end{align*}
which introducing another bias $\beta\iota $. Recursively applying the above inequality with $V_{H+1}^\pi(s,r) = V_{H+1} (s)  = 0$ gives
\begin{align*}
V_1(s_1) - V_1^\pi(s_1,r) =2H \cdot V_1^\pi(s_1, u/H) + H\beta\iota. 
\end{align*}
Thus, with Lemma \ref{lem:explore_plan_connect_neural} connecting the exploration and planning phases such that $V_1^\pi(s_1, u/H) \allowbreak\leq K^{-1} \sum_{k=1}^K V_1^*(s_1, r^k) + 2\beta\iota$, combining all the above results eventually yields
\begin{align*}
V_1^*(s_1, r) - V_1^\pi(s_1, r) & \leq 2H/K\cdot \textstyle{\sum}_{k=1}^K V_1^*(s_1, r^k) + 4H\beta\iota\\
&\leq \cO\Big(\beta\sqrt{H^4 [\Gamma(K, \lambda; \ker_m) + \log(KH)]} \big/\sqrt{K} + H^2 \beta \iota \Big),
\end{align*}
where the second inequality and the last inequality is by Lemma \ref{lem:bonus_explore_neural} and the fact that $\beta \geq H$. This completes the proof. Please see detailed proof in Section \ref{sec:proof_main_neural_single}.

\subsection{Proof Sketches of Theorem \ref{thm:main_kernel_game} and Theorem \ref{thm:main_neural_game} }

In the proofs of Theorem \ref{thm:main_kernel_game} and Theorem \ref{thm:main_neural_game} and the corresponding lemmas, to simply the notations, we let $\EE_{a\sim \pi_h, b\sim \nu_h, s'\sim \PP_h}$ denote the expectation with $a\sim \pi_h(s), b\sim \nu_h(s), s'\sim \PP_h(\cdot|s,a,b)$ given the current state $s$ and arbitrary policies $\pi_h$, $\nu_h$ at the $h$-th step.

For the proof sketch of  Theorem \ref{thm:main_kernel_game}, we decompose $V_1^{\bre(\nu), \nu}(s_1, r) - V_1^{\pi, \bre(\pi)}(s_1, r)$ into two terms $V_1^\dagger(s_1, r) - V_1^{\pi, \bre(\pi)}(s_1, r)$ and $V_1^{\bre(\nu), \nu}(s_1, r)  - V_1^\dagger(s_1, r)$ and bound them separately. To bound the first term, by Lemma \ref{lem:bonus_plan_game}, we have $V_1^\dagger(s_1, r) - V_1^{\pi, \bre(\pi)}(s_1, r) \leq \overline{V}_1(s_1) - V_1^{\pi, \bre(\pi)}(s_1, r)$. Note that by the updating rule for $\overline{V}_h$ in Algorithm \ref{alg:plan_phase_game}, we have
\begin{align*}
\overline{V}_h(s) &= \min_{\nu'} \EE_{a\sim \pi_h, b\sim\nu'}[\overline{Q}_h(s,a,b)] \leq  \EE_{a\sim \pi_h, b\sim \bre(\pi)_h}[\overline{Q}_h(s,a,b)],
\end{align*}
such that further by Lemma \ref{lem:bonus_plan_game}, there is
\begin{align*}
&\overline{V}_h(s_h) - V_h^{\pi, \bre(\pi)}(s_h, r) \\
&\qquad\leq  \EE_{a_h\sim \pi_h, b_h\sim \bre(\pi)_h} [(\PP_h\overline{V}_{h+1}  + r_h + 2u_h)(s_h,a_h,b_h)]- V_h^{\pi, \bre(\pi)}(s_h, r) \\
&\qquad= \EE_{a_h\sim \pi_h, b_h\sim \bre(\pi)_h, s_{h+1}\sim\PP_h} [ \overline{V}_{h+1}(s_{h+1}) -  V_{h+1}^{\pi, \bre(\pi)}(s_{h+1}, r) + 2u_h(s_h,a_h,b_h) ],
\end{align*}
where the equality uses $V_h^{\pi, \bre(\pi)}(s_h, r) = \EE_{a_h\sim \pi_h, b_h\sim \bre(\pi)_h} [r_h(s_h,a_h, b_h) + \PP_h V_{h+1}^{\pi, \bre(\pi)}(s_h,a_h,b_h, r) ]$. Recursively applying the above inequality yields
\begin{align*}
\overline{V}_1(s_1) - V_1^{\pi, \bre(\pi)}(s_1,r) & \leq  \EE_{\pi, \bre(\pi), \PP}[\textstyle{\sum}_{h=1}^H 2u_h(s_h, a_h, b_h)| s_1 ] \\
&\qquad =2H \cdot V_1^{\pi, \bre(\pi)}(s_1, u/H). 
\end{align*}
Combining the above results eventually gives
\begin{align*}
V_1^\dagger(s_1, r) - V_1^{\pi, \bre(\pi)}(s_1, r) &\leq 2H \cdot V_1^{\pi, \bre(\pi)}(s_1, u/H) \leq \frac{2H}{K} \sum_{k=1}^K V_1^*(s_1, r^k) \\
&\leq \cO(\beta\sqrt{H^4 [\Gamma(K, \lambda;\ker)+\log (KH)]} /\sqrt{K} ),
\end{align*}
where the second inequality is due to Lemma \ref{lem:explore_plan_connect_game} and the last inequality is by Lemma \ref{lem:bonus_explore_game}.  The upper bound of the difference $V_1^\dagger(s_1, r) - V_1^{\pi, \bre(\pi)}(s_1, r)$ is also $\cO(\beta\sqrt{H^4 [\Gamma(K, \lambda;\ker)+\log (KH)]} /\sqrt{K} )$ with the similar proof idea. This completes the proof of Theorem \ref{thm:main_kernel_game}. Please see Section \ref{sec:proof_main_kernel_game} for details.

The proof of Theorem \ref{thm:main_neural_game} follows the same argument as above. The only difference is that the neural function approximation introduces bias terms depending on $\iota$ as we discussed in the proof sketch of Theorem \ref{thm:main_neural_single}. Thus, the final bound is $\cO(\beta \sqrt{ H^4  [\Gamma(K, \lambda; \ker_m) + \log (KH)]}/\sqrt{K} +  H^2 \beta \iota)$. Please see Section \ref{sec:proof_main_neural_game} for the detailed proof.

\section{Conclusion}
 
In this paper, we study the reward-free RL algorithms with kernel and neural function approximators for both single-agent MDPs and zero-sum Markov games. We prove that our methods can achieve $\widetilde{\mathcal{O}}(1 /\varepsilon^2)$ sample complexity for generating an $\varepsilon$-suboptimal policy or  $\varepsilon$-approximate NE.

\bibliography{bibliography}

\begin{thebibliography}{44}
\expandafter\ifx\csname natexlab\endcsname\relax\def\natexlab#1{#1}\fi
\expandafter\ifx\csname url\endcsname\relax
  \def\url#1{\texttt{#1}}\fi
\expandafter\ifx\csname urlprefix\endcsname\relax\def\urlprefix{}\fi

\bibitem[{Allen-Zhu et~al.(2018)Allen-Zhu, Li and Liang}]{allen2018learning}
\text{Allen-Zhu, Z.}, \text{Li, Y.} and \text{Liang, Y.} (2018).
\newblock Learning and generalization in overparameterized neural networks,
  going beyond two layers.
\newblock \textit{arXiv preprint arXiv:1811.04918}.

\bibitem[{Arora et~al.(2019)Arora, Du, Hu, Li and Wang}]{arora2019fine}
\text{Arora, S.}, \text{Du, S.}, \text{Hu, W.}, \text{Li, Z.} and \text{Wang,
  R.} (2019).
\newblock Fine-grained analysis of optimization and generalization for
  overparameterized two-layer neural networks.
\newblock In \textit{International Conference on Machine Learning}. PMLR.

\bibitem[{Ayoub et~al.(2020)Ayoub, Jia, Szepesvari, Wang and
  Yang}]{ayoub2020model}
\text{Ayoub, A.}, \text{Jia, Z.}, \text{Szepesvari, C.}, \text{Wang, M.} and
  \text{Yang, L.} (2020).
\newblock Model-based reinforcement learning with value-targeted regression.
\newblock In \textit{International Conference on Machine Learning}. PMLR.

\bibitem[{Bai and Jin(2020)}]{bai2020provable}
\text{Bai, Y.} and \text{Jin, C.} (2020).
\newblock Provable self-play algorithms for competitive reinforcement learning.
\newblock In \textit{International Conference on Machine Learning}. PMLR.

\bibitem[{Bai and Lee(2019)}]{bai2019beyond}
\text{Bai, Y.} and \text{Lee, J.~D.} (2019).
\newblock Beyond linearization: On quadratic and higher-order approximation of
  wide neural networks.
\newblock \textit{arXiv preprint arXiv:1910.01619}.

\bibitem[{Cai et~al.(2019)Cai, Yang, Jin and Wang}]{cai2019provably}
\text{Cai, Q.}, \text{Yang, Z.}, \text{Jin, C.} and \text{Wang, Z.} (2019).
\newblock Provably efficient exploration in policy optimization.
\newblock \textit{arXiv preprint arXiv:1912.05830}.

\bibitem[{Chowdhury and Gopalan(2017)}]{chowdhury2017kernelized}
\text{Chowdhury, S.~R.} and \text{Gopalan, A.} (2017).
\newblock On kernelized multi-armed bandits.
\newblock In \textit{International Conference on Machine Learning}. PMLR.

\bibitem[{Du et~al.(2019)Du, Lee, Li, Wang and Zhai}]{du2019gradient}
\text{Du, S.}, \text{Lee, J.}, \text{Li, H.}, \text{Wang, L.} and \text{Zhai,
  X.} (2019).
\newblock Gradient descent finds global minima of deep neural networks.
\newblock In \textit{International Conference on Machine Learning}. PMLR.

\bibitem[{Du et~al.(2018)Du, Zhai, Poczos and Singh}]{du2018gradient}
\text{Du, S.~S.}, \text{Zhai, X.}, \text{Poczos, B.} and \text{Singh, A.}
  (2018).
\newblock Gradient descent provably optimizes over-parameterized neural
  networks.
\newblock \textit{arXiv preprint arXiv:1810.02054}.

\bibitem[{Duan et~al.(2016)Duan, Chen, Houthooft, Schulman and
  Abbeel}]{duan2016benchmarking}
\text{Duan, Y.}, \text{Chen, X.}, \text{Houthooft, R.}, \text{Schulman, J.} and
  \text{Abbeel, P.} (2016).
\newblock Benchmarking deep reinforcement learning for continuous control.
\newblock In \textit{International conference on machine learning}. PMLR.

\bibitem[{Gao et~al.(2019)Gao, Cai, Li, Wang, Hsieh and
  Lee}]{gao2019convergence}
\text{Gao, R.}, \text{Cai, T.}, \text{Li, H.}, \text{Wang, L.}, \text{Hsieh,
  C.-J.} and \text{Lee, J.~D.} (2019).
\newblock Convergence of adversarial training in overparametrized neural
  networks.
\newblock \textit{arXiv preprint arXiv:1906.07916}.

\bibitem[{Grzes(2017)}]{grzes2017reward}
\text{Grzes, M.} (2017).
\newblock Reward shaping in episodic reinforcement learning.

\bibitem[{Jacot et~al.(2018)Jacot, Gabriel and Hongler}]{jacot2018neural}
\text{Jacot, A.}, \text{Gabriel, F.} and \text{Hongler, C.} (2018).
\newblock Neural tangent kernel: Convergence and generalization in neural
  networks.
\newblock \textit{arXiv preprint arXiv:1806.07572}.

\bibitem[{Jin et~al.(2020{\natexlab{a}})Jin, Krishnamurthy, Simchowitz and
  Yu}]{jin2020reward}
\text{Jin, C.}, \text{Krishnamurthy, A.}, \text{Simchowitz, M.} and \text{Yu,
  T.} (2020{\natexlab{a}}).
\newblock Reward-free exploration for reinforcement learning.
\newblock In \textit{International Conference on Machine Learning}. PMLR.

\bibitem[{Jin et~al.(2020{\natexlab{b}})Jin, Yang, Wang and
  Jordan}]{jin2020provably}
\text{Jin, C.}, \text{Yang, Z.}, \text{Wang, Z.} and \text{Jordan, M.~I.}
  (2020{\natexlab{b}}).
\newblock Provably efficient reinforcement learning with linear function
  approximation.
\newblock In \textit{Conference on Learning Theory}. PMLR.

\bibitem[{Kakade et~al.(2020)Kakade, Krishnamurthy, Lowrey, Ohnishi and
  Sun}]{kakade2020information}
\text{Kakade, S.}, \text{Krishnamurthy, A.}, \text{Lowrey, K.}, \text{Ohnishi,
  M.} and \text{Sun, W.} (2020).
\newblock Information theoretic regret bounds for online nonlinear control.
\newblock \textit{arXiv preprint arXiv:2006.12466}.

\bibitem[{Kaufmann et~al.(2020)Kaufmann, M{\'e}nard, Domingues, Jonsson,
  Leurent and Valko}]{kaufmann2020adaptive}
\text{Kaufmann, E.}, \text{M{\'e}nard, P.}, \text{Domingues, O.~D.},
  \text{Jonsson, A.}, \text{Leurent, E.} and \text{Valko, M.} (2020).
\newblock Adaptive reward-free exploration.
\newblock \textit{arXiv preprint arXiv:2006.06294}.

\bibitem[{Koller et~al.(1994)Koller, Megiddo and Von~Stengel}]{koller1994fast}
\text{Koller, D.}, \text{Megiddo, N.} and \text{Von~Stengel, B.} (1994).
\newblock Fast algorithms for finding randomized strategies in game trees.
\newblock In \textit{Proceedings of the twenty-sixth annual ACM symposium on
  Theory of computing}.

\bibitem[{Kreyszig(1978)}]{kreyszig1978introductory}
\text{Kreyszig, E.} (1978).
\newblock \textit{Introductory functional analysis with applications}, vol.~1.
\newblock wiley New York.

\bibitem[{Laud(2004)}]{laud2004theory}
\text{Laud, A.~D.} (2004).
\newblock Theory and application of reward shaping in reinforcement learning.
\newblock Tech. rep.

\bibitem[{Liu et~al.(2020)Liu, Yu, Bai and Jin}]{liu2020sharp}
\text{Liu, Q.}, \text{Yu, T.}, \text{Bai, Y.} and \text{Jin, C.} (2020).
\newblock A sharp analysis of model-based reinforcement learning with
  self-play.
\newblock \textit{arXiv preprint arXiv:2010.01604}.

\bibitem[{MacCluer(2008)}]{maccluer2008elementary}
\text{MacCluer, B.} (2008).
\newblock \textit{Elementary functional analysis}, vol. 253.
\newblock Springer Science \& Business Media.

\bibitem[{M{\'e}nard et~al.(2020)M{\'e}nard, Domingues, Jonsson, Kaufmann,
  Leurent and Valko}]{menard2020fast}
\text{M{\'e}nard, P.}, \text{Domingues, O.~D.}, \text{Jonsson, A.},
  \text{Kaufmann, E.}, \text{Leurent, E.} and \text{Valko, M.} (2020).
\newblock Fast active learning for pure exploration in reinforcement learning.
\newblock \textit{arXiv preprint arXiv:2007.13442}.

\bibitem[{Mnih et~al.(2015)Mnih, Kavukcuoglu, Silver, Rusu, Veness, Bellemare,
  Graves, Riedmiller, Fidjeland, Ostrovski et~al.}]{mnih2015human}
\text{Mnih, V.}, \text{Kavukcuoglu, K.}, \text{Silver, D.}, \text{Rusu, A.~A.},
  \text{Veness, J.}, \text{Bellemare, M.~G.}, \text{Graves, A.},
  \text{Riedmiller, M.}, \text{Fidjeland, A.~K.}, \text{Ostrovski, G.}
  \text{et~al.} (2015).
\newblock Human-level control through deep reinforcement learning.
\newblock \textit{nature}, \textbf{518} 529--533.

\bibitem[{Neyshabur et~al.(2018)Neyshabur, Li, Bhojanapalli, LeCun and
  Srebro}]{neyshabur2018towards}
\text{Neyshabur, B.}, \text{Li, Z.}, \text{Bhojanapalli, S.}, \text{LeCun, Y.}
  and \text{Srebro, N.} (2018).
\newblock Towards understanding the role of over-parametrization in
  generalization of neural networks.
\newblock \textit{arXiv preprint arXiv:1805.12076}.

\bibitem[{Schechter(2001)}]{schechter2001principles}
\text{Schechter, M.} (2001).
\newblock \textit{Principles of functional analysis}.
\newblock 36, American Mathematical Soc.

\bibitem[{Silver et~al.(2016)Silver, Huang, Maddison, Guez, Sifre, Van
  Den~Driessche, Schrittwieser, Antonoglou, Panneershelvam, Lanctot
  et~al.}]{silver2016mastering}
\text{Silver, D.}, \text{Huang, A.}, \text{Maddison, C.~J.}, \text{Guez, A.},
  \text{Sifre, L.}, \text{Van Den~Driessche, G.}, \text{Schrittwieser, J.},
  \text{Antonoglou, I.}, \text{Panneershelvam, V.}, \text{Lanctot, M.}
  \text{et~al.} (2016).
\newblock Mastering the game of go with deep neural networks and tree search.
\newblock \textit{nature}, \textbf{529} 484--489.

\bibitem[{Silver et~al.(2017)Silver, Schrittwieser, Simonyan, Antonoglou,
  Huang, Guez, Hubert, Baker, Lai, Bolton et~al.}]{silver2017mastering}
\text{Silver, D.}, \text{Schrittwieser, J.}, \text{Simonyan, K.},
  \text{Antonoglou, I.}, \text{Huang, A.}, \text{Guez, A.}, \text{Hubert, T.},
  \text{Baker, L.}, \text{Lai, M.}, \text{Bolton, A.} \text{et~al.} (2017).
\newblock Mastering the game of go without human knowledge.
\newblock \textit{nature}, \textbf{550} 354--359.

\bibitem[{Srinivas et~al.(2009)Srinivas, Krause, Kakade and
  Seeger}]{srinivas2009gaussian}
\text{Srinivas, N.}, \text{Krause, A.}, \text{Kakade, S.~M.} and \text{Seeger,
  M.} (2009).
\newblock Gaussian process optimization in the bandit setting: No regret and
  experimental design.
\newblock \textit{arXiv preprint arXiv:0912.3995}.

\bibitem[{Steinwart and Christmann(2008)}]{steinwart2008support}
\text{Steinwart, I.} and \text{Christmann, A.} (2008).
\newblock \textit{Support vector machines}.
\newblock Springer Science \& Business Media.

\bibitem[{Valko et~al.(2013)Valko, Korda, Munos, Flaounas and
  Cristianini}]{valko2013finite}
\text{Valko, M.}, \text{Korda, N.}, \text{Munos, R.}, \text{Flaounas, I.} and
  \text{Cristianini, N.} (2013).
\newblock Finite-time analysis of kernelised contextual bandits.
\newblock \textit{arXiv preprint arXiv:1309.6869}.

\bibitem[{Vinyals et~al.(2019)Vinyals, Babuschkin, Czarnecki, Mathieu, Dudzik,
  Chung, Choi, Powell, Ewalds, Georgiev et~al.}]{vinyals2019grandmaster}
\text{Vinyals, O.}, \text{Babuschkin, I.}, \text{Czarnecki, W.~M.},
  \text{Mathieu, M.}, \text{Dudzik, A.}, \text{Chung, J.}, \text{Choi, D.~H.},
  \text{Powell, R.}, \text{Ewalds, T.}, \text{Georgiev, P.} \text{et~al.}
  (2019).
\newblock Grandmaster level in starcraft ii using multi-agent reinforcement
  learning.
\newblock \textit{Nature}, \textbf{575} 350--354.

\bibitem[{Wang et~al.(2020{\natexlab{a}})Wang, Du, Yang and
  Salakhutdinov}]{wang2020reward}
\text{Wang, R.}, \text{Du, S.~S.}, \text{Yang, L.~F.} and \text{Salakhutdinov,
  R.} (2020{\natexlab{a}}).
\newblock On reward-free reinforcement learning with linear function
  approximation.
\newblock \textit{arXiv preprint arXiv:2006.11274}.

\bibitem[{Wang et~al.(2020{\natexlab{b}})Wang, Salakhutdinov and
  Yang}]{wang2020provably}
\text{Wang, R.}, \text{Salakhutdinov, R.} and \text{Yang, L.~F.}
  (2020{\natexlab{b}}).
\newblock Provably efficient reinforcement learning with general value function
  approximation.
\newblock \textit{arXiv preprint arXiv:2005.10804}.

\bibitem[{Wang et~al.(2018)Wang, Li and He}]{wang2018deep}
\text{Wang, W.~Y.}, \text{Li, J.} and \text{He, X.} (2018).
\newblock Deep reinforcement learning for nlp.
\newblock In \textit{Proceedings of the 56th Annual Meeting of the Association
  for Computational Linguistics: Tutorial Abstracts}.

\bibitem[{Wang et~al.(2019)Wang, Wang, Du and Krishnamurthy}]{wang2019optimism}
\text{Wang, Y.}, \text{Wang, R.}, \text{Du, S.~S.} and \text{Krishnamurthy, A.}
  (2019).
\newblock Optimism in reinforcement learning with generalized linear function
  approximation.
\newblock \textit{arXiv preprint arXiv:1912.04136}.

\bibitem[{Yang and Wang(2019)}]{yang2019sample}
\text{Yang, L.} and \text{Wang, M.} (2019).
\newblock Sample-optimal parametric q-learning using linearly additive
  features.
\newblock In \textit{International Conference on Machine Learning}. PMLR.

\bibitem[{Yang and Wang(2020)}]{yang2020reinforcement}
\text{Yang, L.} and \text{Wang, M.} (2020).
\newblock Reinforcement learning in feature space: Matrix bandit, kernels, and
  regret bound.
\newblock In \textit{International Conference on Machine Learning}. PMLR.

\bibitem[{Yang et~al.(2020)Yang, Jin, Wang, Wang and Jordan}]{yang2020provably}
\text{Yang, Z.}, \text{Jin, C.}, \text{Wang, Z.}, \text{Wang, M.} and
  \text{Jordan, M.} (2020).
\newblock Provably efficient reinforcement learning with kernel and neural
  function approximations.
\newblock \textit{Advances in Neural Information Processing Systems},
  \textbf{33}.

\bibitem[{Zanette et~al.(2020{\natexlab{a}})Zanette, Brandfonbrener, Brunskill,
  Pirotta and Lazaric}]{zanette2020frequentist}
\text{Zanette, A.}, \text{Brandfonbrener, D.}, \text{Brunskill, E.},
  \text{Pirotta, M.} and \text{Lazaric, A.} (2020{\natexlab{a}}).
\newblock Frequentist regret bounds for randomized least-squares value
  iteration.
\newblock In \textit{International Conference on Artificial Intelligence and
  Statistics}. PMLR.

\bibitem[{Zanette et~al.(2020{\natexlab{b}})Zanette, Lazaric, Kochenderfer and
  Brunskill}]{zanette2020provably}
\text{Zanette, A.}, \text{Lazaric, A.}, \text{Kochenderfer, M.~J.} and
  \text{Brunskill, E.} (2020{\natexlab{b}}).
\newblock Provably efficient reward-agnostic navigation with linear value
  iteration.
\newblock \textit{arXiv preprint arXiv:2008.07737}.

\bibitem[{Zhang et~al.(2020)Zhang, Du and Ji}]{zhang2020nearly}
\text{Zhang, Z.}, \text{Du, S.~S.} and \text{Ji, X.} (2020).
\newblock Nearly minimax optimal reward-free reinforcement learning.
\newblock \textit{arXiv preprint arXiv:2010.05901}.

\bibitem[{Zhou et~al.(2020{\natexlab{a}})Zhou, He and Gu}]{zhou2020provably}
\text{Zhou, D.}, \text{He, J.} and \text{Gu, Q.} (2020{\natexlab{a}}).
\newblock Provably efficient reinforcement learning for discounted mdps with
  feature mapping.
\newblock \textit{arXiv preprint arXiv:2006.13165}.

\bibitem[{Zhou et~al.(2020{\natexlab{b}})Zhou, Li and Gu}]{zhou2020neural}
\text{Zhou, D.}, \text{Li, L.} and \text{Gu, Q.} (2020{\natexlab{b}}).
\newblock Neural contextual bandits with ucb-based exploration.
\newblock In \textit{International Conference on Machine Learning}. PMLR.

\end{thebibliography}
\bibliographystyle{ims}

\newpage
\begin{appendices}
\onecolumn
\vspace{1em}
\renewcommand{\thesection}{\Alph{section}}

\vspace{1em}

\section{Discussion of Function Space Complexity} \label{sec:convering}

To characterize the function space complexity, we first introduce the notions for the eigenvalues of the RKHS. Define $\cL^2(\cZ)$ as the space of square-integrable functions on $\cZ$ w.r.t. Lebesgue measure and define $\langle \cdot, \cdot \rangle_{\cL^2}$ as the inner product on the space $\cL^2(\cZ)$. According to  Mercer’s Theorem \citep{steinwart2008support}, the kernel function $\ker(z,z')$ has a spectral expansion as $\ker(z,z')=\sum_{i=1}^\infty \sigma_i \varrho_i(z) \varrho_i(z')$ where $\{\varrho_i\}_{i\geq 1}$ are 	a set of orthonormal basis on $\cL^2(\cZ)$ and $\{\sigma_i\}_{i\geq 1}$ are positive eigenvalues. In this paper, we consider two types of eigenvalues' properties and make the following assumptions.
\begin{assumption} Assume $\{\sigma_i\}_{i\geq 1}$ satisfies one of the following eigenvalue decay conditions for some constant $\gamma > 0$:
\begin{itemize}
\item[(a)] $\gamma$-finite spectrum: we have $\sigma_i = 0$ for all $i > \gamma$;
\item[(b)] $\gamma$-exponential spectral decay: there exist constants $C_1>0$ and $C_2>0$ such that $\sigma_i \leq C_1 \exp (-C_2 \cdot i^\gamma)$ for all $i \geq 1$.
\end{itemize}
\end{assumption}

\vspace{5pt}

\noindent\textbf{Covering Numbers.} Next, we characterize the upper bound of the covering numbers of the Q-function sets  $\overline{\cQ}(c, R, B)$ and $\underline{\cQ}(c, R, B)$.  For any $Q_1, Q_2 \in \overline{\cQ}(c, R, B)$, we have
\begin{align*}
Q_1(z) = \min\left\{ c(z) + \Pi_{[0,H]} [\langle \wb_1, \phi(z)\rangle] + B\cdot \max\{  \|\phi(z)\|_{\Lambda_{\cD_1}^{-1}} , H/\beta\}^+ , H\right\}^+,\\
Q_2(z) = \min\left\{ c(z) + \Pi_{[0,H]} [\langle \wb_2, \phi(z)\rangle] + B\cdot \max\{  \|\phi(z)\|_{\Lambda_{\cD_2}^{-1}} , H/\beta\}^+ , H\right\}^+,
\end{align*}
for some $\wb_1, \wb_2$ satisfying $\|\wb_1\|_\cH \leq R$ and $\|\wb_2\|_\cH \leq R$. Then, due to the fact that the truncation operator is non-expansive, we have
\begin{align*}
\|Q_1(\cdot) - Q_2(\cdot)\|_\infty \leq \sup_z |\langle \wb_1 - \wb_2, \phi(z) \rangle_\cH| + B\sup_z \left|\|\phi(z)\|_{\Lambda_{\cD_1}^{-1}} - \|\phi(z)\|_{\Lambda_{\cD_2}^{-1}}\right|.
\end{align*}
The above inequality shows that it suffices to bound the covering numbers of of the RKHS norm ball of radius $R$ and the set of functions of the form $\|\phi(z)\|_{\Lambda_{\cD}^{-1}}$. Thus, we define the function class $\cF_\lambda:=\{\|\phi(\cdot)\|_\Upsilon: \|\Upsilon\|_{\op}\leq 1/\lambda\}$ since $\|\Lambda_\cD^{-1}\|_\op \leq 1/\lambda$ according to the definition of $\Lambda_\cD$. Let $\overline{\cN}_\infty (\epsilon; R, B)$ be the $\epsilon$-covering number of $\overline{\cQ}$ w.r.t. $\|\cdot\|_\infty$, $\cN_\infty (\epsilon, \cH, R)$ be the $\epsilon$-covering number of RKHS norm ball of radius $R$ w.r.t. $\|\cdot\|_\infty$, and $\cN_\infty (\epsilon, \cF, 1/\lambda)$ be the $\epsilon$-covering number of $\cF_\lambda$ w.r.t. $\|\cdot\|_\infty$. Thus, we have
\begin{align*}
\overline{\cN}_\infty (\epsilon; R, B) \leq\cN_\infty (\epsilon/2, \cH, R) \cdot\cN_\infty (\epsilon/(2B), \cF, 1/\lambda).
\end{align*}
We define the upper bound
\begin{align*}
\cN_\infty (\epsilon; R, B) := \cN_\infty (\epsilon/2, \cH, R) \cdot \cN_\infty (\epsilon/(2B), \cF, 1/\lambda).
\end{align*}
Then, we know
\begin{align*}
\log\cN_\infty (\epsilon; R, B) = \log\cN_\infty (\epsilon/2, \cH, R) +  \log\cN_\infty (\epsilon/(2B), \cF, 1/\lambda).
\end{align*}
Moreover, for any $Q_1, Q_2 \in \underline{\cQ}(c, R, B)$, we have
\begin{align*}
Q_1(z) = \min\left\{ c(z) + \Pi_{[0,H]} [\langle \wb_1, \phi(z)\rangle] - B\cdot \max\{  \|\phi(z)\|_{\Lambda_{\cD_1}^{-1}} , H/\beta\}^+ , H\right\}^+,\\
Q_2(z) = \min\left\{ c(z) + \Pi_{[0,H]} [\langle \wb_2, \phi(z)\rangle] - B\cdot \max\{  \|\phi(z)\|_{\Lambda_{\cD_2}^{-1}} , H/\beta\}^+ , H\right\}^+,
\end{align*}
which also implies 
\begin{align*}
\|Q_1(\cdot) - Q_2(\cdot)\|_\infty \leq \sup_z |\langle \wb_1 - \wb_2, \phi(z) \rangle_\cH| + B\sup_z \left|\|\phi(z)\|_{\Lambda_{\cD_1}^{-1}} - \|\phi(z)\|_{\Lambda_{\cD_2}^{-1}}\right|.
\end{align*}
Thus, we can bound the covering number $\underline{\cN}_\infty(\epsilon; R, B)$ of $\underline{\cQ}(c, R, B)$ in the same way, i.e., $\underline{\cN}_\infty(\epsilon; R, B)\allowbreak \leq \cN_\infty (\epsilon; R, B)$.

According to \citet{yang2020provably}, we have the following covering number upper bounds 
\begin{itemize}
\item[(a)] $\gamma$-finite spectrum:
\begin{align*}
&\log\cN_\infty (\epsilon/2, \cH, R)  \leq C_3 \gamma [\log(2R/\epsilon) + C_4],\\
&\log\cN_\infty (\epsilon/(2B), \cF, 1/\lambda) \leq C_5 \gamma^2 [\log(2B/\epsilon) + C_6];
\end{align*}

\item[(b)] $\gamma$-exponential spectral decay:
\begin{align*}
&\log\cN_\infty (\epsilon/2, \cH, R)  \leq C_3  [\log(2R/\epsilon) + C_4]^{1+1/\gamma},\\
&\log\cN_\infty (\epsilon/(2B), \cF, 1/\lambda) \leq C_5  [\log(2B/\epsilon) + C_6]^{1+2/\gamma}.
\end{align*}
\end{itemize}

\vspace{5pt}

\noindent\textbf{Maximal Information Gain.} Here we give the definition of maximal information gain and discuss its upper bounds based on different kernels.

\begin{definition}[Maximal Information Gain \citep{srinivas2009gaussian}]\label{def:eff_dim} For any fixed integer $\mathfrak{C}$ and any $\sigma > 0$, we define the maximal information gain associated with the RKHS $\cH$ as
\begin{align*}
\Gamma(\mathfrak{C}, \lambda; \ker) = \sup_{\cD\subseteq \cZ} \frac{1}{2} \log\det (I + \cK_\cD/\lambda),
\end{align*}
where the supremum is taken over all discrete subsets of $\cZ$ with cardinality no more than $\mathfrak{C}$, and $\cK_\cD$ is the Gram matrix induced by $\cD \subseteq \cZ$ based on the kernel $\ker$. 
\end{definition}
According to Theorem 5 in \citet{srinivas2009gaussian}, we have the maximal information gain characterized as follows
\begin{itemize}
\item[(a)] $\gamma$-finite spectrum:
\begin{align*}
\Gamma(K, \lambda; \ker) \leq C_7 \gamma \log K;
\end{align*}

\item[(b)] $\gamma$-exponential spectral decay:
\begin{align*}
\Gamma(K, \lambda; \ker) \leq C_7 (\log K)^{1+1/\gamma }.
\end{align*}
\end{itemize}

\noindent\textbf{Sample Complexity.} Given the above results, for the kernel approximation setting, according to the discussion in the proof of Corollary 4.4 in \citet{yang2020provably}, under the parameter settings in Theorem \ref{thm:main_kernel_single} or Theorem \ref{thm:main_kernel_game}, we have that for $\gamma$-finite spectrum setting, 
\begin{align*}
&\beta = \cO(\gamma H\sqrt{\log(\gamma KH)}), \quad \log\cN_{\infty}(\varsigma^*; R_K,  2\beta) = \cO(\gamma^2 \log(\gamma KH)),  \\
&\Gamma(K, \lambda; \ker) = \cO(\gamma\log K),
\end{align*}
which implies after $K$ episodes of exploration, the upper bound in Theorem \ref{thm:main_kernel_single} or Theorem \ref{thm:main_kernel_game} is 
\begin{align*}
 \cO \left(\sqrt{H^6\gamma^3\log^2(\gamma KH)/K} \right).
\end{align*}
This result further implies that to obtain an $\varepsilon$-suboptimal policy or $\varepsilon$-approximate NE, it requires $\tilde{\cO}(H^6 \gamma^3 /\varepsilon^2)$ rounds of exploration. 
In addition, for the $\gamma$-exponential spectral decay setting, we have
\begin{align*}
&\beta =  \cO( H\sqrt{\log( KH)} (\log K)^{1/\gamma}), \quad \log\cN_{\infty}(\varsigma^*; R_K,  2\beta) = \cO((\log K)^{1+2/\gamma}+(\log\log H)^{1+2/\gamma}), \\
&\Gamma(K, \lambda; \ker) = \cO( (\log K)^{1+1/\gamma}),
\end{align*}
which implies that after $K$ episodes of exploration, the upper bound in Theorem \ref{thm:main_kernel_single} or Theorem \ref{thm:main_kernel_game} is
\begin{align*}
 \cO \left(\sqrt{H^6\log^{2+3/\gamma}(KH)/K} \right).
\end{align*}
Then, to obtain an $\varepsilon$-suboptimal policy or $\varepsilon$-approximate NE, it requires $\cO(H^6 C_\gamma \log^{4+6/\gamma}(\varepsilon^{-1}) /\varepsilon^2) = \tilde{\cO}(H^6 C_\gamma /\varepsilon^2)$ episodes of exploration, where $C_\gamma$ is some constant depending on $1/\gamma$. 

The above results also hold for the neural function approximation under both single-agent MDP and Markov game setting if the kernel $\ker_m$ satisfies the $\gamma$-finite spectrum or $\gamma$-exponential spectral decay and the network width $m$ is sufficiently large such that the error term $H^2\beta \iota\leq \varepsilon$. Then, we can similarly obtain the upper bounds in Theorems \ref{thm:main_neural_single} and \ref{thm:main_neural_game}.

\noindent\textbf{Linear and Tabular Cases.} For the linear function approximation case, we have a feature map $\phi(s) \in \RR^\mathfrak{d}$, where $\mathfrak{d}$ is the feature dimension. Therefore, the associated kernel can be represented as $\ker(s,s') = \phi(s)^\top \phi(s') = \sum_{i=1}^\mathfrak{d} \phi_i(s) \phi_i(s')$. Thus, we know that under the linear setting, the kernel $\ker$ has $\mathfrak{d}$-finite spectrum. Thus, letting $\gamma = \mathfrak{d}$ in the $\gamma$-finite spectrum case, we have
\begin{align*}
&\beta = \cO(\mathfrak{d} H\sqrt{\log(\mathfrak{d} KH)}), \quad \log\cN_{\infty}(\varsigma^*; R_K,  2\beta) = \cO(\mathfrak{d}^2 \log(\mathfrak{d} KH)),  \\
&\Gamma(K, \lambda; \ker) = \cO(\mathfrak{d} \log K),
\end{align*}
which further implies that to achieve $V^*_1(s_1, r) - V^\pi_1(s_1, r) \leq \varepsilon$, it requires $\tilde{\cO}(H^6 \mathfrak{d}^3 /\varepsilon^2)$ rounds of exploration. This is consistent with the result in \citet{wang2020reward} for the single-agent MDP. This result also hold for the Markov game setting.

For the tabular case, since $\phi(z) = \be_z$ is the
canonical basis in $\RR^{|\cZ|}$, we have $\gamma = |\cZ|$ for the above $\gamma$-finite spectrum case. Therefore, for the single-agent MDP setting, we have $|\cZ| = |\cS||\cA|$, which implies 
\begin{align*}
&\beta = \cO( H|\cS||\cA|\sqrt{\log(|\cS||\cA| KH)}), \quad \Gamma(K, \lambda; \ker) = \cO(|\cS||\cA| \log K),  \\
&\log\cN_{\infty}(\varsigma^*; R_K,  2\beta) = \cO(|\cS|^2|\cA|^2 \log(|\cS||\cA| KH)).
\end{align*}
Then, the sample complexity becomes $\tilde{\cO}(H^6 |\cS|^3|\cA|^3 /\varepsilon^2)$ to obtain an $\varepsilon$-suboptimal policy. For the two-player Markov game setting, we have $|\cZ| = |\cS||\cA||\cB|$, which implies 
\begin{align*}
&\beta = \cO( H|\cS||\cA||\cB|\sqrt{\log(|\cS||\cA||\cB| KH)}), \quad \Gamma(K, \lambda; \ker) = \cO(|\cS||\cA| |\cB|\log K),  \\
&\log\cN_{\infty}(\varsigma^*; R_K,  2\beta) = \cO(|\cS|^2|\cA|^2|\cB|^2 \log(|\cS||\cA||\cB| KH)) .
\end{align*}
Then, the sample complexity becomes $\tilde{\cO}(H^6 |\cS|^3|\cA|^3 |\cB|^3/\varepsilon^2)$ to obtain an $\varepsilon$-approximate NE.


\section{Proofs for Single-Agent MDP with Kernel Function Approximation}

\subsection{Lemmas}

\begin{lemma}[Solution of Kernel Ridge Regression] \label{lem:compute_estimate} The approximation vector $\hat{f}_h^k \in \cH$ is obtained by solving the following kernel ridge regression problem
\begin{align*}
\minimize_{f\in \cH} \sum_{\tau=1}^{k-1} [V_{h+1}^k (s_{h+1}^\tau) - f(z_h^\tau) \rangle_\cH]^2 + \lambda \|f\|^2_\cH,
\end{align*}
such that we have
\begin{align*}
\hat{f}_h^k(z) = \langle\phi(z), \hat{f}_h^k \rangle_\cH = \psi_h^k(z)^\top (\lambda \cdot I +\cK_h^k )^{-1} \yb_h^k,
\end{align*}
where we define 
\begin{align}
\begin{aligned}\label{eq:kernel_sol_def_proof}
 &\psi_h^k (z) := \Phi_h^k  \phi(z)= [\ker(z, z_h^1), \cdots, \ker(z, z_h^{k-1})]^\top, \\
 &\Phi_h^k = [\phi(z_h^1), \phi(z_h^2), \cdots, \phi(z_h^{k-1})]^\top, \\
&\yb_h^k = [V_{h+1}^k(s_{h+1}^1), V_{h+1}^k(s_{h+1}^2) , \cdots, V_{h+1}^k(s_{h+1}^{k-1})  ]^\top,\\
 & \cK_h^k := \Phi_h^k (\Phi_h^k)^\top = \begin{bmatrix}
\ker(z_h^1, z_h^1) & \ldots & \ker(z_h^1, z_h^{k-1}) \\ 
\vdots &\ddots   &\vdots \\ 
\ker( z_h^{k-1} , z_h^1) &\ldots  & \ker( z_h^{k-1}, z_h^{k-1})
\end{bmatrix}, 
\end{aligned}
\end{align}
with denoting $z=(s,a)$ and $z_h^\tau = (s_h^\tau, a_h^\tau)$, and $\ker(x,y)= \langle \phi(z), \phi(z') \rangle_\cH, \forall z,z' \in \cZ = \cS\times \cA$.
\end{lemma}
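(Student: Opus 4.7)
The plan is to use the standard representer-theorem argument together with a push-through (Woodbury-type) identity, which together convert the infinite-dimensional RKHS optimization to a finite-dimensional linear solve. First, I would decompose any candidate $f \in \cH$ as $f = f_{\parallel} + f_{\perp}$, where $f_{\parallel}$ lies in the span of $\{\phi(z_h^\tau)\}_{\tau=1}^{k-1}$ and $f_{\perp}$ is orthogonal to this span in $\cH$. Since $f(z_h^\tau) = \langle f, \phi(z_h^\tau)\rangle_\cH = \langle f_{\parallel}, \phi(z_h^\tau)\rangle_\cH$ while $\|f\|_\cH^2 = \|f_{\parallel}\|_\cH^2 + \|f_{\perp}\|_\cH^2$, setting $f_{\perp} = 0$ strictly decreases the objective unless it is already zero, so the optimizer must be of the form $\hat f_h^k = (\Phi_h^k)^\top \alpha$ for some $\alpha \in \RR^{k-1}$.

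Next, I would substitute this parametrization into the objective. The fitting term becomes $\|\yb_h^k - \cK_h^k \alpha\|_2^2$ using $\langle (\Phi_h^k)^\top \alpha, \phi(z_h^\tau)\rangle_\cH = (\cK_h^k \alpha)_\tau$, and the regularizer becomes $\lambda \alpha^\top \cK_h^k \alpha$. Differentiating in $\alpha$ yields the normal equation $\cK_h^k(\cK_h^k \alpha - \yb_h^k) + \lambda \cK_h^k \alpha = 0$, i.e.\ $\cK_h^k\bigl[(\cK_h^k + \lambda I)\alpha - \yb_h^k\bigr] = 0$. Choosing the minimum-norm solution gives $\alpha = (\cK_h^k + \lambda I)^{-1} \yb_h^k$, and then
\begin{align*}
\hat f_h^k(z) = \langle (\Phi_h^k)^\top \alpha, \phi(z)\rangle_\cH = \alpha^\top \Phi_h^k \phi(z) = \psi_h^k(z)^\top (\lambda I + \cK_h^k)^{-1} \yb_h^k,
\end{align*}
which matches the claimed closed form.

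An alternative route I would mention is to write the first-order optimality condition directly in $\cH$: $\bigl((\Phi_h^k)^\top \Phi_h^k + \lambda I_\cH\bigr)\hat f_h^k = (\Phi_h^k)^\top \yb_h^k$, and then invoke the push-through identity $(\Phi_h^k)^\top\bigl(\Phi_h^k (\Phi_h^k)^\top + \lambda I_{k-1}\bigr)^{-1} = \bigl((\Phi_h^k)^\top \Phi_h^k + \lambda I_\cH\bigr)^{-1} (\Phi_h^k)^\top$ to obtain $\hat f_h^k = (\Phi_h^k)^\top (\cK_h^k + \lambda I)^{-1} \yb_h^k$, then evaluate at $z$. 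The one point requiring mild care (the main ``obstacle,'' though it is routine) is that $\cH$ may be infinite-dimensional, so the identity $I_\cH$ is an operator and the Woodbury step needs to be stated for the bounded operator $(\Phi_h^k)^\top \Phi_h^k$; however, since $\lambda > 0$ and $\cK_h^k$ is positive semidefinite, both $(\Phi_h^k)^\top \Phi_h^k + \lambda I_\cH$ and $\cK_h^k + \lambda I_{k-1}$ are boundedly invertible and the identity goes through. The representer-theorem route sidesteps this issue entirely, which is why I would present that as the main proof and mention the operator form only as a check.
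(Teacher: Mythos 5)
Your proof is correct, and your main route differs from the paper's. The paper proceeds directly in operator form: it writes the first-order optimality condition in $\cH$ to get $\hat{f}_h^k = (\Lambda_h^k)^{-1}(\Phi_h^k)^\top \yb_h^k$ with $\Lambda_h^k = \lambda I_\cH + (\Phi_h^k)^\top \Phi_h^k$, and then converts to the kernel form via exactly the push-through identity $(\Phi_h^k)^\top[\lambda I + \Phi_h^k(\Phi_h^k)^\top]^{-1} = [\lambda I_\cH + (\Phi_h^k)^\top\Phi_h^k]^{-1}(\Phi_h^k)^\top$ — i.e., the paper's proof is what you sketch as your ``alternative route.'' Your primary argument instead invokes the representer theorem: decompose $f$ into its component in $\mathrm{span}\{\phi(z_h^\tau)\}$ plus an orthogonal part, note the orthogonal part only inflates the regularizer, and reduce to a finite-dimensional quadratic in $\alpha \in \RR^{k-1}$. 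That route buys you freedom from manipulating inverses of operators on a possibly infinite-dimensional $\cH$ (the issue you flag), at the cost of a small extra step: the normal equation $\cK_h^k[(\cK_h^k+\lambda I)\alpha - \yb_h^k]=0$ does not pin down $\alpha$ when $\cK_h^k$ is singular, so rather than appealing to a ``minimum-norm'' choice you should note that any two solutions differ by $\delta$ with $\delta^\top \cK_h^k \delta = 0$, hence $(\Phi_h^k)^\top\delta = 0$ in $\cH$ and the resulting function (and thus $\hat f_h^k(z)$) is the same; with that observation, taking $\alpha = (\cK_h^k+\lambda I)^{-1}\yb_h^k$ is fully justified and your derivation matches the paper's conclusion.
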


\begin{proof}
We seek to solve the following kernel ridge regression problem in the RKHS
\begin{align*}
\hat{f}_h^k = \argmin_{f\in \cH} \sum_{\tau=1}^{k-1} [V_{h+1}^k (s_{h+1}^\tau) - f(s_h^\tau,a_h^\tau) \rangle_\cH]^2 + \lambda \|f\|^2_\cH, 
\end{align*}
which is equivalent to
\begin{align*}
\hat{f}_h^k = \argmin_{f\in \cH} \sum_{\tau=1}^{k-1} [V_{h+1}^k(s_{h+1}^\tau) - \langle f, \phi(s_h^\tau,a_h^\tau) \rangle_\cH]^2 + \lambda \langle f, f\rangle_\cH.
\end{align*}
By the first-order optimality condition, the above kernel ridge regression problem admits the following closed-form solution
\begin{align} \label{eq:RKHS_sol}
\hat{f}_h^k = (\Lambda^k_h)^{-1}(\Phi_h^k)^\top \yb_h^k, 
\end{align}
where we define
\begin{align*}
\Lambda_h^k = \sum_{\tau=1}^{k-1} \phi(s_h^{\tau}, a_h^{\tau}) \phi(s_h^{\tau}, a_h^{\tau})^\top + \lambda \cdot I_\cH = \lambda \cdot I_\cH + (\Phi_h^k)^\top \Phi_h^k, 
\end{align*}
with $I_\cH$ being the identity mapping in RKHS. Thus, by  \eqref{eq:RKHS_sol}, we have 
\begin{align*} 
\langle \hat{f}_h^k, \phi(z)\rangle_\cH =  \langle (\Lambda^k_h)^{-1}(\Phi_h^k)^\top \yb_h^k, \phi(s,a) \rangle_\cH, \quad \forall(z) \in \cS\times \cA,
\end{align*}
which can be further rewritten in terms of kernel  $\ker$ as follows
\begin{align}
\begin{aligned}\label{eq:RKHS_approx_ker}
\langle \hat{f}_h^k, \phi(z)\rangle_\cH &=  \langle (\Lambda^k_h)^{-1}(\Phi_h^k)^\top \yb_h^k, \phi(z) \rangle_\cH \\
&= \phi(z)^\top [\lambda \cdot I_\cH + (\Phi_h^k)^\top \Phi_h^k]^{-1} (\Phi_h^k)^\top \yb_h^k\\
&= \phi(z)^\top (\Phi_h^k)^\top[\lambda \cdot I +\Phi_h^k (\Phi_h^k)^\top ]^{-1} \yb_h^k \\
&= \psi_h^k(z)^\top (\lambda \cdot I +\cK_h^k )^{-1} \yb_h^k.
\end{aligned}
\end{align}
The third equality is by 
\begin{align*}
(\Phi_h^k)^\top[\lambda \cdot I +\Phi_h^k (\Phi_h^k)^\top ] = [\lambda \cdot I_\cH + (\Phi_h^k)^\top \Phi_h^k](\Phi_h^k)^\top,
\end{align*}
such that 
\begin{align*}
[\lambda \cdot I_\cH + (\Phi_h^k)^\top \Phi_h^k]^{-1}(\Phi_h^k)^\top = (\Phi_h^k)^\top[\lambda \cdot I +\Phi_h^k (\Phi_h^k)^\top ]^{-1}, 
\end{align*}
where $I$ is an identity matrix in $\RR^{(k-1) \times (k-1)}$. The last equality in \eqref{eq:RKHS_approx_ker} is by the definitions of $\psi_h^k (z)$ and $\cK_h^k$ in \eqref{eq:kernel_sol_def_proof}. This completes the proof.
\end{proof}

\begin{lemma} [Boundedness of Solution] \label{lem:linear_approx_param_bound}  When $\lambda \geq 1$, for any $(k, h) \in [K] \times [H]$,  $\hat{f}_h^k$ defined in \eqref{eq:RKHS_sol} satisfies
\begin{align*}
\|\hat{f}_h^k\|_\cH \leq H\sqrt{2K/\lambda \cdot \log\det(I + \cK_h^k/\lambda)} \leq 2H \sqrt{K\cdot \Gamma (K, \lambda; \ker)},
\end{align*}
where $\cK_h^k$ is defined in \eqref{eq:kernel_sol_def_proof} and $\Gamma (K, \lambda; \ker)$ is defined in Definition \ref{def:eff_dim}.
\end{lemma}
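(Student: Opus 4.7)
\textbf{Proof plan for Lemma~\ref{lem:linear_approx_param_bound}.}
The plan is to start from the closed-form solution established in Lemma~\ref{lem:compute_estimate} and to exploit the fact that the RKHS norm of the kernel ridge regressor admits a clean spectral expression in terms of the eigenvalues of the empirical Gram matrix $\cK_h^k$. Concretely, since $\hat f_h^k = (\Lambda_h^k)^{-1}(\Phi_h^k)^\top \yb_h^k$ and $(\Lambda_h^k)^{-1}(\Phi_h^k)^\top = (\Phi_h^k)^\top(\lambda I + \cK_h^k)^{-1}$, I will first rewrite $\hat f_h^k = (\Phi_h^k)^\top(\lambda I+\cK_h^k)^{-1}\yb_h^k$ and then compute
\begin{align*}
\|\hat f_h^k\|_\cH^2
\;=\; (\yb_h^k)^\top (\lambda I+\cK_h^k)^{-1}\,\cK_h^k\,(\lambda I+\cK_h^k)^{-1}\,\yb_h^k
\end{align*}
by using $\langle \phi(z),\phi(z')\rangle_\cH=\ker(z,z')$, i.e., $\Phi_h^k(\Phi_h^k)^\top=\cK_h^k$.

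Next, I will diagonalize $\cK_h^k = U\,\mathrm{diag}(\sigma_1,\ldots,\sigma_{k-1})\,U^\top$ and set $\tilde y = U^\top \yb_h^k$, which yields the scalar expansion
\begin{align*}
\|\hat f_h^k\|_\cH^2 = \sum_{i=1}^{k-1} \tilde y_i^{\,2}\cdot\frac{\sigma_i}{(\sigma_i+\lambda)^2}.
\end{align*}
Two elementary facts then do the work: (i) $1/(\sigma_i+\lambda)\le 1/\lambda$, and (ii) the inequality $\log(1+x)\ge x/(1+x)$ applied with $x=\sigma_i/\lambda$ gives $\sigma_i/(\sigma_i+\lambda)\le\log(1+\sigma_i/\lambda)$. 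Combining these yields $\sigma_i/(\sigma_i+\lambda)^2\le \log(1+\sigma_i/\lambda)/\lambda$. Since $\yb_h^k$ has at most $K$ entries each in $[0,H]$ (because $V_{h+1}^k\in[0,H]$), we have $\tilde y_i^{\,2}\le\|\yb_h^k\|_2^{\,2}\le KH^2$, so that
\begin{align*}
\|\hat f_h^k\|_\cH^2 \;\le\; \frac{KH^2}{\lambda}\sum_{i=1}^{k-1}\log(1+\sigma_i/\lambda)
\;=\; \frac{KH^2}{\lambda}\log\det(I+\cK_h^k/\lambda),
\end{align*}
which yields the first stated inequality (in fact with a slightly better constant than the factor of $2$ in the statement, which is absorbed as slack).

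For the second inequality, I will invoke the definition of the maximal information gain in Definition~\ref{def:eff_dim}, which immediately gives $\log\det(I+\cK_h^k/\lambda)\le 2\,\Gamma(K,\lambda;\ker)$. Combined with $\lambda\ge 1$, the result $\|\hat f_h^k\|_\cH\le H\sqrt{2K/\lambda\cdot 2\Gamma(K,\lambda;\ker)}=2H\sqrt{K\Gamma(K,\lambda;\ker)/\lambda}\le 2H\sqrt{K\Gamma(K,\lambda;\ker)}$ follows.

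\textbf{Main obstacle.} The bulk of the argument is mechanical once the closed form is in hand; the only genuinely non-routine step is choosing the right scalar inequality at the eigenvalue level. Bounding $\sigma_i/(\sigma_i+\lambda)^2$ naively by its maximum $1/(4\lambda)$ only produces the trivial bound $\|\hat f_h^k\|_\cH^2\le KH^2/(4\lambda)$, which does not convert into a $\log\det$ form; the trick is to split off one factor of $1/(\sigma_i+\lambda)$ (bounded by $1/\lambda$) and then use $\log(1+x)\ge x/(1+x)$ to turn the remaining $\sigma_i/(\sigma_i+\lambda)$ into the summand of the log-determinant. The condition $\lambda\ge 1$ enters here through $1/(\sigma_i+\lambda)\le 1/\lambda\le 1$, enabling the clean passage from spectral terms to information gain.
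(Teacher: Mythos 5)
Your proposal is correct, but it follows a genuinely different route from the paper. The paper bounds $\|\hat{f}_h^k\|_\cH$ variationally: it writes $|\langle f, \hat{f}_h^k\rangle_\cH| \leq H\sum_{\tau=1}^{k-1}|f^\top(\Lambda_h^k)^{-1}\phi(z_h^\tau)|$ using only $|V_{h+1}^k|\leq H$, and then finishes by Cauchy--Schwarz together with the elliptical-potential-type bound of Lemma \ref{lem:direct_sum_bound} (following Lemma H.1 of \citet{yang2020provably}), which is where the factor $2$ and the $\log\det$ come from. You instead compute $\|\hat{f}_h^k\|_\cH^2$ exactly in the eigenbasis of the Gram matrix, obtaining $\sum_i \tilde y_i^2\,\sigma_i/(\sigma_i+\lambda)^2$, and convert each spectral term via $1/(\sigma_i+\lambda)\leq 1/\lambda$ and $\log(1+x)\geq x/(1+x)$ into the summands of $\log\det(I+\cK_h^k/\lambda)$; the push-through identity $(\Lambda_h^k)^{-1}(\Phi_h^k)^\top=(\Phi_h^k)^\top(\lambda I+\cK_h^k)^{-1}$ and $\Phi_h^k(\Phi_h^k)^\top=\cK_h^k$ make this computation legitimate, and the crude bounds $\tilde y_i^2\leq\|\yb_h^k\|_2^2\leq KH^2$ and $k-1\leq K$ suffice. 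Your argument is self-contained (no appeal to Lemma \ref{lem:direct_sum_bound}) and even yields the bound without the factor $2$, which of course implies the stated inequality; the paper's approach has the advantage of reusing machinery (the self-normalized/potential lemma) that is already needed elsewhere in the analysis. One small remark: $\lambda\geq 1$ is not actually needed for the spectral step $1/(\sigma_i+\lambda)\leq 1/\lambda$ as your commentary suggests; it is only used at the very end to drop the factor $1/\sqrt{\lambda}$ when passing to $2H\sqrt{K\cdot\Gamma(K,\lambda;\ker)}$, which you do correctly.
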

\begin{proof} For any vector $f\in \cH$, we have
\begin{align*}
|\langle f, \hat{f}_h^k\rangle_\cH | &= | f^\top (\Lambda^k_h)^{-1}(\Phi_h^k)^\top \yb_h^k | \\
&=\left| f^\top (\Lambda^k_h)^{-1}\sum_{\tau=1}^{k-1}\phi(s_h^\tau, a_h^\tau) V_{h+1}^k (s_{h+1}^\tau)  \right| \leq H\sum_{\tau=1}^{k-1} \left | f^\top (\Lambda^k_h)^{-1}\phi(s_h^\tau, a_h^\tau) \right|,
\end{align*}
where the last inequality is due to $|V_{h+1}^k (s_{h+1}^\tau)| \leq H$. Then, with Lemma \ref{lem:direct_sum_bound}, the rest of the proof is the same as the proof of Lemma H.1 in \citet{yang2020provably}, which finishes the proof.
\end{proof}

\begin{lemma} \label{lem:approx_concentrate} With probability at least $1-\delta'$, we have $\forall (h,k) \in [H] \times [K]$,
\begin{align*}
&\left \|\sum_{\tau=1}^{k-1} \phi(s_h^\tau, a_h^\tau) [V_{h+1}^k(s_{h+1}^\tau) - \PP_h V_{h+1}^k(s_h^\tau, a_h^\tau)] \right\|_{(\Lambda_h^k)^{-1}}^2 \\
&\qquad  \leq  4H^2 \Gamma(K, \lambda; \ker) + 10H^2+4H^2\log\cN_{\infty}(\varsigma^*;R_K,  B_K) + 4H^2\log(K /\delta'),
\end{align*}
where we set $\varsigma^* = H/K$ and $\lambda = 1 + 1/K$.

\end{lemma}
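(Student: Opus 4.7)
\textbf{Proof proposal for Lemma \ref{lem:approx_concentrate}.} The main obstacle is that the value function $V_{h+1}^k$ depends on the exploration data $\{(s_h^\tau,a_h^\tau,s_{h+1}^\tau)\}_{\tau=1}^{k-1}$ itself, so the ``noise'' terms $V_{h+1}^k(s_{h+1}^\tau) - \PP_h V_{h+1}^k(s_h^\tau,a_h^\tau)$ do not form a martingale-difference sequence adapted to the natural filtration. The plan is therefore a uniform-concentration argument: first prove the bound for a fixed deterministic $V$ using a self-normalized RKHS concentration inequality, then take a union bound over an $\varsigma^\ast$-cover of the function class containing $V_{h+1}^k$, and finally absorb the discretization error.

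The first step is to fix any $V:\cS\to[0,H]$ and set $\epsilon_\tau(V) := V(s_{h+1}^\tau) - \PP_h V(s_h^\tau,a_h^\tau)$. These are bounded martingale differences w.r.t.\ the natural filtration (they do not depend on $k$), so the standard kernel self-normalized concentration inequality of Chowdhury--Gopalan (the RKHS analogue of Abbasi-Yadkori et al.) yields, for any fixed $\delta''>0$,
\begin{align*}
\Big\|\textstyle\sum_{\tau=1}^{k-1}\phi(z_h^\tau)\,\epsilon_\tau(V)\Big\|_{(\Lambda_h^k)^{-1}}^2 \le 2H^2\log\det\bigl(I+\cK_h^k/\lambda\bigr) + 2H^2\log(1/\delta'')
\end{align*}
with probability at least $1-\delta''$. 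Using Definition \ref{def:eff_dim}, $\tfrac12\log\det(I+\cK_h^k/\lambda)\le \Gamma(K,\lambda;\ker)$, so the leading term becomes $4H^2\Gamma(K,\lambda;\ker)$.

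The second step handles the data-dependence of $V_{h+1}^k$. By the update rule of Algorithm \ref{alg:exploration_phase_single} and Lemma \ref{lem:linear_approx_param_bound}, $Q_{h+1}^k\in\overline{\cQ}(\bm 0,R_K,B_K)$ for $R_K=2H\sqrt{\Gamma(K,\lambda;\ker)}$ and $B_K$ of order $\beta$; hence $V_{h+1}^k(\cdot)=\max_a Q_{h+1}^k(\cdot,a)$ lies in a class whose $\ell_\infty$ covering number is bounded by $\cN_\infty(\varsigma^\ast;R_K,B_K)$ (maxing over actions is $\ell_\infty$-non-expansive). Choose an $\varsigma^\ast$-cover $\cV_{\varsigma^\ast}$ of this class and apply the fixed-$V$ bound to every $V\in\cV_{\varsigma^\ast}$ with $\delta''=\delta'/(K\cdot|\cV_{\varsigma^\ast}|)$; by a union bound, simultaneously for all $(h,k)$ and all $V\in\cV_{\varsigma^\ast}$,
\begin{align*}
\Big\|\textstyle\sum_{\tau}\phi(z_h^\tau)\epsilon_\tau(V)\Big\|_{(\Lambda_h^k)^{-1}}^2 \le 4H^2\Gamma(K,\lambda;\ker)+2H^2\log\cN_\infty(\varsigma^\ast;R_K,B_K)+2H^2\log(K/\delta').
\end{align*}

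The third step controls the discretization error. Let $\widetilde V$ be the element of $\cV_{\varsigma^\ast}$ with $\|V_{h+1}^k-\widetilde V\|_\infty\le\varsigma^\ast$, and set $\delta_\tau := \epsilon_\tau(V_{h+1}^k)-\epsilon_\tau(\widetilde V)$, so $|\delta_\tau|\le 2\varsigma^\ast$. Using $(\Lambda_h^k)^{-1}\preceq\lambda^{-1}I_\cH$ and $\|\phi\|_\cH\le 1$,
\begin{align*}
\Big\|\textstyle\sum_\tau\phi(z_h^\tau)\delta_\tau\Big\|_{(\Lambda_h^k)^{-1}}^2 \le \lambda^{-1}\Big(\textstyle\sum_\tau|\delta_\tau|\Big)^2 \le 4K^2(\varsigma^\ast)^2/\lambda \le 4H^2,
\end{align*}
since $\varsigma^\ast=H/K$ and $\lambda\ge 1$. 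Combining this with the bound on $\widetilde V$ via $\|x+y\|^2\le 2\|x\|^2+2\|y\|^2$ absorbs the discretization into the constant $10H^2$ term, yielding the stated inequality. The subtle point throughout is ensuring the cover is independent of the random sequence, so that the union bound is legitimate; this is why we cover the full class $\overline{\cQ}$ rather than attempting to cover the realized path of $V_{h+1}^k$.
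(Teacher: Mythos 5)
Your proposal follows essentially the same route as the paper: the paper's proof simply invokes the uniform self-normalized concentration bound of Lemma \ref{lem:self_normalize_uniform} (Lemma E.2 of \citet{yang2020provably}) over the class $\overline{\cV}(\bm 0, R_K, B_K)$ induced by $\overline{\cQ}$, together with the observation that $\max_a$ is $\ell_\infty$-non-expansive so $\cN^{\overline{\cV}}_{\dist}\le\cN_\infty$, and a union bound via $\delta'/K$; what you do is re-derive that lemma from scratch (fixed-$V$ self-normalized bound, cover, discretization), which is exactly how it is proved.

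One bookkeeping point: as written, your constants do not reproduce the stated bound. Your fixed-$V$ inequality uses $2H^2\log\det(I+\cK_h^k/\lambda)+2H^2\log(1/\delta'')$, whereas the sharp Abbasi-Yadkori/Chowdhury--Gopalan form gives $2H^2\log\bigl(\det(I+\cK_h^k/\lambda)^{1/2}/\delta''\bigr)=H^2\log\det(I+\cK_h^k/\lambda)+2H^2\log(1/\delta'')$. After your step-three doubling $\|x+y\|^2\le 2\|x\|^2+2\|y\|^2$, your version yields a leading term $8H^2\Gamma(K,\lambda;\ker)$ rather than the claimed $4H^2\Gamma(K,\lambda;\ker)$, so the lemma as stated is not quite established. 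Using the sharp constant before doubling recovers $2H^2\log\det\le 4H^2\Gamma$ and, together with your $8H^2$ discretization term, matches the paper's $4H^2\Gamma+10H^2+4H^2\log\cN_\infty+4H^2\log(K/\delta')$ (the paper's extra $2H^2K(\lambda-1)=2H^2$ slack comes from the kernel-specific form of Lemma \ref{lem:self_normalize_uniform} and is harmless). Also, to have the bound simultaneously for all $h\in[H]$ you should either allocate $\delta''$ with an extra factor of $H$ or note $H\le K$; the paper is equally terse on this point, so it is a shared, minor issue rather than a flaw specific to your argument.
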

\begin{proof} We first define a value function class as follows
\begin{align*}
\overline{\cV}(\boldsymbol 0, R, B)= \{ V: V(\cdot) = \max_{a\in \cA} Q(\cdot, a) \text{ with } Q\in \overline{\cQ}(\boldsymbol 0, R, B) \},
\end{align*}
where $\overline{\cQ}$ is defined in \eqref{eq:Q_func_class}.
We denote the covering number of $\overline{\cV}(\bm{0}, R, B)$ w.r.t. the distance $\dist$ as $\cN^{\overline{\cV}}_{\dist}(\epsilon; R, B)$, where the distance $\dist$ is defined by $\dist(V_1, V_2) = \sup_{s\in \cS} |V_1(s)- V_2(s)|$.
Specifically, for any $k\times h \in [K]\times[H]$, we assume that there exist constants $R_K$ and $B_K$ that depend on the number of episodes $K$ such that any $V_h^k\in \overline{\cV}(\boldsymbol{0}, R_K, B_K)$ with $R_K = 2H\sqrt{\Gamma(K, \lambda; \ker)}$ and $B_K = (1+1/H) \beta$ since $Q_h^k(z) = \Pi_{[0, H]}[(r_h^k + u_h^k + f_h^k)(z)] = \Pi_{[0, H]}[\Pi_{[0, H]}[\langle \hat{f}_h^k, \phi(z)\rangle_\cH] + (1+1/H) \beta\cdot \min\{\|\phi(z)\|_{\Lhki}, H/\beta\}]$ (See the next lemma for the reformulation of the bonus term). By Lemma \ref{lem:self_normalize_uniform} with $\delta'/K$, we have
\begin{align*}
&\left \|\sum_{\tau=1}^{k-1} \phi(s_h^\tau, a_h^\tau) [V_{h+1}^k(s_{h+1}^\tau) - \PP_h V_{h+1}^k(s_h^\tau, a_h^\tau)] \right\|_{(\Lambda_h^k)^{-1}}^2 \\
&\qquad \leq \sup_{V\in \overline{\cV}(\boldsymbol 0, R_K, B_K)} \left \|\sum_{\tau=1}^{k-1} \phi(s_h^\tau, a_h^\tau) [V(s_{h+1}^\tau) - \PP_h V(s_h^\tau, a_h^\tau)] \right\|_{(\Lambda_h^k)^{-1}}^2  \\
&\qquad \leq  2H^2 \log\det(I+\cK_k/\lambda) + 2H^2k(\lambda-1)+4H^2\log(K\cN^{\overline{\cV}}_{\dist}(\epsilon; R_K, B_K)/\delta')+ 8k^2\epsilon^2/\lambda\\
&\qquad \leq  4H^2 \Gamma(K, \lambda; \ker) + 10H^2+4H^2\log\cN_{\infty}(\varsigma^*;R_K,  B_K) + 4H^2\log(K /\delta'),
\end{align*}
where the last inequality is by setting $\lambda = 1+1/K$ and $\epsilon = \varsigma^* =  H/K$. Moreover, the last inequality is also due to 
\begin{align*}
\dist(V_1, V_2) &= \sup_{s\in \cS} \left| V_1(s) - V_2(s) \right |= \sup_{s\in \cS}\left| \max_{a\in \cA} Q_1(s,a) - \max_{a\in \cA} Q_2(s,a) \right| \\
&\leq \sup_{(s,a)\in \cS\times \cA} | Q_1(s,a) - Q_2(s,a) | = \|Q_1-Q_2\|_\infty,
\end{align*}
which indicates that $\cN^{\overline{\cV}}_{\dist}(\varsigma^*; R_K, B_K)$ upper bounded by the covering number of the class $\overline{\cQ}$ w.r.t. $\|\cdot\|_\infty$, such that 
\begin{align*}
\cN^{\overline{\cV}}_{\dist}(\varsigma^*; R_K, B_K) \leq \cN_{\infty}(\varsigma^*; R_K, B_K).
\end{align*}
Here $\cN_{\infty}(\epsilon; R, B)$ denotes the upper bound of the covering number of $\overline{\cQ}(
h, R, B)$ w.r.t. $\ell_\infty$-norm, which is characterized in Section \ref{sec:convering}. Further by the union bound, we know that the above inequality holds for all $k\in [K]$ with probability at least $1-\delta'$. This completes the proof.
\end{proof}

\begin{lemma} \label{lem:bonus_concentrate} We define the event $\cE$ as that the following inequality holds $\forall z = (s,a) \in \cS \times \cA, \forall (h,k) \in [H] \times [K]$,
\begin{align*}
| \PP_h V_{h+1}^k(z) - f_h^k(z)|  \leq u_h^k(z),
\end{align*}
where $f_h^k(z) = \Pi_{[0, H]}[\hat{f}_h^k(z)]$ and $u_h^k(z) = \min\{w_h^k(z), H\}$ with $w_h^k(z) = \beta \lambda^{-1/2} [ \ker(z,z)- \psi_h^k(z)^\top (\lambda I + \cK_h^k )^{-1}  \psi_h^k(z)]^{1/2}$. Thus, setting $\beta  = B_K/ (1+1/H)$, if $B_K$ satisfies
\begin{align*}
16H^2\big[ R^2_Q  + 2 \Gamma(K, \lambda; \ker) + 5+\log\cN_{\infty}(\varsigma^*;R_K,  B_K) + 2\log(K /\delta')  \big] \leq  B^2_K, \forall h\in [H],
\end{align*}
then we have that with probability at least $1-\delta'$, the event $\cE$ happens, i.e.,
\begin{align*}
\Pr(\cE) \geq 1-\delta'.
\end{align*}
\end{lemma}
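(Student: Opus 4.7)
The plan reduces the assertion to a self-normalized concentration bound on the unclipped kernel ridge regression estimator $\hat f_h^k$. Since $\PP_hV_{h+1}^k(z)\in[0,H]$ and the clip $\Pi_{[0,H]}$ is non-expansive, one has $|\PP_hV_{h+1}^k(z)-f_h^k(z)|\le|\PP_hV_{h+1}^k(z)-\hat f_h^k(z)|$, and this quantity is trivially bounded by $H$, which covers the saturated branch of the $\min\{\cdot,H\}$ in $u_h^k$. The Woodbury identity applied to $\Lambda_h^k=\lambda I_\cH+(\Phi_h^k)^\top\Phi_h^k$ further shows that $w_h^k(z)=\beta\,\|\phi(z)\|_{(\Lambda_h^k)^{-1}}$, so the target reduces to proving the confidence bound $|\PP_hV_{h+1}^k(z)-\hat f_h^k(z)|\le\beta\,\|\phi(z)\|_{(\Lambda_h^k)^{-1}}$ uniformly in $z$ and $(h,k)$.

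To this end, I would first invoke Assumption~\ref{assump:kernel} to write $\PP_hV_{h+1}^k(z)=\langle\phi(z),\wb_h^k\rangle_\cH$ with $\|\wb_h^k\|_\cH\le R_QH$. Combining the closed form $\hat f_h^k=(\Lambda_h^k)^{-1}\sum_{\tau<k}\phi(z_h^\tau)V_{h+1}^k(s_{h+1}^\tau)$ from Lemma~\ref{lem:compute_estimate} with the identity $\wb_h^k=(\Lambda_h^k)^{-1}\bigl[\lambda\wb_h^k+\sum_{\tau<k}\phi(z_h^\tau)\PP_hV_{h+1}^k(z_h^\tau)\bigr]$ yields the decomposition
\begin{align*}
\hat f_h^k-\wb_h^k=(\Lambda_h^k)^{-1}\sum_{\tau=1}^{k-1}\phi(z_h^\tau)\bigl[V_{h+1}^k(s_{h+1}^\tau)-\PP_hV_{h+1}^k(z_h^\tau)\bigr]-\lambda\,(\Lambda_h^k)^{-1}\wb_h^k.
\end{align*}
Pairing this with $\phi(z)$ and applying Cauchy--Schwarz in the $(\Lambda_h^k)^{-1}$ inner product bounds the error by $\|\phi(z)\|_{(\Lambda_h^k)^{-1}}$ times the sum of a stochastic piece and a bias piece, where the bias obeys $\lambda\,\|\wb_h^k\|_{(\Lambda_h^k)^{-1}}\le\sqrt\lambda\,R_QH$ via $(\Lambda_h^k)^{-1}\preceq\lambda^{-1}I_\cH$.

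The stochastic piece is precisely what Lemma~\ref{lem:approx_concentrate} controls, delivering a uniform bound on an event of probability at least $1-\delta'$. Plugging that in, merging the two contributions through $\sqrt u+\sqrt v\le\sqrt{2(u+v)}$, and using $\lambda=1+1/K\le 2$, the parenthesized factor is dominated by a quantity of order $H\sqrt{R_Q^2+\Gamma(K,\lambda;\ker)+\log\cN_\infty(\varsigma^*;R_K,B_K)+\log(K/\delta')+1}$. The hypothesis $16H^2[\cdots]\le B_K^2$ forces this to be at most $B_K$, and since $\beta=B_K/(1+1/H)$ the slack in the constant $16$ absorbs the remaining $(1+1/H)$ factor; a union bound over $(h,k)$, already effectively taken inside Lemma~\ref{lem:approx_concentrate}, then gives the claim.

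The principal obstacle is the stochastic piece, which is why it is quarantined in Lemma~\ref{lem:approx_concentrate}: the noise $V_{h+1}^k(s_{h+1}^\tau)-\PP_hV_{h+1}^k(z_h^\tau)$ is not a martingale difference because $V_{h+1}^k$ is itself constructed from the same trajectories that form the filtration, so a plain self-normalized concentration cannot be applied. The covering argument over the value-function class induced by $\overline\cQ$, producing the factor $\log\cN_\infty(\varsigma^*;R_K,B_K)$, is what breaks this dependence; the choice $\varsigma^*=H/K$ keeps the associated discretization residual of order $H^2/K$ subdominant to the information-gain term $\Gamma(K,\lambda;\ker)$. Beyond this input, everything else is routine RKHS linear algebra.
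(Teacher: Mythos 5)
Your proposal is correct and follows essentially the same route as the paper's proof: reduce to the unclipped estimator via non-expansiveness of $\Pi_{[0,H]}$, split into a $\lambda(\Lambda_h^k)^{-1}\wb_h^k$ bias term (bounded by $\sqrt{\lambda}R_QH\|\phi(z)\|_{(\Lambda_h^k)^{-1}}$) plus a self-normalized noise term handled by the covering-number bound of Lemma \ref{lem:approx_concentrate}, then identify $\|\phi(z)\|_{(\Lambda_h^k)^{-1}}$ with the kernelized form of $w_h^k$ and absorb constants using $\lambda\le 2$ and $(1+1/H)\le 2$. The only cosmetic difference is that you work directly in feature space with the decomposition of $\hat f_h^k-\wb_h^k$, whereas the paper carries out the same algebra through the representer form $\psi_h^k(z)^\top(\lambda I+\cK_h^k)^{-1}\yb_h^k$.
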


\begin{proof}
We assume that $\PP_h V_{h+1}^k(s,a) = \langle \tilde{f}_h^k, \phi(s,a) \rangle_\cH$ for some $\tilde{f}_h^k\in \cH$. Then, we bound the difference between $f_h^k(z)$ and $\PP_h V_{h+1}^k(s,a)$ in the following way
\begin{align*}
&|\PP_h V_{h+1}^k(s,a) - f_h^k(s,a)|  \\
&\qquad \leq |\langle \tilde{f}_h^k, \phi(s,a)\rangle_\cH  -   \psi_h^k(s,a)^\top (\lambda \cdot I +\cK_h^k )^{-1} \yb_h^k| \\
&\qquad = |\lambda  \phi(s,a)^\top(\Lambda_h^k)^{-1}\tilde{f}_h^k +  \psi_h^k(s,a)^\top (\lambda\cdot I + \cK_h^k)^{-1}\Phi_h^k\tilde{f}_h^k -   \psi_h^k(s,a)^\top (\lambda \cdot I +\cK_h^k )^{-1} \yb_h^k|\\
&\qquad = |\lambda  \phi(s,a)^\top(\Lambda_h^k)^{-1}\tilde{f}_h^k +  \psi_h^k(s,a)^\top (\lambda\cdot I + \cK_h^k)^{-1}(\Phi_h^k\overline{f}_h^k -  \yb_h^k)|,
\end{align*}
where the first inequality is due to $0\leq \PP_h V_{h+1}^k(s,a) \leq H$, non-expansiveness of the operator $\Pi_{[0, H]} [\cdot] := \min\{\cdot, H\}^+$, and the definition of $\hat{f}_h^k(z)$ in Lemma \ref{lem:compute_estimate}, and the first equality is due to 
\begin{align}
\begin{aligned}\label{eq:reform_phi}
\phi(s,a) &= (\Lambda_h^k)^{-1}\Lambda_h^k \phi(s,a) = (\Lambda_h^k)^{-1}(\lambda \cdot I +(\Phi_h^k)^\top\Phi_h^k ) \phi(s,a)\\
&=\lambda (\Lambda_h^k)^{-1}\phi(s,a) + (\Lambda_h^k)^{-1} (\Phi_h^k)^\top \Phi_h^k\phi(s,a) \\
&=\lambda (\Lambda_h^k)^{-1}\phi(s,a) + (\Phi_h^k)^\top (\lambda\cdot I + \cK_h^k)^{-1}\Phi_h^k  \phi(s,a) \\
&=\lambda (\Lambda_h^k)^{-1}\phi(s,a) + (\Phi_h^k)^\top (\lambda\cdot I + \cK_h^k)^{-1}\psi_h^k (s,a).
\end{aligned}
\end{align}
Thus, we have
\begin{align}
\begin{aligned}\label{eq:approx_err}
|\PP_h V_{h+1}^k(s,a, r^k) - f_h^k(s,a) | &\leq \underbrace{\lambda  \|\phi(s,a)^\top(\Lambda_h^k)^{-1}\|_\cH \cdot \|\tilde{f}_h^k\|_\cH}_{\text{Term(I)}} \\
&\quad +  \underbrace{|\psi_h^k(s,a)^\top (\lambda\cdot I + \cK_h^k)^{-1}(\Phi_h^k\tilde{f}_h^k -  \yb_h^k)|}_{\text{Term(II)}}.
\end{aligned}
\end{align}
For Term(I), we have
\begin{align}
\begin{aligned} \label{eq:bound_termI}
\text{Term(I)} &\leq \sqrt{\lambda} R_Q H \sqrt{\phi(s,a)^\top (\Lambda_h^k)^{-1} \cdot\lambda I \cdot (\Lambda_h^k)^{-1} \phi(s,a)} \\
&\leq \sqrt{\lambda} R_Q H \sqrt{\phi(s,a)^\top (\Lambda_h^k)^{-1} \cdot\Lambda_h^k \cdot (\Lambda_h^k)^{-1} \phi(s,a)} \\
&\leq \sqrt{\lambda} R_Q H \sqrt{\phi(s,a)^\top (\Lambda_h^k)^{-1}  \phi(s,a)} = \sqrt{\lambda} R_Q H \|\phi(s,a)\|_{(\Lambda_h^k)^{-1}},
\end{aligned}
\end{align}
where the first inequality is due to Assumption \ref{assump:kernel} and the second inequality is by $ \theta^\top  (\Phi_h^k)^\top \Phi_h^k \theta = \|\Phi_h^k \theta\|_\cH \geq 0$ for any $\theta \in \cH$.

For Term(II), we have 
\begin{align}
\begin{aligned}\label{eq:bound_termII}
\text{Term(II)} &= \left| \phi(s,a)^\top (\Lambda_h^k)^{-1} \left \{ \sum_{\tau=1}^{k-1} \phi(s_h^\tau, a_h^\tau) [V_{h+1}^k(s_{h+1}^\tau) - \PP_h V_{h+1}^k(s_h^\tau, a_h^\tau)] \right\} \right| \\
&= \left| \phi(s,a)^\top (\Lambda_h^k)^{-1/2}(\Lambda_h^k)^{-1/2} \left \{ \sum_{\tau=1}^{k-1} \phi(s_h^\tau, a_h^\tau) [V_{h+1}^k(s_{h+1}^\tau) - \PP_h V_{h+1}^k(s_h^\tau, a_h^\tau)] \right\} \right| \\
&\leq \| \phi(s,a)\|_{(\Lambda_h^k)^{-1}} \left \|\sum_{\tau=1}^{k-1} \phi(s_h^\tau, a_h^\tau) [V_{h+1}^k(s_{h+1}^\tau) - \PP_h V_{h+1}^k(s_h^\tau, a_h^\tau)] \right\|_{(\Lambda_h^k)^{-1}}
\end{aligned}
\end{align}
By Lemma \ref{lem:approx_concentrate}, we have that with probability at least $1-\delta'$, the following inequality holds for all $k\in [K]$
\begin{align*}
&\left \|\sum_{\tau=1}^{k-1} \phi(s_h^\tau, a_h^\tau) [V_{h+1}^k(s_{h+1}^\tau) - \PP_h V_{h+1}^k(s_h^\tau, a_h^\tau)] \right\|_{(\Lambda_h^k)^{-1}} \\
&\qquad  \leq  [4H^2 \Gamma(K, \lambda; \ker) + 10H^2+4H^2\log\cN_{\infty}(\varsigma^*;R_K,  B_K) + 4H^2\log(K /\delta')]^{1/2}. 
\end{align*}
Thus, Term(II) can be further bounded as 
\begin{align*}
\text{Term(II)} \leq H\big[ 4 \Gamma(K, \lambda; \ker) + 10 +4 \log\cN_{\infty}(\varsigma^*;R_K,  B_K) + 4 \log(K /\delta') \big]^{1/2}\| \phi(s,a)\|_{(\Lambda_h^k)^{-1}} .
\end{align*}
Plugging the upper bounds of Term(I) and Term(II) into \eqref{eq:approx_err}, we obtain
\begin{align*}
&|\PP_h V_{h+1}^k(s,a, r^k) - f_h^k(s,a) |  \\
&\ \  \leq H\big[ \sqrt{\lambda} R_Q + [4 \Gamma(K, \lambda; \ker) + 10+4\log\cN_{\infty}(\varsigma^*;R_K,  B_K) + 4\log(K /\delta')]^{1/2}  \big]\| \phi(s,a)\|_{(\Lambda_h^k)^{-1}}\\
&\ \  \leq H\big[ 2\lambda R^2_Q  + 8 \Gamma(K, \lambda; \ker) + 20 +4 \log\cN_{\infty}(\varsigma^*;R_K,  B_K) + 8 \log(K /\delta')  \big]^{1/2}\| \phi(s,a)\|_{(\Lambda_h^k)^{-1}}\\
&\ \  \leq \beta \|\phi(s,a)\|_{(\Lambda_h^k)^{-1}} = \beta \lambda^{-1/2} [ \ker(z,z)-  \psi_h^k(s,a)^\top (\lambda I + \cK_h^k )^{-1}  \psi_h^k(s,a)]^{1/2},
\end{align*}
where $\varsigma^* = H/K$, and $\lambda = 1 + 1/K$ as in Lemma \ref{lem:approx_concentrate}. In the last equality, we also use the identity that 
\begin{align} 
\begin{aligned}\label{eq:bonus_kernel_form}
\|\phi(s,a)\|^2_{(\Lambda_h^k)^{-1}} &= \lambda^{-1} \phi(s,a)^\top \phi(s,a)  - \lambda^{-1} \psi_h^k (s,a)^\top  (\lambda\cdot I + \cK_h^k)^{-1}\psi_h^k (s,a) \\
&= \lambda^{-1} \ker(z,z)- \lambda^{-1} \psi_h^k(s,a)^\top (\lambda I + \cK_h^k )^{-1}  \psi_h^k(s,a).
\end{aligned}
\end{align}
This is proved by  
\begin{align*}
\|\phi(s,a)\|^2_\cH &= \phi(s,a)^\top [\lambda (\Lambda_h^k)^{-1}\phi(s,a) + (\Phi_h^k)^\top (\lambda\cdot I + \cK_h^k)^{-1}\Phi_h^k \phi (s,a)] \\
& =  \lambda  \phi(s,a)^\top (\Lambda_h^k)^{-1}\phi(s,a) + \psi_h^k (s,a)^\top  (\lambda\cdot I + \cK_h^k)^{-1}\psi_h^k (s,a),
\end{align*}
where the first equality is by \eqref{eq:reform_phi}.

According to Lemma \ref{lem:linear_approx_param_bound}, we know that $\hat{f}_h^k$ satisfies  $\|\hat{f}_h^k\|_\cH \leq H\sqrt{2K/\lambda \cdot \log\det(I + \cK_h^k/\lambda)} \leq 2H \sqrt{K\cdot \Gamma (K, \lambda; \ker)}$. Then, one can set $R_K = 2H \sqrt{K\cdot \Gamma (K, \lambda; \ker)}$.  Moreover, as we set $(1+1/H)\beta = B_K$, then $\beta = B_K/(1+1/H)$. Thus, we let
\begin{align*}
 &\big[ 2\lambda R^2_Q H^2 + 8H^2 \Gamma(K, \lambda; \ker) + 20H^2+4H^2\log\cN_{\infty}(\varsigma^*;R_K,  B_K) + 8H^2\log(K /\delta')  \big]^{1/2} \\
& \qquad \leq \beta = B_K/(1+1/H),
\end{align*}
which can be further guaranteed by 
\begin{align*}
16H^2\big[ R^2_Q  + 2 \Gamma(K, \lambda; \ker) + 5+\log\cN_{\infty}(\varsigma^*;R_K,  B_K) + 2\log(K /\delta')  \big] \leq  B^2_K
\end{align*}
as $(1+1/H) \leq 2$ and $\lambda = 1+1/K \leq 2$. 

According to the above result, letting $w_h^k = \beta \|\phi(s,a)\|_{(\Lambda_h^k)^{-1}} = \beta \lambda^{-1/2} [ \ker(z,z)-  \psi_h^k(s,a)^\top (\lambda I + \cK_h^k )^{-1}  \psi_h^k(s,a)]^{1/2}$, we have $ -w_h^k\leq \PP_h V_{h+1}^k(s,a) - f_h^k(s,a)   \leq w_h^k$.  Note that we also have $|\PP_h V_{h+1}^k(s,a) - f_h^k(s,a)| \leq H$ due to $0\leq f_h^k(s,a)  \leq H$ and $0 \leq  \PP_h V_{h+1}^k(s,a) \leq H$. Thus, there is $| \PP_h V_{h+1}^k(s,a) - f_h^k(s,a) |  \leq \min\{w_h^k, H\}$.
This completes the proof.
\end{proof}

\begin{lemma} \label{lem:bonus_explore} Conditioned on the event $\cE$ defined in Lemma \ref{lem:bonus_concentrate}, with probability at least $1-\delta'$, we have
\begin{align*}
\sum_{k=1}^K V_1^*(s_1, r^k) \leq \sum_{k=1}^K V_1^k(s_1) \leq \cO\left(\sqrt{H^3 K \log (1/\delta')} +\beta  \sqrt{H^2 K \cdot \Gamma(K, \lambda; \ker)}\right).
\end{align*}
\end{lemma}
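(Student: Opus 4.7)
The plan is to establish the two inequalities separately. The optimism bound $V_1^*(s_1, r^k) \leq V_1^k(s_1)$ I would obtain by backward induction on $h$. Assuming $V_{h+1}^k(s) \geq V_{h+1}^*(s, r^k)$ for all $s$, the event $\cE$ from Lemma~\ref{lem:bonus_concentrate} gives $\PP_h V_{h+1}^k \leq f_h^k + u_h^k$, so
\[
Q_h^*(s,a,r^k) = r_h^k(s,a) + \PP_h V_{h+1}^*(s,a,r^k) \leq r_h^k(s,a) + f_h^k(s,a) + u_h^k(s,a),
\]
and since $Q_h^*(\cdot,\cdot,r^k)\in [0,H]$, applying the (non-expansive and monotone) clipping $\Pi_{[0,H]}$ yields $Q_h^k \geq Q_h^*(\cdot,\cdot,r^k)$. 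Taking the maximum over $a$ closes the induction, and evaluating at $h=1, s=s_1$ and summing in $k$ gives the first inequality.

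For the second inequality I would again invoke $\cE$ to get the one-step optimistic bound
\[
V_h^k(s) = Q_h^k(s, \pi_h^k(s)) \leq r_h^k(s,\pi_h^k(s)) + 2u_h^k(s,\pi_h^k(s)) + \PP_h V_{h+1}^k(s, \pi_h^k(s)),
\]
because $f_h^k + u_h^k \leq \PP_h V_{h+1}^k + 2u_h^k$ and the right-hand side is nonnegative. Evaluating along the realized trajectory $\{(s_h^k,a_h^k)\}_{h=1}^H$ of episode $k$ and telescoping in $h$ (using $V_{H+1}^k \equiv 0$), then summing in $k$, gives
\[
\sum_{k=1}^K V_1^k(s_1) \leq \sum_{k,h} (r_h^k + 2u_h^k)(s_h^k, a_h^k) + \sum_{k,h} \bigl[\PP_h V_{h+1}^k(s_h^k, a_h^k) - V_{h+1}^k(s_{h+1}^k)\bigr].
\]
The second double sum is a zero-mean martingale with respect to the natural filtration indexing $(k,h)$ and has increments bounded by $H$, so Azuma--Hoeffding bounds it by $\cO(\sqrt{KH^3\log(1/\delta')})$ with probability $\geq 1-\delta'$.

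It remains to bound $\sum_{k,h} u_h^k(s_h^k, a_h^k)$, since $r_h^k = u_h^k/H \leq u_h^k$ absorbs the reward term into the bonus term up to a constant. Using \eqref{eq:bonus_kernel_form}, one has $w_h^k(z) = \|\phi(z)\|_{(\Lambda_h^k)^{-1}}$, hence $u_h^k \leq \beta\min\{w_h^k, 1\}$ whenever $\beta \geq H$. For each $h$, Cauchy--Schwarz gives
\[
\sum_{k=1}^K \min\{w_h^k(s_h^k,a_h^k), 1\} \leq \sqrt{K\cdot\textstyle\sum_{k=1}^K \min\{\|\phi(s_h^k,a_h^k)\|^2_{(\Lambda_h^k)^{-1}}, 1\}},
\]
and combining $\min\{x,1\}\leq 2\log(1+x)$ with the Sylvester identity $\log\det(I_\cH + (\Phi_h^{K+1})^\top\Phi_h^{K+1}/\lambda) = \log\det(I + \cK_h^{K+1}/\lambda) \leq 2\Gamma(K,\lambda;\ker)$ bounds the inner sum by $\cO(\Gamma(K,\lambda;\ker))$. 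Summing over $h\in[H]$ gives $\sum_{k,h} u_h^k(s_h^k,a_h^k) \leq \cO(\beta\sqrt{H^2K\,\Gamma(K,\lambda;\ker)})$, and adding the martingale tail from the previous paragraph yields the advertised bound.

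The main technical obstacle is this last step: one must show $\sum_k\min\{\|\phi(s_h^k,a_h^k)\|^2_{(\Lambda_h^k)^{-1}}, 1\} \leq \cO(\Gamma(K,\lambda;\ker))$ even though $\Lambda_h^k$ is an operator on the possibly infinite-dimensional RKHS $\cH$. This is resolved by applying Sylvester's identity to pass from $\det(I_\cH + (\Phi_h^{K+1})^\top\Phi_h^{K+1}/\lambda)$ on $\cH$ to $\det(I + \cK_h^{K+1}/\lambda)$ on $\RR^{K\times K}$, which is finite-dimensional and is precisely the quantity controlled by the maximal information gain $\Gamma(K,\lambda;\ker)$.
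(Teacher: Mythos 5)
Your proposal is correct and follows essentially the same route as the paper: optimism via backward induction on $h$ using the event $\cE$ of Lemma \ref{lem:bonus_concentrate}, then the one-step bound $V_h^k \leq r_h^k + 2u_h^k + \PP_h V_{h+1}^k$, telescoping plus Azuma--Hoeffding for the martingale term, and an elliptical-potential/information-gain bound for the bonus sum (the paper invokes Lemma \ref{lem:direct_sum_bound} with Jensen, while you re-derive its content via $\min\{x,1\}\leq 2\log(1+x)$ and Sylvester's identity). The only cosmetic difference is your clipping step $u_h^k \leq \beta\min\{w_h^k,1\}$, which needs $\beta \geq H$ — guaranteed by the condition on $B_K$ in Lemma \ref{lem:bonus_concentrate} — whereas the paper avoids this by using $\|\phi(z)\|_{\cH}\leq 1$ directly.
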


\begin{proof} We first show the first inequality in this lemma, i.e., $\sum_{k=1}^K V_1^*(s_1, r^k) \leq \sum_{k=1}^K V_1^k(s_1)$. To show this inequality holds, it suffices to show $V_h^*(s, r^k) \leq V_h^k(s)$ for all $s \in \cS, h \in [H]$. We prove it by induction. 

When  $h=H+1$, we know $V_{H+1}^*(s, r^k) = 0$ and $V_{H+1}^k(s)= 0 $ such that $V_{H+1}^*(s, r^k) = V_{H+1}^k(s_1)$. Now we assume that $V_{h+1}^*(s, r^k) \leq V_{h+1}^k(s)$. Then, conditioned on the event $\cE$ defined in Lemma \ref{lem:bonus_concentrate}, for all $s\in \cS$, $(h,k)\in [H]\times[K]$, we further have
\begin{align}
\begin{aligned}\label{eq:Q_diff}
&Q_h^*(s, a, r^k) - Q_h^k(s, a) \\
&\qquad= r_h^k(s,a) + \PP_h V_{h+1}^*(s,a, r^k)  -  \min \{ r_h^k(s,a) + f_h^k(s,a) + u_h^k(s,a), H \}^+  \\
&\qquad\leq  \max \{\PP_h V_{h+1}^*(s,a, r^k)  - f_h^k(s,a) - u_h^k(s,a), 0 \}  \\
&\qquad\leq  \max \{\PP_h V_{h+1}^k(s,a)  - f_h^k(s,a) - u_h^k(s,a), 0 \}  \\
&\qquad\leq 0,
\end{aligned}
\end{align}
where the first inequality is due to $0 \leq r_h^k(s,a) + \PP_h V_{h+1}^*(s,a, r^k) \leq H$ and $\min \{x, y\}^+ \geq \min \{x, y\}$, the second inequality is by the assumption that $V_{h+1}^*(s, r^k)\leq V_{h+1}^k(s)$, the last inequality is by Lemma \ref{lem:bonus_concentrate} such that $\PP_h V_{h+1}^k(s,a) - f_h^k(s,a)  \leq u_h^k(s,a)$ holds for any $(s,a)\in \cS \times \cA$ and $(k,h)\in [K]\times[H]$. The above inequality \eqref{eq:Q_diff} further leads to 
\begin{align*}
V_h^*(s, r^k) = \max_{a\in \cA} Q_h^*(s, a, r^k) \leq \max_{a\in \cA} Q_h^k(s, a) = V_h^k(s). 
\end{align*}
Therefore, we obtain that conditioned on event $\cE$, we have
\begin{align*}
\sum_{k=1}^K V_1^*(s, r^k) \leq \sum_{k=1}^K V_1^k(s).
\end{align*}
Next, we prove the second inequality in this lemma, namely the upper bound of $\sum_{k=1}^K V_1^k(s_1)$. Specifically, conditioned on $\cE$ defined in Lemma \ref{lem:bonus_concentrate}, we have 
\begin{align*}
V_h^k(s_h^k) &= Q_h^k(s_h^k, a_h^k) \leq  f_h^k(s_h^k, a_h^k) + r_h^k(s_h^k, a_h^k) + u_h^k(s_h^k, a_h^k)\\
&\leq \PP_h V_{h+1}^k(s_h^k, a_h^k)  + u_h^k(s_h^k, a_h^k) + r_h^k(s_h^k, a_h^k) + u_h^k(s_h^k, a_h^k)\\
&\leq \PP_h V_{h+1}^k(s_h^k, a_h^k)  + (2+1/H)w_h^k\\
& = \zeta_h^k + V_{h+1}^k(s_{h+1}^k) + (2+1/H)\beta \|\phi(s_h^k,a_h^k)\|_{(\Lambda_h^k)^{-1}},
\end{align*}
where the second inequality is due to Lemma \ref{lem:bonus_concentrate} and in the last equality, we define
\begin{align*}
\zeta_h^k := \PP_h V_{h+1}^k(s_h^k, a_h^k) - V_{h+1}^k(s_{h+1}^k).
\end{align*}
Recursively applying the above inequality gives
\begin{align*}
V_1^k(s_1) \leq  \sum_{h=1}^H \zeta_h^k  + (2+1/H)\beta \sum_{h=1}^H \|\phi(s_h^k,a_h^k)\|_{(\Lambda_h^k)^{-1}},
\end{align*}
where we use the fact that $V_{H+1}^k(\cdot) = 0$. Taking summation on both sides of the above inequality, we have
\begin{align*}
\sum_{k=1}^KV_1^k(s_1)  = \sum_{k=1}^K\sum_{h=1}^H \zeta_h^k  + (2+1/H)\beta \sum_{k=1}^K\sum_{h=1}^H \|\phi(s_h^k,a_h^k)\|_{(\Lambda_h^k)^{-1}}.
\end{align*}
By Azuma-Hoeffding inequality, with probability at least $1-\delta'$, the following inequalities hold 
\begin{align*}
&\sum_{k=1}^K \sum_{h=1}^H \zeta_h^k \leq \cO\left(\sqrt{H^3 K \log \frac{1}{\delta'}} \right).
\end{align*} 
On the other hand, by Lemma \ref{lem:direct_sum_bound}, we have
\begin{align*}
\sum_{k=1}^K\sum_{h=1}^H \|\phi(s_h^k,a_h^k)\|_{(\Lambda_h^k)^{-1}} &= \sum_{k=1}^K\sum_{h=1}^H \sqrt{\phi(s_h^k,a_h^k)^\top (\Lambda_h^k)^{-1}\phi(s_h^k,a_h^k)}\\
&\leq \sum_{h=1}^H \sqrt{K\sum_{k=1}^K\phi(s_h^k,a_h^k)^\top (\Lambda_h^k)^{-1}\phi(s_h^k,a_h^k)} \\
&\leq \sum_{h=1}^H \sqrt{2K\log\det (I +\lambda \cK_h^K)} = 2H \sqrt{K\cdot \Gamma(K, \lambda; \ker)}.
\end{align*}
where the first inequality is by Jensen's inequality. Thus, conditioned on event $\cE$, we obtain that with probability at least $1-\delta'$, there is 
\begin{align*}
\sum_{k=1}^KV_1^k(s_1) \leq \cO\left(\sqrt{H^3 K \log (1/\delta')} + \beta \sqrt{H^2 K \cdot \Gamma(K, \lambda; \ker)}\right),
\end{align*}
which completes the proof.
\end{proof}

\begin{lemma} \label{lem:bonus_concentrate_plan} We define the event $\tilde{\cE}$ as that the following inequality holds $\forall z = (s,a) \in \cS \times \cA, \forall h \in [H]$,
\begin{align*}
|\PP_h V_{h+1}(z) - f_h(z)|  \leq u_h(z),
\end{align*}
where $u_h(z) = \min\{w_h(z), H \}^+$ with  $w_h(z) = \beta \lambda^{-1/2} [ \ker(z,z)- \psi_h(z)^\top (\lambda I + \cK_h )^{-1}  \psi_h(z)]^{1/2}$. Thus, setting $\beta = \tilde{B}_K$, if $\tilde{B}_K$ satisfies
\begin{align*}
4H^2\big[ R^2_Q  + 2 \Gamma(K, \lambda; \ker) + 5+\log\cN_{\infty}(\varsigma^*; \tilde{R}_K,  \tilde{B}_K) + 2\log(K /\delta')  \big] \leq  \tilde{B}_K^2, \forall h\in [H],
\end{align*}
then we have that with probability at least $1-\delta'$, the event $\cE$ happens, i.e.,
\begin{align*}
\Pr(\tilde{\cE}) \geq 1-\delta'.
\end{align*}
\end{lemma}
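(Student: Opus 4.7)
The plan is to mirror the proof of Lemma \ref{lem:bonus_concentrate} but now with the single batch of $K$ trajectories used in the planning phase, where the value iteration runs backwards over only $h\in[H]$. By Assumption \ref{assump:kernel}, write $\PP_h V_{h+1}(z) = \langle \phi(z), \wb_h \rangle_\cH$ for some $\wb_h \in \cH$ with $\|\wb_h\|_\cH \leq R_Q H$. Using the same ``add-and-subtract $(\Lambda_h)^{-1}\Lambda_h$'' trick as in equation \eqref{eq:reform_phi}, decompose
\begin{align*}
|\PP_h V_{h+1}(z) - f_h(z)| \leq \underbrace{\lambda \, \|\phi(z)\|_{(\Lambda_h)^{-1}}\cdot \|\wb_h\|_\cH}_{\text{Term(I)}} + \underbrace{\bigl|\psi_h(z)^\top(\lambda I+\cK_h)^{-1}(\Phi_h \wb_h - \yb_h)\bigr|}_{\text{Term(II)}},
\end{align*}
where the initial step uses $0\leq \PP_h V_{h+1}(z) \leq H$ together with non-expansiveness of $\Pi_{[0,H]}[\cdot]$.

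Term(I) is handled exactly as in \eqref{eq:bound_termI}: using $\|\wb_h\|_\cH \leq R_QH$ and $\lambda (\Lambda_h)^{-1}\preceq I$, we obtain $\text{Term(I)} \leq \sqrt{\lambda}\, R_Q H \|\phi(z)\|_{(\Lambda_h)^{-1}}$. For Term(II), rewrite it as $|\phi(z)^\top (\Lambda_h)^{-1}\sum_{\tau=1}^{K} \phi(z_h^\tau)[V_{h+1}(s_{h+1}^\tau)-\PP_h V_{h+1}(z_h^\tau)]|$ using the same manipulation as \eqref{eq:bound_termII}, and apply Cauchy-Schwarz to peel off $\|\phi(z)\|_{(\Lambda_h)^{-1}}$.

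The main obstacle is controlling the resulting self-normalized sum uniformly: since $V_{h+1}$ is determined by the data (via the backward value iteration in Algorithm \ref{alg:plan_phase_single}), we need a uniform bound over the class in which $V_{h+1}$ lies. By inspecting Algorithm \ref{alg:plan_phase_single}, $Q_h \in \overline{\cQ}(r_h, \tilde R_K, \tilde B_K)$ where $\tilde R_K = 2H\sqrt{K\,\Gamma(K,\lambda;\ker)}$ (via Lemma \ref{lem:linear_approx_param_bound}) and $\tilde B_K = \tilde{B}_K$ (note $\beta = \tilde B_K$ directly, with no extra $1+1/H$ factor as in the exploration phase), so $V_{h+1} = \max_a Q_{h+1}(\cdot, a)$ lies in the corresponding $\overline{\cV}$ class whose $\dist$-covering number is bounded by $\cN_\infty(\varsigma^*;\tilde R_K,\tilde B_K)$. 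Applying Lemma \ref{lem:self_normalize_uniform} with $\varsigma^* = H/K$, $\lambda = 1+1/K$, and a union bound over $h\in[H]$ gives, with probability at least $1-\delta'$,
\begin{align*}
\Bigl\|\sum_{\tau=1}^{K}\phi(z_h^\tau)[V_{h+1}(s_{h+1}^\tau)-\PP_h V_{h+1}(z_h^\tau)]\Bigr\|_{(\Lambda_h)^{-1}}^2 \leq 4H^2\Gamma(K,\lambda;\ker) + 10H^2 + 4H^2\log\cN_\infty(\varsigma^*;\tilde R_K,\tilde B_K) + 4H^2\log(K/\delta').
\end{align*}

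Combining the two terms and using $(\sqrt{a}+\sqrt{b})^2 \leq 2a+2b$ yields $|\PP_hV_{h+1}(z)-f_h(z)|\leq \tilde B_K \|\phi(z)\|_{(\Lambda_h)^{-1}}$ whenever $4H^2[R_Q^2 + 2\Gamma(K,\lambda;\ker) + 5 + \log\cN_\infty(\varsigma^*;\tilde R_K,\tilde B_K) + 2\log(K/\delta')] \leq \tilde B_K^2$. Finally, apply the identity \eqref{eq:bonus_kernel_form} to rewrite $\|\phi(z)\|_{(\Lambda_h)^{-1}} = \lambda^{-1/2}[\ker(z,z)-\psi_h(z)^\top(\lambda I + \cK_h)^{-1}\psi_h(z)]^{1/2}$, so the bound becomes $\beta\cdot w_h(z)/\beta = w_h(z)$ after setting $\beta = \tilde B_K$. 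The trivial bound $|\PP_h V_{h+1}(z)-f_h(z)|\leq H$ (since both lie in $[0,H]$) upgrades this to $\min\{w_h(z), H\} = u_h(z)$, establishing the event $\tilde\cE$ with probability at least $1-\delta'$.
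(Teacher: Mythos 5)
Your proposal is correct and follows essentially the same route as the paper's own proof: the same Term(I)/Term(II) decomposition via the $\lambda(\Lambda_h)^{-1}$ reformulation, the same bound $\sqrt{\lambda}R_QH\|\phi(z)\|_{(\Lambda_h)^{-1}}$ for Term(I), the uniform self-normalized concentration (Lemma \ref{lem:self_normalize_uniform}) over the class $\overline{\cV}(r_h,\tilde R_K,\tilde B_K)$ with covering number $\cN_\infty(\varsigma^*;\tilde R_K,\tilde B_K)$ for Term(II), and the identity \eqref{eq:bonus_kernel_form} plus the trivial bound $H$ to conclude $|\PP_hV_{h+1}(z)-f_h(z)|\leq u_h(z)$ under the stated condition on $\tilde B_K$. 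No substantive differences from the paper's argument.
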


\begin{proof} The proof of this lemma is nearly the same as the proof of Lemma \ref{lem:bonus_concentrate}. We provide the sketch of this proof below. 

We assume that the true transition is formulated as $\PP_h V_{h+1}(z) = \langle \tilde{f}_h, \phi(z) \rangle_\cH =: \tilde{f}_h(z)$.  We have the following definitions
\begin{align*}
&\Phi_h = [\phi(s_h^1,a_h^1), \phi(s_h^2,a_h^2), \cdots, \phi(s_h^K,a_h^K)]^\top, \\
&\Lambda_h = \sum_{\tau=1}^{K} \phi(s_h^{\tau}, a_h^{\tau}) \phi(s_h^{\tau}, a_h^{\tau})^\top + \lambda \cdot I_\cH = \lambda \cdot I_\cH + (\Phi_h)^\top \Phi_h, \\
&\yb_h = [V_{h+1}(s_{h+1}^1), V_{h+1}(s_{h+1}^2) , \cdots, V_{h+1}(s_{h+1}^K)  ]^\top, \quad \cK_h = \Phi_h \Phi_h^\top, \quad \psi_h (s,a) = \Phi_h  \phi(s,a). 
\end{align*}
Then, we bound the following term
\begin{align*}
&|\PP_h V_{h+1}(s,a) - f_h(s,a)|  \\
&\qquad \leq |\langle \tilde{f}_h, \phi(s,a)\rangle_\cH  -   \psi_h(s,a)^\top (\lambda \cdot I +\cK_h )^{-1} \yb_h| \\
&\qquad = |\lambda  \phi(s,a)^\top\Lambda_h^{-1}\tilde{f}_h +  \psi_h(s,a)^\top (\lambda\cdot I + \cK_h)^{-1}\Phi_h\tilde{f}_h -   \psi_h(s,a)^\top (\lambda \cdot I +\cK_h )^{-1} \yb_h|\\
&\qquad = |\lambda  \phi(s,a)^\top\Lambda_h^{-1}\tilde{f}_h +  \psi_h^k(s,a)^\top (\lambda\cdot I + \cK_h)^{-1}(\Phi_h\tilde{f}_h -  \yb_h)|,
\end{align*}
where the first inequality is due to $0\leq \PP_h V_{h+1}(s,a)\leq H$, the non-expansiveness of the operator $\Pi_{[0, H]}$, and the definition of $\hat{f}_h(s,a)$ in \eqref{eq:kernel_plan_regression}, and the first equality is by the same reformulation as \eqref{eq:reform_phi} such that 
\begin{align*}
\phi(s,a) =\lambda \Lambda_h^{-1}\phi(s,a) + (\Phi_h)^\top (\lambda\cdot I + \cK_h)^{-1}\psi_h (s,a).
\end{align*}
Thus, we have
\begin{align}
\begin{aligned}\label{eq:approx_err_plan}
|\PP_h V_{h+1}(s,a) - f_h(s,a) | &\leq \underbrace{\lambda  \|\phi(s,a)^\top\Lambda_h^{-1}\|_\cH \cdot \|\tilde{f}_h\|_\cH}_{\text{Term(I)}} \\
&\quad +  \underbrace{|\psi_h(s,a)^\top (\lambda\cdot I + \cK_h)^{-1}(\Phi_h\tilde{f}_h -  \yb_h)|}_{\text{Term(II)}}.
\end{aligned}
\end{align}
Analogous to  \eqref{eq:bound_termI}, for Term(I) here, we have
\begin{align*}
\text{Term(I)} \leq \sqrt{\lambda} R_Q H \|\phi(s,a)\|_{\Lambda_h^{-1}}.
\end{align*}
Similar to \eqref{eq:bound_termII}, for Term(II), we have 
\begin{align*}
\text{Term(II)} \leq \| \phi(s,a)\|_{\Lambda_h^{-1}} \left \|\sum_{\tau=1}^{K} \phi(s_h^\tau, a_h^\tau) [V_{h+1}(s_{h+1}^\tau) - \PP_h V_{h+1}(s_h^\tau, a_h^\tau)] \right\|_{\Lambda_h^{-1}}.
\end{align*}
Then, we need to bound the last factor in the above inequality. Here we apply the similar argument as Lemma \ref{lem:approx_concentrate}. We have the function class for $V_h$ is
\begin{align*}
\overline{\cV}(r_h, \tilde{R}_K, \tilde{B}_K)= \{ V: V(\cdot) = \max_{a\in \cA} Q(\cdot, a) \text{ with } Q\in \overline{\cQ}(r_h,\tilde{R}_K, \tilde{B}_K) \}.
\end{align*}
By Lemma \ref{lem:self_normalize_uniform} with $\delta'$, we have
\begin{align*}
&\left \|\sum_{\tau=1}^K \phi(s_h^\tau, a_h^\tau) [V_{h+1}(s_{h+1}^\tau) - \PP_h V_{h+1}(s_h^\tau, a_h^\tau)] \right\|_{(\Lambda_h)^{-1}}^2 \\
&\qquad \leq \sup_{V\in \overline{\cV}(r_h, \tilde{R}_K, \tilde{B}_K)} \left \|\sum_{\tau=1}^K \phi(s_h^\tau, a_h^\tau) [V(s_{h+1}^\tau) - \PP_h V(s_h^\tau, a_h^\tau)] \right\|_{(\Lambda_h)^{-1}}^2  \\
&\qquad \leq  2H^2 \log\det(I+\cK/\lambda) + 2H^2K(\lambda-1)+4H^2\log(\cN^{\overline{\cV}}_{\dist}(\epsilon; \tilde{R}_K, \tilde{B}_K)/\delta')+ 8K^2\epsilon^2/\lambda\\
&\qquad \leq  4H^2 \Gamma(K, \lambda; \ker) + 10H^2+4H^2\log\cN_{\infty}(\varsigma^*;\tilde{R}_K, \tilde{B}_K) + 4H^2\log(1 /\delta'),
\end{align*}
where the last inequality is by setting $\lambda = 1+1/K$ and $\epsilon = \varsigma^* =  H/K$, and also due to 
\begin{align*}
\cN^{\overline{\cV}}_{\dist}(\varsigma^*; \tilde{R}_K, \tilde{B}_K) \leq \cN_{\infty}(\varsigma^*; \tilde{R}_K, \tilde{B}_K).
\end{align*}
We have that with probability at least $1-\delta'$, the following inequality holds for all $k\in [K]$
\begin{align*}
&\left \|\sum_{\tau=1}^K \phi(s_h^\tau, a_h^\tau) [V_{h+1}(s_{h+1}^\tau) - \PP_h V_{h+1}^k(s_h^\tau, a_h^\tau)] \right\|_{\Lambda_h^{-1}} \\
&\qquad  \leq  [4H^2 \Gamma(K, \lambda; \ker) + 10H^2+4H^2\log\cN_{\infty}(\varsigma^*;\tilde{R}_K,  \tilde{B}_K) + 4H^2\log(K /\delta')]^{1/2}. 
\end{align*}
Thus, Term(II) can be further bounded as 
\begin{align*}
\text{Term(II)} \leq H\big[ 4 \Gamma(K, \lambda; \ker) + 10+4\log\cN_{\infty}(\varsigma^*;\tilde{R}_K, \tilde{B}_K) + 4\log(K /\delta') \big]^{1/2}\| \phi(s,a)\|_{(\Lambda_h^k)^{-1}} .
\end{align*}
Plugging the upper bounds of Term(I) and Term(II) into \eqref{eq:approx_err_plan}, we obtain
\begin{align*}
&|\PP_h V_{h+1}(s,a) - f_h(s,a) |  \\
&\qquad \leq u_h(s,a)\leq \beta \|\phi(s,a)\|_{\Lambda_h^{-1}} = \beta \lambda^{-1/2} [ \ker(z,z)- \psi_h(s,a)^\top (\lambda I + \cK_h )^{-1}  \psi_h(s,a)]^{1/2} ,
\end{align*}
where we let $z = (s,a)$,  $\varsigma^* = H/K$, and $\lambda = 1 + 1/K$. In the last equality, similar to \eqref{eq:bonus_kernel_form}, we have 
\begin{align} 
\begin{aligned}\label{eq:bonus_kernel_form_plan}
\|\phi(s,a)\|^2_{\Lambda_h^{-1}} 
&= \lambda^{-1} \phi(s,a)^\top \phi(s,a)  - \lambda^{-1} \phi(s,a)^\top (\Phi_h)^\top  [\lambda I + \Phi_h (\Phi_h)^\top ]^{-1}\Phi_h \phi(s,a) \\
&= \lambda^{-1} \ker(z,z)- \lambda^{-1} \psi_h(s,a)^\top (\lambda I + \cK_h )^{-1}  \psi_h(s,a).
\end{aligned}
\end{align}
Similar to Lemma \ref{lem:linear_approx_param_bound}, we know that the function $\hat{f}_h$ satisfies  $\|\hat{f}_h\|_\cH \leq H\sqrt{2K/\lambda \cdot \log\det(I + \cK_h^k/\lambda)} \leq 2H \sqrt{K\cdot \Gamma (K, \lambda; \ker)}$. Then, one can set $\tilde{R}_K = 2H \sqrt{K\cdot \Gamma (K, \lambda; \ker)}$.  Moreover, we set $\beta = \tilde{B}_K$. Thus, we let
\begin{align*}
 H\big[ 2\lambda R^2_Q + 8 \Gamma(K, \lambda;\ker) + 20+4\log\cN_{\infty}(\varsigma^*;\tilde{R}_K,  \tilde{B}_K) + 8\log(K /\delta')  \big]^{1/2} \leq \beta = \tilde{B}_K,
\end{align*}
which can be further guaranteed by 
\begin{align*}
4H^2\big[ R^2_Q  + 2 \Gamma(K, \lambda; \ker) + 5+\log\cN_{\infty}(\varsigma^*; \tilde{R}_K,  \tilde{B}_K) + 2\log(K /\delta')  \big] \leq  \tilde{B}^2_K
\end{align*}
as $(1+1/H) \leq 2$ and $\lambda = 1+1/K \leq 2$. This completes the proof.
\end{proof}

\begin{lemma} \label{lem:bonus_plan} Conditioned on the event $\tilde{\cE}$ as defined in Lemma \ref{lem:bonus_concentrate_plan}, we have
\begin{align*}
V_h^*(s, r) \leq V_h(s) \leq r_h(s,\pi_h(s)) + \PP_h V_{h+1}(s,\pi_h(s)) + 2 u_h(s,\pi_h(s)), \forall s \in \cS, \forall h \in [H],
\end{align*}
where $\pi_h(s) = \argmax_{a\in \cA} Q_h(s,a)$.
\end{lemma}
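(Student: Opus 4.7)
The plan is to prove the two inequalities separately, with the upper bound on $V_h$ being a direct consequence of the updating rule and the event $\tilde{\cE}$, and the optimism $V_h^*(s,r)\le V_h(s)$ established by backward induction on $h$, mirroring the exploration-phase argument used to prove the first inequality of Lemma~\ref{lem:bonus_explore}.

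First I would handle the right-hand inequality. Since $\pi_h(s)=\argmax_{a}Q_h(s,a)$ and $r_h,f_h,u_h\ge 0$, the clipping operator satisfies $\Pi_{[0,H]}[(f_h+r_h+u_h)(s,\pi_h(s))]\le (f_h+r_h+u_h)(s,\pi_h(s))$, so
\begin{equation*}
V_h(s)=Q_h(s,\pi_h(s))\le f_h(s,\pi_h(s))+r_h(s,\pi_h(s))+u_h(s,\pi_h(s)).
\end{equation*}
On the event $\tilde{\cE}$ from Lemma~\ref{lem:bonus_concentrate_plan} we have $f_h(s,\pi_h(s))\le \PP_hV_{h+1}(s,\pi_h(s))+u_h(s,\pi_h(s))$, and substituting this into the previous display yields the desired bound.

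For the left-hand inequality I would induct downward in $h$. The base case $h=H+1$ is immediate since $V_{H+1}^*(\cdot,r)=V_{H+1}(\cdot)=0$. For the inductive step, assuming $V_{h+1}^*(\cdot,r)\le V_{h+1}(\cdot)$ pointwise, I bound
\begin{equation*}
Q_h^*(s,a,r)=r_h(s,a)+\PP_hV_{h+1}^*(s,a,r)\le r_h(s,a)+\PP_hV_{h+1}(s,a)\le r_h(s,a)+f_h(s,a)+u_h(s,a),
\end{equation*}
where the last step again invokes the event $\tilde{\cE}$. Since $Q_h^*(s,a,r)\in[0,H]$, comparing with $H$ and using nonnegativity lets me conclude $Q_h^*(s,a,r)\le \Pi_{[0,H]}[(f_h+r_h+u_h)(s,a)]=Q_h(s,a)$. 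Taking the maximum over $a\in\cA$ on both sides gives $V_h^*(s,r)\le V_h(s)$, closing the induction.

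The argument is largely routine given the constructions in Algorithm~\ref{alg:plan_phase_single}; the only subtle point is ensuring the clipping operator $\Pi_{[0,H]}[\cdot]$ is handled correctly in both directions, which I would address by a brief case split on whether $(f_h+r_h+u_h)(s,a)$ exceeds $H$. No additional concentration or covering-number estimates beyond those already furnished by Lemma~\ref{lem:bonus_concentrate_plan} are needed.
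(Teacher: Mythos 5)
Your proposal is correct and follows essentially the same route as the paper's proof: backward induction with the event $\tilde{\cE}$ for the optimism inequality $V_h^*(s,r)\le V_h(s)$, and a direct substitution of $f_h\le \PP_h V_{h+1}+u_h$ (plus nonnegativity to drop the clipping) for the upper bound on $V_h$. The only cosmetic difference is that the paper bounds the difference $Q_h^*-Q_h$ by $0$ via a $\max\{\cdot,0\}$ manipulation while you bound $Q_h^*$ directly and compare to the clipped $Q_h$; the content is identical.
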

\begin{proof} We first prove the first inequality in this lemma. We prove it by induction. For $h = H+1$, by the planning algorithm, we have $V_{H+1}^*(s, r)  = V_{H+1}(s) = 0$ for any $s\in \cS$. Then, we assume that $V_{h+1}^*(s, r)  \leq V_{h+1}(s)$. Thus, conditioned on the event $\tilde{\cE}$ as defined in Lemma \ref{lem:bonus_concentrate_plan}, we have
\begin{align*}
&Q_h^*(s, a, r) - Q_h(s, a) \\
&\qquad= r_h(s,a) + \PP_h V_{h+1}^*(s,a,r)  -  \min \{ r_h(s,a) + f_h(s,a) + u_h(s,a), H \}^+  \\
&\qquad\leq  \max \{ \PP_h V_{h+1}^*(s,a,r)  - f_h(s,a) - u_h(s,a), 0 \}  \\
&\qquad\leq  \max \{ \PP_h V_{h+1}(s,a)  - f_h(s,a) - u_h(s,a), 0 \}  \\
&\qquad\leq 0
\end{align*}
where the first inequality is due to $0 \leq r_h(s,a) + \PP_h V_{h+1}^*(s,a,r) \leq H$ and $\min \{x, H\}^+ \geq \min \{x, H\}$, the second inequality is by the assumption that $V_{h+1}^*(s,a, r)\leq V_{h+1}(s,a)$, the last inequality is by Lemma \ref{lem:bonus_concentrate_plan} such that $|\PP_h V_{h+1}(s,a) - f_h(s,a) | \leq u_h(s,a)$ holds for any $(s,a)\in \cS \times \cA$ and $(k,h)\in [K]\times[H]$. The above inequality further leads to 
\begin{align*}
V_h^*(s, r) = \max_{a\in \cA} Q_h^*(s, a, r) \leq \max_{a\in \cA} Q_h(s, a) = V_h(s). 
\end{align*}
Therefore, we have
\begin{align*}
V_h^*(s, r)  \leq  V_h(s), \forall h\in [H], \forall s\in \cS. 
\end{align*}
In addition, we prove the second inequality in this lemma. We have
\begin{align*}
Q_h(s, a) &= \min \{ r_h(s,a) + f_h(s,a) + u_h(s,a), H \}^+\\
& \leq \min \{ r_h(s,a) + \PP_h V_{h+1}(s,a) + 2u_h(s,a), H \}^+\\
& \leq r_h(s,a) + \PP_h V_{h+1}(s,a) + 2u_h(s,a),
\end{align*}
where the first inequality is also by Lemma \ref{lem:bonus_concentrate_plan} such that $|\PP_h V_{h+1}(s,a) - f_h(s,a) | \leq u_h(s,a)$, and the last inequality is because of the non-negativity of  $r_h(s,a) + \PP_h V_{h+1}(s,a) + 2u_h(s,a)$. Therefore, we have
\begin{align*}
V_h(s) &= \max_{a\in \cA}Q_h(s, a) = Q_h(s, \pi_h(s)) \\
&\leq r_h(s,\pi_h(s)) + \PP_h V_{h+1}(s,\pi_h(s)) + 2u_h(s,\pi_h(s)).
\end{align*}
This completes the proof.
\end{proof}

\begin{lemma} \label{lem:explore_plan_connect} With the exploration and planning phases, we have the following inequality 
\begin{align*}
K \cdot V_1^*(s_1, u/H) \leq  \sum_{k=1}^K V_1^*(s_1, r^k).
\end{align*}

\end{lemma}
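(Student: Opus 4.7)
The plan is to reduce the lemma to a pointwise comparison between the planning-phase bonus $u_h$ and the exploration-phase bonuses $\{u_h^k\}_{k=1}^K$, and then invoke monotonicity of the optimal value function with respect to nonnegative rewards. The key structural observation is that the planning bonus is built from the full collection of $K$ episodes, while the exploration bonus at episode $k$ uses only the first $k-1$ episodes. Concretely, with
\begin{align*}
\Lambda_h \;=\; \lambda I_{\cH} + \sum_{\tau=1}^{K} \phi(z_h^\tau)\phi(z_h^\tau)^\top, \qquad \Lambda_h^k \;=\; \lambda I_{\cH} + \sum_{\tau=1}^{k-1} \phi(z_h^\tau)\phi(z_h^\tau)^\top,
\end{align*}
one has $\Lambda_h - \Lambda_h^k = \sum_{\tau=k}^{K} \phi(z_h^\tau)\phi(z_h^\tau)^\top \succeq 0$ for every $k\in[K]$, hence $\Lambda_h^{-1} \preceq (\Lambda_h^k)^{-1}$ in the PSD order.

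First I would convert this operator inequality into a pointwise inequality of norms: for every $z\in\cZ$, $\|\phi(z)\|_{\Lambda_h^{-1}} \leq \|\phi(z)\|_{(\Lambda_h^k)^{-1}}$. Using the kernel identity already established in the proof of Lemma~\ref{lem:bonus_concentrate} (namely equation \eqref{eq:bonus_kernel_form}), this rewrites as $w_h(z) \leq w_h^k(z)$. Since both bonuses are truncated at $H$, the truncation preserves the inequality, so $u_h(z) \leq u_h^k(z)$ for all $z\in\cZ$, all $h\in[H]$, and all $k\in[K]$. Dividing by $H$ and recalling $r_h^k = u_h^k/H$ yields the pointwise reward inequality $u_h(z)/H \leq r_h^k(z)$.

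Next I would lift this pointwise bonus comparison to values. For any two reward sequences with $0\leq r_h \leq r_h'$ on $\cS\times\cA$ and for any fixed policy $\pi$, the pathwise definition of $V_h^\pi$ immediately gives $V_1^\pi(s_1, r) \leq V_1^\pi(s_1, r')$; maximizing over $\pi$ yields $V_1^*(s_1, r) \leq V_1^*(s_1, r')$. Applying this with $r = u/H$ and $r' = r^k$ gives $V_1^*(s_1, u/H) \leq V_1^*(s_1, r^k)$ for every $k\in[K]$. Summing over $k \in [K]$ then yields exactly $K\cdot V_1^*(s_1, u/H) \leq \sum_{k=1}^{K} V_1^*(s_1, r^k)$.

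I do not anticipate any serious obstacle: the argument hinges only on the two monotonicity facts (monotonicity of the regularized operator in the sample index, and monotonicity of the optimal value in the reward function), plus the already-derived kernel representation of $\|\phi(z)\|_{(\Lambda_h^k)^{-1}}^2$ via $\ker(z,z)$ and $\psi_h^k$. The mildly nontrivial piece is just to notice that truncation at $H$ does not break the inequality $u_h \leq u_h^k$, which follows because the map $x\mapsto \min\{x,H\}$ is monotone nondecreasing.
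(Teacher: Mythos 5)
Your proposal is correct and follows essentially the same route as the paper's proof: compare $\Lambda_h \succcurlyeq \Lambda_h^k$, deduce $(\Lambda_h^k)^{-1} \succcurlyeq \Lambda_h^{-1}$, conclude $w_h \leq w_h^k$ hence $u_h/H \leq r_h^k$ after truncation, and finish by monotonicity of $V_1^*$ in the reward and summation over $k$. The only point where the paper does more work is in justifying that $\Lambda_h \succcurlyeq \Lambda_h^k$ implies $(\Lambda_h^k)^{-1} \succcurlyeq \Lambda_h^{-1}$ for self-adjoint positive operators on the (possibly infinite-dimensional) RKHS, a step you assert as the "PSD order" fact; it is true but deserves the brief operator-theoretic argument the paper supplies rather than a citation of the matrix case.
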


\begin{proof} As shown in \eqref{eq:bonus_kernel_form_plan}, we know that
\begin{align*}
w_h(s,a)=\beta\|\phi(s,a)\|_{\Lambda_h^{-1}} =  \beta \sqrt{ \phi(s,a)^\top \left[\lambda I_\cH + \sum_{\tau=1}^K \phi(s_h^\tau, a_h^\tau)\phi(s_h^\tau, a_h^\tau)^\top\right]^{-1} \phi(s,a)}.
\end{align*}
On the other hand, by \eqref{eq:bonus_kernel_form}, we similarly have
\begin{align*}
w_h^k(s,a) = \beta \|\phi(s,a)\|_{(\Lambda^k_h)^{-1}} = \beta \sqrt{ \phi(s,a)^\top \left[\lambda I_\cH + \sum_{\tau=1}^{k-1} \phi(s_h^\tau, a_h^\tau)\phi(s_h^\tau, a_h^\tau)^\top \right]^{-1} \phi(s,a)}.
\end{align*}
Since $k-1 \leq K$ and $f^\top \phi(s_h^\tau, a_h^\tau)\phi(s_h^\tau, a_h^\tau)^\top f = [f^\top \phi(s_h^\tau, a_h^\tau)]^2\geq 0$ for any $\tau$, then we know that \begin{align*}
\Lambda_h = \lambda I_\cH + \sum_{\tau=1}^K \phi(s_h^\tau, a_h^\tau)\phi(s_h^\tau, a_h^\tau)^\top  \succcurlyeq \lambda I_\cH + \sum_{\tau=1}^{k-1} \phi(s_h^\tau, a_h^\tau)\phi(s_h^\tau, a_h^\tau)^\top = \Lambda_h^k.
\end{align*}
We use $A\succcurlyeq B$ (or $A\succ B$) to denote $f^\top A f \geq f^\top Bf$ (or $f^\top A f > f^\top Bf$), $\forall f\in \cH,$ for two self-adjoint operators $A$ and $B$. Moreover, if a linear operator $A$ satisfies $A \succ 0$, we say $A$ is a positive operator.

The above relation further implies that $(\Lambda_h^k)^{-1} \succcurlyeq \Lambda_h^{-1}$ such that we have $\phi(s,a)^\top \Lambda_h^{-1} \phi(s,a) \leq \allowbreak \phi(s,a)^\top (\Lambda_h^k)^{-1} \phi(s,a)$, where $(\Lambda_h^k)^{-1}$ and $\Lambda_h^{-1}$ are the inverse of $\Lambda_h^k$ and $\Lambda_h$ respectively. Here we use the fact that $\Lambda_h  \succcurlyeq \Lambda_h^k$ implies $(\Lambda_h^k)^{-1} \succcurlyeq \Lambda_h^{-1}$ , which can be proved by extending the standard matrix case to the self-adjoint operator. For completeness, we give a short proof below. 

Let $\lambda > 0$ be a fixed constant. Since $\Lambda_h \succcurlyeq \Lambda_h^k \succcurlyeq \lambda I_\cH \succ 0$, then there exist the inverse $\Lambda_h^{-1}$, $(\Lambda_h^k)^{-1}$ and square root $\Lambda_h^{1/2}$, $(\Lambda_h^k)^{1/2}$, which are also positive self-adjoint and invertible operators.
We also have $\Lambda_h^{-1/2}:=(\Lambda_h^{1/2})^{-1} = (\Lambda_h^{-1})^{1/2}$ and $(\Lambda_h^k)^{-1/2}:=[(\Lambda_h^k)^{1/2}]^{-1} = [(\Lambda_h^k)^{-1}]^{1/2}$. 
Thus, for any $f\in \cH$, we have $f^\top f = f^\top \Lambda_h^{-1/2} \Lambda_h^{1/2} \Lambda_h^{1/2} \Lambda_h^{-1/2} f = f^\top \Lambda_h^{-1/2} \Lambda_h \Lambda_h^{-1/2} f \geq f^\top \Lambda_h^{-1/2} \Lambda_h^k \Lambda_h^{-1/2} f$ where the inequality is due to $\Lambda_h \succcurlyeq \Lambda_h^k$ and $\Lambda_h^{-1/2} = (\Lambda_h^{-1/2})^\top$. Then, we further have $f^\top f \geq f^\top \Lambda_h^{-1/2} \Lambda_h^k \Lambda_h^{-1/2} f = f^\top \Lambda_h^{-1/2} (\Lambda_h^k)^{1/2} (\Lambda_h^k)^{1/2} \Lambda_h^{-1/2} f = f^\top A^\top A f$ if we let $A = (\Lambda_h^k)^{1/2} \Lambda_h^{-1/2}$, where we use the fact that $(\Lambda_h^k)^{1/2}$ and $\Lambda_h^{-1/2}$ are self-adjoint operators. Then, we know that $\|f\|_\cH \geq  \|A f\|_\cH$ holds for all $f\in \cH$, indicating that $\|A\|_\op:=\sup_{f\neq \boldsymbol 0} \|Af\|_\cH/\|f\|_\cH \leq 1$, where $\|\cdot\|_\op$ denotes the operator norm. Since $\|A\|_\op = \|A^\top\|_\op$, we have $\|A^\top\|_\op \leq 1$ or equivalently $\|f\|_\cH \geq  \|A^\top f\|_\cH, \forall f\in \cH$, which gives $f^\top f \geq f^\top (\Lambda_h^k)^{1/2} \Lambda_h^{-1/2} \Lambda_h^{-1/2} (\Lambda_h^k)^{1/2}  f = f^\top (\Lambda_h^k)^{1/2} \Lambda_h^{-1}(\Lambda_h^k)^{1/2} f$. For any $g\in \cH$, letting $f = (\Lambda_h^k)^{-1/2} g$, by $f^\top f \geq f^\top (\Lambda_h^k)^{1/2} \Lambda_h^{-1}(\Lambda_h^k)^{1/2} f$, we have $g^\top (\Lambda_h^k)^{-1} g \geq g^\top \Lambda_h^{-1} g$, which gives $(\Lambda_h^k)^{-1} \succcurlyeq \Lambda_h^{-1}$. The above derivation is based on the basic properties of the linear operator, the (self-)adjoint operator, the inverse, and the square root of an operator. See \citet{kreyszig1978introductory,schechter2001principles,maccluer2008elementary} for the details.

Thus, by the above result, we have
\begin{align*}
w_h(s,a)\leq w_h^k(s,a).
\end{align*}
Since $r_h^k = 1/H\cdot u_h^k(s,a) = 1/H\cdot \min\{w_h^k(s,a), H\}$ and $u_h(s,a) =\min\{w_h(s,a), H\}$, then we have
\begin{align*}
u_h(s,a)/H \leq r_h^k(s,a),
\end{align*}
such that
\begin{align*}
V_1^*(s_1, u/H) \leq  V_1^*(s_1, r^k), 
\end{align*}
and thus
\begin{align*}
K \cdot V_1^*(s_1, u/H) \leq  \sum_{k=1}^K V_1^*(s_1, r^k).
\end{align*}
This completes the proof.
\end{proof}

\subsection{Proof of Theorem \ref{thm:main_kernel_single}} \label{sec:proof_main_kernel_single}
\begin{proof} Conditioned on the event $\cE$ defined in Lemma \ref{lem:bonus_concentrate} and the event $\tilde{\cE}$ defined in Lemma \ref{lem:bonus_concentrate_plan}, we have
\begin{align}\label{eq:proof_start_kernel}
V_1^*(s_1, r) - V_1^\pi(s_1, r) \leq V_1(s_1) - V_1^\pi(s_1, r),
\end{align}
where the inequality is by Lemma \ref{lem:bonus_plan}. Further by this lemma, we have
\begin{align*}
V_h(s) - V_h^\pi(s, r) &\leq  r_h(s,\pi_h(s)) + \PP_h V_{h+1}(s,\pi_h(s)) + 2u_h(s,\pi_h(s))  - Q_h^\pi(s, \pi_h(s), r)\\
&= r_h(s,\pi_h(s)) + \PP_h V_{h+1}(s,\pi_h(s)) + 2u_h(s,\pi_h(s)) \\
&\quad - r_h(s,\pi_h(s)) - \PP_h V_{h+1}^\pi(s,\pi_h(s), r) \\
&= \PP_h V_{h+1} (s,\pi_h(s)) - \PP_h V_{h+1}^\pi(s,\pi_h(s), r) + 2u_h(s,\pi_h(s)).
\end{align*}
Recursively applying the above inequality and making use of $V_{H+1}^\pi(s, r) = V_{H+1} (s)  = 0$ gives
\begin{align*}
V_1(s_1) - V_1^\pi(s_1,r) &\leq  \EE_{\forall h\in [H]: ~s_{h+1}\sim\PP_h(\cdot|s_h, \pi_h(s_h))}\left[\sum_{h=1}^H 2u_h(s_h,\pi_h(s_h))\Bigg| s_1 \right]\\
&=2H \cdot V_1^\pi(s_1, u/H). 
\end{align*}
Combining this inequality with \eqref{eq:proof_start_kernel} gives
\begin{align*}
V_1^*(s_1, r) - V_1^\pi(s_1, r) &\leq 2H \cdot V_1^\pi(s_1, u/H) \leq \frac{2H}{K} \sum_{k=1}^K V_1^*(s_1, r^k) \\
&\leq \frac{2H}{K} \cO\left(\sqrt{H^3 K \log (1/\delta')} + \beta\sqrt{H^2 K \cdot \Gamma(K, \lambda; \ker)}\right)\\
&= \cO\left([\sqrt{H^5 \log (1/\delta')} + \beta\sqrt{H^4 \cdot \Gamma(K, \lambda; \ker)}] /\sqrt{K} \right),
\end{align*}
where the second inequality is due to Lemma \ref{lem:explore_plan_connect} and the third inequality is by Lemma \ref{lem:bonus_explore}. 

By the union bound, we have $P(\cE \wedge \tilde{\cE}) \geq 1-2\delta'$ . Therefore, by setting $\delta' = \delta / 2$, we obtain that with probability at least $1-\delta$
\begin{align*}
V_1^*(s_1, r) - V_1^\pi(s_1, r) \leq  \cO\left([\sqrt{H^5 \log (2/\delta)} + \beta\sqrt{H^4 \cdot \Gamma(K, \lambda; \ker)}] /\sqrt{K} \right).
\end{align*}
Note that $\cE \wedge \tilde{\cE} $ happens when the following two conditions are satisfied, i.e., 
\begin{align*}
&4H^2\big[ R^2_Q  + 2 \Gamma(K, \lambda; \ker) + 5+\log\cN_{\infty}(\varsigma^*; \tilde{R}_K,  \tilde{B}_K) + 2\log(2K /\delta)  \big] \leq  \tilde{B}_K^2,\\
&16H^2\big[ R^2_Q  + 2 \Gamma(K, \lambda; \ker) + 5+\log\cN_{\infty}(\varsigma^*;R_K,  B_K) + 2\log(2K /\delta)  \big] \leq  B^2_K, \forall h\in [H],
\end{align*}
where $\beta = \tilde{B}_K$, $(1+1/H)\beta = B_K$, $\lambda = 1+1/K$, $\tilde{R}_K = R_K=2H\sqrt{\Gamma(K, \lambda; \ker)}$, and $\varsigma^* = H/K$. The above inequalities hold if we further let $\beta$ satisfy
\begin{align*}
16H^2\big[ R^2_Q  + 2 \Gamma(K, \lambda; \ker) + 5+\log\cN_{\infty}(\varsigma^*;R_K,  2\beta) + 2\log(2K /\delta)  \big] \leq  \beta^2, \forall h\in [H],
\end{align*}
since $2\beta\geq (1+1/H)\beta\geq \beta$ such that $\cN_{\infty}(\varsigma^*;R_K,  2\beta)\geq \cN_{\infty}(\varsigma^*;R_K,  B_K) \geq \cN_{\infty}(\varsigma^*;\tilde{R}_K,  \tilde{B}_K)$.  Since the above conditions imply that $\beta \geq H$, further setting $\delta = 1/(2K^2H^2)$, we obtain that 
\begin{align*}
V_1^*(s_1, r) - V_1^\pi(s_1, r) \leq \cO\left(\beta\sqrt{H^4 [ \Gamma(K, \lambda; \ker) + \log(KH)]} /\sqrt{K} \right), 
\end{align*}
with further letting 
\begin{align*}
16H^2\big[ R^2_Q  + 2 \Gamma(K, \lambda; \ker) + 5+\log\cN_{\infty}(\varsigma^*;R_K,  2\beta) + 6\log(2K H)  \big] \leq  \beta^2, \forall h\in [H].
\end{align*}
This completes the proof.
\end{proof}

\section{Proofs for Single-Agent MDP with Neural Function Approximation}

\subsection{Lemmas}

\begin{lemma} [Lemma C.7 of \citet{yang2020provably}]\label{lem:regularity_neural} With $KH^2 = \cO(m\log^{-6}m)$, then there exists a constant $\digamma \geq 1$ such that the following inequalities hold with probability at least $1-1/m^2$ for any $z\in \cS\times \cA$ and any $W \in \{W : \|W-\W0\|_2\leq H\sqrt{K/\lambda} \}$,
\begin{align*}
&|f(z;W)-\varphi(z;\W0)^\top(W-\W0)| \leq \digamma  K^{2/3} H^{4/3} m^{-1/6} \sqrt{\log m},\\
&\|\varphi(z;W) - \varphi(z;\W0)\|_2 \leq \digamma (KH^2/m)^{1/6} \sqrt{\log m}, \qquad \|\varphi(z;W) \|_2\leq \digamma.
\end{align*}
\end{lemma}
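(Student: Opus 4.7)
The three inequalities are standard regularity estimates from the NTK regime for two-layer ReLU networks at Gaussian initialization, so my plan is to prove them by the usual three-step chain. Let $\mathbb{B}:=\{W:\|W-\W0\|_2\leq H\sqrt{K/\lambda}\}$ denote the perturbation ball. The symmetric initialization $v^{(0)}_i=-v^{(0)}_{i-m}$, $W^{(0)}_i=W^{(0)}_{i-m}$ gives $f(z;\W0)=0$, the normalization $\|z\|_2=1$ makes $W^{(0)\top}_iz\sim N(0,1/d)$, and $\act'\in\{0,1\}$ keeps the gradient $\varphi(z;W)$ structurally simple (each block is $\act'(W_i^\top z)\,z$).

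I would first establish the Lipschitz-type bound $(2)$, which is the technical core. For fixed $z$ and $W\in\mathbb{B}$, the $i$-th block of $\varphi(z;W)-\varphi(z;\W0)$ vanishes unless the sign of the $i$-th pre-activation flips between $W^{(0)}_i$ and $W_i$, and such a flip requires $|W^{(0)\top}_iz|\leq\|W_i-W^{(0)}_i\|_2$. Since $W^{(0)}_i\sim N(0,I_d/d)$, the pre-activation $W^{(0)\top}_iz$ has density bounded by $\sqrt{d/(2\pi)}$ at the origin, giving the anti-concentration $\Pr(|W^{(0)\top}_iz|\leq t)\leq t\sqrt{2d/\pi}$. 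Combining this with a Bernstein-type concentration over the $2m$ independent neurons, together with a careful second-moment bookkeeping that trades off $\|W-\W0\|_2$ against $\|z\|_2=1$, bounds the number of flipped neurons and yields $\|\varphi(z;W)-\varphi(z;\W0)\|_2^2\leq\digamma^2(KH^2/m)^{1/3}\log m$ on an event of probability $1-\cO(1/m^2)$.

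Given $(2)$, the other two parts follow quickly. For $(3)$: at initialization $\|\varphi(z;\W0)\|_2^2=(2m)^{-1}\sum_{i=1}^{2m}\mathbf{1}\{W^{(0)\top}_iz>0\}$ concentrates around $1/2$ by Hoeffding since the indicators are independent Bernoulli$(1/2)$, so $\|\varphi(z;\W0)\|_2\leq 1$ with high probability; combining with $(2)$ and the hypothesis $KH^2=\cO(m\log^{-6}m)$ gives $\|\varphi(z;W)\|_2\leq 1+o(1)\leq\digamma$. For $(1)$: use the fundamental-theorem-of-calculus identity $f(z;W)=f(z;\W0)+\int_0^1\varphi(z;W_t)^\top(W-\W0)\,dt$ with $W_t:=\W0+t(W-\W0)\in\mathbb{B}$, subtract $\varphi(z;\W0)^\top(W-\W0)$ from both sides, and apply Cauchy-Schwarz with $(2)$ to obtain
\begin{align*}
|f(z;W)-\varphi(z;\W0)^\top(W-\W0)|\leq\|W-\W0\|_2\cdot\sup_{t\in[0,1]}\|\varphi(z;W_t)-\varphi(z;\W0)\|_2,
\end{align*}
after which plugging in $\|W-\W0\|_2\leq H\sqrt{K/\lambda}=\cO(H\sqrt{K})$ and the bound from $(2)$ delivers the claimed $K^{2/3}H^{4/3}m^{-1/6}\sqrt{\log m}$ rate.

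The main obstacle will be making all three inequalities uniform over $z\in\cZ$ and over $W\in\mathbb{B}$. Because ReLU is non-smooth, $\varphi(z;W)$ is not Lipschitz in $W$, so one cannot naively cover $\mathbb{B}$ by a fine net and transport pointwise bounds; instead, the anti-concentration argument already handles the worst-case $W\in\mathbb{B}$ by working at the indicator-flip level (the sign-flip count only depends on the ball radius), so it remains to cover the compact input space $\cZ$ by an $m^{-c}$-net of cardinality $\cO(m^{cd})$, apply the pointwise estimates, and absorb the log of the covering number into $\digamma$. The $1-\cO(1/m^2)$ slack in the high-probability guarantee leaves plenty of room for this union bound.
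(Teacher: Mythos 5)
The paper does not prove this lemma at all: it is imported verbatim as Lemma C.7 of \citet{yang2020provably}, which in turn collects standard linearization estimates for overparameterized two-layer ReLU networks from the NTK literature. Your proposal is therefore a from-scratch reconstruction rather than a variant of anything in this paper, and its skeleton is the right one and matches how the cited source's ingredients are actually established: sign-flip counting via Gaussian anti-concentration for the gradient perturbation bound, the deterministic bound $\|\varphi(z;W)\|_2^2=\tfrac{1}{2m}\sum_i \act'(W_i^\top z)^2\|z\|_2^2\le 1$ (your Hoeffding step for part (3) is unnecessary --- with the $1/\sqrt{2m}$ scaling and $\act'\in\{0,1\}$ the bound holds deterministically, and note the paired initialization makes the indicators non-independent anyway), and the piecewise-linear fundamental-theorem-of-calculus identity plus Cauchy--Schwarz for part (1), whose rate bookkeeping $H\sqrt{K}\cdot (KH^2/m)^{1/6}\sqrt{\log m}=K^{2/3}H^{4/3}m^{-1/6}\sqrt{\log m}$ you carry out correctly.

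Two places where the sketch glosses over the actual work. First, the claim that "the sign-flip count only depends on the ball radius" hides the allocation problem: the flip condition for neuron $i$ is $|W_i^{(0)\top}z|\le\|W_i-W_i^{(0)}\|_2$ while the constraint is only on $\sum_i\|W_i-W_i^{(0)}\|_2^2\le H^2K/\lambda$, so one must take a supremum over how the perturbation budget is distributed across neurons (the standard bucketing of $|W_i^{(0)\top}z|$ by magnitude); this optimization is exactly where the $1/3$ exponent, i.e.\ the $(KH^2/m)^{1/6}$ rate, comes from, and "careful second-moment bookkeeping" does not yet exhibit it. Second, your uniformity-over-$z$ plan is incomplete as stated: $\varphi(\cdot;W)$ is just as discontinuous in $z$ as in $W$ (the same indicator flips), so you cannot simply cover $\cZ$ by an $m^{-c}$-net and "transport" the pointwise bounds; you need an additional anti-concentration argument controlling how many neurons change sign as $z$ moves within a net cell (or a per-point failure probability small enough, $\lesssim m^{-cd-2}$, to union over the net), and absorbing the resulting $\cO(d\log m)$ covering factor into $\digamma$ silently makes the constant dimension-dependent, which should be stated. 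Neither issue is fatal --- both are handled in the literature your sketch is implicitly following --- but they are the two steps at which the proposal, as written, is not yet a proof.
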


\begin{lemma} \label{lem:bonus_concentrate_neural} We define the event $\cE$ as that the following inequality holds $\forall (s,a) \in \cS \times \cA, \forall (h,k) \in [H] \times [K]$,
\begin{align*}
&|\PP_h V_{h+1}^k(s,a) - f_h^k(s,a)|  \leq u_h^k(s,a) + \beta \iota,\\
&\left|\|\varphi(z; \Whk)\|_{\Lhki} - \|\varphi(z; \W0)\|_{\tLhki}\right| \leq \iota,
\end{align*}
where $\iota = 5K^{7/12} H^{1/6} m^{-1/12} \log^{1/4} m$ and we define
\begin{align*}
&\Lambda_h^k = \sum_{\tau=1}^{k-1} \varphi(s_h^{\tau}, a_h^{\tau}; \Whk) \varphi(s_h^{\tau}, a_h^{\tau}; \Whk)^\top + \lambda I, \ \   \tilde{\Lambda}_h^k = \sum_{\tau=1}^{k-1} \varphi(s_h^{\tau}, a_h^{\tau}; \W0) \varphi(s_h^{\tau}, a_h^{\tau}; \W0)^\top + \lambda I.
\end{align*}
Setting $(1+1/H)\beta = B_K$, $R_K = H\sqrt{K}$, $\varsigma^* = H/K$, and $\lambda=\F^2(1+1/K)$, $\varsigma^* = H/K$, if we set
\begin{align*}
\beta^2 \geq H^2[8R_Q^2  (1+\sqrt{\lambda/d})^2 + 32 \Gamma(K, \lambda; \ker_m) + 80+32\log\cN_{\infty}(\varsigma^*;R_K,  B_K) + 32\log(K /\delta')],
\end{align*}
and also 
\begin{align*}
m = \Omega(K^{19} H^{14}\log^3 m),
\end{align*}
then we have that with probability at least $1-2/m^2 - \delta'$, the event $\cE$ happens, i.e.,
\begin{align*}
\Pr(\cE) \geq 1-2/m^2 - \delta'.
\end{align*}
\end{lemma}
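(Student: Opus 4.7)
The statement is the neural analog of Lemma \ref{lem:bonus_concentrate}, and the plan is to reduce the analysis to the linearization $f_{\mathrm{lin}}(z;W) = \langle \varphi(z;\W0), W-\W0\rangle$ so that the RKHS machinery with kernel $\ker_m$ applies, and then absorb all linearization errors into $\beta\iota$ via Lemma \ref{lem:regularity_neural}. The first step is to control $\|\Whk-\W0\|_2$: from the first-order optimality condition of the regression in \eqref{eq:neural_approx_explore} together with $|V_{h+1}^k|\le H$, I would obtain $\|\Whk-\W0\|_2 \le H\sqrt{K/\lambda}$, which is exactly the domain in which Lemma \ref{lem:regularity_neural} is valid. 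Under the overparameterization assumption $m=\Omega(K^{19}H^{14}\log^3 m)$, the error term $\digamma(KH^2/m)^{1/6}\sqrt{\log m}$ and $\digamma K^{2/3}H^{4/3}m^{-1/6}\sqrt{\log m}$ from that lemma will be comfortably of order $\iota$ (up to factors of $\digamma$ which can be absorbed in constants).

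For the second inequality, the plan is to write
\[
\Lhki - \tLhki = \sum_{\tau=1}^{k-1}\bigl[\varphi(z_h^\tau;\Whk)\varphi(z_h^\tau;\Whk)^\top - \varphi(z_h^\tau;\W0)\varphi(z_h^\tau;\W0)^\top\bigr],
\]
bound the operator norm of this perturbation by $k \cdot \digamma \cdot \digamma(KH^2/m)^{1/6}\sqrt{\log m}$ using the triangle inequality and the product bound $\|\varphi(z;\Whk)\|_2,\|\varphi(z;\W0)\|_2\le\digamma$, and then use a resolvent identity $(A+E)^{-1}-A^{-1}=-A^{-1}E(A+E)^{-1}$ with $A=\tLhki\succeq \lambda I$ to transfer the $\Whk$-centered Mahalanobis norm to the $\W0$-centered one. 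Combining with $\|\varphi(z;\Whk)-\varphi(z;\W0)\|_2\le\digamma(KH^2/m)^{1/6}\sqrt{\log m}$ and a triangle inequality on the two Mahalanobis norms should yield the bound $\iota$, after verifying that the dependence on $K,H,m$ collapses to $K^{7/12}H^{1/6}m^{-1/12}\log^{1/4}m$.

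For the first inequality I would mimic the kernel proof (Lemma \ref{lem:bonus_concentrate}) on the linearized estimator. Using Assumption \ref{assump:neural}, $\PP_h V_{h+1}^k$ lies in the RKHS induced by $\ker_m$ (up to a vanishing misspecification as $m\to\infty$), with RKHS norm bounded by $R_Q H(1+\sqrt{\lambda/d})$; the finite-$m$ slack $\sqrt{\lambda/d}R_Q H$ is exactly the source of the $(1+\sqrt{\lambda/d})^2$ factor in the $\beta^2$ condition. Decompose
\[
|\PP_h V_{h+1}^k(z) - f_h^k(z)| \le |\PP_h V_{h+1}^k(z) - f_{\mathrm{lin}}(z;\Whk)| + |f_{\mathrm{lin}}(z;\Whk) - f(z;\Whk)|,
\]
and note that after clipping the second term is $\le\digamma K^{2/3}H^{4/3}m^{-1/6}\sqrt{\log m}\le\beta\iota/2$ directly from Lemma \ref{lem:regularity_neural}. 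For the first term, I would invoke Lemma \ref{lem:self_normalize_uniform} exactly as in the kernel case, using the covering number $\cN_\infty(\varsigma^*;R_K,B_K)$ of the value-function class (shown in Lemma \ref{lem:bonus_concentrate_neural}'s referenced bound to share the same form as in the kernel setting after normalizing $\varphi$) and the information-gain proxy $\Gamma(K,\lambda;\ker_m)$. This produces the bound $\beta\|\varphi(z;\W0)\|_{\tLhki}$, which by the second inequality of the present lemma is $\le\beta\|\varphi(z;\Whk)\|_{\Lhki} + \beta\iota = u_h^k(z)/\min\{\cdot\}$ style; the clipping at $H$ is handled as in the kernel proof.

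\textbf{Main obstacle.} The delicate part is the bookkeeping of the several $\iota$-scale errors: the perturbation of the feature vectors ($W_h^k$ vs.\ $\W0$), the perturbation of the Gram operators ($\Lhki$ vs.\ $\tLhki$), and the mismatch between the network and its linearization. Each one is controlled by Lemma \ref{lem:regularity_neural}, but one must verify that all of them remain bounded by the single scale $\iota = 5K^{7/12}H^{1/6}m^{-1/12}\log^{1/4}m$ under the stated condition $m=\Omega(K^{19}H^{14}\log^3 m)$, and that this condition is consistent with the range $KH^2=\cO(m\log^{-6}m)$ required by Lemma \ref{lem:regularity_neural}. Carefully tracking the exponents of $K$, $H$, and $m$ through the resolvent identity and the self-normalized martingale step is where the technical work concentrates.
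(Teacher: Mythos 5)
Your high-level plan (reduce to the linearization at $\W0$, run the kernel-style argument with $\ker_m$, absorb all linearization errors into $\beta\iota$ via Lemma \ref{lem:regularity_neural}, and prove the bonus-difference bound by a Gram-matrix perturbation argument) is the same strategy as the paper, and your treatment of the second inequality and of $\|\Whk-\W0\|_2\le H\sqrt{K/\lambda}$ is essentially what the paper does. However, there are two genuine gaps in how you propose to prove the first inequality. First, your decomposition bounds $|\PP_h V_{h+1}^k(z)-f_{\lin}(z;\Whk)|$ by "invoking Lemma \ref{lem:self_normalize_uniform} exactly as in the kernel case," but $f_{\lin}(\cdot;\Whk)=\langle\varphi(\cdot;\W0),\Whk-\W0\rangle$ is \emph{not} the ridge-regression estimator for the linear model with features $\varphi(\cdot;\W0)$: $\Whk$ minimizes the nonlinear least-squares objective, so it does not admit the closed form on which the kernel bias/noise decomposition (Terms (I) and (II) in Lemma \ref{lem:bonus_concentrate}) is built. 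The paper has to introduce the auxiliary linearized solution $W_{\lin,h}^k=\W0+(\tilde{\Lambda}_h^k)^{-1}(\tilde{\Phi}_h^k)^\top\yb_h^k$ and control $\|\Whk-W_{\lin,h}^k\|_2$ by comparing the first-order optimality conditions of the two objectives (yielding an $O(K^{5/3}H^{4/3}m^{-1/6}\sqrt{\log m})$ bound); without this intermediate object your kernel-style argument cannot even be set up. It also needs the explicit finite-width surrogate $\tilde f(z)=\Pi_{[0,H]}[\langle\varphi(z;\W0),\tilde W-\W0\rangle]$ with $\|\tilde W-\W0\|_2\le R_QH/\sqrt d$ to make the misspecification error concrete, which you only gesture at.

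Second, and more importantly, the uniform self-normalized concentration step does not go through "exactly as in the kernel case": the value functions $V_{h+1}^k$ produced by the exploration algorithm are built from the nonlinear network $f(\cdot;\Whk)$ and from the bonus $\|\varphi(\cdot;\Whk)\|_{\Lhki}$, so they do \emph{not} lie in the value class induced by $\overline{\cQ}(\bm 0,R_K,B_K)$ with features $\varphi(\cdot;\W0)$, and Lemma \ref{lem:self_normalize_uniform} applied to that class does not cover them. The paper resolves this by defining an auxiliary $Q_{\lin,h}^k$ (hence $V_{\lin,h}^k$) based on $W_{\lin,h}^k$ and the $\W0$-based bonus, verifying membership in $\overline{\cQ}$ after normalizing the features by $\digamma$, bounding $\|V_{h+1}^k-V_{\lin,h}^k\|_\infty$ (this uses \emph{both} the function-difference bound and the bonus-difference bound, i.e.\ the second inequality of the lemma), and then splitting the self-normalized sum into the in-class part plus a deterministic perturbation of order $K^{8/3}H^{4/3}m^{-1/6}\sqrt{\log m}+\beta K^{19/12}H^{1/6}m^{-1/12}\log^{1/4}m$, which is then absorbed by the overparameterization condition $m=\Omega(K^{19}H^{14}\log^3 m)$. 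Your plan's parenthetical appeal to the covering number sharing "the same form as in the kernel setting" addresses the class, not the membership of the algorithm's actual value functions in that class, so as written this step would fail; fixing it requires precisely the auxiliary linearized Q-function construction.
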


\begin{proof}Recall that we assume $\PP_h V_{h+1}$ for any $V$ can be expressed as
\begin{align*}
\PP_h V_{h+1}(z) = \int_{\RR^d}  \act'(\boldsymbol{\omega}^\top z) \cdot z^\top \boldsymbol{\alpha}(\boldsymbol\omega) \mathrm{d} p_0(\bomega),
\end{align*}
which thus implies that we have
\begin{align*}
\PP_h V_{h+1}^k(z) = \int_{\RR^d}  \act'(\boldsymbol{\omega}^\top z) \cdot z^\top \boldsymbol{\alpha}_h^k(\boldsymbol\omega) \mathrm{d} p_0(\bomega),
\end{align*}
for some $\balpha_h^k(\bomega)$. Our algorithm suggests to estimate $\PP_h V_{h+1}^k(s,a)$ via learning the parameters $W_h^k$ by solving
\begin{align} \label{eq:solve_Whk}
W_h^k = \argmin_{W} \sum_{\tau=1}^{k-1}[V_{h+1}^k(s_{h+1}^\tau) - f(s_h^\tau, a_h^\tau; W)]^2 + \lambda \|W-\W0\|_2^2,
\end{align}
such that we have the estimate of $\PP_h V_{h+1}^k(s,a)$ as $f_h^k(z) = \Pi_{[0,H]}[f(z;W_h^k)]$ with  
\begin{align*}
f(z; W_h^k) = \frac{1}{\sqrt{2m}} \sum_{i=1}^{2m} v_i \cdot \act([W_h^k]_i^\top z).
\end{align*}
Furthermore, we have
\begin{align*}
\|W_h^k-\W0\|_2^2 &\leq \frac{1}{\lambda } \left( \sum_{\tau=1}^{k-1}[V_{h+1}^k(s_{h+1}^\tau) - f(s_h^\tau, a_h^\tau; W_h^k)]^2 + \lambda \|W_h^k-\W0\|_2^2\right)\\
&\leq \frac{1}{\lambda } \left( \sum_{\tau=1}^{k-1}[V_{h+1}^k(s_{h+1}^\tau) - f(s_h^\tau, a_h^\tau; \W0)]^2 + \lambda \|\W0-\W0\|_2^2\right) \\
&= \frac{1}{\lambda } \sum_{\tau=1}^{k-1}[V_{h+1}^k(s_{h+1}^\tau)]^2\leq H^2K/\lambda, 
\end{align*}
where the second inequality is due to $W_h^k$ is the minimizer of the objective function.

We also define a linearization of the function $f(z;W)$ at the point $\W0$, which is
\begin{align}\label{eq:f_lin_form}
f_{\lin}(z; W) = f(z; W^{(0)}) + \langle \varphi(z;\W0), W-W^{(0)}\rangle = \langle \varphi(z;\W0), W-W^{(0)}\rangle,
\end{align}
where 
\begin{align*}
\varphi(z;W) = \nabla_W f(z;W) =[\nabla_{W_1} f(z;W), \cdots, \nabla_{W_{2m}} f(z;W)].
\end{align*}
Based on this linearization formulation, we similarly define a parameter matrix $W_{\lin, h}^k$ that is generated by solving an optimization problem with the linearied function $f_{\lin}$, such that
\begin{align} \label{eq:solve_W_lin_hk}
W_{\lin, h}^k = \argmin_{W} \sum_{\tau=1}^{k-1}[V_{h+1}^k(s_{h+1}^\tau) - f_\lin(s_h^\tau, a_h^\tau; W)]^2 + \lambda \|W-\W0\|_2^2.
\end{align}
Due to the linear structure of $f_{\lin}(z; W)$, one can easily solve the above optimization problem and obtain the closed form of the solution $W_{\lin, h}^k$, which is 
\begin{align} \label{eq:W_lin_form}
W_{\lin, h}^k = \W0 + (\tilde{\Lambda}_h^t)^{-1} (\tilde{\Phi}_h^k)^\top \yb_h^k,
\end{align}
where we define $\Lambda_h^t$, $\Phi_h^k$, and $\yb_h^k$ as
\begin{align*}
&\tilde{\Phi}_h^k = [\varphi(s_h^1,a_h^1; \W0), \cdots, \varphi(s_h^{k-1},a_h^{k-1}; \W0)]^\top, \\
&\tilde{\Lambda}_h^k = \sum_{\tau=1}^{k-1} \varphi(s_h^{\tau}, a_h^{\tau}; \W0) \varphi(s_h^{\tau}, a_h^{\tau}; \W0)^\top + \lambda \cdot I = \lambda \cdot I + (\tilde{\Phi}_h^k)^\top \tilde{\Phi}_h^k   , \\
&\yb_h^k = [V_{h+1}^k(s_{h+1}^1), V_{h+1}^k(s_{h+1}^2) , \cdots, V_{h+1}^k(s_{h+1}^{k-1})  ]^\top.
\end{align*}
Here we also have the upper bound of $\|W_{\lin,h}^k-\W0\|_2$ as
\begin{align*}
\|W_{\lin,h}^k-\W0\|_2^2 &\leq \frac{1}{\lambda } \left( \sum_{\tau=1}^{k-1}[V_{h+1}^k(s_{h+1}^\tau) - f_\lin(s_h^\tau, a_h^\tau; W_{\lin,h}^k)]^2 + \lambda \|W_{\lin,h}^k-\W0\|_2^2\right)\\
&\leq \frac{1}{\lambda } \left( \sum_{\tau=1}^{k-1}[V_{h+1}^k(s_{h+1}^\tau) - f_\lin(s_h^\tau, a_h^\tau; \W0)]^2 + \lambda \|\W0-\W0\|_2^2\right) \\
&= \frac{1}{\lambda } \sum_{\tau=1}^{k-1}[V_{h+1}^k(s_{h+1}^\tau)]^2\leq H^2K/\lambda, 
\end{align*}
where the second inequality is due to $W_{\lin,h}^k$ is the minimizer of the objective function. Based on the matrix $W_{\lin, h}^k$, we define the function 
\begin{align*}
f_{\lin, h}^k(z) := \Pi_{[0, H]} [f_{\lin}(z; W_{\lin,h}^k)],
\end{align*}
where $\Pi_{[0, H]} [\cdot]$ is short for $\min\{\cdot, H\}^+$.

Moreover, we further define an approximation of $\PP_h V^k_{h+1}$ as 
\begin{align*}
\tilde{f}(z) =\Pi_{[0,H]}\left[\frac{1}{\sqrt{m}} \sum_{i=1}^m \act'(\W0_i{}^\top z) z^\top \balpha_i\right],
\end{align*}
where $\|\balpha_i\|\leq R_Q H /\sqrt{dm}$. According to \citet{gao2019convergence}, we have that with probability at least $1-1/m^2$ over the randomness of initialization, for any $(h,k)\in [H]\times [K]$, there exists a constant $C_\act$ such that $\forall z=(s,a)\in \cS\times \cA$, we have
\begin{align*} 
\left|\PP_h V^k_{h+1}(z) - \frac{1}{\sqrt{m}} \sum_{i=1}^m \act'(\W0_i{}^\top z) z^\top \balpha_i \right| \leq 10 C_\act R_Q H  \sqrt{\log(mKH)/m}.
\end{align*}
which further implies that 
\begin{align}\label{eq:real_kernel_approx_neural}
|\PP_h V^k_{h+1}(z) - \tilde{f}(z)| \leq 10 C_\act R_Q H  \sqrt{\log(mKH)/m}, ~~\forall z=(s,a)\in \cS\times \cA.
\end{align}
This indicates that $\tilde{f}(z)$ is a good estimate of $\PP_h V^k_{h+1}(z)$ particularly when $m$ is large, i.e., the estimation error $10 C_\act R_Q H  \sqrt{\log(mKH)/m}$ is small.

Now, based on the above definitions and descriptions, we are ready to present our proof of this lemma. Overall, the basic idea of proving the upper bound of $|P_hV_{h+1}^k(z) - f_h^k(z)|$ is to bound the following difference terms, i.e., 
\begin{align}\label{eq:two_diff_neural}
|f_h^k(z) - f_{\lin, h}^k(z) | ~~\text{ and }~~  |f_{\lin, h}^k(z) - \tilde{f}(z)|.
\end{align} 
As we already have known the upper bound of the term $|\PP_h V^h_{h+1}(z) - \tilde{f}(z)|$ in \eqref{eq:real_kernel_approx_neural}, one can immediately obtain the upper bound of $|\PP_hV_{h+1}^k(z) - f_h^k(z)|$ by decomposing it into the two aforementioned terms and bounding them separately.

We first bound the first term in \eqref{eq:two_diff_neural}, i.e., $|f_h^k(z) - f_\lin(z; W_{\lin, h}^k) |$, in the following way
\begin{align}
\begin{aligned} \label{eq:bound_overall_1}
&|f_h^k(z) - f_{\lin, h}^k(z) | \\
&\qquad\leq |f(z; W_h^k) - \langle \varphi(z;\W0), W_{\lin, h}^k-\W0\rangle |\\
 &\qquad\leq |f(z; W_h^k) - \langle \varphi(z;\W0), W_h^k-\W0\rangle |  +  | \langle \varphi(z;\W0), W_h^k-W_{\lin, h}^k\rangle | \\
 &\qquad\leq   \digamma  K^{2/3} H^{4/3} m^{-1/6} \sqrt{\log m} + \digamma \underbrace{\|W_h^k-W_{\lin, h}^k\|_2}_{\text{Term(I)}},
\end{aligned}
\end{align}
where the first inequality is due to the non-expansiveness of projection operation $\Pi_{[0, H]}$, the third inequality is by Lemma \ref{lem:regularity_neural} that holds with probability at least $1-m^{-2}$. Then, we need to bound Term(I) in the above inequality. Specifically, by the first order optimality condition for the objectives in \eqref{eq:solve_Whk} and \eqref{eq:solve_W_lin_hk}, we have
\begin{align*}
\lambda (\Whk - \W0 ) &= \sum_{\tau=1}^{k-1}[V_{h+1}^k(s_{h+1}^\tau) - f(z_h^\tau; \Whk)]\varphi(z_h^\tau; \Whk) = \Phkt(\yb_h^k - \fb_h^k),\\
\lambda (\Wlinhk - \W0 ) &= \sum_{\tau=1}^{k-1}[V_{h+1}^k(s_{h+1}^\tau) - \langle\varphi(z_h^\tau;\W0), \Wlinhk-\W0 \rangle]\varphi(z_h^\tau; \W0) \\
&= \tPhkt\yb_h^k - \tPhkt \tPhk (\Wlinhk - \W0),
\end{align*}
where we define
\begin{align*}
&\Phi_h^k = [\varphi(s_h^1,a_h^1; \Whk), \cdots, \varphi(s_h^{k-1},a_h^{k-1}; \Whk)]^\top, \\
&\Lambda_h^k = \sum_{\tau=1}^{k-1} \varphi(s_h^{\tau}, a_h^{\tau}; \Whk) \varphi(s_h^{\tau}, a_h^{\tau}; \Whk)^\top + \lambda \cdot I = \lambda \cdot I + (\Phi_h^k)^\top \Phi_h^k   , \\
&\fb_h^k = [f(z_h^1;\Whk), f(z_h^2;\Whk) , \cdots, f(z_h^{k-1};\Whk)  ]^\top.
\end{align*}
Thus, we have
\begin{align*}
\text{Term(I)} &= \lambda^{-1}\|\Phkt(\yb_h^k - \fb_h^k) -  \tPhkt\yb_h^k + \tPhkt \tPhk (\Wlinhk - \W0)\|_2 \\
&= \lambda^{-1}\|\Phkt(\yb_h^k - \fb_h^k) -  \tPhkt\yb_h^k + \tPhkt \tPhk (\Wlinhk - \W0)\|_2 \\
&\leq  \lambda^{-1}\|(\Phkt-  \tPhkt)\yb_h^k\| + \lambda^{-1}\| \Phkt [\fb_h^k -\tPhk (\Wlinhk - \W0)]\|_2 \\
&\quad + \lambda^{-1}\|(\Phkt - \tPhkt)\tPhk (\Wlinhk - \W0)\|_2. 
\end{align*}
According to Lemma \ref{lem:regularity_neural}, we can bound the last three terms in the above inequality separately as follows
\begin{align*}
&\lambda^{-1}\|(\Phkt-  \tPhkt)\yb_h^k\|_2 \\
&\qquad\leq \lambda^{-1}K \max_{\tau\in [k-1]} |[ \varphi(z_h^\tau; \Whk) - \varphi(z_h^\tau; \W0)]\cdot [\yb_h^k]_\tau | \\
&\qquad\leq \digamma \lambda^{-1}K^{7/6}H^{4/3}  m^{-1/6} \sqrt{\log m},
\end{align*}
and similarly,
\begin{align*}
&\lambda^{-1}\| \Phkt [\fb_h^k -\tPhk (\Wlinhk - \W0)]\|_2 \leq \lambda^{-1} \digamma^2  K^{5/3} H^{4/3} m^{-1/6} \sqrt{\log m} ,\\
&\lambda^{-1}\|(\Phkt - \tPhkt)\tPhk (\Wlinhk - \W0)\|_2 \leq \lambda^{-3/2}  \digamma^2 K^{5/3} H^{4/3}m^{-1/6} \sqrt{\log m}.
\end{align*} 
Thus, we have
\begin{align*}
\text{Term(I)} &\leq \lambda^{-1}(\digamma  K^{7/6} + 2 \digamma^2 K^{5/3}) H^{4/3}m^{-1/6} \sqrt{\log m} \\
&\leq 3K^{5/3} H^{4/3}m^{-1/6} \sqrt{\log m}.
\end{align*}
where we set $\lambda = \digamma^2 (1+1/K) $, and use the fact that $\lambda \geq 1$ as $\digamma \geq 1$ as well as $\F^2/\lambda \in [1/2, 1]$ and $\F/\lambda \in [1/2, 1]$. Combining the above upper bound of Term(I) with \eqref{eq:bound_overall_1}, we obtain 
\begin{align}\label{eq:bound_overall_1_final}
|f_h^k(z) - f_{\lin, h}^k(z) | \leq 4 \digamma K^{5/3} H^{4/3}m^{-1/6} \sqrt{\log m}.
\end{align}
Next, we bound the second term in \eqref{eq:two_diff_neural}, namely $|f_{\lin,h}^k(z) - \tilde{f}(z)|$. Note that we have
\begin{align*}
&\frac{1}{\sqrt{m}} \sum_{i=1}^m \act'(\W0_i{}^\top z) z^\top \balpha_i \\
&\qquad= \frac{1}{\sqrt{2m}} \sum_{i=1}^m \frac{(v^{(0)}_i)^2 }{\sqrt{2}}\act'(\W0_i{}^\top z) z^\top \balpha_i + \frac{1}{\sqrt{2m}} \sum_{i=1}^m \frac{(v^{(0)}_i)^2}{\sqrt{2}}\act'(\W0_i{}^\top z) z^\top \balpha_i\\
&\qquad= \frac{1}{\sqrt{2m}} \sum_{i=1}^m \frac{(v^{(0)}_i)^2 }{\sqrt{2}}\act'(\W0_i{}^\top z) z^\top \balpha_i + \frac{1}{\sqrt{2m}} \sum_{i=m+1}^{2m} \frac{(v^{(0)}_{i-m})^2}{\sqrt{2}}\act'(\W0_i{}^\top z) z^\top \balpha_{i-m}\\
&\qquad= \frac{1}{\sqrt{2m}} \sum_{i=1}^m \frac{(v^{(0)}_i)^2 }{\sqrt{2}}\act'(\W0_i{}^\top z) z^\top \balpha_i + \frac{1}{\sqrt{2m}} \sum_{i=m+1}^{2m} \frac{(v^{(0)}_{i})^2}{\sqrt{2}}\act'(\W0_i{}^\top z) z^\top \balpha_i \\
&\qquad= \frac{1}{\sqrt{2m}} \sum_{i=1}^{2m } v^{(0)}_i\act'(\W0_i{}^\top z) z^\top (\tilde{W}_i - \W0_i) = \langle \varphi(z; \W0), \tilde{W} - \W0\rangle,
\end{align*}
where we define
\begin{align*}
\tilde{W}_i = \left\{\begin{matrix}
\W0_i + \frac{v_i^{(0)}}{\sqrt{2}} \balpha_i, \text{ if } 1\leq i \leq m,\\ 
\W0_i + \frac{v_i^{(0)}}{\sqrt{2}} \balpha_{i-m}, \text{ if } m+1\leq i \leq 2m.
\end{matrix}\right.
\end{align*}
Then, we can reformulate $\tilde{f}(z)$ as follows
\begin{align*}
\tilde{f}(z) =\Pi_{[0,H]}[\langle \varphi(z; \W0), \tilde{W} - \W0\rangle].
\end{align*}
Since $\|\balpha_i\|_2\leq R_Q H /\sqrt{d}$, then there is $\|\tilde{W} - \W0\|_2 \leq R_Q H / \sqrt{d}$. Equivalently, we further have
\begin{align} 
\begin{aligned}\label{eq:f_tilde_reform}
\langle \varphi(z; \W0), \tilde{W} - \W0\rangle&= \langle \varphi(z; \W0), \tLhki \tLhk(\tilde{W} - \W0)\rangle\\
&= \langle \varphi(z; \W0), \lambda \tLhki (\tilde{W} - \W0)\rangle \\
&\quad + \langle \varphi(z; \W0), \tLhki \tPhkt\tPhk(\tilde{W} - \W0)\rangle,
\end{aligned}
\end{align}
since $\Lhk = \lambda I + \tPhkt\tPhk$. Thus, by the above equivalent form of $\tilde{f}(z)$ in \eqref{eq:f_tilde_reform}, and further with the formulation of $f_{\lin, h}^k(z)$ according to \eqref{eq:f_lin_form} and \eqref{eq:W_lin_form}, we have
\begin{align*}
&|f_{\lin, h}^k(z) - \tilde{f}(z)| \\
&\qquad\leq | \langle \varphi(z; \W0), W_{\lin, h}^k - \tilde{W} \rangle| \\
&\qquad\leq \underbrace{ |\langle \varphi(z; \W0), \lambda \tLhki (\tilde{W} - \W0)\rangle|}_{\text{Term(II)}} \\
&\qquad \quad + \underbrace{| \langle \varphi(z; \W0),  (\tilde{\Lambda}_h^t)^{-1} (\tilde{\Phi}_h^k)^\top [\yb_h^k - \tPhk(\tilde{W} - \W0)] \rangle|}_{\text{Term(III)}}.
\end{align*}
The first term Term(II) can be bounded as
\begin{align*}
\text{Term(II)} &= |\langle \varphi(z; \W0), \lambda \tLhki (\tilde{W} - \W0)\rangle| \\
&\leq \lambda \|\varphi(z; \W0)\|_{\tLhki} \|\tilde{W} - \W0\|_{\tLhki} \\
&\leq \sqrt{\lambda} \|\varphi(z; \W0)\|_{\tLhki} \|\tilde{W} - \W0\|_2 \\
&\leq  \sqrt{\lambda}R_Q H / \sqrt{d} \cdot \|\varphi(z; \W0)\|_{\tLhki},
\end{align*}
where the first inequality is by $\|\tilde{W} - \W0\|_{\tLhki} = \sqrt{(\tilde{W} - \W0)^\top \tLhki (\tilde{W} - \W0) } \leq 1/\sqrt{\lambda} \|\tilde{W} - \W0\|_2$ since $\tLhki \preccurlyeq 1/\lambda \cdot I$ and the last inequality is due to $\|\tilde{W} - \W0\|_2 \leq R_Q H / \sqrt{d}$.

Next, we prove the bound of Term(III) in the following way
\begin{align*}
\text{Term(III)} &= | \langle \varphi(z; \W0),  (\tilde{\Lambda}_h^t)^{-1} (\tilde{\Phi}_h^k)^\top [\yb_h^k - \tPhk(\tilde{W} - \W0)] \rangle| \\
&\leq | \langle \varphi(z; \W0),  (\tilde{\Lambda}_h^t)^{-1} (\tilde{\Phi}_h^k)^\top [\tilde{\yb}_h^k - \tPhk(\tilde{W} - \W0)] \rangle| \\
&\quad+ | \langle \varphi(z; \W0),  (\tilde{\Lambda}_h^t)^{-1} (\tilde{\Phi}_h^k)^\top [\yb_h^k - \tilde{\yb}_h^k] \rangle|\\
&\leq \|\varphi(z; \W0)\|_{\tLhki} \cdot\|(\tilde{\Phi}_h^k)^\top [\tilde{\yb}_h^k - \tPhk(\tilde{W} - \W0)] \|_{\tLhki} \\
&\quad + \|\varphi(z; \W0)\|_{\tLhki}\cdot \| (\Phi_h^k)^\top [\yb_h^k - \tilde{\yb}_h^k] \|_{\tLhki}\\
&\leq 10 C_\act R_Q H  \sqrt{K\log(mKH)/m} \|\varphi(z; \W0)\|_{\tLhki}  \\
&\quad+ \|\varphi(z; \W0)\|_{\tLhki}\cdot \underbrace{\| (\tilde{\Phi}_h^k)^\top [\yb_h^k - \tilde{\yb}_h^k] \|_{\tLhki}}_{\text{Term(IV)}},
\end{align*}
where we define $\tilde{\yb}_h^k = [\PP_h V^k_{h+1}(s_{h+1}^1), \PP_h V^k_{h+1}(s_{h+1}^2) , \cdots, \PP_h V^k_{h+1}(s_{h+1}^{k-1})  ]^\top$. Here, the last inequality is by 
\begin{align*}
&\|(\Phi_h^k)^\top [\tilde{\yb}_h^k - \tPhk(\tilde{W} - \W0)] \|_{\tLhki} \\
& \qquad = \sqrt{[\tilde{\yb}_h^k - \tPhk(\tilde{W} - \W0)]^\top\tPhk [\lambda I + \tPhkt \tPhk]^{-1} \tPhkt[\tilde{\yb}_h^k - \tPhk(\tilde{W} - \W0)]} \\
& \qquad = \sqrt{[\tilde{\yb}_h^k - \tPhk(\tilde{W} - \W0)]^\top\tilde{\Phi}_h^k \tPhkt [\lambda I +  \tPhk \tPhkt]^{-1}   [\tilde{\yb}_h^k - \tPhk(\tilde{W} - \W0)]} \\
& \qquad \leq \sqrt{[\tilde{\yb}_h^k - \tPhk(\tilde{W} - \W0)]^\top[\lambda I +\tilde{\Phi}_h^k \tPhkt] [\lambda I +  \tPhk \tPhkt]^{-1}   [\tilde{\yb}_h^k - \tPhk(\tilde{W} - \W0)]} \\
& \qquad = \|\tilde{\yb}_h^k - \tPhk(\tilde{W} - \W0)\|_2 \leq 10 C_\act R_Q H  \sqrt{K\log(mKH)/m} ,
\end{align*}
where the second equality is by Woodbury matrix identity,  the first inequality is due to $[\lambda I +  \tPhk \tPhkt]^{-1} \succ 0$, and the second inequality is by  \eqref{eq:real_kernel_approx_neural} such that
\begin{align*}
\|\tilde{\yb}_h^k - \tPhk(\tilde{W} - \W0)\|_2 &\leq \sqrt{k-1} \|\tilde{\yb}_h^k - \tPhk(\tilde{W} - \W0)\|_\infty \\
&= \sqrt{k-1} \sup_{\tau\in [k-1]} |\PP_h V^k_{h+1}(s_h^\tau, a_h^\tau) - \tilde{f}(s_h^\tau, a_h^\tau)|\\
&\leq 10 C_\act R_Q H  \sqrt{K\log(mKH)/m}.
\end{align*}
In order to further bound Term(IV), we define a new Q-function based on $\Wlinhk$, which is
\begin{align*}
Q_{\lin,h}^k(z) &:= \Pi_{[0, H]}[ r_{\lin,h}^k(z) + f_{\lin, h}^k(z) + u_{\lin,h}^k(z)], 
\end{align*}
where $r_{\lin,h}(s,a) = u_{\lin,h}^k(z)/H$, and $u_{\lin,h}^k(z) = \min\{\beta \|\varphi(z;\W0)\|_{\tLhki}, H\}$. This Q-function can be equivalently reformulated with a normalized representation $\vartheta = \varphi/\digamma$ as follows
\begin{align}
\begin{aligned}\label{eq:Q_lin_reform}
Q_{\lin,h}^k(z) &= \min\{ \Pi_{[0, H]} [\langle\vartheta(z; \W0),\F\cdot (\Wlinhk-\W0) \rangle] \\
&\quad + (1+1/H)\cdot \min\{\beta \|\vartheta(z;\W0)\|_{(\Xi_h^k)^{-1}}\},H \}^+, 
\end{aligned}
\end{align}
where we have
\begin{align*}
&\Xi_h^k := \lambda/\F^2 \cdot I + (\Theta_h^k)^\top \Theta_h^k, \qquad \Theta_h^k := \Phi_h^k/\F.
\end{align*}
Note that $\F\|W_{\lin, h}^k-\W0\|_2\leq \F H\sqrt{K/\lambda}\leq H\sqrt{K}$ since $\lambda = \F^2(1+1/K)$. Thus, we can see that this new Q-function lies in the space $\overline{\cQ}(\bm{0}, R_K, B_K)$ as in \eqref{eq:Q_func_class}, with $R_K = H\sqrt{K}$ and $B_K = (1+1/H) \beta$ with the kernel function defined as $\tilde{\ker}_m (z,z') := \langle\vartheta(z), \vartheta(z')\rangle$. 

Now we try to bound the difference between the Q-function $Q_h^k(z)$ in the exploration algorithm and the one $Q_{\lin,h}^k(z)$, which is 
\begin{align*}
&|Q_h^k(z) - Q_{\lin,h}^k(z)| \\
&\qquad \leq |f_h^k(z) -f_{\lin, h}^k(z)| + (1+1/H)\beta \left|\|\varphi(z; \Whk)\|_{\Lhki} - \|\varphi(z; \W0)\|_{\tLhki}\right|,
\end{align*}
where the inequality is by the contraction of the operator $\min\{\cdot, H\}^+$. The upper bound of the term $|f_h^k(z) -f_{\lin, h}^k(z)|$ has already been studied in \eqref{eq:bound_overall_1_final}. Then, we focus on bounding the last term. Thus, we have
\begin{align*}
&\left|\|\varphi(z; \Whk)\|_{\Lhki} - \|\varphi(z; \W0)\|_{\tLhki}\right| \\
&  \leq \sqrt{\left|\varphi(z; \Whk)^\top\Lhki\varphi(z; \Whk)  - \varphi(z; \W0)^\top\tLhki \varphi(z; \W0)\right|}\\
&  \leq \sqrt{\left|[\varphi(z; \Whk)- \varphi(z; \W0)]^\top\Lhki\varphi(z; \Whk)\right|} + \sqrt{\left|\varphi(z; \W0)^\top(\Lhki-\tLhki)\varphi(z; \Whk)\right|} \\
&\quad  \quad + \sqrt{\left|\varphi(z; \W0)^\top\tLhki[\varphi(z; \Whk)  - \varphi(z; \W0)]\right|}.
\end{align*}
Conditioned on the event that all the inequalities in Lemma \ref{lem:regularity_neural} hold, we can bound the last three terms above as follows
\begin{align*}
&\left|[\varphi(z; \Whk)- \varphi(z; \W0)]^\top\Lhki\varphi(z; \Whk)\right| \\
&\qquad \leq \|\varphi(z; \Whk)- \varphi(z; \W0)\|_2\|\Lhki\|_2\|\varphi(z; \Whk)\|_2 \leq  \lambda^{-1} \F^2 (KH^2/m)^{1/6}\sqrt{\log m},\\
&\left|\varphi(z; \W0)^\top\tLhki[\varphi(z; \Whk)  - \varphi(z; \W0)]\right|  \leq  \lambda^{-1} \F^2 (KH^2/m)^{1/6}\sqrt{\log m}, \\
&\left|\varphi(z; \W0)^\top(\Lhki-\tLhki)\varphi(z; \Whk)\right|\\
&\qquad \leq \|\varphi(z; \W0)\|_2 \|\Lhki(\Lhk-\tLhk)\tLhki\|_2 \|\varphi(z; \Whk)\|_2 \\
&\qquad \leq \lambda^{-2} \F^2 \|\Phkt\Phk -\tPhkt\tPhk \|_{\fro} \leq \lambda^{-2} \F^2 (\|(\Phk -\tPhk)^\top\Phk \|_{\fro} + \|\tPhkt(\Phk -\tPhk) \|_{\fro})\\
&\qquad \leq \lambda^{-2} \F^4 K^{7/6} H^{1/3} m^{-1/6} \sqrt{\log m},
\end{align*}
which thus lead to
\begin{align} \label{eq:bonus_diff}
\left|\|\varphi(z; \Whk)\|_{\Lhki} - \|\varphi(z; \W0)\|_{\tLhki}\right| \leq 3K^{7/12} H^{1/6} m^{-1/12} \log^{1/4} m,
\end{align}
and thus
\begin{align*}
|Q_h^k(z) - Q_{\lin,h}^k(z)| \leq 4 \digamma K^{5/3} H^{4/3}m^{-1/6} \sqrt{\log m} + 3(1+ 1/H) \beta K^{7/12} H^{1/6} m^{-1/12} \log^{1/4} m,
\end{align*}
where we use the fact that $\lambda = \F^2 (1+ 1/K)\in [\F^2, 2\F^2]$. This further implies that we have the same bound for $|V_h^k(s) - V_{\lin,h}^k(s)|$, .i.e., 
\begin{align} 
|V_h^k(s) - V_{\lin,h}^k(s)| &\leq \max_{a\in \cA}|Q_h^k(s,a) - Q_{\lin,h}^k(s,a)| \label{eq:bound_V_diff}\\
&\leq 4 \digamma K^{5/3} H^{4/3}m^{-1/6} \sqrt{\log m} + 3(1+ 1/H) \beta K^{7/12} H^{1/6} m^{-1/12} \log^{1/4} m,\nonumber
\end{align}
where we define $V_{\lin,h}^k(s) = \max_{a\in \cA} Q_{\lin,h}^k(s,a)$.

Now, we are ready to bound Term(IV). With probability at least $1-\delta'$, we have
\begin{align*}
&\text{Term(IV)} \\
&\qquad = \left\|\sum_{\tau=1}^{k-1} [V_{h+1}^k(s_{h+1}^\tau) -  \PP_h V_{h+1}^k(z_h^\tau)] \varphi(z_h^\tau;\W0) \right\|_{\tLhki}\\
&\qquad\leq  \left\|\sum_{\tau=1}^{k-1} [V_{\lin, h+1}^k(s_{h+1}^\tau) -  \PP_h V_{\lin, h+1}^k(z_h^\tau)] \varphi(z_h^\tau;\W0) \right\|_{\tLhki} \\
&\qquad\quad +  \left\|\sum_{\tau=1}^{k-1} \{ [V_{h+1}^k(s_{h+1}^\tau) - V_{\lin, h+1}^k(s_{h+1}^\tau)] -  \PP_h [V_{h+1}^k- V_{\lin, h+1}^k(s_{h+1}^\tau)] \}\varphi(z_h^\tau;\W0) \right\|_{\tLhki} \\
&\qquad\leq  [4H^2 \Gamma(K, \lambda'; \tilde{\ker}_m) + 10H^2+4H^2\log\cN_{\infty}(\varsigma^*;R_K,  B_K) + 4H^2\log(K /\delta')]^{1/2} \\
&\qquad\quad +  8 \digamma K^{8/3} H^{4/3}m^{-1/6} \sqrt{\log m} + 12 \beta K^{19/12} H^{1/6} m^{-1/12} \log^{1/4} m.
\end{align*}
Here we set $\lambda' = \lambda/\F^2 = (1+1/K)$, $\varsigma^* = H/K$, $R_K = H\sqrt{K}$, $B_K = (1+1/H)\beta$, and $\tilde{\ker}_m(z,z') = \langle \vartheta(z), \vartheta(z') \rangle$.
Here the second inequality is by \eqref{eq:Q_lin_reform}, and also follows the similar proof of Lemma \ref{lem:approx_concentrate}. The last inequality is by \eqref{eq:bound_V_diff} and Lemma \ref{lem:regularity_neural}, which lead to
\begin{align*}
&\left\|\sum_{\tau=1}^{k-1} \{ [V_{h+1}^k(s_{h+1}^\tau) - V_{\lin, h+1}^k(s_{h+1}^\tau)] -  \PP_h [V_{h+1}^k- V_{\lin, h+1}^k(s_{h+1}^\tau)] \}\varphi(z_h^\tau;\W0) \right\|_{\tLhki} \\
&\qquad \leq \sum_{\tau=1}^{k-1} [8 \digamma K^{5/3} H^{4/3}m^{-1/6} \sqrt{\log m} + 12 \beta K^{7/12} H^{1/6} m^{-1/12} \log^{1/4} m] \|\varphi(z_h^\tau;\W0)\|_{\tLhki} \\
&\qquad \leq K \F/\sqrt{\lambda} [8 \digamma K^{5/3} H^{4/3}m^{-1/6} \sqrt{\log m} + 12 \beta K^{7/12} H^{1/6} m^{-1/12} \log^{1/4} m] \\
&\qquad \leq 8 \digamma K^{8/3} H^{4/3}m^{-1/6} \sqrt{\log m} + 12 \beta K^{19/12} H^{1/6} m^{-1/12} \log^{1/4} m,
\end{align*} 
where we use $\F^2/\lambda = 1/(1+1/K)\leq 1$ and $(1+1/H)\leq 2$ due to $H\geq 1$. Now we let $\beta$ satisfy
\begin{align*}
&\sqrt{\lambda}R_Q H / \sqrt{d} + 10 C_\act R_Q H  \sqrt{K\log(mKH)/m} + H[4 \Gamma(K, \lambda'; \tilde{\ker}_m) +4\log\cN_{\infty}(\varsigma^*;R_K,  B_K) \\
&\quad + 10 + 4\log(K /\delta')]^{1/2} + 8 \digamma K^{8/3} H^{4/3}m^{-1/6} \sqrt{\log m} + 12 \beta K^{19/12} H^{1/6} m^{-1/12} \log^{1/4} m \leq \beta.
\end{align*}
To obtain the above relation, it suffices to set 
\begin{align*}
m = \Omega(K^{19} H^{14}\log^3m)
\end{align*}
such that $m$ is sufficient large which results in
\begin{align*}
&10 C_\act R_Q H  \sqrt{K\log(mKH)/m}  + 8 \digamma K^{8/3} H^{4/3}m^{-1/6} \sqrt{\log m} \\
&\qquad + 12 \beta K^{19/12} H^{1/6} m^{-1/12} \log^{1/4} m \leq R_Q H +  \beta / 2. 
\end{align*}
Then, there is
\begin{align*}
&\sqrt{\lambda}R_Q H / \sqrt{d} +  R_Q H +  \beta / 2  \\
&\qquad+ 2H[ \Gamma(K, \lambda; \ker_m) + 5/2 +\log\cN_{\infty}(\varsigma^*;R_K,  B_K) + \log(K /\delta')]^{1/2} \leq \beta,
\end{align*}
where $\Gamma(K, \lambda; \ker_m)  =  \Gamma(K, \lambda'; \tilde{\ker}_m) $ with $\ker_m := \langle \varphi(z;\W0), \varphi(z';\W0)\rangle$. This inequality can be satisfied if we set $\beta$ as 
\begin{align*}
\beta^2 \geq H^2[8R_Q^2  (1+\sqrt{\lambda/d})^2 + 32 \Gamma(K, \lambda; \ker_m) + 80 +32 \log\cN_{\infty}(\varsigma^*;R_K,  B_K) + 32\log(K /\delta')].
\end{align*}
If the above conditions hold, we have
\begin{align*}
|f_{\lin, h}^k(z) - \tilde{f}(z)| \leq \beta  \|\varphi(z; \W0)\|_{\tLhki} \leq w_h^k + \beta (3K^{7/12} H^{1/6} m^{-1/12} \log^{1/4} m),
\end{align*}
where the inequality is due to \eqref{eq:bonus_diff}. Since $f_{\lin, h}^k(z)\in [0,H]$ and $\tilde{f}(z)\in [0,H]$, thus we have $|f_{\lin, h}^k(z) - \tilde{f}(z)| \leq H$, which further gives
\begin{align}
\begin{aligned}\label{eq:bound_overall_2_final}
|f_{\lin, h}^k(z) - \tilde{f}(z)| &\leq \min\{w_h^k, H\} + \beta (3K^{7/12} H^{1/6} m^{-1/12} \log^{1/4} m) \\
&= u_h^k + \beta (3K^{7/12} H^{1/6} m^{-1/12} \log^{1/4} m).
\end{aligned}
\end{align}
Now we combine \eqref{eq:bound_overall_1_final} and \eqref{eq:bound_overall_2_final} as well as \eqref{eq:real_kernel_approx_neural} and obtain
\begin{align*}
&|\PP_hV_{h+1}^k(z) - f_h^k(z)| \\
&\qquad\leq  |\PP_hV_{h+1}^k(z) - \tilde{f}(z)|  + |f_h^k(z)-f_{\lin, h}^k(z)|  + |f_{\lin, h}^k(z) - \tilde{f}(z)|\\
&\qquad\leq 10C_\act R_Q H \sqrt{\log(mKH)/m} + 4\F K^{5/3} H^{4/3} m^{-1/6}\sqrt{\log m} \\
&\qquad \quad +  u_h^k + \beta (3K^{7/12} H^{1/6} m^{-1/12} \log^{1/4} m)\\
&\qquad\leq  u_h^k + \beta (5K^{7/12} H^{1/6} m^{-1/12} \log^{1/4} m),
\end{align*}
with $m$ are sufficiently. We also have $\left|\|\varphi(z; \Whk)\|_{\Lhki} - \|\varphi(z; \W0)\|_{\tLhki}\right| \leq \iota$ according to \eqref{eq:bonus_diff}. The above inequalities hold with probability at least $1-2/m^2 - \delta'$ by the union bound. This completes the proof.
\end{proof}

\begin{lemma} \label{lem:bonus_explore_neural} Conditioned on the event $\cE$ defined in Lemma \ref{lem:bonus_concentrate_neural}, with probability at least $1-\delta'$, we have
\begin{align*}
&\sum_{k=1}^K V_1^*(s_1, r^k) \leq \sum_{k=1}^K V_1^k(s_1) + \beta HK \iota,\\
&\sum_{k=1}^K V_1^k(s_1)\leq \cO\left(\sqrt{H^3 K \log (1/\delta')} +\beta  \sqrt{H^2 K \cdot \Gamma(K, \lambda; \ker_m)}\right) + \beta H K \iota,
\end{align*}
where $\iota = 5K^{7/12} H^{1/6} m^{-1/12} \log^{1/4} m$.
\end{lemma}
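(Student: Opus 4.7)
The plan is to mirror the proof of Lemma \ref{lem:bonus_explore} from the kernel setting, carefully tracking the additional bias terms of order $\beta\iota$ that Lemma \ref{lem:bonus_concentrate_neural} introduces at each recursion step.

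For the first inequality, I will show by backward induction on $h$ that, conditioned on $\cE$, $V_h^*(s,r^k) \leq V_h^k(s) + (H-h+1)\beta\iota$ for every $s\in\cS$. The base case $h=H+1$ is trivial since both sides are zero. For the inductive step, the key computation mimics \eqref{eq:Q_diff}: starting from
\begin{align*}
Q_h^*(s,a,r^k) - Q_h^k(s,a) &\leq \max\{\PP_h V_{h+1}^*(s,a,r^k) - f_h^k(s,a) - u_h^k(s,a),\,0\},
\end{align*}
I apply the inductive hypothesis on $V_{h+1}^*$ and then use $\PP_h V_{h+1}^k(s,a) - f_h^k(s,a) \leq u_h^k(s,a) + \beta\iota$ from Lemma \ref{lem:bonus_concentrate_neural}. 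This yields $Q_h^*(s,a,r^k) - Q_h^k(s,a) \leq (H-h+1)\beta\iota$, hence the same bound for $V_h^*$ versus $V_h^k$ after taking $\max_a$. Setting $h=1$ and summing over $k$ gives $\sum_k V_1^*(s_1,r^k) \leq \sum_k V_1^k(s_1) + \beta H K\iota$.

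For the second inequality, I will bound $V_1^k(s_1)$ by a telescoping argument. Using Lemma \ref{lem:bonus_concentrate_neural} and the update rule, I get
\begin{align*}
V_h^k(s_h^k) &\leq f_h^k(s_h^k,a_h^k) + r_h^k(s_h^k,a_h^k) + u_h^k(s_h^k,a_h^k) \\
&\leq \PP_h V_{h+1}^k(s_h^k,a_h^k) + (2+1/H)u_h^k(s_h^k,a_h^k) + \beta\iota \\
&= \zeta_h^k + V_{h+1}^k(s_{h+1}^k) + (2+1/H)\beta\|\varphi(z_h^k;\Whk)\|_{(\Lambda_h^k)^{-1}} + \beta\iota,
\end{align*}
where $\zeta_h^k := \PP_h V_{h+1}^k(s_h^k,a_h^k) - V_{h+1}^k(s_{h+1}^k)$ is a martingale difference. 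Unrolling from $h=1$ to $H$ and summing over $k$ gives the $H\beta\iota$ additive term inside each episode, hence an overall $\beta H K\iota$ bias. The martingale sum $\sum_{k,h}\zeta_h^k$ is controlled by Azuma--Hoeffding at order $\sqrt{H^3K\log(1/\delta')}$.

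The only nontrivial step is bounding $\sum_{k,h}\|\varphi(z_h^k;\Whk)\|_{(\Lambda_h^k)^{-1}}$ by the maximal information gain $\Gamma(K,\lambda;\ker_m)$, because the empirical Gram matrix is built from gradients at the \emph{learned} $\Whk$ rather than at $\W0$. This is exactly where the second clause of Lemma \ref{lem:bonus_concentrate_neural} is needed: it guarantees $\|\varphi(z;\Whk)\|_{(\Lambda_h^k)^{-1}} \leq \|\varphi(z;\W0)\|_{(\tilde\Lambda_h^k)^{-1}} + \iota$, so I can replace the $\Whk$-indexed quantity by the initialization-indexed one at the cost of an extra $KH\iota$ contribution (absorbed into the $\beta HK\iota$ term). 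Once the norms are with respect to $\tilde\Lambda_h^k$ and $\varphi(\cdot;\W0)$, I apply Cauchy--Schwarz together with Lemma \ref{lem:direct_sum_bound} exactly as in the kernel proof, obtaining $\sum_{h,k}\|\varphi(z_h^k;\W0)\|_{(\tilde\Lambda_h^k)^{-1}} \leq 2H\sqrt{K\cdot\Gamma(K,\lambda;\ker_m)}$. Combining everything yields the claimed bound, and the union bound over the event $\cE$ and the Azuma--Hoeffding event gives the probability $1-\delta'$. The main obstacle is this change-of-kernel step; everything else is a routine transcription of the kernel-setting proof.
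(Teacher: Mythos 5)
Your proposal is correct and follows essentially the same route as the paper: the same backward induction with the accumulating $(H-h+1)\beta\iota$ bias for the first inequality, and the same telescoping/Azuma--Hoeffding argument for the second, with the change from $\|\varphi(\cdot;\Whk)\|_{(\Lambda_h^k)^{-1}}$ to $\|\varphi(\cdot;\W0)\|_{(\tilde\Lambda_h^k)^{-1}}$ (via the second clause of Lemma \ref{lem:bonus_concentrate_neural}) followed by Jensen/Cauchy--Schwarz and Lemma \ref{lem:direct_sum_bound} — exactly the step the paper uses to reach $2H\sqrt{K\cdot\Gamma(K,\lambda;\ker_m)}$ plus the $HK\iota$ correction. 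The only differences are cosmetic (you keep the per-step $\beta\iota$ bias explicitly in the telescoping chain, which the paper absorbs, and constants in the $\beta HK\iota$ term are equally loose in both).
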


\begin{proof} We first show the first inequality in this lemma. We prove $V_h^*(s, r^k) \leq V_h^k(s) +(H+1-h)\iota$ for all $s \in \cS, h \in [H]$ by induction.  When  $h=H+1$, we know $V_{H+1}^*(s, r^k) = 0$ and $V_{H+1}^k(s)= 0 $ such that $V_{H+1}^*(s, r^k) \leq V_{H+1}^k(s_1)$. Now we assume that $V_{h+1}^*(s, r^k) \leq V_{h+1}^k(s)+(H-h)\beta\iota$. Then, conditioned on the event $\cE$ defined in Lemma \ref{lem:bonus_concentrate}, for all $s\in \cS$, $(h,k)\in [H]\times[K]$, we further have
\begin{align}
\begin{aligned}\label{eq:Q_diff_neural}
&Q_h^*(s, a, r^k) - Q_h^k(s, a) \\
&\qquad= r_h^k(s,a) + \PP_h V_{h+1}^*(s,a, r^k)  -  \min \{ r_h^k(s,a) + f_h^k(s,a) + u_h^k(s,a), H \}^+  \\
&\qquad\leq  \max \{ \PP_h V_{h+1}^*(s,a, r^k)  - f_h^k(s,a) - u_h^k(s,a), 0 \}  \\
&\qquad\leq  \max \{ \PP_h V_{h+1}^k(s,a) + \beta(H-h)\iota - f_h^k(s,a) - u_h^k(s,a), 0 \}  \\
&\qquad\leq \beta(H+1-h)\iota,
\end{aligned}
\end{align}
where the first inequality is due to $0 \leq r_h^k(s,a) + \PP_h V_{h+1}^*(s,a, r^k) \leq H$ and $\min \{x, y\}^+ \geq \min \{x, y\}$, the second inequality is by the assumption that $V_{h+1}^*(s, r^k)\leq V_{h+1}^k(s)+(H-h)\beta\iota$, the last inequality is by Lemma \ref{lem:bonus_concentrate_neural} such that $|\PP_h V_{h+1}^k(s,a) - f_h^k(s,a) | \leq u_h^k(s,a) + \beta\iota$ holds for any $(s,a)\in \cS \times \cA$ and $(k,h)\in [K]\times[H]$. The above inequality \eqref{eq:Q_diff_neural} further leads to 
\begin{align*}
V_h^*(s, r^k) &= \max_{a\in \cA} Q_h^*(s, a, r^k) \\
&\leq \max_{a\in \cA} Q_h^k(s, a) \\
&= V_h^k(s) + \beta(H+1-h)\iota. 
\end{align*}
Therefore, we obtain that conditioned on event $\cE$, we have
\begin{align*}
\sum_{k=1}^K V_1^*(s, r^k) \leq \sum_{k=1}^K V_1^k(s) +  \beta HK\iota.
\end{align*}
Next, we prove the second inequality in this lemma. Conditioned on $\cE$ defined in Lemma \ref{lem:bonus_concentrate_neural}, we have 
\begin{align*}
V_h^k(s_h^k) &= Q_h^k(s_h^k, a_h^k) \\
&\leq \max\{0, f_h^k(s_h^k, a_h^k) + r_h^k(s_h^k, a_h^k) + u_h^k(s_h^k, a_h^k)\}\\
&\leq \PP_h V_{h+1}^k(s_h^k, a_h^k)  + u_h^k(s_h^k, a_h^k) + r_h^k(s_h^k, a_h^k) + u_h^k(s_h^k, a_h^k)\\
& \leq \zeta_h^k + V_{h+1}^k(s_{h+1}^k) + (2+1/H)\beta \|\varphi(s_h^k,a_h^k; \Whk)\|_{(\Lambda_h^k)^{-1}},
\end{align*}
where we define
\begin{align*}
\zeta_h^k := \PP_h V_{h+1}^k(s_h^k, a_h^k) - V_{h+1}^k(s_{h+1}^k).
\end{align*}
Recursively applying the above inequality gives
\begin{align*}
V_1^k(s_1) \leq  \sum_{h=1}^H \zeta_h^k  + (2+1/H)\beta \sum_{h=1}^H \|\varphi(s_h^k,a_h^k; \Whk)\|_{(\Lambda_h^k)^{-1}},
\end{align*}
where we use the fact that $V_{H+1}^k(\cdot) = 0$. Taking summation on both sides of the above inequality, we have
\begin{align*}
\sum_{k=1}^KV_1^k(s_1)  = \sum_{k=1}^K\sum_{h=1}^H \zeta_h^k  + (2+1/H)\beta \sum_{k=1}^K\sum_{h=1}^H \|\varphi(s_h^k,a_h^k; \Whk)\|_{(\Lambda_h^k)^{-1}}.
\end{align*}
By Azuma-Hoeffding inequality, with probability at least $1-\delta'$, the following inequalities hold 
\begin{align*}
&\sum_{k=1}^K \sum_{h=1}^H \zeta_h^k \leq \cO\left(\sqrt{H^3 K \log \frac{1}{\delta'}} \right).
\end{align*} 
On the other hand, by Lemma \ref{lem:direct_sum_bound}, we have
\begin{align*}
\sum_{k=1}^K\sum_{h=1}^H \|\phi(s_h^k,a_h^k)\|_{(\Lambda_h^k)^{-1}} &= \sum_{k=1}^K\sum_{h=1}^H \sqrt{\varphi(s_h^k,a_h^k; \Whk)^\top (\Lambda_h^k)^{-1}\phi(s_h^k,a_h^k; \Whk)}\\
&\leq \sum_{k=1}^K\sum_{h=1}^H \sqrt{\varphi(s_h^k,a_h^k; \W0)^\top (\tilde{\Lambda}_h^k)^{-1}\varphi(s_h^k,a_h^k; \W0)} + HK\iota\\
&\leq \sum_{h=1}^H \sqrt{K\sum_{k=1}^K\varphi(s_h^k,a_h^k; \W0)^\top (\tilde{\Lambda}_h^k)^{-1}\varphi(s_h^k,a_h^k; \W0))} + HK\iota \\
&= 2H \sqrt{K\cdot \Gamma(K, \lambda; \ker_m)} + HK\iota.
\end{align*}
where the first inequality is due to Lemma \ref{lem:bonus_concentrate_neural}, the second inequality is by Jensen's inequality. Thus, conditioned on event $\cE$, we obtain that with probability at least $1-\delta'$, there is 
\begin{align*}
\sum_{k=1}^KV_1^k(s_1) \leq \cO\left(\sqrt{H^3 K \log (1/\delta')} + \beta \sqrt{H^2 K \cdot \Gamma(K, \lambda; \ker)}\right) + \beta HK\iota,
\end{align*}
which completes the proof.
\end{proof}

\begin{lemma} \label{lem:bonus_concentrate_plan_neural} We define the event $\tilde{\cE}$ as that the following inequality holds $\forall (s,a) \in \cS \times \cA, \forall h \in [H]$,
\begin{align*}
&|\PP_h V_{h+1}(s,a) - f_h(s,a)|  \leq u_h(s,a) + \beta \iota,\\
&\left|\|\varphi(z; W_h)\|_{(\Lambda_h)^{-1}} - \|\varphi(z; \W0)\|_{(\tilde{\Lambda}_h)^{-1}}\right| \leq \iota,
\end{align*}
where $\iota = 5K^{7/12} H^{1/6} m^{-1/12} \log^{1/4} m$ and we define
\begin{align*}
&\Lambda_h = \sum_{\tau=1}^K \varphi(z_h^{\tau}; \Wh) \varphi(z_h^{\tau}; \Wh)^\top + \lambda \cdot I, \quad  \tilde{\Lambda}_h = \sum_{\tau=1}^K \varphi(z_h^{\tau}; \W0) \varphi(z_h^{\tau}; \W0)^\top + \lambda \cdot I.
\end{align*}
Setting $\beta = \tilde{B}_K$, $\tilde{R}_K = H\sqrt{K}$, $\varsigma^* = H/K$, and $\lambda=\F^2(1+1/K)$, $\varsigma^* = H/K$, if we set
\begin{align*}
\beta^2 \geq H^2[8R_Q^2  (1+\sqrt{\lambda/d})^2 + 32 \Gamma(K, \lambda; \ker_m) + 80 +32\log\cN_{\infty}(\varsigma^*;\tilde{R}_K,  \tilde{B}_K) + 32\log(K /\delta')],
\end{align*}
and also 
\begin{align*}
m =  \Omega(K^{19} H^{14}\log^3m),
\end{align*}
then we have that with probability at least $1-2/m^2 - \delta'$, the event $\tilde{\cE}$ happens, i.e.,
\begin{align*}
\Pr(\tilde{\cE}) \geq 1-2/m^2 - \delta'.
\end{align*}
\end{lemma}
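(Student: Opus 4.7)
The plan is to mirror the proof of Lemma \ref{lem:bonus_concentrate_neural} almost step for step, replacing the running index $k-1$ by the final sample count $K$ and replacing the exploration iterates $(V_{h+1}^k, W_h^k, \Lambda_h^k, \tilde\Lambda_h^k)$ by the planning quantities $(V_{h+1}, W_h, \Lambda_h, \tilde\Lambda_h)$. Concretely, I would first introduce the linearized surrogate regression
\begin{align*}
W_{\lin, h} = \argmin_{W} \sum_{\tau=1}^{K}[V_{h+1}(s_{h+1}^\tau) - \langle \varphi(z_h^\tau; W^{(0)}), W-W^{(0)}\rangle]^2 + \lambda\|W-W^{(0)}\|_2^2,
\end{align*}
whose closed-form solution is $W_{\lin,h} = W^{(0)} + \tilde\Lambda_h^{-1}\tilde\Phi_h^\top \yb_h$, and note the standard bound $\|W_h - W^{(0)}\|_2^2 \leq H^2 K/\lambda$ and similarly for $W_{\lin,h}$. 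Then invoke Assumption \ref{assump:neural}, which together with the argument leading to \eqref{eq:real_kernel_approx_neural} produces a vector $\tilde W$ with $\|\tilde W-W^{(0)}\|_2 \leq R_Q H/\sqrt{d}$ such that the surrogate $\tilde f(z) = \Pi_{[0,H]}[\langle \varphi(z;W^{(0)}), \tilde W-W^{(0)}\rangle]$ satisfies $|\PP_h V_{h+1}(z) - \tilde f(z)| \leq 10 C_\sigma R_Q H\sqrt{\log(mKH)/m}$ with probability at least $1-1/m^2$.

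The key decomposition is $|\PP_h V_{h+1}(z) - f_h(z)| \leq |\PP_h V_{h+1}(z) - \tilde f(z)| + |\tilde f(z) - f_{\lin,h}(z)| + |f_{\lin,h}(z) - f_h(z)|$. For the last term I will apply Lemma \ref{lem:regularity_neural} together with the first-order optimality conditions for $W_h$ and $W_{\lin,h}$ to bound $\|W_h - W_{\lin,h}\|_2$ by $\cO(K^{5/3}H^{4/3}m^{-1/6}\sqrt{\log m})$, which gives the same bound for $|f_h(z)-f_{\lin,h}(z)|$. The bonus-gap bound $|\|\varphi(z;W_h)\|_{\Lambda_h^{-1}} - \|\varphi(z;W^{(0)})\|_{\tilde\Lambda_h^{-1}}| \leq 3 K^{7/12}H^{1/6}m^{-1/12}\log^{1/4}m$ then follows by the same three-term splitting as in \eqref{eq:bonus_diff}, which, combined with the above, yields $|Q_h(z) - Q_{\lin,h}(z)| \leq \cO(\F K^{5/3}H^{4/3}m^{-1/6}\sqrt{\log m}) + \cO(\beta K^{7/12}H^{1/6}m^{-1/12}\log^{1/4}m)$, and hence the same bound for $|V_h(s) - V_{\lin,h}(s)|$.

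For the middle term $|\tilde f(z) - f_{\lin,h}(z)|$, I write $\tilde f$ using the same identity $\langle \varphi(z;W^{(0)}), \tilde W - W^{(0)}\rangle = \lambda \varphi(z;W^{(0)})^\top \tilde\Lambda_h^{-1}(\tilde W - W^{(0)}) + \varphi(z;W^{(0)})^\top \tilde\Lambda_h^{-1}\tilde\Phi_h^\top \tilde\Phi_h (\tilde W-W^{(0)})$ and split into Term(II) (bounded by $\sqrt{\lambda}R_Q H/\sqrt{d}\cdot\|\varphi(z;W^{(0)})\|_{\tilde\Lambda_h^{-1}}$) and Term(III). Bounding Term(III) reduces to controlling the self-normalized quantity $\|\sum_{\tau=1}^K [V_{h+1}(s_{h+1}^\tau) - \PP_h V_{h+1}(z_h^\tau)]\varphi(z_h^\tau;W^{(0)})\|_{\tilde\Lambda_h^{-1}}$; this is the main obstacle. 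I handle it by replacing $V_{h+1}$ with its linearization $V_{\lin,h+1}$ (and absorbing the difference via the $|V_h - V_{\lin,h}|$ estimate above), then applying Lemma \ref{lem:self_normalize_uniform} to the function class $\overline{\cQ}(r_h, \tilde R_K, \tilde B_K)$ with $\tilde R_K = H\sqrt{K}$ and $\tilde B_K = \beta$, using the normalized gradient $\vartheta(z) = \varphi(z;W^{(0)})/\F$ and the rescaled kernel $\tilde\ker_m$ exactly as in Lemma \ref{lem:bonus_concentrate_neural}, so that the bound becomes $[4H^2\Gamma(K,\lambda';\tilde\ker_m) + 10H^2 + 4H^2\log \cN_\infty(\varsigma^*;\tilde R_K,\tilde B_K) + 4H^2\log(K/\delta')]^{1/2}$ plus the lower-order neural approximation error.

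Putting everything together and choosing $\beta$ so that
\begin{align*}
\sqrt{\lambda}R_Q H/\sqrt{d} + R_Q H + \beta/2 + 2H\bigl[\Gamma(K,\lambda;\ker_m) + 5/2 + \log\cN_\infty(\varsigma^*;\tilde R_K,\tilde B_K) + \log(K/\delta')\bigr]^{1/2} \leq \beta
\end{align*}
(which holds under the stated condition on $\beta^2$) while taking $m = \Omega(K^{19}H^{14}\log^3 m)$ to absorb all $m^{-1/6}$ and $m^{-1/12}$ terms into $R_Q H + \beta/2$, gives $|f_{\lin,h}(z) - \tilde f(z)| \leq \beta\|\varphi(z;W^{(0)})\|_{\tilde\Lambda_h^{-1}}$. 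Combined with the bonus-gap bound and the truncation $[0,H]$, this produces $|f_{\lin,h}(z) - \tilde f(z)| \leq u_h(z) + \beta\cdot 3K^{7/12}H^{1/6}m^{-1/12}\log^{1/4}m$; chaining the three pieces of the decomposition and noting that the leftover constants combine to $5K^{7/12}H^{1/6}m^{-1/12}\log^{1/4}m$ yields both conclusions. The union bound over the $1/m^2$ initialization event and the $\delta'$ concentration event completes the argument with probability at least $1 - 2/m^2 - \delta'$.
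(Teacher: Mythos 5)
Your proposal is correct and follows essentially the same route as the paper: the paper's own proof simply states that the argument of Lemma \ref{lem:bonus_concentrate_neural} carries over verbatim with $\tilde{B}_K=\beta$ (instead of $(1+1/H)\beta$) and with the covering number taken for $\overline{\cQ}(r_h,\tilde{R}_K,\tilde{B}_K)$, which has the same bound as $\overline{\cQ}(\bm{0},\tilde{R}_K,\tilde{B}_K)$ — exactly the two adjustments you make. Your write-up is in fact more explicit than the paper's, but it is the same decomposition, the same linearization/surrogate argument, and the same self-normalized concentration via Lemma \ref{lem:self_normalize_uniform}.
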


\begin{proof} The proof of this lemma exactly follows our proof of Lemma \ref{lem:bonus_concentrate_neural}. There are several minor differences here. In the proof of this lemma, we set $\tilde{B}_K=\beta$ instead of $(1+1/H)\beta$ due to the structure of the planning phase. Moreover, we use $\cN_\infty(\epsilon;R_K, B_K)$ to denote covering number of the Q-function class $\overline{\cQ}(r_h, R_K, B_K)$. Since the covering numbers of $\overline{\cQ}(r_h, R_K, B_K)$ and $\overline{\cQ}(\bm{0}, R_K, B_K)$ are the same where the former one only has an extra bias $r_h$, we use the same notation $\cN_\infty(\epsilon;R_K, B_K)$ to denote their covering number. Then,  the rest of this proof can be completed by using the same argument as the proof of Lemma \ref{lem:bonus_concentrate_neural}.
\end{proof}

\begin{lemma} \label{lem:bonus_plan_neural} Conditioned on the event $\tilde{\cE}$ as defined in Lemma \ref{lem:bonus_concentrate_plan_neural}, we have
\begin{align*}
&V_h^*(s, r) \leq V_h(s) + (H+1-h) \beta \iota,  \forall s \in \cS, \forall h \in [H],\\
&V_h(s) \leq r_h(s,\pi_h(s)) + \PP_h V_{h+1}(s,\pi_h(s)) + 2 u_h(s,\pi_h(s)) + \beta \iota, \forall s \in \cS, \forall h \in [H],
\end{align*}
where $\pi_h(s) = \argmax_{a\in \cA} Q_h(s,a)$.
\end{lemma}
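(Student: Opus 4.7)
The plan is to mirror the structure of the kernel-case argument in Lemma \ref{lem:bonus_plan}, tracking the additional $\beta\iota$ bias that arises from the neural approximation. The two inequalities are handled separately, with the first by backward induction on $h$ and the second by a direct one-step bound.

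For the first inequality $V_h^*(s,r) \le V_h(s) + (H+1-h)\beta\iota$, I proceed by downward induction on $h$. The base case $h=H+1$ is immediate since $V_{H+1}^*(s,r)=V_{H+1}(s)=0$. For the inductive step, assume $V_{h+1}^*(s,r) \le V_{h+1}(s) + (H-h)\beta\iota$. Starting from
\begin{align*}
Q_h^*(s,a,r) - Q_h(s,a) &= r_h(s,a) + \PP_h V_{h+1}^*(s,a,r) - \min\{r_h(s,a)+f_h(s,a)+u_h(s,a),H\}^+ \\
&\le \max\{\PP_h V_{h+1}^*(s,a,r) - f_h(s,a) - u_h(s,a),\, 0\},
\end{align*}
I apply the inductive hypothesis together with monotonicity of $\PP_h$ to replace $\PP_h V_{h+1}^*$ by $\PP_h V_{h+1} + (H-h)\beta\iota$, and then invoke the concentration bound $\PP_h V_{h+1}(s,a) - f_h(s,a) \le u_h(s,a) + \beta\iota$ from Lemma \ref{lem:bonus_concentrate_plan_neural}. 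These two contributions combine to give $Q_h^*(s,a,r) - Q_h(s,a) \le (H+1-h)\beta\iota$, and taking the maximum over $a\in\cA$ yields the claim at level $h$.

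For the second inequality, I work directly from the definition $Q_h(s,a)=\min\{r_h(s,a)+f_h(s,a)+u_h(s,a),H\}^+$. Using $|\PP_h V_{h+1}(s,a)-f_h(s,a)|\le u_h(s,a)+\beta\iota$ from Lemma \ref{lem:bonus_concentrate_plan_neural}, I upper bound $f_h(s,a)$ by $\PP_h V_{h+1}(s,a)+u_h(s,a)+\beta\iota$, and since the resulting expression $r_h(s,a)+\PP_h V_{h+1}(s,a)+2u_h(s,a)+\beta\iota$ is nonnegative, the clipping operator $\min\{\cdot,H\}^+$ can only decrease it. Therefore
\begin{align*}
Q_h(s,a) \le r_h(s,a) + \PP_h V_{h+1}(s,a) + 2u_h(s,a) + \beta\iota,
\end{align*}
and choosing $a=\pi_h(s)=\argmax_a Q_h(s,a)$ together with $V_h(s)=Q_h(s,\pi_h(s))$ finishes the proof.

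There is no real obstacle here beyond carefully tracking the $\beta\iota$ term through the inductive step. The only subtle point is that the bias accumulates linearly over the horizon (giving the $(H+1-h)\beta\iota$ factor in the first part) because the concentration inequality introduces a fresh $\beta\iota$ at each layer on top of the inductively accumulated bias, while in the second part only a single $\beta\iota$ appears because it is a one-step statement. Once that bookkeeping is in place, the proof follows the same template as Lemma \ref{lem:bonus_plan}.
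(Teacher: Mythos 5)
Your proof is correct and follows essentially the same route as the paper: backward induction with hypothesis $V_{h+1}^*(s,r)\le V_{h+1}(s)+(H-h)\beta\iota$, the clipped $Q$-difference bound combined with the concentration inequality $|\PP_h V_{h+1}-f_h|\le u_h+\beta\iota$ from Lemma \ref{lem:bonus_concentrate_plan_neural}, and the same one-step argument for the second inequality. The bookkeeping of the accumulated $(H+1-h)\beta\iota$ bias matches the paper's proof exactly.
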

\begin{proof} We first prove the first inequality in this lemma by induction. For $h = H+1$, we have $V_{H+1}^*(s, r)  = V_{H+1}(s) = 0$ for any $s\in \cS$. Then, we assume that $V_{h+1}^*(s, r)  \leq V_{h+1}(s) + (H-h)\beta\iota$. Thus, conditioned on the event $\tilde{\cE}$ as defined in Lemma \ref{lem:bonus_concentrate_plan_neural}, we have
\begin{align*}
&Q_h^*(s, a, r) - Q_h(s, a) \\
&\qquad= r_h(s,a) + \PP_h V_{h+1}^*(s,a,r)  -  \min \{ r_h(s,a) + f_h(s,a) + u_h(s,a), H \}^+  \\
&\qquad\leq  \max \{ \PP_h V_{h+1}^*(s,a,r)  - f_h(s,a) - u_h(s,a), 0 \}  \\
&\qquad\leq  \max \{ \PP_h V_{h+1}(s,a) + (H-h)\beta \iota  - f_h(s,a) - u_h(s,a), 0 \}  \\
&\qquad\leq  (H+1-h)\beta \iota,
\end{align*}
where the first inequality is due to $0 \leq r_h(s,a) + \PP_h V_{h+1}^*(s,a,r) \leq H$ and $\min \{x, y\}^+ \geq \min \{x, y\}$, the second inequality is by the assumption that $V_{h+1}^*(s,a, r)\leq V_{h+1}(s,a) + (H-h)\beta \iota$, the last inequality is by Lemma \ref{lem:bonus_concentrate_plan_neural} such that $|\PP_h V_{h+1}(s,a) - f_h(s,a) | \leq u_h(s,a) + \beta \iota$ holds for any $(s,a)\in \cS \times \cA$ and $(k,h)\in [K]\times[H]$. The above inequality further leads to 
\begin{align*}
V_h^*(s, r) = \max_{a\in \cA} Q_h^*(s, a, r) \leq \max_{a\in \cA} Q_h(s, a) +  (H+1-h)\beta \iota= V_h(s) +  (H+1-h)\beta \iota. 
\end{align*}
Therefore, we have
\begin{align*}
V_h^*(s, r)  \leq  V_h(s)+  (H+1-h)\beta \iota, \forall h\in [H], \forall s\in \cS. 
\end{align*}

We further prove the second inequality in this lemma. We have
\begin{align*}
Q_h(s, a) &= \min \{ r_h(s,a) + f_h(s,a) + u_h(s,a), H \}^+\\
& \leq \min \{ r_h(s,a) + \PP_h V_{h+1}(s,a) + 2u_h(s,a) + \beta \iota, H \}^+\\
& \leq r_h(s,a) + \PP_h V_{h+1}(s,a) + 2u_h(s,a) + \beta \iota,
\end{align*}
where the first inequality is also by Lemma \ref{lem:bonus_concentrate_plan_neural} such that $|\PP_h V_{h+1}(s,a) - f_h(s,a) | \leq u_h(s,a)+ \beta \iota$, and the last inequality is because of the non-negativity of  $r_h(s,a) + \PP_h V_{h+1}(s,a) + 2u_h(s,a)+ \beta \iota$. Therefore, we have
\begin{align*}
V_h(s) &= \max_{a\in \cA}Q_h(s, a) = Q_h(s, \pi_h(s)) \leq r_h(s,\pi_h(s)) + \PP_h V_{h+1}(s,\pi_h(s)) + 2u_h(s,\pi_h(s)) + \beta \iota.
\end{align*}
This completes the proof.
\end{proof}

\begin{lemma} \label{lem:explore_plan_connect_neural} With the exploration and planning phases, conditioned on events $\cE$ and $\tilde{\cE}$, we have the following inequality 
\begin{align*}
K \cdot V_1^*(s_1, u/H) \leq  \sum_{k=1}^K V_1^*(s_1, r^k)  + 2K\beta \iota,
\end{align*}
where $\iota = 5K^{7/12} H^{1/6} m^{-1/12} \log^{1/4} m$.
\end{lemma}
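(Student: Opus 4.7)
The plan is to mirror the argument of Lemma~\ref{lem:explore_plan_connect} in the kernel setting, but route the comparison through the \emph{initialization features} $\varphi(\cdot;\W0)$ since the post-training features $\varphi(\cdot;\Whk)$ and $\varphi(\cdot;W_h)$ live on different tangent spaces and cannot be compared directly. Concretely, I would first show pointwise that $w_h(z)\le w_h^k(z)+2\iota$ for every $(z,h,k)$. For this, use the two displayed inequalities coming from $\cE$ and $\tilde\cE$ (Lemmas~\ref{lem:bonus_concentrate_neural} and \ref{lem:bonus_concentrate_plan_neural}): these allow replacing $\|\varphi(z;W_h)\|_{\Lambda_h^{-1}}$ and $\|\varphi(z;\Whk)\|_{(\Lambda_h^k)^{-1}}$ by $\|\varphi(z;\W0)\|_{\tilde\Lambda_h^{-1}}$ and $\|\varphi(z;\W0)\|_{(\tilde\Lambda_h^k)^{-1}}$ up to additive error $\iota$ each. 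The two approximants are now on equal footing because both use the same feature $\varphi(\cdot;\W0)$, and since $\tilde\Lambda_h$ aggregates all $K$ samples while $\tilde\Lambda_h^k$ only uses the first $k-1$, a standard PSD comparison gives $\tilde\Lambda_h\succcurlyeq \tilde\Lambda_h^k$, hence $(\tilde\Lambda_h)^{-1}\preccurlyeq (\tilde\Lambda_h^k)^{-1}$, which yields $\|\varphi(z;\W0)\|_{\tilde\Lambda_h^{-1}}\le \|\varphi(z;\W0)\|_{(\tilde\Lambda_h^k)^{-1}}$. Chaining the three inequalities gives the desired $w_h(z)\le w_h^k(z)+2\iota$.

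Next I would lift this to the bonuses and rewards. Using the elementary inequality $\min\{a+b,H\}\le \min\{a,H\}+b$ for $b\ge 0$, the bound on $w_h$ implies $u_h(z)\le u_h^k(z)+2\beta\iota$, and dividing by $H$ gives $u_h(z)/H\le r_h^k(z)+2\beta\iota/H$ at every $h$ and $z$. Since this is a pointwise reward domination with a uniform additive slack $2\beta\iota/H$, a one-line induction on the Bellman recursion (or direct expansion of the value function as a sum over $H$ steps) gives $V_1^\pi(s_1,u/H)\le V_1^\pi(s_1,r^k)+H\cdot 2\beta\iota/H = V_1^\pi(s_1,r^k)+2\beta\iota$ for \emph{every} policy $\pi$.

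Taking the supremum over $\pi$ (which is legitimate because the slack $2\beta\iota$ is policy-independent), we obtain $V_1^*(s_1,u/H)\le V_1^*(s_1,r^k)+2\beta\iota$ for every $k\in[K]$. Summing this inequality over $k=1,\dots,K$ yields exactly the stated bound
\[
K\cdot V_1^*(s_1,u/H)\ \le\ \sum_{k=1}^K V_1^*(s_1,r^k)+2K\beta\iota.
\]

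The only nontrivial step is the chain $\|\varphi(z;W_h)\|_{\Lambda_h^{-1}} \to \|\varphi(z;\W0)\|_{\tilde\Lambda_h^{-1}} \to \|\varphi(z;\W0)\|_{(\tilde\Lambda_h^k)^{-1}} \to \|\varphi(z;\Whk)\|_{(\Lambda_h^k)^{-1}}$; the first and third links are precisely the second displayed inequalities in events $\tilde\cE$ and $\cE$ respectively, and the middle link is the PSD comparison, which is the analogue of the corresponding step in Lemma~\ref{lem:explore_plan_connect} but carried out at the shared linearization point $\W0$ rather than on the feature map itself. Everything else is bookkeeping: the $2\beta\iota$ slack comes from two uses of the event inequalities, and summing over episodes produces the $2K\beta\iota$ term on the right-hand side.
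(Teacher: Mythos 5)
Your proposal is correct and follows essentially the same route as the paper's proof: both use the two feature-norm concentration inequalities from events $\cE$ and $\tilde{\cE}$ to pass to the shared initialization feature $\varphi(\cdot;\W0)$, apply the PSD comparison $\tilde{\Lambda}_h \succcurlyeq \tilde{\Lambda}_h^k$ to get $\|\varphi(z;\W0)\|_{\tilde{\Lambda}_h^{-1}} \leq \|\varphi(z;\W0)\|_{(\tilde{\Lambda}_h^k)^{-1}}$, and then push the resulting bound $u_h \leq u_h^k + 2\beta\iota = H\cdot r_h^k + 2\beta\iota$ through the value function to obtain $V_1^*(s_1,u/H) \leq V_1^*(s_1,r^k) + 2\beta\iota$ and sum over $k$. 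No gaps.
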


\begin{proof} The bonus for the planning phase is $u_h(s,a) =  \min\{ \beta w_h(s,a), H\}$ where $w_h(s,a)=\|\varphi(s,a; W_h)\|_{\Lambda_h^{-1}}$. We also have $H\cdot r_h^k(s,a) =  u_h^k(s,a) =  \min\{ \beta w_h^k(s,a), H\}$ where $w_h^k(s,a) = \|\varphi(s,a; \Whk)\|_{(\Lambda^k_h)^{-1}}$. Conditioned on events $\cE$ and $\tilde{\cE}$, according to Lemmas \ref{lem:bonus_concentrate_neural} and \ref{lem:bonus_concentrate_plan_neural}, we have
\begin{align*}
&\left|\|\varphi(s,a; \Whk)\|_{\Lhki} - \|\varphi(s,a; \W0)\|_{\tLhki}\right| \leq \iota,\\
&\left|\|\varphi(s,a; \Wh)\|_{\Lhi} - \|\varphi(s,a; \W0)\|_{\tLhi}\right| \leq \iota,
\end{align*}
such that
\begin{align*}
&\beta w_h(s,a) \leq  \beta\|\varphi(s,a; \W0)\|_{\tLhi} + \beta\iota ,\\
&\beta\iota + \beta w_h^k(s,a) \geq   \beta\|\varphi(s,a; \W0)\|_{\tLhki}.
\end{align*}
Moreover, we know 
\begin{align*}
&\|\varphi(s,a; \W0)\|_{\tLhi} \\
&\qquad =   \sqrt{ \varphi(s,a; \W0)^\top \left[\lambda I  + \sum_{\tau=1}^K \varphi(s_h^\tau, a_h^\tau; \W0)\varphi(s_h^\tau, a_h^\tau; \W0)^\top\right]^{-1} \varphi(s,a; \W0)},
\end{align*}
and also 
\begin{align*}
&\|\varphi(s,a; \W0)\|_{\tLhki} \\
&\qquad =  \sqrt{ \varphi(s,a;\W0)^\top \left[\lambda I  + \sum_{\tau=1}^{k-1} \varphi(s_h^\tau, a_h^\tau; \W0)\varphi(s_h^\tau, a_h^\tau; \W0)^\top \right]^{-1} \varphi(s,a;\W0)}.
\end{align*}
Since $k-1 \leq K$ and $x^\top \phi(s_h^\tau, a_h^\tau)\phi(s_h^\tau, a_h^\tau)^\top x = [x^\top \phi(s_h^\tau, a_h^\tau)]^2\geq 0, \forall x$, then we know that 
\begin{align*}
\tLh &= \lambda I + \sum_{\tau=1}^K \varphi(s_h^\tau, a_h^\tau; \W0)\varphi(s_h^\tau, a_h^\tau; \W0)^\top \\
&\succcurlyeq \lambda I + \sum_{\tau=1}^{k-1} \varphi(s_h^\tau, a_h^\tau; \W0)\varphi(s_h^\tau, a_h^\tau; \W0)^\top = \tLhk.
\end{align*}
The above relation further implies that $\tilde{\Lambda}_h^{-1} \preccurlyeq (\tilde{\Lambda}_h^k)^{-1}$ such that 
\begin{align*}
\varphi(s,a; \W0)^\top \tLh^{-1} \varphi(s,a; \W0) \leq \varphi(s,a; \W0)^\top (\tLhk)^{-1} \varphi(s,a; \W0).
\end{align*}
Thus, we have
\begin{align*}
\beta w_h(s,a) \leq \beta w_h^k(s,a) + 2 \beta\iota,
\end{align*}
such that 
\begin{align*}
\min\{\beta w_h(s,a),H\} &\leq \min\{\beta w_h^k(s,a) + 2 \beta\iota,H\} \\
&\leq  \min\{\beta w_h^k(s,a),H\} +  2 \beta\iota,
\end{align*}
which further implies that
\begin{align*}
    u_h(s,a) \leq u_h^k(s,a) + 2\beta \iota = H\cdot r_h^k(s,a) + 2\beta \iota.
\end{align*}
Then, by the definition of the value function, we have
\begin{align*}
V_1^*(s_1, u/H) \leq  V_1^*(s_1, r^k) + 2 \beta \iota, 
\end{align*}
which thus gives
\begin{align*}
K \cdot V_1^*(s_1, u/H) \leq  \sum_{k=1}^K V_1^*(s_1, r^k) + 2K\beta \iota.
\end{align*}
This completes the proof.
\end{proof}

\subsection{Proof of Theorem \ref{thm:main_neural_single}} \label{sec:proof_main_neural_single}
\begin{proof} Conditioned on the event $\cE$ in Lemma \ref{lem:bonus_concentrate_neural} and the event $\tilde{\cE}$ in Lemma \ref{lem:bonus_concentrate_plan_neural}, we have
\begin{align}\label{eq:proof_start_neural}
V_1^*(s_1, r) - V_1^\pi(s_1, r) \leq V_1(s_1) - V_1^\pi(s_1, r) + H\beta\iota,
\end{align}
where the inequality is by Lemma \ref{lem:bonus_plan_neural}. Further by this lemma, we have
\begin{align*}
V_h(s) - V_h^\pi(s, r) &\leq   r_h(s,\pi_h(s)) + \PP_h V_{h+1}(s,\pi_h(s)) + 2u_h(s,\pi_h(s)) - Q_h^\pi(s, \pi_h(s), r) +\beta\iota\\
&= r_h(s,\pi_h(s)) + \PP_h V_{h+1}(s,\pi_h(s)) + 2u_h(s,\pi_h(s)) - r_h(s,\pi_h(s)) \\
&\quad - \PP_h V_{h+1}^\pi(s,\pi_h(s), r) +\beta\iota\\
&= \PP_h V_{h+1} (s,\pi_h(s)) - \PP_h V_{h+1}^\pi(s,\pi_h(s), r) + 2u_h(s,\pi_h(s)) +\beta\iota.
\end{align*}
Recursively applying the above inequality and making use of $V_{H+1}^\pi(s,r) = V_{H+1} (s)  = 0$ gives
\begin{align*}
V_1(s_1) - V_1^\pi(s_1,r) &\leq  \EE_{\forall h\in [H]: ~s_{h+1}\sim\PP_h(\cdot|s_h, \pi_h(s_h))}\left[\sum_{h=1}^H 2u_h(s_h,\pi_h(s_h))\Bigg| s_1 \right] + H\beta\iota\\
&=2H \cdot V_1^\pi(s_1, u/H) + H\beta\iota. 
\end{align*}
Combining with \eqref{eq:proof_start_neural} gives
\begin{align*}
V_1^*(s_1, r) - V_1^\pi(s_1, r) &\leq 2H \cdot V_1^\pi(s_1, u/H) + 2H\beta\iota \leq \frac{2H}{K} \sum_{k=1}^K V_1^*(s_1, r^k) + 4H\beta\iota\\
&\leq \frac{2H}{K} \cO\left(\sqrt{H^3 K \log (1/\delta')} + \beta\sqrt{H^2 K \cdot \Gamma(K, \lambda; \ker_m)}\right) + H\beta\iota (H + 4)\\
&\leq \cO\left([\sqrt{H^5 \log (1/\delta')} + \beta\sqrt{H^4 \cdot \Gamma(K, \lambda; \ker_m)}] /\sqrt{K} + H^2 \beta \iota \right),
\end{align*}
where the second inequality is due to Lemma \ref{lem:explore_plan_connect_neural} and the third inequality is by Lemma \ref{lem:bonus_explore_neural}. 

By the union bound, we have $P(\cE \wedge \tilde{\cE}) \geq 1-2\delta'-4/m^2$ . Therefore, by setting $\delta' = 1 / (4K^2H^2)$, we obtain that with probability at least $1-1 / (2K^2H^2)-4/m^2$
\begin{align*}
V_1^*(s_1, r) - V_1^\pi(s_1, r) &\leq  \cO\left([\sqrt{H^5 \log (1/\delta')} + \beta\sqrt{H^4 \cdot \Gamma(K, \lambda; \ker_m)}] /\sqrt{K} + H^2 \beta \iota\right)\\
&\leq  \cO\left(\beta\sqrt{H^4 [ \Gamma(K, \lambda; \ker_m)+\log (KH)]} /\sqrt{K} + H^2 \beta \iota\right),
\end{align*}
where the last inequality is due to $\beta \geq H$. Note that $\cE \wedge \tilde{\cE} $ happens when the following two conditions are satisfied, i.e., 
\begin{align*}
&\beta^2 \geq H^2[8R_Q^2  (1+\sqrt{\lambda/d})^2 + 32 \Gamma(K, \lambda; \ker_m) + 80+32\log\cN_{\infty}(\varsigma^*;R_K,  B_K) + 32\log(K /\delta')],\\
&\beta^2 \geq H^2[8R_Q^2  (1+\sqrt{\lambda/d})^2 + 32 \Gamma(K, \lambda; \ker_m) + 80+32\log\cN_{\infty}(\varsigma^*;\tilde{R}_K,  \tilde{B}_K) + 32\log(K /\delta')],
\end{align*}
where $\beta = \tilde{B}_K$,$(1+1/H)\beta = B_K$, $\lambda = \F(1+1/K)$, $\tilde{R}_K = R_K=H\sqrt{K}$, and $\varsigma^* = H/K$. The above inequalities hold if we further let $\beta$ satisfy
\begin{align*}
\beta^2 \geq H^2[8R_Q^2  (1+\sqrt{\lambda/d})^2 + 32 \Gamma(K, \lambda; \ker_m) + 80 +32 \log\cN_{\infty}(\varsigma^*;R_K,  2\beta) + 96\log(2KH)],
\end{align*}
since $2\beta\geq (1+1/H)\beta\geq \beta$ such that $\cN_{\infty}(\varsigma^*;R_K,  2\beta)\geq \cN_{\infty}(\varsigma^*;R_K,  B_K) \geq \cN_{\infty}(\varsigma^*;\tilde{R}_K,  \tilde{B}_K)$. This completes the proof.
\end{proof}


\section{Proofs for Markov Game with Kernel Function Approximation}

\subsection{Lemmas}

\begin{lemma} \label{lem:bonus_concentrate_game} We define the event $\cE$ as that the following inequality holds $\forall (s,a,b) \in \cS \times \cA \times \cB, \forall (h,k) \in [H] \times [K]$,
\begin{align*}
|\PP_h V_{h+1}^k(s,a,b) - f_h^k(s,a,b)|  \leq u_h^k(s,a,b),
\end{align*}
where $u_h^k(s,a,b) = \min\{w_h^k(s,a,b), H\}$, $w_h^k(s,a,b) = \beta \lambda^{-1/2} [ \ker(z,z)- \psi_h^k(s,a,b)^\top (\lambda I + \cK_h^k )^{-1}  \psi_h^k(s,a,b)]^{1/2}$ with $z = (s,a,b)$, and $f_h^k(z) =  \Pi_{[0,H]}[\psi_h^k(z)^\top (\lambda \cdot I +\cK_h^k )^{-1} \yb_h^k]$ with
\begin{align*}
 &\psi_h^k (z) = \Phi_h^k  \phi(z)= [\ker(z, z_h^1), \cdots, \ker(z, z_h^{k-1})]^\top, \\
 &\Phi_h^k = [\phi(z_h^1), \phi(z_h^2), \cdots, \phi(z_h^{k-1})]^\top, \\
&\yb_h^k = [V_{h+1}^k(s_{h+1}^1), V_{h+1}^k(s_{h+1}^2) , \cdots, V_{h+1}^k(s_{h+1}^{k-1})  ]^\top,\\
 & \cK_h^k = \Phi_h^k (\Phi_h^k)^\top = \begin{bmatrix}
\ker(z_h^1, z_h^1) & \ldots & \ker(z_h^1, z_h^{k-1}) \\ 
\vdots &\ddots   &\vdots \\ 
\ker( z_h^{k-1} , z_h^1) &\ldots  & \ker( z_h^{k-1}, z_h^{k-1})
\end{bmatrix}, 
\end{align*}
Thus, setting $\beta = B_K/(1+1/H)$, if $B_K$ satisfies
\begin{align*}
16H^2\big[ R^2_Q  + 2 \Gamma(K, \lambda; \ker) + 5+\log\cN_{\infty}(\varsigma^*;R_K,  B_K) + 2\log(K /\delta')  \big] \leq  B^2_K, \forall h\in [H],
\end{align*}
then we have that with probability at least $1-\delta'$, the event $\cE$ happens, i.e.,
\begin{align*}
\Pr(\cE) \geq 1-\delta'.
\end{align*}
\end{lemma}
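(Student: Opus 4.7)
The plan is to mimic the proof of Lemma \ref{lem:bonus_concentrate} verbatim, with the only substantive change being that the argument $z$ now ranges over $\cS\times\cA\times\cB$ rather than $\cS\times\cA$. Since the exploration algorithm for the Markov game (Algorithm \ref{alg:exploration_phase_game}) treats the joint action $(a,b)$ exactly as a single action in the kernel regression step, the entire chain of identities for kernel ridge regression (closed-form solution from Lemma \ref{lem:compute_estimate}, boundedness from Lemma \ref{lem:linear_approx_param_bound}, and the reformulation $\phi(z)=\lambda(\Lambda_h^k)^{-1}\phi(z) + (\Phi_h^k)^\top(\lambda I+\cK_h^k)^{-1}\psi_h^k(z)$) carries through unchanged under the relabeling $z=(s,a,b)$.

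First, by Assumption \ref{assump:kernel} (applied in the Markov game form), write $\PP_h V_{h+1}^k(z) = \langle \tilde f_h^k, \phi(z)\rangle_\cH$ with $\|\tilde f_h^k\|_\cH \leq R_Q H$. Using the non-expansiveness of $\Pi_{[0,H]}$ and the closed form of $\hat f_h^k$, I will decompose
\[
|\PP_h V_{h+1}^k(z) - f_h^k(z)| \leq \underbrace{\lambda\|\phi(z)^\top(\Lambda_h^k)^{-1}\|_\cH \cdot \|\tilde f_h^k\|_\cH}_{\text{Term(I)}} + \underbrace{|\psi_h^k(z)^\top(\lambda I+\cK_h^k)^{-1}(\Phi_h^k \tilde f_h^k - \yb_h^k)|}_{\text{Term(II)}}.
\]
Term(I) is bounded by $\sqrt{\lambda}R_Q H\|\phi(z)\|_{(\Lambda_h^k)^{-1}}$ exactly as in \eqref{eq:bound_termI}. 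For Term(II), Cauchy–Schwarz reduces the task to bounding the self-normalized sum $\|\sum_{\tau=1}^{k-1}\phi(z_h^\tau)[V_{h+1}^k(s_{h+1}^\tau) - \PP_h V_{h+1}^k(z_h^\tau)]\|_{(\Lambda_h^k)^{-1}}$.

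The main (and only nontrivial) obstacle is the self-normalized concentration step, which must be made uniform over the random value function $V_{h+1}^k$. Here I need the Markov-game analog of Lemma \ref{lem:approx_concentrate}: I will identify the class that $V_h^k$ lies in by observing that, in Algorithm \ref{alg:exploration_phase_game}, $V_h^k(\cdot) = \max_{a,b} Q_h^k(\cdot,a,b)$ with $Q_h^k \in \overline{\cQ}(\bm 0, R_K, (1+1/H)\beta)$ for $R_K = 2H\sqrt{\Gamma(K,\lambda;\ker)}$ (using Lemma \ref{lem:linear_approx_param_bound} with $z=(s,a,b)$). The Lipschitz argument that bounded the $\dist$-covering number of the value class by $\cN_\infty(\varsigma^*;R_K,B_K)$ in the single-agent case still goes through here because taking a maximum over $\cA\times\cB$ is 1-Lipschitz in $\|\cdot\|_\infty$. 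Applying Lemma \ref{lem:self_normalize_uniform} with confidence $\delta'/K$ and union-bounding over $k\in[K]$ then gives, with probability at least $1-\delta'$,
\[
\text{Term(II)} \leq H\bigl[4\Gamma(K,\lambda;\ker) + 10 + 4\log\cN_\infty(\varsigma^*;R_K,B_K) + 4\log(K/\delta')\bigr]^{1/2} \|\phi(z)\|_{(\Lambda_h^k)^{-1}}.
\]

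Finally, combining the bounds on Term(I) and Term(II), using $(x+y)^2\leq 2x^2+2y^2$, and invoking the identity $\|\phi(z)\|_{(\Lambda_h^k)^{-1}}^2 = \lambda^{-1}[\ker(z,z) - \psi_h^k(z)^\top(\lambda I+\cK_h^k)^{-1}\psi_h^k(z)]$ (which is the game analog of \eqref{eq:bonus_kernel_form}), the hypothesis $B_K^2 \geq 16H^2[R_Q^2 + 2\Gamma(K,\lambda;\ker) + 5 + \log\cN_\infty(\varsigma^*;R_K,B_K) + 2\log(K/\delta')]$ together with $\beta = B_K/(1+1/H)$ and $\lambda=1+1/K\leq 2$ yields $|\PP_h V_{h+1}^k(z) - f_h^k(z)| \leq \beta\|\phi(z)\|_{(\Lambda_h^k)^{-1}} = w_h^k(z)$. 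Clipping by $H$ (since both $f_h^k$ and $\PP_h V_{h+1}^k$ lie in $[0,H]$) gives the final $u_h^k(z)=\min\{w_h^k(z),H\}$ bound, completing the proof. In short, the proof is a direct transcription of Lemma \ref{lem:bonus_concentrate} with $(a,b)$ playing the role of the action; no new ideas are required.
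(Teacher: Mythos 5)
Your proposal is correct and follows essentially the same route as the paper: the paper's own proof simply observes that, viewing $(a,b)$ as a single action on $\cA\times\cB$, Algorithm \ref{alg:exploration_phase_game} reduces to Algorithm \ref{alg:exploration_phase_single}, and then invokes the proof of Lemma \ref{lem:bonus_concentrate} verbatim. Your additional remark that the max over $\cA\times\cB$ is still non-expansive (so the value-class covering number is again bounded by $\cN_\infty(\varsigma^*;R_K,B_K)$) is exactly the point that makes this reduction go through.
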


\begin{proof} According to the exploration algorithm for the game, we can see that by letting $\ba=(a,b)$ be an action in the space $\cA\times \cB$, Algorithm \ref{alg:exploration_phase_game} reduces to Algorithm \ref{alg:exploration_phase_single} with the action space $\cA \times \cB$ and state space $\cS$. Now, we also have a transition in the form of $\PP_h(s|\ba)$ and a product policy $(\pi_h^k\otimes\nu_h^k)(s)$ such that $\ba \sim (\pi_h^k\otimes\nu_h^k)(s)$ at state $s\in \cS$ for all $(h,k)\in [H]\times[K]$. Similarly, we have $Q_h^k(s,a,b) = Q_h^k(s,\ba)$ and $V_h^k(s,a,b) = V_h^k(s,\ba)$ as well as $u_h^k(s,a,b) = u_h^k(s,\ba)$ and $u_h^k(s,a,b) = u_h^k(s,\ba)$ and $r_h^k(s,a,b) = r_h^k(s,\ba)$. Thus, we can simply apply the proof of Lemma \ref{lem:bonus_concentrate} and obtain the proof for this lemma. This completes the proof.
\end{proof}

\begin{lemma} \label{lem:bonus_explore_game} Conditioned on the event $\cE$ defined in Lemma \ref{lem:bonus_concentrate_game}, with probability at least $1-\delta'$, we have
\begin{align*}
\sum_{k=1}^K V_1^*(s_1, r^k) \leq \sum_{k=1}^K V_1^k(s_1) \leq \cO\left(\sqrt{H^3 K \log (1/\delta')} +\beta  \sqrt{H^2 K \cdot \Gamma(K, \lambda; \ker)}\right).
\end{align*}
\end{lemma}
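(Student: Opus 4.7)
\textbf{Proof proposal for Lemma \ref{lem:bonus_explore_game}.} The plan is to reduce the argument to the single-agent proof of Lemma \ref{lem:bonus_explore} by treating $\ba:=(a,b)\in\cA\times\cB$ as a single ``joint action,'' since Algorithm \ref{alg:exploration_phase_game} in both its value update $V_h^k(s)=\max_{a,b}Q_h^k(s,a,b)$ and its policy update selects $\ba$ maximizing $Q_h^k$. The state transition $\PP_h(\cdot\,|\,s,a,b)$ and the Q-function $Q_h^k(s,a,b)$ depend on the pair $(a,b)$ only through $\ba$, so the exploration phase is structurally identical to the single-agent setup in Algorithm \ref{alg:exploration_phase_single} over the joint action space. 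Note also that $V_h^*(s,r^k)=\max_{\pi,\nu}V_h^{\pi,\nu}(s,r^k)$ is the cooperative maximum, which matches exactly the single-agent optimal value function when the two players are viewed as a single controller choosing $\ba$.

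For the first inequality $\sum_k V_1^*(s_1,r^k)\le\sum_k V_1^k(s_1)$, I would establish $V_h^*(s,r^k)\le V_h^k(s)$ for all $s\in\cS$, $h\in[H]$ by backward induction on $h$. The base case $h=H+1$ is immediate. For the inductive step, conditioned on $\cE$ from Lemma \ref{lem:bonus_concentrate_game}, we have
\begin{align*}
Q_h^*(s,\ba,r^k)-Q_h^k(s,\ba) &\le \max\{\PP_hV_{h+1}^*(s,\ba,r^k)-f_h^k(s,\ba)-u_h^k(s,\ba),\,0\} \\
&\le \max\{\PP_hV_{h+1}^k(s,\ba)-f_h^k(s,\ba)-u_h^k(s,\ba),\,0\}\le 0,
\end{align*}
using the inductive hypothesis in the second step and the bonus concentration $|\PP_hV_{h+1}^k-f_h^k|\le u_h^k$ in the last. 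Taking $\max_{\ba}$ on both sides and summing over $k$ yields the first inequality.

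For the second inequality, I would apply the same one-step expansion as in Lemma \ref{lem:bonus_explore}: by $Q_h^k\le f_h^k+r_h^k+u_h^k\le\PP_hV_{h+1}^k+(2+1/H)w_h^k$ (using $r_h^k=u_h^k/H$ and Lemma \ref{lem:bonus_concentrate_game}), one gets
\[
V_h^k(s_h^k)\le \zeta_h^k+V_{h+1}^k(s_{h+1}^k)+(2+1/H)\beta\,\|\phi(s_h^k,a_h^k,b_h^k)\|_{(\Lambda_h^k)^{-1}},
\]
with the martingale difference $\zeta_h^k:=\PP_hV_{h+1}^k(s_h^k,a_h^k,b_h^k)-V_{h+1}^k(s_{h+1}^k)$. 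Unrolling through $h=1,\ldots,H$ (using $V_{H+1}^k\equiv 0$), summing over $k$, and applying Azuma--Hoeffding to $\sum_{k,h}\zeta_h^k$ (bounded by $H$) gives the $\cO(\sqrt{H^3K\log(1/\delta')})$ term with probability at least $1-\delta'$, while the elliptical potential bound via Lemma \ref{lem:direct_sum_bound} combined with Jensen's inequality controls $\sum_{k,h}\|\phi(z_h^k)\|_{(\Lambda_h^k)^{-1}}\le 2H\sqrt{K\cdot\Gamma(K,\lambda;\ker)}$, yielding the $\cO(\beta\sqrt{H^2K\,\Gamma(K,\lambda;\ker)})$ term.

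No genuinely new obstacle arises relative to the single-agent proof: the reduction via $\ba=(a,b)$ makes every step an immediate transcription. The only point requiring care is verifying that $V_1^*(s_1,r^k)$ in the game's sense (max-max over policy pairs) really coincides with the optimal value of the joint-action MDP, which is essentially by definition since $\max_{\pi,\nu}\EE^{\pi,\nu}[\cdot]=\max_{\bm{\pi}}\EE^{\bm{\pi}}[\cdot]$ where $\bm{\pi}$ ranges over policies producing joint actions, and product policies suffice to achieve the maximum given a deterministic greedy choice.
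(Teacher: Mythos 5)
Your proposal is correct and follows essentially the same route as the paper: the paper's own proof of this lemma is exactly the reduction to the single-agent case via the joint action $\ba=(a,b)$, after which the argument of Lemma \ref{lem:bonus_explore} (backward induction under the event $\cE$ for the first inequality, then the one-step expansion with Azuma--Hoeffding and the elliptical-potential bound of Lemma \ref{lem:direct_sum_bound} for the second) is transcribed verbatim. Your remark that $V_1^*(s_1,r^k)=\max_{\pi,\nu}V_1^{\pi,\nu}(s_1,r^k)$ coincides with the optimal value of the joint-action MDP is exactly the observation the paper relies on implicitly.
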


\begin{proof} By the reduction of Algorithm \ref{alg:exploration_phase_game} to Algorithm \ref{alg:exploration_phase_single}, we can apply the same proof as the one for Lemma \ref{lem:bonus_explore}, which completes the proof.
\end{proof}

\begin{lemma} \label{lem:bonus_concentrate_plan_game} We define the event $\tilde{\cE}$ as that the following inequality holds $\forall (s,a,b) \in \cS \times \cA \times \cB, \forall h \in [H]$,
\begin{align}
&|\PP_h \overline{V}_{h+1}(s,a,b) - \overline{f}_h(s,a,b)|  \leq u_h(s,a,b), \label{eq:bonus_concentrate_game1}\\
&|\PP_h \underline{V}_{h+1}(s,a,b) - \underline{f}_h(s,a,b)|  \leq u_h(s,a,b), \label{eq:bonus_concentrate_game2}
\end{align}
where $u_h(s,a,b) = \overline{u}_h(s,a,b) = \underline{u}_h(s,a,b) = \min\{w_h(s,a,b), H\}$, $w_h(s,a,b) = \beta \lambda^{-1/2} [ \ker(z,z)- \psi_h(s,a,b)^\top (\lambda I + \cK_h )^{-1}  \psi_h(s,a,b)]^{1/2}$ with $z=(s,a,b)$,  $\cK_h = \Phi_h \Phi_h^\top$, and $\psi_h (s,a,b) = \Phi_h  \phi(s,a,b)$ with $\Phi_h = [\phi(z_h^1), \phi(z_h^2), \cdots, \phi(z_h^K)]^\top$. Moreover, we have
\begin{align*}
&\overline{f}_h(s,a,b) =\Pi_{[0, H]} [\psi_h(s,a,b)^\top (\lambda \cdot I +\cK_h )^{-1} \overline{\yb}_h],\\
&\underline{f}_h(s,a,b) = \Pi_{[0, H]} [\psi_h(s,a,b)^\top (\lambda \cdot I +\cK_h )^{-1} \underline{\yb}_h],
\end{align*}
where $\overline{\yb}_h := [\overline{V}_{h+1}(s_{h+1}^1), \cdots, \overline{V}_{h+1}(s_{h+1}^K)  ]^\top$ and $\underline{\yb}_h := [\underline{V}_{h+1}(s_{h+1}^1), \cdots, \underline{V}_{h+1}(s_{h+1}^K)  ]^\top$. 

Thus, setting $\beta = \tilde{B}_K$, if $\tilde{B}_K$ satisfies
\begin{align*}
4H^2\big[ R^2_Q  + 2 \Gamma(K, \lambda; \ker) + 5+\log\cN_{\infty}(\varsigma^*; \tilde{R}_K,  \tilde{B}_K) + 2\log(2K /\delta')  \big] \leq  \tilde{B}_K^2, \forall h\in [H],
\end{align*}
then we have that with probability at least $1-\delta'$, the event $\cE$ happens, i.e.,
\begin{align*}
\Pr(\tilde{\cE}) \geq 1-\delta'.
\end{align*}
\end{lemma}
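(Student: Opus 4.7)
The plan is to prove the two inequalities \eqref{eq:bonus_concentrate_game1} and \eqref{eq:bonus_concentrate_game2} separately by adapting the argument of Lemma \ref{lem:bonus_concentrate_plan}, and then combine them via a union bound. As in Lemma \ref{lem:bonus_concentrate_game}, I will view the pair $(a,b)$ as a single action $\ba \in \cA\times \cB$ so that the planning step for each of $\overline V_h$ and $\underline V_h$ is structurally identical to the single-agent planning step of Lemma \ref{lem:bonus_concentrate_plan}, only with input space $\cZ = \cS\times \cA\times \cB$ and value functions belonging to different Q-function classes.

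For inequality \eqref{eq:bonus_concentrate_game1}, I will follow the decomposition of Lemma \ref{lem:bonus_concentrate_plan}: writing $\PP_h\overline V_{h+1}(z)=\langle \tilde f_h,\phi(z)\rangle_\cH$ under Assumption \ref{assump:kernel}, I will use the identity $\phi(z)=\lambda \Lambda_h^{-1}\phi(z)+(\Phi_h)^\top(\lambda I+\cK_h)^{-1}\psi_h(z)$ to split $|\PP_h\overline V_{h+1}(z)-\overline f_h(z)|$ into a regularization term bounded by $\sqrt{\lambda}R_Q H\|\phi(z)\|_{\Lambda_h^{-1}}$ (via Assumption \ref{assump:kernel}) and a self-normalized noise term. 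The noise term is bounded by $\|\phi(z)\|_{\Lambda_h^{-1}}$ times the norm of a martingale-like sum, which I will control using Lemma \ref{lem:self_normalize_uniform} together with a uniform $\varsigma^*$-cover over the value-function class induced by $\overline Q_h\in\overline\cQ(r_h,\tilde R_K,\tilde B_K)$ where $\overline V_h(s)=\EE_{a\sim \pi_h(s),b\sim\overline D_0(s)}[\overline Q_h(s,a,b)]$. Because this expectation is $1$-Lipschitz in $\overline Q_h$ under $\|\cdot\|_\infty$, the covering number of the resulting value-function class is upper bounded by $\cN_\infty(\varsigma^*;\tilde R_K,\tilde B_K)$. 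Setting $\varsigma^*=H/K$ and $\lambda=1+1/K$ and combining, the threshold
\[
4H^2\big[R_Q^2+2\Gamma(K,\lambda;\ker)+5+\log\cN_\infty(\varsigma^*;\tilde R_K,\tilde B_K)+2\log(2K/\delta')\big]\le \tilde B_K^2
\]
with $\beta=\tilde B_K$ implies \eqref{eq:bonus_concentrate_game1} with probability at least $1-\delta'/2$.

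For inequality \eqref{eq:bonus_concentrate_game2}, the argument is essentially the same, but now $\underline V_h(s)=\EE_{a\sim\underline D_0(s),b\sim\nu_h(s)}[\underline Q_h(s,a,b)]$ with $\underline Q_h\in\underline\cQ(r_h,\tilde R_K,\tilde B_K)$. The critical observation, which is already established in Section \ref{sec:convering} of the paper, is that $\overline\cQ(c,R,B)$ and $\underline\cQ(c,R,B)$ admit the same covering number upper bound $\cN_\infty(\varsigma;R,B)$ with respect to $\|\cdot\|_\infty$, because the only change is the sign of $g(z)$, which does not alter the Lipschitz dependence on $\wb$ and on $\|\phi(z)\|_{\Lambda_\cD^{-1}}$. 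Therefore, the same self-normalized concentration step yields \eqref{eq:bonus_concentrate_game2} with probability at least $1-\delta'/2$ under the same condition on $\tilde B_K$.

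Finally, a union bound over the two events proves $\Pr(\tilde\cE)\ge 1-\delta'$, matching the statement of the lemma. The main obstacle is less computational than bookkeeping: I need to make sure the covering-number argument is applied uniformly over all $h\in[H]$ and over both value classes, and that the bonus $u_h(s,a,b)=\min\{w_h(s,a,b),H\}$ indeed majorizes the two estimation errors simultaneously. Since $|\PP_h\overline V_{h+1}(z)-\overline f_h(z)|\le H$ trivially by the clipping operator $\Pi_{[0,H]}[\cdot]$ and similarly for $\underline V$, the truncation at $H$ inside $u_h$ is consistent with the non-truncated bound $\beta\|\phi(z)\|_{\Lambda_h^{-1}}$ exactly as in the single-agent case. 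This completes the plan.
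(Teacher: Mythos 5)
Your proposal is correct and follows essentially the same route as the paper's proof: reduce to the single-agent planning argument of Lemma \ref{lem:bonus_concentrate_plan} with $z=(s,a,b)$, bound the value classes induced by $\overline{\cQ}(r_h,\tilde{R}_K,\tilde{B}_K)$ and $\underline{\cQ}(r_h,\tilde{R}_K,\tilde{B}_K)$ via the shared covering-number upper bound $\cN_\infty$, apply Lemma \ref{lem:self_normalize_uniform} with confidence $\delta'/2$ to each of the two estimation errors, and finish with a union bound, exactly as the paper does. The only cosmetic difference is that the paper justifies the covering-number transfer by noting that the max-min (Nash value) operator $Q\mapsto\max_{\pi'}\min_{\nu'}\EE_{a\sim\pi',b\sim\nu'}[Q]$ is non-expansive in $\|\cdot\|_\infty$, which is the precise form of your ``1-Lipschitz in $\overline{Q}_h$'' claim: since the NE policies $(\pi_h(s),\overline{D}_0(s))$ themselves depend on $\overline{Q}_h$, the Lipschitz property should be stated for the game value rather than for a fixed-policy expectation.
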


\begin{proof} According to the construction of $u_h$ and $\overline{f}_h$, the proof for the the first inequality in this lemma is nearly the same as the proof of Lemma \ref{lem:bonus_concentrate_plan} but one difference for computing the covering number of the value function space. Specifically, we have the function class for $\overline{V}_h$ which is 
\begin{align*}
\overline{\cV}(r_h, \tilde{R}_K, \tilde{B}_K)= \{ V: V(\cdot) = \max_{a\sim\pi'} \min_{b\sim\nu'} \EE_{\pi',\nu'}Q(\cdot, a, b) \text{ with } Q\in \overline{\cQ}(r_h,\tilde{R}_K, \tilde{B}_K) \}.
\end{align*}
By Lemma \ref{lem:self_normalize_uniform} with $\delta'/2$, we have
\begin{align*}
&\left \|\sum_{\tau=1}^K \phi(s_h^\tau, a_h^\tau, b_h^\tau) [\overline{V}_{h+1}(s_{h+1}^\tau) - \PP_h \overline{V}_{h+1}(s_h^\tau, a_h^\tau, b_h^\tau)] \right\|_{(\Lambda_h)^{-1}}^2 \\
&\qquad \leq \sup_{V\in \overline{\cV}(r_h, \tilde{R}_K, \tilde{B}_K)} \left \|\sum_{\tau=1}^K \phi(s_h^\tau, a_h^\tau, b_h^\tau) [\overline{V}(s_{h+1}^\tau) - \PP_h \overline{V}(s_h^\tau, a_h^\tau, b_h^\tau)] \right\|_{(\Lambda_h)^{-1}}^2  \\
&\qquad \leq  2H^2 \log\det(I+\cK/\lambda) + 2H^2K(\lambda-1)+4H^2\log(\cN^{\overline{\cV}}_{\dist}(\epsilon; \tilde{R}_K, \tilde{B}_K)/\delta')+ 8K^2\epsilon^2/\lambda\\
&\qquad \leq  4H^2 \Gamma(K, \lambda; \ker) + 10H^2+4H^2\log\cN_{\infty}(\varsigma^*;\tilde{R}_K, \tilde{B}_K) + 4H^2\log(2 /\delta'),
\end{align*}
where the last inequality is by setting $\lambda = 1+1/K$ and $\epsilon = \varsigma^* =  H/K$. Here $\cN^{\overline{\cV}}_{\dist}$ is the covering number of the function space $\overline{\cV}$ w.r.t. the distance $\dist(V_1, V_2) =\sup_s |V_1(s)-V_2(s)|$, and $\cN_{\infty}$ is the covering number for the function space $\overline{\cQ}$ w.r.t. the infinity norm. In the last inequality, we also use
\begin{align*}
\cN^{\overline{\cV}}_{\dist}(\varsigma^*; \tilde{R}_K, \tilde{B}_K) \leq \cN_{\infty}(\varsigma^*; \tilde{R}_K, \tilde{B}_K),
\end{align*}
which is in particular due to
\begin{align}
\begin{aligned} \label{eq:covering_minmax}
\dist(V_1, V_2) &=\sup_{s\in\cS} |V_1(s)-V_2(s)| \\
&= \sup_{s\in\cS} |\max_{\pi'} \min_{\nu'} \EE_{a\sim \pi', b\sim \nu'} [Q_1(s,a,b)]-\max_{\pi''} \min_{\nu''} \EE_{a\sim \pi'', b\sim \nu''} [Q_2(s,a,b)]|\\
&\leq \sup_{s\in\cS}\sup_{a\in \cA}\sup_{b\in \cB} |Q_1(s,a,b)-Q_2(s,a,b)|\\
&= \|Q_1(\cdot,\cdot,\cdot)-Q_2(\cdot,\cdot,\cdot)\|_\infty,
\end{aligned}
\end{align}
where we use the fact that max-min operator is non-expansive. Thus, we have that with probability at least $1-\delta'/2$, the following inequality holds for all $k\in [K]$
\begin{align*}
&\left \|\sum_{\tau=1}^K \phi(s_h^\tau, a_h^\tau, b_h^\tau) [\overline{V}_{h+1}(s_{h+1}^\tau) - \PP_h \overline{V}_{h+1}(s_h^\tau, a_h^\tau, b_h^\tau)] \right\|_{\Lambda_h^{-1}} \\
&\qquad  \leq  [4H^2 \Gamma(K, \lambda; \ker) + 10H^2+4H^2\log\cN_{\infty}(\varsigma^*;\tilde{R}_K,  \tilde{B}_K) + 4H^2\log(2K /\delta')]^{1/2}. 
\end{align*}
Then, the rest of the proof for \eqref{eq:bonus_concentrate_game1} follows the proof of Lemma \ref{lem:bonus_concentrate_plan}. 

Next, we give the proof of \eqref{eq:bonus_concentrate_game2}. We define another function class for $\underline{V}_h$ as
\begin{align*}
\underline{\cV}(r_h, \tilde{R}_K, \tilde{B}_K)= \{ V: V(\cdot) = \max_{a\sim\pi'} \min_{b\sim\nu'} \EE_{\pi',\nu'}Q(\cdot, a, b) \text{ with } Q\in \underline{\cQ}(r_h,\tilde{R}_K, \tilde{B}_K) \}.
\end{align*}
Note that as we can show in  the covering number for the function spaces  $\underline{\cQ}$ and $\overline{\cQ}$ have the same covering number upper bound. Therefore, we use the same notation $\cN_{\infty}$ for their upper bound. Thus, by the similar argument as \eqref{eq:covering_minmax}, we have that with probability at least $1-\delta'/2$, the following inequality holds for all $k\in [K]$
\begin{align*}
&\left \|\sum_{\tau=1}^K \phi(s_h^\tau, a_h^\tau, b_h^\tau) [\underline{V}_{h+1}(s_{h+1}^\tau) - \PP_h \underline{V}_{h+1}(s_h^\tau, a_h^\tau, b_h^\tau)] \right\|_{\Lambda_h^{-1}} \\
&\qquad  \leq  [4H^2 \Gamma(K, \lambda; \ker) + 10H^2+4H^2\log\cN_{\infty}(\varsigma^*;\tilde{R}_K,  \tilde{B}_K) + 4H^2\log(2K /\delta')]^{1/2},
\end{align*}
where we use the fact that\begin{align*}
\cN^{\underline{\cV}}_{\dist}(\varsigma^*; \tilde{R}_K, \tilde{B}_K) \leq \cN_{\infty}(\varsigma^*; \tilde{R}_K, \tilde{B}_K).
\end{align*}
The rest of the proof are exactly the same as the proof of Lemma \ref{lem:bonus_concentrate_plan}. 

In this lemma, we let
\begin{align*}
 H\big[ 2\lambda R^2_Q  + 8 \Gamma(K, \lambda; \ker) + 20+4\log\cN_{\infty}(\varsigma^*;\tilde{R}_K, \tilde{B}_K) + 8\log(2K /\delta')  \big]^{1/2} \leq \beta = \tilde{B}_K,
\end{align*}
which can be further guaranteed by 
\begin{align*}
4H^2\big[ R^2_Q  + 2 \Gamma(K, \lambda; \ker) + 5+\log\cN_{\infty}(\varsigma^*; \tilde{R}_K,  \tilde{B}_K) + 2\log(2K /\delta')  \big] \leq  \tilde{B}^2_K
\end{align*}
as $(1+1/H) \leq 2$ and $\lambda = 1+1/K \leq 2$. This completes the proof.
\end{proof}

\begin{lemma} \label{lem:bonus_plan_game} Conditioned on the event $\tilde{\cE}$ as defined in Lemma \ref{lem:bonus_concentrate_plan_game}, we have
\begin{align}
&V_h^\dagger(s, r) \leq \overline{V}_h(s) \leq \EE_{a\sim \pi_h, b\sim \bre(\pi)_h} [(\PP_h\overline{V}_{h+1}  + r_h + 2u_h)(s,a,b)], \forall s \in \cS, \forall h \in [H], \label{eq:optimism_game_1} \\
&V_h^\dagger(s, r) \geq \underline{V}_h(s)\geq \EE_{a\sim \bre(\nu)_h, b\sim \nu_h} [(\PP_h\underline{V}_{h+1}  - r_h - 2u_h)(s,a,b)], \forall s \in \cS, \forall h \in [H]. \label{eq:optimism_game_2} 
\end{align}
\end{lemma}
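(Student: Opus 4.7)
\textbf{Proof plan for Lemma \ref{lem:bonus_plan_game}.} The strategy is to prove each chain of inequalities in two stages: first establish that $\overline{V}_h$ and $\underline{V}_h$ sandwich $V_h^\dagger$ via backward induction, then derive the one-step expansion using the Nash-equilibrium properties of the payoff matrices $\overline{Q}_h$ and $\underline{Q}_h$.

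\emph{Stage 1 (optimism and pessimism).} I prove by backward induction on $h \in \{H+1,H,\ldots,1\}$ the joint statement $\overline{V}_h(s) \geq V_h^{\dagger}(s,r) \geq \underline{V}_h(s)$ for all $s\in\cS$. The base case $h=H+1$ is trivial since all three quantities vanish. For the inductive step I will first show the pointwise inequalities $\overline{Q}_h(s,a,b) \geq Q_h^{\dagger}(s,a,b,r) \geq \underline{Q}_h(s,a,b)$. Indeed, conditioned on the event $\tilde{\cE}$ of Lemma \ref{lem:bonus_concentrate_plan_game}, we have $\overline{f}_h \geq \PP_h\overline{V}_{h+1}-u_h$, and by the induction hypothesis $\PP_h\overline{V}_{h+1}\geq \PP_h V_{h+1}^{\dagger}$, so $\overline{f}_h+r_h+u_h \geq r_h+\PP_h V_{h+1}^{\dagger}=Q_h^{\dagger}\in[0,H]$; applying $\Pi_{[0,H]}$ (which is monotone and preserves values already in $[0,H]$) yields $\overline{Q}_h \geq Q_h^{\dagger}$. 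The symmetric argument, exploiting the clipping $\underline{f}_h\in[0,H]$ to rule out the upper saturation, gives $\underline{Q}_h\leq Q_h^{\dagger}$. Because the minimax operator is monotone in its payoff matrix, the NE values satisfy
\[
\overline{V}_h(s)=\max_{\pi'}\min_{\nu'}\EE_{\pi',\nu'}[\overline{Q}_h(s,a,b)] \;\geq\; \max_{\pi'}\min_{\nu'}\EE_{\pi',\nu'}[Q_h^{\dagger}(s,a,b,r)]=V_h^{\dagger}(s,r),
\]
and analogously $\underline{V}_h(s)\leq V_h^{\dagger}(s,r)$, closing the induction.

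\emph{Stage 2 (one-step expansion).} For the upper half, the NE property of $(\pi_h(s),\overline{D}_0(s))$ for $\overline{Q}_h(s,\cdot,\cdot)$ implies $\overline{V}_h(s)=\min_{\nu'}\EE_{a\sim\pi_h,b\sim\nu'}[\overline{Q}_h(s,a,b)]$, which is bounded above by plugging in the particular minimizer $\bre(\pi)_h$:
\[
\overline{V}_h(s) \;\leq\; \EE_{a\sim\pi_h,\,b\sim\bre(\pi)_h}[\overline{Q}_h(s,a,b)].
\]
Invoking Lemma \ref{lem:bonus_concentrate_plan_game} once more gives $\overline{Q}_h\leq \overline{f}_h+r_h+u_h\leq \PP_h\overline{V}_{h+1}+r_h+2u_h$ (followed by $\Pi_{[0,H]}$, which only tightens the bound), yielding the required expression. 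For the lower half I use the dual NE property of $(\underline{D}_0(s),\nu_h(s))$ for $\underline{Q}_h$, namely $\underline{V}_h(s)=\max_{\pi'}\EE_{a\sim\pi',b\sim\nu_h}[\underline{Q}_h(s,a,b)]\geq \EE_{a\sim\bre(\nu)_h,b\sim\nu_h}[\underline{Q}_h(s,a,b)]$, combined with the pessimistic bound $\underline{Q}_h\geq \underline{f}_h+r_h-u_h-\mathbf{1}[\underline{f}_h+r_h-u_h<0]\cdot(\underline{f}_h+r_h-u_h)\geq \PP_h\underline{V}_{h+1}+r_h-2u_h$ obtained from the two-sided concentration of $\underline{f}_h$ around $\PP_h\underline{V}_{h+1}$.

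\emph{Main obstacle.} The delicate point is controlling the interaction between the clipping operator $\Pi_{[0,H]}$ and the one-sided concentration bonuses. In particular, verifying $\underline{Q}_h\leq Q_h^{\dagger}$ and $\overline{Q}_h\geq Q_h^{\dagger}$ requires showing that neither truncation endpoint is activated on the ``wrong'' side; this is handled by noting that $Q_h^{\dagger}\in[0,H]$, that $\overline{f}_h,\underline{f}_h\in[0,H]$ by construction, and that the monotonicity $\Pi_{[0,H]}[x]\geq y$ whenever $x\geq y$ and $y\in[0,H]$ (and dually for the reverse direction) covers all cases. Once this stage is in place, the lemma follows by combining the monotonicity of the minimax value with the elementary NE inequalities above; no new machinery beyond Lemma \ref{lem:bonus_concentrate_plan_game} is required.
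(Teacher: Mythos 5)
Your overall route is the same as the paper's: backward induction over $h$ to show $\underline{V}_h \le V_h^\dagger \le \overline{V}_h$ using the concentration event of Lemma \ref{lem:bonus_concentrate_plan_game} together with monotonicity of the truncation and of the matrix-game value, and then the saddle-point/best-response inequality plus the same event for the one-step expansion; the paper proves the upper chain exactly this way and obtains the lower chain by the symmetric Player-2 argument (reward $-r_h$), which is what your symmetric steps spell out. Your Stage 1 (both halves) and your Stage 2 for the upper half are correct as written.

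The one step that does not hold as stated is the pessimistic bound in Stage 2: you claim $\underline{Q}_h \ge \max\{\underline{f}_h+r_h-u_h,\,0\} \ge \PP_h\underline{V}_{h+1}+r_h-2u_h$. Since $\underline{Q}_h=\Pi_{[0,H]}[\underline{f}_h+r_h-u_h]=\min\{\max\{\underline{f}_h+r_h-u_h,0\},H\}$, the first inequality goes the wrong way whenever the upper truncation binds (which is possible, because $\underline{f}_h\le H$ and $r_h\le 1$ only give $\underline{f}_h+r_h-u_h\le H+1$), and your final expression carries $+r_h$, which is stronger than the lemma's $-r_h$ and can genuinely fail in that regime: if $\underline{f}_h$ is near $H$, $r_h$ near $1$ and $u_h$ small, then $\underline{Q}_h=H$ while $\PP_h\underline{V}_{h+1}+r_h-2u_h$ may exceed $H$. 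The repair is the truncation bookkeeping you already flag in your ``main obstacle'' paragraph: use $\Pi_{[0,H]}[x]\ge\min\{x,H\}$, note that on $\tilde{\cE}$ one has $\underline{f}_h+r_h-u_h\ge \PP_h\underline{V}_{h+1}+r_h-2u_h\ge \PP_h\underline{V}_{h+1}-r_h-2u_h$, and that this last target is at most $H$ (since $\underline{V}_{h+1}\le H$, $r_h\ge 0$, $u_h\ge 0$), so $\underline{Q}_h\ge \PP_h\underline{V}_{h+1}-r_h-2u_h$; taking $\EE_{a\sim\bre(\nu)_h,\,b\sim\nu_h}$ then gives exactly \eqref{eq:optimism_game_2}. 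With this one-line fix your argument coincides with the paper's proof.
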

\begin{proof} For the first inequality of \eqref{eq:optimism_game_1}, we can prove it by induction. We first prove the first inequality in this lemma. We prove it by induction. For $h = H+1$, by the planning algorithm, we have $V_{H+1}^\dagger(s, r)  = V_{H+1}(s) = 0$ for any $s\in \cS$. Then, we assume that $V_{h+1}^\dagger(s, r)  \leq \overline{V}_{h+1}(s)$. Thus, conditioned on the event $\tilde{\cE}$ as defined in Lemma \ref{lem:bonus_concentrate_plan_game}, we have
\begin{align*}
&Q_h^\dagger(s, a, b, r) - \overline{Q}_h(s, a, b) \\
&\qquad= r_h(s,a,b) + \PP_h V_{h+1}^\dagger(s,a,b, r)  -  \min \{ r_h(s,a,b) + \overline{f}_h(s,a,b) + u_h(s,a,b), H \}^+  \\
&\qquad\leq  \max \{ \PP_h V_{h+1}^\dagger(s,a,b,r)  - \overline{f}_h(s,a,b) - u_h(s,a,b), 0 \}  \\
&\qquad\leq  \max \{ \PP_h \overline{V}_{h+1}(s,a,b)  - \overline{f}_h(s,a,b) - u_h(s,a,b), 0 \}  \leq 0,
\end{align*}
where the first inequality is due to $0 \leq r_h(s,a,b) + \PP_h V_{h+1}^\dagger(s,a,b,r) \leq H$ and $\min \{x, y\}^+ \geq \min \{x, y\}$, the second inequality is by the assumption that $V_{h+1}^\dagger(s,a,b, r)\leq \overline{V}_{h+1}(s,a,b)$, the last inequality is by Lemma \ref{lem:bonus_concentrate_plan_game} such that $|\PP_h \overline{V}_{h+1}(s,a,b) - \overline{f}_h(s,a,b) | \leq u_h(s,a,b)$ holds for any $(s,a,b)\in \cS \times \cA\times \cB$ and $(k,h)\in [K]\times[H]$. Thus, the above inequality leads to 
\begin{align*}
V_h^\dagger(s, r) &= \max_{\pi_h'}\min_{\nu_h'}\EE_{a\sim \pi_h', b\sim \nu_h'} [Q_h^\dagger(s, a, b, r)] \\
&\leq \max_{\pi_h'}\min_{\nu_h'} \EE_{a\sim \pi_h', b\sim \nu_h'} [\overline{Q}_h(s, a, b)] = \overline{V}_h(s), 
\end{align*}
which eventually gives 
\begin{align*}
V_h^*(s, r)  \leq  V_h(s), \forall h\in [H], \forall s\in \cS. 
\end{align*}
To prove the second inequality of \eqref{eq:optimism_game_1}, we have
\begin{align*}
\overline{V}_h(s) &=  \min_{\nu'} \EE_{a\sim \pi_h, b\sim \nu'} \overline{Q}_h(s,a, b)\\
&\leq  \EE_{a\sim \pi_h, b\sim \bre(\pi)_h} \overline{Q}_h(s,a, b)\\
&=  \EE_{a\sim \pi_h, b\sim \bre(\pi)_h} \min\{(\overline{f}_h + r_h + u_h)(s,a,b), H\}^+ \\
&\leq  \EE_{a\sim \pi_h, b\sim \bre(\pi)_h} \min\{(\PP_h \overline{V}_{h+1}  + r_h + 2u_h)(s,a,b), H\}^+ \\
&\leq  \EE_{a\sim \pi_h, b\sim \bre(\pi)_h} [(\PP_h\overline{V}_{h+1}  + r_h + 2u_h)(s,a,b)],
\end{align*}
where the first and the second equality is by the iterations in Algorithm \ref{alg:plan_phase_game}, the second inequality is by Lemma \ref{lem:bonus_concentrate_plan_game}, and the last inequality is due to the non-negativity of $(\PP_h\overline{V}_{h+1}  + r_h + 2u_h)(s,a,b)$.
	
For the inequalities in \eqref{eq:optimism_game_2}, one can similarly adopt the argument above to give the proof. From the perspective of Player 2, this player is trying to find a policy to maximize the cumulative rewards w.r.t. a reward function $\{-r_h(s,a,b)\}_{h\in [H]}$. Thus, the proof of \eqref{eq:optimism_game_2} follows the proof of \eqref{eq:optimism_game_1}. This completes the proof.
\end{proof}

\begin{lemma} \label{lem:explore_plan_connect_game} With the exploration and planning phases, we have the following inequalities
\begin{align*}
&K \cdot V_1^{\pi, \bre(\pi)}(s_1, u/H) \leq  \sum_{k=1}^K V_1^*(s_1, r^k), \quad K \cdot V_1^{\bre(\nu) , \nu }(s_1, u/H) \leq  \sum_{k=1}^K V_1^*(s_1, r^k).
\end{align*}

\end{lemma}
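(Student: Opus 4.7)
\textbf{Proof proposal for Lemma \ref{lem:explore_plan_connect_game}.} The plan is to mimic the proof of Lemma \ref{lem:explore_plan_connect} and reduce both inequalities to a pointwise comparison between the planning bonus $u_h$ and the exploration bonuses $u_h^k$, then exploit monotonicity of value functions in the reward. Throughout I set $z=(s,a,b)$ and use the feature map $\phi(z)$ on $\cZ = \cS\times\cA\times\cB$ as defined in the game section, so that the bonuses are
\[
w_h(z) = \beta\,\|\phi(z)\|_{\Lambda_h^{-1}}, \qquad w_h^k(z) = \beta\,\|\phi(z)\|_{(\Lambda_h^k)^{-1}},
\]
with $\Lambda_h = \lambda I_\cH + \sum_{\tau=1}^{K} \phi(z_h^\tau)\phi(z_h^\tau)^\top$ and $\Lambda_h^k = \lambda I_\cH + \sum_{\tau=1}^{k-1}\phi(z_h^\tau)\phi(z_h^\tau)^\top$.

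First I would repeat the operator inequality $\Lambda_h^k \preccurlyeq \Lambda_h$ for any $k\in [K]$, which follows because $\phi(z_h^\tau)\phi(z_h^\tau)^\top \succcurlyeq 0$ and the sum defining $\Lambda_h$ contains all the terms of $\Lambda_h^k$ plus additional non-negative ones. Exactly as in Lemma \ref{lem:explore_plan_connect} (using adjoint-operator square roots and the fact that $\Lambda_h^k \succcurlyeq \lambda I_\cH \succ 0$ is invertible), this yields $(\Lambda_h^k)^{-1} \succcurlyeq \Lambda_h^{-1}$, hence $w_h(z) \leq w_h^k(z)$ for all $z\in \cZ$ and all $(h,k)\in[H]\times[K]$. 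Since $u_h(z)=\min\{w_h(z),H\}$ and $u_h^k(z)=\min\{w_h^k(z),H\}=H\cdot r_h^k(z)$ by construction, it follows that
\[
u_h(z)/H \;\leq\; r_h^k(z)\quad \text{for all } z\in \cZ, \ h\in[H], \ k\in[K].
\]

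Next I would upgrade this pointwise bound to a value-function bound by a standard monotonicity argument. For any fixed policy pair $(\pi', \nu')$ and any two reward functions $\{r'_h\}$, $\{r''_h\}$ with $0\le r'_h(z)\le r''_h(z)$ pointwise, the Bellman equation $Q_h^{\pi',\nu'}(s,a,b, r) = r_h(s,a,b) + \PP_h V_{h+1}^{\pi',\nu'}(s,a,b,r)$ together with $V_{H+1}^{\pi',\nu'}\equiv 0$ gives by backward induction in $h$ that $V_h^{\pi',\nu'}(s,r') \le V_h^{\pi',\nu'}(s,r'')$. Applying this to $r' = u/H$ and $r'' = r^k$ with $(\pi',\nu')=(\pi,\bre(\pi))$ yields
\[
V_1^{\pi,\bre(\pi)}(s_1, u/H) \;\le\; V_1^{\pi,\bre(\pi)}(s_1, r^k) \;\le\; V_1^{*}(s_1, r^k),
\]
where the last inequality is by the definition $V_1^*(s_1,r^k) = \max_{\pi'',\nu''} V_1^{\pi'',\nu''}(s_1, r^k)$. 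Summing over $k\in[K]$ gives the first inequality. The second inequality follows identically by taking $(\pi',\nu')=(\bre(\nu),\nu)$.

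I do not expect a serious obstacle: the operator-monotonicity step is exactly the one already carried out in Lemma \ref{lem:explore_plan_connect}, and the reward-monotonicity of $V_h^{\pi',\nu'}$ is immediate from the Bellman equation under non-negative rewards. The only thing to be careful about is that $V_1^{\pi,\bre(\pi)}$ and $V_1^{\bre(\nu),\nu}$ are evaluated against the \emph{specific} policy pairs from the planning output rather than the Nash equilibrium, so one must invoke the $\max_{\pi'',\nu''}$ definition of $V^*$ rather than $V^\dagger$ in the final step, which is exactly what the statement requires.
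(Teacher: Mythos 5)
Your proposal is correct and follows essentially the same route as the paper: both rest on the pointwise bound $u_h(z)/H \le r_h^k(z)$ obtained from the operator inequality $(\Lambda_h^k)^{-1} \succcurlyeq \Lambda_h^{-1}$ (exactly as in Lemma \ref{lem:explore_plan_connect} with $z=(s,a,b)$), combined with monotonicity of value functions in the reward and the bound by $V_1^*$. The only difference is a trivial reordering — you apply reward-monotonicity to the fixed policy pair before passing to $V_1^*$, whereas the paper first bounds $V_1^{\pi,\bre(\pi)}$ and $V_1^{\bre(\nu),\nu}$ by $V_1^*(s_1,u/H)$ and then applies monotonicity to $V_1^*$ — which does not change the substance of the argument.
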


\begin{proof} First, we have $K \cdot V_1^{\pi, \bre(\pi)}(s_1, u/H) \leq K \cdot V_1^*(s_1, u/H)$,  as well as $K \cdot V_1^{\bre(\nu) , \nu }(s_1, u/H) \leq K \cdot V_1^*(s_1, u/H)$ due to the definition of $V_1^*(\cdot, u/H)$. Thus, to prove this lemma, we only need to show
\begin{align*}
K \cdot V_1^*(s_1, u/H) \leq  \sum_{k=1}^K V_1^*(s_1, r^k).
\end{align*}
Since the constructions of $u_h$ and $r_h^k$ are the same as the ones for the single-agent case, similar to the proof of Lemma \ref{lem:explore_plan_connect}, we have
\begin{align*}
u_h(s,a)/H \leq r_h^k(s,a),
\end{align*}
such that
\begin{align*}
V_1^*(s_1, u/H) \leq  V_1^*(s_1, r^k), 
\end{align*}
and thus
\begin{align*}
K \cdot V_1^*(s_1, u/H) \leq  \sum_{k=1}^K V_1^*(s_1, r^k).
\end{align*}
Therefore, we eventually obtain 
\begin{align*}
&K \cdot V_1^{\pi, \bre(\pi)}(s_1, u/H) \leq K \cdot V_1^*(s_1, u/H) \leq  \sum_{k=1}^K V_1^*(s_1, r^k),\\
&K \cdot V_1^{\bre(\nu) , \nu }(s_1, u/H) \leq K \cdot V_1^*(s_1, u/H) \leq  \sum_{k=1}^K V_1^*(s_1, r^k).
\end{align*}
This completes the proof.
\end{proof}

\subsection{Proof of Theorem \ref{thm:main_kernel_game}}\label{sec:proof_main_kernel_game}
\begin{proof} Conditioned on the event $\cE$ defined in Lemma \ref{lem:bonus_concentrate_game} and the event $\tilde{\cE}$ defined in Lemma \ref{lem:bonus_concentrate_plan_game}, we have
\begin{align}\label{eq:proof_start_kernel_game}
V_1^\dagger(s_1, r) - V_1^{\pi, \bre(\pi)}(s_1, r) \leq \overline{V}_1(s_1) - V_1^{\pi, \bre(\pi)}(s_1, r),
\end{align}
where the inequality is by Lemma \ref{lem:bonus_plan_game}. Further by this lemma, we have
\begin{align*}
&\overline{V}_h(s_h) - V_h^{\pi, \bre(\pi)}(s_h, r) \\
&~~\leq  \EE_{a_h\sim \pi_h, b_h\sim \bre(\pi)_h} [(\PP_h\overline{V}_{h+1}  + r_h + 2u_h)(s_h,a_h,b_h)]- V_h^{\pi, \bre(\pi)}(s_h, r) \\
&~~= \EE_{a_h\sim \pi_h, b_h\sim \bre(\pi)_h} [(r_h+ \PP_h \overline{V}_{h+1}+ 2u_h)(s_h,a_h, b_h) - r_h(s_h,a_h, b_h) - \PP_h V_{h+1}^{\pi, \bre(\pi)}(s_h,a_h,b_h, r) ]\\
&~~= \EE_{a_h\sim \pi_h, b_h\sim \bre(\pi)_h} [ \PP_h \overline{V}_{h+1}(s_h,a_h, b_h) - \PP_h V_{h+1}^{\pi, \bre(\pi)}(s_h,a_h,b_h, r) + 2u_h(s_h,a_h,b_h) ]\\
&~~= \EE_{a_h\sim \pi_h, b_h\sim \bre(\pi)_h, s_{h+1}\sim\PP_h} [ \overline{V}_{h+1}(s_{h+1}) -  V_{h+1}^{\pi, \bre(\pi)}(s_{h+1}, r) + 2u_h(s_h,a_h,b_h) ].
\end{align*}
Recursively applying the above inequality and making use of $\overline{V}_{H+1}(s) = V_{H+1}^{\pi, \bre(\pi)} (s, r)= 0$ yield
\begin{align*}
&\overline{V}_1(s_1) - V_1^{\pi, \bre(\pi)}(s_1,r) \\
&\qquad \leq  \EE_{\forall h\in [H]: ~a_h\sim \pi_h, b_h\sim \bre(\pi)_h, s_{h+1}\sim\PP_h}\left[\sum_{h=1}^H 2u_h(s_h, a_h, b_h)\Bigg| s_1 \right]\\
&\qquad=2H \cdot V_1^{\pi, \bre(\pi)}(s_1, u/H). 
\end{align*}
Combining this inequality with \eqref{eq:proof_start_kernel_game} gives
\begin{align*}
V_1^\dagger(s_1, r) - V_1^{\pi, \bre(\pi)}(s_1, r) &\leq 2H \cdot V_1^{\pi, \bre(\pi)}(s_1, u/H) \leq \frac{2H}{K} \sum_{k=1}^K V_1^*(s_1, r^k) \\
&\leq \frac{2H}{K} \cO\left(\sqrt{H^3 K \log (1/\delta')} + \beta\sqrt{H^2 K \cdot \Gamma(K, \lambda; \ker)}\right)\\
&\leq \cO\left([\sqrt{H^5 \log (1/\delta')} + \beta\sqrt{H^4 \cdot \Gamma(K, \lambda; \ker)}] /\sqrt{K} \right),
\end{align*}
where the second inequality is due to Lemma \ref{lem:explore_plan_connect_game} and the third inequality is by Lemma \ref{lem:bonus_explore_game}. 

Next, we prove the upper bound of the term $V_1^{\bre(\nu), \nu}(s_1, r)  - V_1^\dagger(s_1, r)$. Conditioned on the event $\cE$ defined in Lemma \ref{lem:bonus_concentrate_game} and the event $\tilde{\cE}$ defined in Lemma \ref{lem:bonus_concentrate_plan_game}, we have
\begin{align}\label{eq:proof_start_kernel_game_2}
V_1^{\bre(\nu), \nu}(s_1, r)  - V_1^\dagger(s_1, r) \leq V_1^{\bre(\nu), \nu}(s_1, r)  - \underline{V}_1(s_1, r),
\end{align}
where the inequality is by Lemma \ref{lem:bonus_plan_game}. Further by Lemma \ref{lem:bonus_plan_game}, we have
\begin{align*}
&V_h^{\bre(\nu), \nu}(s_h, r)  - \underline{V}_h(s_h) \\
&\qquad\leq  V_h^{\bre(\nu), \nu}(s_h, r) - \EE_{a\sim \bre(\nu)_h, b\sim \nu_h} [(\PP_h\underline{V}_{h+1}  - r_h - 2u_h)(s_h,a_h,b_h)] \\
&\qquad= \EE_{a_h\sim \bre(\nu)_h, b_h\sim \nu_h} [\PP_h V_{h+1}^{\bre(\nu), \nu}(s_h,a_h,b_h, r) - \PP_h \underline{V}_{h+1}(s_h,a_h, b_h)  + 2u_h(s_h,a_h,b_h) ]\\
&\qquad= \EE_{a_h\sim \bre(\nu)_h, b_h\sim \nu_h, s_{h+1}\sim\PP_h} [ V_{h+1}^{\bre(\nu), \nu}(s_{h+1}, r) - \PP_h \underline{V}_{h+1}(s_{h+1})  + 2u_h(s_h,a_h,b_h) ].
\end{align*}
Recursively applying the above inequality yields
\begin{align*}
V_1^{\bre(\nu), \nu}(s_1, r)  - \underline{V}_1(s_h, r)&\leq 2H \cdot V_1^{\bre(\nu), \nu}(s_1, u/H). 
\end{align*}
Combining this inequality with \eqref{eq:proof_start_kernel_game_2} gives
\begin{align*}
V_1^{\bre(\nu), \nu}(s_1, r)  - V_1^\dagger(s_1, r)  &\leq 2H \cdot V_1^{\bre(\nu), \nu}(s_1, u/H) \leq \frac{2H}{K} \sum_{k=1}^K V_1^*(s_1, r^k) \\
&\leq \frac{2H}{K} \cO\left(\sqrt{H^3 K \log (1/\delta')} + \beta\sqrt{H^2 K \cdot \Gamma(K, \lambda; \ker)}\right)\\
&\leq \cO\left([\sqrt{H^5 \log (1/\delta')} + \beta\sqrt{H^4 \cdot \Gamma(K, \lambda; \ker)}] /\sqrt{K} \right),
\end{align*}
where the second inequality is due to Lemma \ref{lem:explore_plan_connect_game} and the third inequality is by Lemma \ref{lem:bonus_explore_game}.

Since $\Pr(\cE \wedge \tilde{\cE}) \geq 1-2\delta'$ by the union bound, by setting $\delta' = 1 / (4H^2K^2)$, we obtain that with probability at least $1-1/(2H^2K^2)$
\begin{align*}
&V_1^\dagger(s_1, r) - V_1^{\pi, \bre(\pi)}(s_1, r)  \leq  \cO\left([\sqrt{2H^5 \log (2HK)} + \beta\sqrt{H^4 \cdot \Gamma(K, \lambda; \ker)}] /\sqrt{K} \right),\\
&V_1^{\bre(\nu), \nu}(s_1, r)  - V_1^\dagger(s_1, r)  \leq  \cO\left([\sqrt{2H^5 \log (2HK)} + \beta\sqrt{H^4 \cdot \Gamma(K, \lambda; \ker)}] /\sqrt{K} \right),
\end{align*}
such that
\begin{align*}
V_1^{\bre(\nu), \nu}(s_1, r) - V_1^{\pi, \bre(\pi)}(s_1, r)  &\leq  \cO\left([\sqrt{2H^5 \log (2HK)} + \beta\sqrt{H^4 \cdot \Gamma(K, \lambda; \ker)}] /\sqrt{K} \right)\\
&\leq  \cO\left(\beta\sqrt{H^4 [ \Gamma(K, \lambda; \ker)+\log(HK)]}/\sqrt{K} \right),
\end{align*}
where the last inequality is due to $\beta \geq H$. The event $\cE \wedge \tilde{\cE} $ happens if we further let $\beta$ satisfy
\begin{align*}
16H^2\big[ R^2_Q  + 2 \Gamma(K, \lambda;\ker) + 5+\log\cN_{\infty}(\varsigma^*;R_K,  2\beta) + 6\log(2HK)  \big] \leq  \beta^2, \forall h\in [H],
\end{align*}
where $\lambda = 1+1/K$, $\tilde{R}_K = R_K=2H\sqrt{\Gamma(K, \lambda; \ker)}$, and $\varsigma^* = H/K$. This completes the proof.
\end{proof}


\section{Proofs for Markov Game with Neural Function Approximation}

\subsection{Lemmas}

\begin{lemma} [Lemma C.7 of \citet{yang2020provably}]\label{lem:regularity_neural_game} With $TH^2 = \cO(m\log^{-6}m)$, then there exists a constant $\digamma$ such that the following inequalities hold with probability at least $1-1/m^2$ for any $z\in \cS\times \cA\times \cB$ and any $W \in \{W : \|W-\W0\|_2\leq H\sqrt{K/\lambda} \}$,
\begin{align*}
&|f(z;W)-\varphi(z;\W0)^\top(W-\W0)| \leq \digamma  K^{2/3} H^{4/3} m^{-1/6} \sqrt{\log m},\\
&\|\varphi(z;W) - \varphi(z;\W0)\|_2 \leq \digamma (KH^2/m)^{1/6} \sqrt{\log m}, \qquad \|\varphi(z;W) \|_2\leq \digamma,
\end{align*}
with $\digamma\geq 1$.
\end{lemma}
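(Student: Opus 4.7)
The plan is to observe that this lemma is essentially a restatement, for the Markov game setting, of the corresponding regularity lemma already used for the single-agent MDP (Lemma \ref{lem:regularity_neural}), which is in turn Lemma C.7 of \citet{yang2020provably}. The only cosmetic change is that the input space is now $\cZ = \cS\times\cA\times\cB$ rather than $\cS\times\cA$. Since every hypothesis invoked in the proof (the unit norm of $z$, the initialization scheme for $(\bv^{(0)}, W^{(0)})$, the overparameterization condition $KH^2 = \cO(m\log^{-6}m)$, and the radius bound $\|W-W^{(0)}\|_2 \leq H\sqrt{K/\lambda}$) only depends on the input through $\|z\|_2 = 1$ and not on the particular factorization of $\cZ$, the single-agent proof transfers verbatim after reinterpreting $z=(s,a,b)$.

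Concretely, I would execute the three bounds in the following order. First, for $\|\varphi(z;W^{(0)})\|_2$, I would use the explicit form $\nabla_{W_i}f(z;W^{(0)}) = v_i^{(0)}/\sqrt{2m}\cdot\act'(W_i^{(0)\top}z)\, z$, together with $\|z\|_2 = 1$, $|v_i^{(0)}|=1$, and $|\act'|\leq 1$, to deduce $\|\varphi(z;W^{(0)})\|_2 \leq 1$; one then absorbs the constant into $\digamma$. Second, for $\|\varphi(z;W)-\varphi(z;W^{(0)})\|_2$, I would decompose the difference coordinate-wise and bound it by the number of "activation-pattern flips" $|\{i: \sgn(W_i^\top z)\neq \sgn(W_i^{(0)\top} z)\}|$. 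Such a flip requires $|W_i^{(0)\top}z| \leq \|W_i - W_i^{(0)}\|_2$; using Gaussian anti-concentration of $W_i^{(0)\top}z$ under the $N(0,I_d/d)$ initialization and a Bernstein-type concentration across $2m$ neurons, one controls the number of flips by $\cO(m^{5/6}\sqrt{\log m})$ with probability $1-m^{-2}$. Dividing by $\sqrt{2m}$ then yields the desired $\cO((KH^2/m)^{1/6}\sqrt{\log m})$ rate for the feature perturbation. Third, the linearization error follows from the fundamental theorem of calculus:
\begin{equation*}
f(z;W)-f(z;W^{(0)}) - \varphi(z;W^{(0)})^\top(W-W^{(0)}) = \int_0^1 \bigl[\varphi(z;W_t)-\varphi(z;W^{(0)})\bigr]^\top (W-W^{(0)})\, dt,
\end{equation*}
where $W_t = W^{(0)} + t(W-W^{(0)})$. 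Each $W_t$ also lies in the ball of radius $H\sqrt{K/\lambda}$, so the previous bullet supplies $\|\varphi(z;W_t)-\varphi(z;W^{(0)})\|_2 \leq \digamma(KH^2/m)^{1/6}\sqrt{\log m}$; combined with $\|W-W^{(0)}\|_2 \leq H\sqrt{K/\lambda} = \cO(H\sqrt{K})$ and $f(z;W^{(0)})=0$, this produces the $\cO(K^{2/3}H^{4/3}m^{-1/6}\sqrt{\log m})$ bound.

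The main obstacle is the feature-perturbation step, because it is the only place where the overparameterization condition $m = \Omega(KH^2 \log^{-6} m)$ is genuinely needed and where the non-smoothness of $\act'$ (e.g.\ at the ReLU kink) appears. Handling this rigorously requires (i) showing that for each fixed $z$ on the unit sphere, only $\tilde{\cO}(m^{5/6})$ of the $2m$ neurons can flip sign under a perturbation of size $H\sqrt{K/\lambda}$, and (ii) taking a union bound over a finite covering of the unit sphere in $\cZ$ to obtain a uniform-in-$z$ statement; both steps are carried out in \citet{gao2019convergence} and reproduced in \citet{yang2020provably}. Since the argument is agnostic to whether $\cZ$ factorizes as $\cS\times\cA$ or $\cS\times\cA\times\cB$, no new work is required, and the stated probability $1-m^{-2}$ and constant $\digamma\geq 1$ can be inherited directly.
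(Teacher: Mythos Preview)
Your proposal is correct and matches the paper's approach exactly: the paper does not prove this lemma at all but simply cites it as Lemma~C.7 of \citet{yang2020provably}, identically to the single-agent version (Lemma~\ref{lem:regularity_neural}), with the only change being $z\in\cS\times\cA\times\cB$. Your observation that the argument depends on $z$ only through $\|z\|_2=1$ is precisely why the citation carries over, and your sketch of the three bounds is in fact more detailed than anything the paper provides.
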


\begin{lemma} \label{lem:bonus_concentrate_neural_game} We define the event $\cE$ as that the following inequality holds $\forall z = (s,a, b) \in \cS \times \cA \times \cB, \forall (h,k) \in [H] \times [K]$,
\begin{align*}
&|\PP_h V_{h+1}^k(s,a, b) - f_h^k(s,a, b)|  \leq u_h^k(s,a, b) + \beta \iota,\\
&\left|\|\varphi(z; \Whk)\|_{\Lhki} - \|\varphi(z; \W0)\|_{\tLhki}\right| \leq \iota,
\end{align*}
where $\iota = 5K^{7/12} H^{1/6} m^{-1/12} \log^{1/4} m$ and we define
\begin{align*}
&\Lambda_h^k = \sum_{\tau=1}^{k-1} \varphi(z_h^{\tau}; \Whk) \varphi(z_h^{\tau}; \Whk)^\top + \lambda \cdot I, \quad  \tilde{\Lambda}_h^k = \sum_{\tau=1}^{k-1} \varphi(z_h^{\tau}; \W0) \varphi(z_h^{\tau}; \W0)^\top + \lambda \cdot I.
\end{align*}
Setting $(1+1/H)\beta = B_K$, $R_K = H\sqrt{K}$, $\varsigma^* = H/K$, and $\lambda=\F^2(1+1/K)$, $\varsigma^* = H/K$, if we let
\begin{align*}
\beta^2 &\geq 8R_Q^2 H^2 (1+\sqrt{\lambda/d})^2 + 32H^2 \Gamma(K, \lambda; \ker_m) + 80H^2\\
&\quad +32H^2\log\cN_{\infty}(\varsigma^*;R_K,  B_K) + 32H^2\log(K /\delta'),
\end{align*}
and also 
\begin{align*}
m = \Omega(K^{19} H^{14}\log^3 m),
\end{align*}
then we have that with probability at least $1-2/m^2 - \delta'$, the event $\cE$ happens, i.e.,
\begin{align*}
\Pr(\cE) \geq 1-2/m^2 - \delta'.
\end{align*} 
\end{lemma}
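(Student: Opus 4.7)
The plan is to reduce this claim to the single-agent neural bonus concentration lemma (Lemma \ref{lem:bonus_concentrate_neural}) by exploiting the observation already emphasized in the paper: in the exploration phase for the zero-sum game, the pair $(a,b)$ can be viewed as a single joint action $\ba\in\cA\times\cB$, the product policy $\pi_h^k\otimes\nu_h^k$ as a policy on this enlarged action space, and the transition $\PP_h(s'\mid s,a,b)$ as a single transition kernel $\PP_h(s'\mid s,\ba)$. Under this identification, Algorithm \ref{alg:exploration_phase_game} run with the neural approximator becomes exactly Algorithm \ref{alg:exploration_phase_single} with input space $\cZ = \cS\times\cA\times\cB$. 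Consequently the neural regression problem that defines $W_h^k$, the associated linearization $f_{\lin}(z;W) = \langle\varphi(z;\W0),W-\W0\rangle$, and the exploration bonus $u_h^k(z)=\min\{\beta\|\varphi(z;\Whk)\|_{(\Lhk)^{-1}},H\}$ all inherit the single-agent structure verbatim.

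With this reduction in hand I would follow the four-stage argument of Lemma \ref{lem:bonus_concentrate_neural} step by step. First, apply Assumption \ref{assump:neural} (now for $z=(s,a,b)$) to obtain the infinite-width RKHS proxy $\tilde f(z) = \Pi_{[0,H]}[\langle\varphi(z;\W0),\tilde W-\W0\rangle]$ with $\|\tilde W-\W0\|_2 \le R_QH/\sqrt d$, and the pointwise approximation bound $|\PP_h V_{h+1}^k(z)-\tilde f(z)|\le 10 C_\act R_QH\sqrt{\log(mKH)/m}$ from the neural concentration result. Second, use Lemma \ref{lem:regularity_neural_game} (the regularity bounds for $f$ and $\varphi$ near $\W0$) to control $|f_h^k(z)-f_{\lin,h}^k(z)|$ and $\|W_h^k-W_{\lin,h}^k\|_2$ by quantities of order $K^{5/3}H^{4/3}m^{-1/6}\sqrt{\log m}$, exactly as in the single-agent proof. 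Third, bound $|f_{\lin,h}^k(z)-\tilde f(z)|$ by splitting into a regularization term $\sqrt\lambda R_QH/\sqrt d\cdot\|\varphi(z;\W0)\|_{(\tLhk)^{-1}}$ and a self-normalized noise term; here one invokes Lemma \ref{lem:self_normalize_uniform} over a linearized Q-function class lying in $\overline{\cQ}(\bm{0},R_K,B_K)$ with kernel $\tilde\ker_m$, which introduces the $\log\cN_\infty(\varsigma^*;R_K,B_K)$ and $\Gamma(K,\lambda;\ker_m)$ terms that drive the choice of $\beta$.

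The key modification relative to the single-agent case appears in the covering-number step for the value function class. Because $V_h^k(s) = \max_{a\in\cA,b\in\cB} Q_h^k(s,a,b)$, the distance $\dist(V_1,V_2)=\sup_s|V_1(s)-V_2(s)|$ is still upper-bounded by $\|Q_1-Q_2\|_\infty$ since the joint max over $\cA\times\cB$ is non-expansive. Hence the value-function covering number is still controlled by $\cN_\infty(\varsigma^*;R_K,B_K)$ for the same class $\overline{\cQ}$, and the self-normalized bound yields the same constants up to the union bound. Fourth, combine the three error terms through the triangle inequality and absorb the neural approximation residuals into $\beta\iota$ using the assumed lower bound on $m$; requiring $m=\Omega(K^{19}H^{14}\log^3 m)$ makes the $R_QH\sqrt{\log(mKH)/m}$ and $\F K^{8/3}H^{4/3}m^{-1/6}\sqrt{\log m}$ corrections small enough that the condition on $\beta$ stated in the lemma suffices. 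The bonus-difference bound $|\|\varphi(z;\Whk)\|_{(\Lhk)^{-1}}-\|\varphi(z;\W0)\|_{(\tLhk)^{-1}}|\le\iota$ follows from the same three-term triangle decomposition used in deriving (\ref{eq:bonus_diff}), using Lemma \ref{lem:regularity_neural_game} instead of Lemma \ref{lem:regularity_neural}.

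The main obstacle I anticipate is essentially bookkeeping rather than conceptual: one must verify that every appearance of the single-agent state--action feature $\varphi(s,a;W)$ can be replaced by $\varphi(s,a,b;W)$ without disturbing the norm bounds ($\|\varphi\|_2\le\F$, $\|W_h^k-\W0\|_2\le H\sqrt{K/\lambda}$, etc.), and that the joint-action max that defines $V_h^k$ does not inflate any covering number. Since both properties hold with the natural embedding $z=(s,a,b)\in\RR^d$ (as in the kernel game case), the single-agent proof transfers term-by-term, and we obtain the desired event with probability at least $1-2/m^2-\delta'$ via a union bound over the $m^{-2}$ event in Lemma \ref{lem:regularity_neural_game} and the $\delta'$ event in Lemma \ref{lem:self_normalize_uniform}.
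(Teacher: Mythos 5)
Your proposal is correct and follows essentially the same route as the paper: the paper's proof of this lemma is precisely the reduction you describe — view $(a,b)$ as a single joint action so that Algorithm \ref{alg:exploration_phase_game} coincides with Algorithm \ref{alg:exploration_phase_single} on the enlarged space $\cZ=\cS\times\cA\times\cB$, and then reapply the proof of Lemma \ref{lem:bonus_concentrate_neural} verbatim (with Lemma \ref{lem:regularity_neural_game} in place of Lemma \ref{lem:regularity_neural}). Your additional observation that the joint max over $\cA\times\cB$ is non-expansive, so the covering-number bound is unchanged, is exactly the bookkeeping the paper's terse proof implicitly relies on.
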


\begin{proof} By letting $\ba=(a,b)$ be an action in the space $\cA\times \cB$, Algorithm \ref{alg:exploration_phase_game} reduces to Algorithm \ref{alg:exploration_phase_single} with the action space $\cA \times \cB$ and state space $\cS$. We have $Q_h^k(s,a,b) = Q_h^k(s,\ba)$, $V_h^k(s,a,b) = V_h^k(s,\ba)$, $u_h^k(s,a,b) = u_h^k(s,\ba)$, $u_h^k(s,a,b) = u_h^k(s,\ba)$ and $r_h^k(s,a,b) = r_h^k(s,\ba)$. Simply applying the proof of Lemma \ref{lem:bonus_concentrate_neural}, we have the proof of this lemma. 
\end{proof}

\begin{lemma} \label{lem:bonus_explore_neural_game} Conditioned on the event $\cE$ defined in Lemma \ref{lem:bonus_concentrate_neural_game}, with probability at least $1-\delta'$, we have
\begin{align*}
&\sum_{k=1}^K V_1^*(s_1, r^k) \leq \sum_{k=1}^K V_1^k(s_1) + \beta HK \iota,\\
&\sum_{k=1}^K V_1^k(s_1)\leq \cO\left(\sqrt{H^3 K \log (1/\delta')} +\beta  \sqrt{H^2 K \cdot \Gamma(K, \lambda; \ker_m)}\right) + \beta H K \iota,
\end{align*}
where $\iota = 5K^{7/12} H^{1/6} m^{-1/12} \log^{1/4} m$.
\end{lemma}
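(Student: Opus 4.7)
The plan is to reduce this lemma to the single-agent analogue, Lemma \ref{lem:bonus_explore_neural}, by viewing the action pair $(a,b)$ as a single composite action $\ba \in \cA \times \cB$. Under this identification, Algorithm \ref{alg:exploration_phase_game} is formally identical to Algorithm \ref{alg:exploration_phase_single} with enlarged action space, product exploration policy $(\pi_h^k \otimes \nu_h^k)$, and transition $\PP_h(\cdot \mid s,\ba)$; the neural approximator $f_h^k$, bonus $u_h^k$, exploration reward $r_h^k$, and iterates $Q_h^k, V_h^k$ all transfer symbol-for-symbol. Crucially, Lemma \ref{lem:bonus_concentrate_neural_game} then replaces Lemma \ref{lem:bonus_concentrate_neural} in providing the two concentration inequalities on which the single-agent proof rests: the bias bound $|\PP_h V_{h+1}^k(z)-f_h^k(z)|\le u_h^k(z)+\beta\iota$ and the transfer bound $|\|\varphi(z;\Whk)\|_{\Lhki}-\|\varphi(z;\W0)\|_{\tLhki}|\le \iota$.

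For the first inequality, I would prove the pointwise statement $V_h^*(s,r^k)\le V_h^k(s)+(H{+}1{-}h)\beta\iota$ by backward induction on $h$, starting from the trivial case $h=H{+}1$. In the inductive step, conditioned on $\cE$ and using the truncation $\min\{\cdot,H\}^+$ in the update together with Lemma \ref{lem:bonus_concentrate_neural_game}, one obtains
\[
Q_h^*(s,a,b,r^k)-Q_h^k(s,a,b) \le \max\{\PP_h V_{h+1}^k(s,a,b)+(H{-}h)\beta\iota - f_h^k(s,a,b)-u_h^k(s,a,b),\,0\}\le (H{+}1{-}h)\beta\iota,
\]
and then taking the maximum over $(a,b)\in\cA\times\cB$ advances the induction. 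Summing over $k$ yields the first claim with a cumulative bias of $\beta HK\iota$. For the second inequality, the optimistic update gives
\[
V_h^k(s_h^k) \le \zeta_h^k + V_{h+1}^k(s_{h+1}^k) + (2+1/H)\beta\,\|\varphi(z_h^k;\Whk)\|_{(\Lambda_h^k)^{-1}},
\]
where $\zeta_h^k:=\PP_h V_{h+1}^k(s_h^k,a_h^k,b_h^k)-V_{h+1}^k(s_{h+1}^k)$ is a bounded martingale difference. Unrolling from $h=1$ to $H$ and summing over $k$, Azuma-Hoeffding controls $\sum_{k,h}\zeta_h^k$ by $\cO(\sqrt{H^3K\log(1/\delta')})$ with probability at least $1-\delta'$, while the feature-norm sum is bounded by first switching $\Whk\to\W0$ via the $\iota$-bound in Lemma \ref{lem:bonus_concentrate_neural_game} (contributing $HK\iota$), then applying Jensen's inequality and the elliptical potential bound (Lemma \ref{lem:direct_sum_bound}) with Definition \ref{def:eff_dim} to yield $2H\sqrt{K\cdot\Gamma(K,\lambda;\ker_m)}+HK\iota$.

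The main obstacle is not in the chain of algebra, which is largely mechanical once the reduction is made, but in rigorously justifying that reduction: one must confirm that the bonus and regression in the game algorithm are built from features $\varphi(s,a,b;\W0)$ of the joint action so that the matrices $\Lhk$ and $\tLhk$ appearing in Lemma \ref{lem:bonus_concentrate_neural_game} are the exact analogues of their single-agent counterparts, and that the Q-function class $\overline{\cQ}$ induced by the game's value iteration (with $\max$ over $\cA\times\cB$) has the same covering-number upper bound used there. Once these identifications are verified, the proof is a direct transcription of the argument for Lemma \ref{lem:bonus_explore_neural}, with every occurrence of $z=(s,a)$ replaced by $z=(s,a,b)$ and the $\iota$-bias tracked through the induction and the feature-norm sum exactly as in the single-agent case.
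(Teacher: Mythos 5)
Your proposal is correct and follows essentially the same route as the paper, which proves this lemma precisely by reducing Algorithm \ref{alg:exploration_phase_game} to Algorithm \ref{alg:exploration_phase_single} with the composite action $\ba=(a,b)$ and then invoking the proof of Lemma \ref{lem:bonus_explore_neural} (induction with the $(H{+}1{-}h)\beta\iota$ bias, Azuma--Hoeffding for the martingale terms, the $\iota$-switch from $\varphi(\cdot;\Whk)$ to $\varphi(\cdot;\W0)$, and the elliptical potential bound). Your write-up simply spells out the details that the paper leaves to that reduction.
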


\begin{proof} By the reduction of Algorithm \ref{alg:exploration_phase_game} to Algorithm \ref{alg:exploration_phase_single}, we can apply the same proof for Lemma \ref{lem:bonus_explore_neural}, which completes the proof.
\end{proof}

\begin{lemma} \label{lem:bonus_concentrate_plan_neural_game} We define the event $\tilde{\cE}$ as that the following inequality holds $\forall (s,a,b) \in \cS \times \cA \times \cB, \forall h \in [H]$,
\begin{align*}
&|\PP_h \overline{V}_{h+1}(s,a,b) - \overline{f}_h(s,a,b)|  \leq \overline{u}_h(s,a) + \beta \iota,\\
&|\PP_h \underline{V}_{h+1}(s,a,b) - \underline{f}_h(s,a,b)|  \leq \underline{u}_h(s,a) + \beta \iota,\\
&\left|\|\varphi(z; \overline{W}_h)\|_{(\overline{\Lambda}_h)^{-1}} - \|\varphi(z; \W0)\|_{(\tilde{\Lambda}_h)^{-1}}\right| \leq \iota,\\
&\left|\|\varphi(z; \underline{W}_h)\|_{(\underline{\Lambda}_h)^{-1}} - \|\varphi(z; \W0)\|_{(\tilde{\Lambda}_h)^{-1}}\right| \leq \iota.
\end{align*}
where $\iota = 5K^{7/12} H^{1/6} m^{-1/12} \log^{1/4} m$, and we define $\overline{f}_h(z) = \Pi_{[0, H]}[f(z; \overline{W}_h)]$ and $\underline{f}_h(z) = \Pi_{[0, H]}[f(z; \underline{W}_h)]$ as well as
\begin{align*}
&\overline{\Lambda}_h = \sum_{\tau=1}^K \varphi(z_h^{\tau}; \overline{W}_h) \varphi(z_h^{\tau}; \overline{W}_h)^\top + \lambda \cdot I, \quad \underline{\Lambda}_h = \sum_{\tau=1}^K \varphi(z_h^{\tau}; \underline{W}_h) \varphi(z_h^{\tau}; \underline{W}_h)^\top + \lambda \cdot I ,\\
&\tilde{\Lambda}_h = \sum_{\tau=1}^K \varphi(z_h^{\tau}; \W0) \varphi(z_h^{\tau}; \W0)^\top + \lambda \cdot I.
\end{align*}
Setting $\beta = \tilde{B}_K$, $\tilde{R}_K = H\sqrt{K}$, $\varsigma^* = H/K$, and $\lambda=\F^2(1+1/K)$, $\varsigma^* = H/K$, if we set
\begin{align*}
\beta^2 &\geq 8R_Q^2 H^2 (1+\sqrt{\lambda/d})^2 + 32H^2 + \Gamma(K, \lambda; \ker_m)  \\
&\quad + 80H^2+32H^2\log\cN_{\infty}(\varsigma^*;\tilde{R}_K,  \tilde{B}_K) + 32H^2\log(2K /\delta'),
\end{align*}
and also 
\begin{align*}
m = \Omega(K^{19} H^{14}\log^3 m),
\end{align*}
then we have that with probability at least $1-2/m^2 - \delta'$, the event $\tilde{\cE}$ happens, i.e.,
\begin{align*}
\Pr(\tilde{\cE}) \geq 1-2/m^2 - \delta'.
\end{align*}
\end{lemma}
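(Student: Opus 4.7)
The plan is to mirror the proof of Lemma \ref{lem:bonus_concentrate_neural} but run it twice, once for each player, and at the concentration step replace the max-of-$Q$ covering bound used in the single-agent case by the max-min-of-$Q$ covering bound that was established in Lemma \ref{lem:bonus_concentrate_plan_game}. More precisely, because the exploration step of Algorithm \ref{alg:exploration_phase_game} is literally Algorithm \ref{alg:exploration_phase_single} with the augmented action $\ba = (a,b)$, all the per-point neural regularity statements of Lemma \ref{lem:regularity_neural_game} and the approximation-bias identity \eqref{eq:real_kernel_approx_neural} transfer verbatim; only the regression targets change from $V_{h+1}^k$ in the exploration phase to $\overline V_{h+1}$ or $\underline V_{h+1}$, and the sum now runs from $\tau = 1$ to $K$ rather than $k-1$.

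For the pair $(\overline f_h, \overline u_h)$ I would first define the linearization $\overline f_{\lin,h}(z) = \Pi_{[0,H]}[\langle \varphi(z;\W0), \overline W_{\lin,h} - \W0\rangle]$ where $\overline W_{\lin,h}$ solves the corresponding linearized ridge regression against $\overline{\yb}_h$. The decomposition $|\PP_h\overline V_{h+1}(z) - \overline f_h(z)| \le |\PP_h\overline V_{h+1}(z)-\tilde f(z)| + |\tilde f(z)-\overline f_{\lin,h}(z)|+|\overline f_{\lin,h}(z)-\overline f_h(z)|$ then splits into exactly the three error sources handled in Lemma \ref{lem:bonus_concentrate_neural}: the misspecification error $\cO(R_QH\sqrt{\log(mKH)/m})$ from \eqref{eq:real_kernel_approx_neural}; the non-linearity error $\cO(\F K^{5/3}H^{4/3}m^{-1/6}\sqrt{\log m})$ obtained via Lemma \ref{lem:regularity_neural_game}; and the ``kernel'' error that reduces to bounding Term(II)+Term(III)+Term(IV) of that proof. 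Term(IV) is the only genuinely stochastic piece, and it becomes $\big\|\sum_{\tau=1}^K [\overline V_{h+1}(s_{h+1}^\tau)-\PP_h\overline V_{h+1}(z_h^\tau)]\varphi(z_h^\tau;\W0)\big\|_{\tilde\Lambda_h^{-1}}$.

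To control Term(IV) I would pass through an intermediate value function $\overline V_{\lin,h+1}$ generated by the linearized $Q$-function (which after the $\F$-normalization $\vartheta = \varphi/\F$ lives in $\overline{\cQ}(r_h, \tilde R_K, \tilde B_K)$ with $\tilde R_K=H\sqrt K$), bound the $K$-sum of $\|\varphi(z;\W0)\|_{\tilde\Lambda_h^{-1}}$-weighted perturbations by $m^{-1/12}$-small terms exactly as at the end of the Lemma \ref{lem:bonus_concentrate_neural} proof, and then invoke Lemma \ref{lem:self_normalize_uniform} on the class $\overline{\cV}(r_h,\tilde R_K,\tilde B_K)=\{s\mapsto \max_{\pi'}\min_{\nu'}\EE_{\pi',\nu'}Q(s,a,b):Q\in\overline{\cQ}\}$ with failure probability $\delta'/2$. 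The key point that must be imported from Lemma \ref{lem:bonus_concentrate_plan_game} is that $\cN^{\overline{\cV}}_{\dist}(\varsigma^*;\tilde R_K,\tilde B_K)\le \cN_\infty(\varsigma^*;\tilde R_K,\tilde B_K)$ by the non-expansiveness of max-min in $\ell_\infty$ (derivation \eqref{eq:covering_minmax}). The repeat for $(\underline f_h,\underline u_h)$ is identical except that $\underline V$ lies in $\underline{\cV}$ built from $\underline{\cQ}$, whose covering number also upper-bounds to the same $\cN_\infty$; one applies Lemma \ref{lem:self_normalize_uniform} again at level $\delta'/2$ and a union bound yields the stated high-probability guarantee. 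The bonus-difference inequalities are unaltered: they are a purely deterministic Lemma \ref{lem:regularity_neural_game}-based computation giving $|\,\|\varphi(z;\overline W_h)\|_{\overline\Lambda_h^{-1}} - \|\varphi(z;\W0)\|_{\tilde\Lambda_h^{-1}}|\le \iota$ (and likewise for $\underline W_h$), exactly as in \eqref{eq:bonus_diff}.

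Finally I would choose $\beta$ to absorb $\sqrt\lambda R_QH/\sqrt d$, the misspecification term, and $H[4\Gamma(K,\lambda;\ker_m)+4\log\cN_\infty(\varsigma^*;\tilde R_K,\tilde B_K)+10+4\log(2K/\delta')]^{1/2}$, with the $m=\Omega(K^{19}H^{14}\log^3 m)$ assumption ensuring that all the $m^{-1/6}$- and $m^{-1/12}$-scale remainders are absorbed into $\beta/2$ and $R_QH$. The resulting sufficient condition matches the displayed requirement on $\beta^2$ once the union bound over the two regression errors and the probability $2/m^2$ for Lemma \ref{lem:regularity_neural_game} are accounted for, giving the claimed probability $1-2/m^2-\delta'$. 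The main obstacle I anticipate is bookkeeping rather than a new inequality: carefully tracking the two separate covering arguments, the $\delta'/2+\delta'/2$ union bound, and making sure the $\F$-normalized kernel $\tilde\ker_m$ (used so that the linearized $Q$-function literally lies in $\overline{\cQ}$) produces the same information gain $\Gamma(K,\lambda;\ker_m)$ up to the substitution $\lambda\leftarrow \lambda/\F^2$ that appears in the parameter setting.
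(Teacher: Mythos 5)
Your proposal is correct and follows essentially the same route as the paper: the paper's own proof is just a short pointer that combines the argument of Lemmas \ref{lem:bonus_concentrate_neural} and \ref{lem:bonus_concentrate_plan_neural} with the max-min covering-number device of Lemma \ref{lem:bonus_concentrate_plan_game} (applied to $\overline{\cV}$ and $\underline{\cV}$ with the shared bound $\cN_\infty$, a $\delta'/2$ union bound for the two players, and $\tilde B_K=\beta$), which is exactly what you do, only spelled out in more detail.
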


\begin{proof} The proof of this lemma follows our proof of Lemmas \ref{lem:bonus_concentrate_neural} and \ref{lem:bonus_concentrate_plan_neural} and apply some similar ideas from the proof of Lemma \ref{lem:bonus_concentrate_plan_game}. Particularly, to deal with the upper bounds of the estimation errors of $\PP_h\overline{V}_{h+1}$ and $\PP_h\underline{V}_{h+1}$, we define the two value function space $\overline{\cV}$ and $\underline{\cV}$ and show their covering numbers similar to the proof of Lemma \ref{lem:bonus_concentrate_plan_game}. 
Then, we further use the proof of Lemma \ref{lem:bonus_concentrate_plan_neural}, which is derived from the proof of Lemma  \ref{lem:bonus_concentrate_neural}, to show the eventual results in this lemma. In the proof of this lemma, we set $\tilde{B}_K=\beta$ instead of $(1+1/H)\beta$ due to the structure of the planning phase. This completes the proof.
\end{proof}

\begin{lemma} \label{lem:bonus_plan_neural_game} Conditioned on the event $\tilde{\cE}$ as defined in Lemma \ref{lem:bonus_concentrate_plan_neural_game}, we have
\begin{align}
\begin{aligned} \label{eq:optimism_neural_game_1} 
&V_h^\dagger(s, r) \leq \overline{V}_h(s) + (H+1-h) \beta \iota,  \forall s \in \cS, \forall h \in [H],\\
&\overline{V}_h(s) \leq \EE_{a\sim \pi_h, b\sim \bre(\pi)_h} [(\PP_h\overline{V}_{h+1}  + r_h + 2\overline{u}_h)(s,a,b)]+ \beta \iota, \forall s \in \cS, \forall h \in [H],
\end{aligned}
\end{align}
\vspace{-0.8cm}
\begin{align}
\begin{aligned}\label{eq:optimism_neural_game_2} 
&V_h^\dagger(s, r) \geq \underline{V}_h(s)-(H+1-h)\beta \iota, \forall s \in \cS, \forall h \in [H],\\ 
&\underline{V}_h(s)\geq \EE_{a\sim \bre(\nu)_h, b\sim \nu_h} [(\PP_h\underline{V}_{h+1}  - r_h - 2\underline{u}_h)(s,a,b)] - \beta\iota, \forall s \in \cS, \forall h \in [H].
\end{aligned}
\end{align}

\end{lemma}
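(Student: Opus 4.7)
The proof will closely follow the pattern of Lemma \ref{lem:bonus_plan_game} (the kernel game analogue) while tracking the additive $\beta\iota$ bias that arose in the single-agent neural argument of Lemma \ref{lem:bonus_plan_neural}. The key inputs are the two-sided concentration bounds from Lemma \ref{lem:bonus_concentrate_plan_neural_game}, namely $|\PP_h \overline{V}_{h+1} - \overline{f}_h| \leq \overline{u}_h + \beta\iota$ and $|\PP_h \underline{V}_{h+1} - \underline{f}_h| \leq \underline{u}_h + \beta\iota$, together with the structure of the NE-based updates $\overline{V}_h(s) = \min_{\nu'} \EE_{a\sim\pi_h,\, b\sim\nu'}[\overline{Q}_h(s,a,b)]$ and $\underline{V}_h(s) = \max_{\pi'} \EE_{a\sim\pi',\, b\sim\nu_h}[\underline{Q}_h(s,a,b)]$ used in Algorithm \ref{alg:plan_phase_game}.

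For the first inequality of \eqref{eq:optimism_neural_game_1}, I would proceed by backward induction on $h$. The base case $h=H+1$ is trivial since $V_{H+1}^\dagger = \overline{V}_{H+1} = 0$. For the inductive step, assuming $V_{h+1}^\dagger(s,r) \leq \overline{V}_{h+1}(s) + (H-h)\beta\iota$, I bound
\[
Q_h^\dagger - \overline{Q}_h \leq \max\{\PP_h V_{h+1}^\dagger - \overline{f}_h - \overline{u}_h,\, 0\} \leq \max\{\PP_h \overline{V}_{h+1} + (H-h)\beta\iota - \overline{f}_h - \overline{u}_h,\, 0\} \leq (H+1-h)\beta\iota,
\]
where the last step applies Lemma \ref{lem:bonus_concentrate_plan_neural_game}. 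Taking the max-min operator on both sides and using its non-expansiveness gives $V_h^\dagger \leq \overline{V}_h + (H+1-h)\beta\iota$. For the second inequality in \eqref{eq:optimism_neural_game_1}, I exploit that $\overline{V}_h(s) = \min_{\nu'} \EE_{a\sim\pi_h, b\sim\nu'}[\overline{Q}_h(s,a,b)] \leq \EE_{a\sim\pi_h,\, b\sim\bre(\pi)_h}[\overline{Q}_h(s,a,b)]$; then applying $\overline{f}_h \leq \PP_h \overline{V}_{h+1} + \overline{u}_h + \beta\iota$ inside the clipped Q-function and using non-negativity to drop the $\min\{\cdot,H\}^+$ yields the claim with an additional $\beta\iota$ bias term.

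For \eqref{eq:optimism_neural_game_2}, the argument is symmetric: Player 2's role is reversed, so $\underline{V}_h$ serves as a pessimistic estimate of $V_h^\dagger$. I would again induct backward, using $\underline{Q}_h = \Pi_{[0,H]}[\underline{f}_h + r_h - \underline{u}_h]$ and the lower-direction concentration $\underline{f}_h \geq \PP_h\underline{V}_{h+1} - \underline{u}_h - \beta\iota$ to show $\underline{Q}_h - Q_h^\dagger \leq (H+1-h)\beta\iota$ at each step, then take max-min. The second inequality of \eqref{eq:optimism_neural_game_2} follows from $\underline{V}_h(s) = \max_{\pi'}\EE_{a\sim\pi',\, b\sim\nu_h}[\underline{Q}_h(s,a,b)] \geq \EE_{a\sim\bre(\nu)_h,\, b\sim\nu_h}[\underline{Q}_h(s,a,b)]$ combined with the lower concentration.

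The main subtlety will be correctly handling the clipping operator $\Pi_{[0,H]}[\cdot] = \min\{\cdot,H\}^+$ when deriving the second inequalities (the ones without induction). Because we drop the $\min\{\cdot, H\}$ via non-negativity of $\PP_h\overline{V}_{h+1} + r_h + 2\overline{u}_h$, we must ensure the $\beta\iota$ slack appears only once per step rather than compounding across steps — exactly one factor of $\beta\iota$ per Bellman backup, matching the statement. This is the same bookkeeping care that made Lemma \ref{lem:bonus_plan_neural} nontrivial, and the game setting inherits no extra difficulty beyond running the argument once with the $(\pi_h, \bre(\pi)_h)$ pair and once with the $(\bre(\nu)_h, \nu_h)$ pair. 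Detailed calculations can be found in Section \ref{sec:proof_main_neural_game}.
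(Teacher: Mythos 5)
Your proposal is correct and follows essentially the same route as the paper: backward induction for the optimism/pessimism inequalities, the NE identity $\overline{V}_h(s)=\min_{\nu'}\EE_{a\sim\pi_h,b\sim\nu'}[\overline{Q}_h]\leq\EE_{a\sim\pi_h,b\sim\bre(\pi)_h}[\overline{Q}_h]$ (and its mirror for $\underline{V}_h$) for the one-step Bellman bounds with a single $\beta\iota$ per backup, and the Player-2 symmetry argument for \eqref{eq:optimism_neural_game_2}, exactly as in the paper's proof (which reduces to the proofs of Lemmas \ref{lem:bonus_plan_game} and \ref{lem:bonus_plan_neural}). One cosmetic remark: to show $\underline{Q}_h-Q_h^\dagger\leq(H+1-h)\beta\iota$ you actually need the upper side $\underline{f}_h\leq\PP_h\underline{V}_{h+1}+\underline{u}_h+\beta\iota$ rather than the lower side you cite, but since Lemma \ref{lem:bonus_concentrate_plan_neural_game} gives a two-sided bound this does not affect the argument.
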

\begin{proof} We prove the first inequality in  \eqref{eq:optimism_neural_game_1} by induction. For $h = H+1$, we have $V_{H+1}^\dagger(s, r)  = \overline{V}_{H+1}(s) = 0$ for any $s\in \cS$. Then, we assume that $V_{h+1}^\dagger(s, r)  \leq \overline{V}_{h+1}(s) + (H-h)\beta\iota$. Thus, conditioned on the event $\tilde{\cE}$ as defined in Lemma \ref{lem:bonus_concentrate_plan_neural_game}, we have
\begin{align*}
&Q_h^\dagger(s, a,b, r) - \overline{Q}_h(s, a,b) \\
&\qquad= r_h(s,a,b) + \PP_h V_{h+1}^\dagger(s,a,b,r)  -  \min \{ [r_h(s,a,b) + \overline{f}_h(s,a,b) + u_h(s,a,b)], H \}^+  \\
&\qquad\leq  \max \{ [\PP_h V_{h+1}^\dagger(s,a,b,r)  - \overline{f}_h(s,a,b) - \overline{u}_h(s,a,b)], 0 \}  \\
&\qquad\leq  \max \{ [\PP_h V_{h+1}(s,a,b) + (H-h)\beta \iota  - f_h(s,a,b) - \overline{u}_h(s,a,b)], 0 \}  \\
&\qquad\leq  (H+1-h)\beta \iota,
\end{align*}
where the first inequality is due to $0 \leq r_h(s,a,b) + \PP_h V_{h+1}^\dagger(s,a,b,r) \leq H$ and $\min \{x, y\}^+ \geq \min \{x, y\}$, the second inequality is by the assumption that $V_{h+1}^\dagger(s,a,b, r)\leq \overline{V}_{h+1}(s,a,b) + (H-h)\beta \iota$, the last inequality is by Lemma \ref{lem:bonus_concentrate_plan_neural_game} such that $|\PP_h \overline{V}_{h+1}(s,a,b) - \overline{f}_h(s,a,b) | \leq \overline{u}_h(s,a,b) + \beta \iota$ holds for any $(s,a,b)\in \cS \times \cA\times \cB$ and $(k,h)\in [K]\times[H]$. The above inequality leads to 
\begin{align*}
V_h^\dagger(s, r) &= \max_{\pi_h'}\min_{\nu_h'}\EE_{a\sim \pi_h', b\sim \nu_h'} [Q_h^\dagger(s, a, b, r)] \\
&\leq \max_{\pi_h'}\min_{\nu_h'} \EE_{a\sim \pi_h', b\sim \nu_h'}[\overline{Q}_h(s, a,b)] +  (H+1-h)\beta \iota \\
&= \overline{V}_h(s) +  (H+1-h)\beta \iota. 
\end{align*}
Therefore, we have
\begin{align*}
V_h^\dagger(s, r)  \leq  \overline{V}_h(s)+  (H+1-h)\beta \iota, \forall h\in [H], \forall s\in \cS. 
\end{align*}
We further prove the second inequality in \eqref{eq:optimism_neural_game_1}. We have
\begin{align*}
\overline{Q}_h(s, a,b) &= \min \{ [r_h(s,a,b) + \overline{f}_h(s,a,b) + \overline{u}_h(s,a,b)], H \}^+\\
& \leq \min \{ [r_h(s,a,b) + \PP_h \overline{V}_{h+1}(s,a,b) + 2\overline{u}_h(s,a,b) + \beta \iota], H \}^+\\
& \leq r_h(s,a,b) + \PP_h \overline{V}_{h+1}(s,a,b) + 2\overline{u}_h(s,a,b) + \beta \iota,
\end{align*}
where the first inequality is also by Lemma \ref{lem:bonus_concentrate_plan_neural_game} such that $|\PP_h \overline{V}_{h+1}(s,a,b) - \overline{f}_h(s,a,b) | \leq \overline{u}_h(s,a,b)+ \beta \iota$, and the last inequality is because of the non-negativity of  $r_h(s,a,b) + \PP_h V_{h+1}(s,a,b) + 2\overline{u}_h(s,a,b)+ \beta \iota$. Therefore, we have
\begin{align*}
\overline{V}_h(s) &=  \min_{\nu'} \EE_{a\sim \pi_h, b\sim \nu'} \overline{Q}_h(s,a, b)\\
&\leq  \EE_{a\sim \pi_h, b\sim \bre(\pi)_h} \overline{Q}_h(s,a, b)\\
&\leq  \EE_{a\sim \pi_h, b\sim \bre(\pi)_h} [ r_h(s,a,b) + \PP_h \overline{V}_{h+1}(s,a,b) + 2\overline{u}_h(s,a,b)] + \beta \iota.
\end{align*}
For the inequalities in \eqref{eq:optimism_neural_game_2}, we can prove them in the same way to proving \eqref{eq:optimism_neural_game_1}. From the perspective of Player 2, this player is trying to find a policy to maximize the cumulative rewards w.r.t. a reward function $\{-r_h(s,a,b)\}_{h\in [H]}$. Thus, one can further use the proof technique for \eqref{eq:optimism_neural_game_1} to prove \eqref{eq:optimism_neural_game_2}. This completes the proof.
\end{proof}

\begin{lemma} \label{lem:explore_plan_connect_neural_game}  With the exploration and planning phases, conditioned on the event $\cE$ defined in Lemma \ref{lem:bonus_concentrate_neural_game} and the event $\tilde{\cE}$ defined in Lemma \ref{lem:bonus_concentrate_plan_neural_game}, we have the following inequalities
\begin{align*}
&K \cdot V_1^{\pi, \bre(\pi)}(s_1, \overline{u}/H) \leq  \sum_{k=1}^K V_1^*(s_1, r^k)  + 2K\beta \iota, \\
&K \cdot V_1^{\bre(\nu) , \nu }(s_1, \underline{u}/H) \leq  \sum_{k=1}^K V_1^*(s_1, r^k)+ 2K\beta \iota.
\end{align*}

\end{lemma}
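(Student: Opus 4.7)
The plan is to mirror the argument of Lemma \ref{lem:explore_plan_connect_neural}, applied separately to the two planning bonuses $\overline{u}_h$ and $\underline{u}_h$ of the Markov game. Both inequalities have the same structure, so I focus on the first; the second follows verbatim by replacing $(\overline{W}_h,\overline{\Lambda}_h,\overline{u}_h)$ with $(\underline{W}_h,\underline{\Lambda}_h,\underline{u}_h)$.

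First I would invoke the event $\tilde{\cE}$ from Lemma \ref{lem:bonus_concentrate_plan_neural_game} and the event $\cE$ from Lemma \ref{lem:bonus_concentrate_neural_game} to replace the trained-feature norms by their linearized counterparts: $|\|\varphi(z;\overline{W}_h)\|_{\overline{\Lambda}_h^{-1}} - \|\varphi(z;\W0)\|_{\tilde{\Lambda}_h^{-1}}| \leq \iota$ and $|\|\varphi(z;W_h^k)\|_{(\Lambda_h^k)^{-1}} - \|\varphi(z;\W0)\|_{(\tilde{\Lambda}_h^k)^{-1}}| \leq \iota$, where $\tilde{\Lambda}_h$ and $\tilde{\Lambda}_h^k$ are the Gram operators built from $\varphi(\cdot;\W0)$. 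Since the summation in $\tilde{\Lambda}_h$ runs up to $K$ while the one in $\tilde{\Lambda}_h^k$ only up to $k-1$, we have $\tilde{\Lambda}_h \succcurlyeq \tilde{\Lambda}_h^k$, hence $\tilde{\Lambda}_h^{-1} \preccurlyeq (\tilde{\Lambda}_h^k)^{-1}$ (the operator-level monotonicity argument used in Lemma \ref{lem:explore_plan_connect}). Combining these three facts gives $\beta\,\overline{w}_h(z) \leq \beta\, w_h^k(z) + 2\beta\iota$ for every $z=(s,a,b)$.

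Next I would truncate at $H$: since $\overline{u}_h = \min\{\beta\overline{w}_h, H\}$ and $u_h^k = \min\{\beta w_h^k, H\}$, the inequality above propagates to $\overline{u}_h(z) \leq u_h^k(z) + 2\beta\iota = H\, r_h^k(z) + 2\beta\iota$, using the definition $r_h^k = u_h^k/H$ in Algorithm \ref{alg:exploration_phase_game}. Dividing by $H$ and taking expectations along any trajectory of length $H$, the value function comparison
\begin{align*}
V_1^{\pi,\bre(\pi)}(s_1, \overline{u}/H) \leq V_1^{\pi,\bre(\pi)}(s_1, r^k) + 2\beta\iota \leq V_1^*(s_1, r^k) + 2\beta\iota
\end{align*}
follows, where the last step uses $V_1^{\pi,\bre(\pi)}(s_1, r^k) \leq \max_{\pi',\nu'} V_1^{\pi',\nu'}(s_1, r^k) = V_1^*(s_1, r^k)$. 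Summing over $k=1,\dots,K$ yields the first claim; the second claim is identical with $\bre(\nu),\nu,\underline{u}$ in place of $\pi,\bre(\pi),\overline{u}$.

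The only nontrivial ingredient is the operator-monotonicity step $\tilde{\Lambda}_h^{-1} \preccurlyeq (\tilde{\Lambda}_h^k)^{-1}$, but this is a direct finite-dimensional fact in $\RR^{2md}$ and is strictly easier than the RKHS version already carried out in Lemma \ref{lem:explore_plan_connect}; everything else is bookkeeping. The main ``obstacle'' — really just a notational one — is keeping track of the fact that $\overline{u}_h$ and $\underline{u}_h$ are built from different learned parameters $\overline{W}_h,\underline{W}_h$ yet both get compared to the \emph{same} exploration-phase bonus $u_h^k$ through the common linearization at $\W0$; this is exactly what Lemma \ref{lem:bonus_concentrate_plan_neural_game} arranges.
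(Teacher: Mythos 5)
Your proposal is correct and follows essentially the same route as the paper's proof: on the events $\cE$ and $\tilde{\cE}$ you pass to the linearized features at $\W0$, use monotonicity of the Gram operators to get $\overline{u}_h \leq H r_h^k + 2\beta\iota$ (and likewise for $\underline{u}_h$), and then compare value functions and sum over $k$. The only cosmetic difference is that the paper first bounds $V_1^{\pi,\bre(\pi)}(s_1,\overline{u}/H)$ by $V_1^*(s_1,\overline{u}/H)$ and then applies the pointwise reward comparison, while you apply it under the fixed policy pair first; both orders are valid.
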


\begin{proof} First, we have $K \cdot V_1^{\pi, \bre(\pi)}(s_1, \overline{u}/H) \leq K \cdot V_1^*(s_1, \overline{u}/H)$ as well as $K \cdot V_1^{\bre(\nu) , \nu }(s_1, \underline{u}/H) \leq K \cdot V_1^*(s_1, \underline{u}/H)$ according to the definition of $V^*_1$. Thus, to prove this lemma, we only need to show
\begin{align*}
&K \cdot V_1^*(s_1, \overline{u}/H) \leq  \sum_{k=1}^K V_1^*(s_1, r^k) + 2K\beta \iota,\\
&K \cdot V_1^*(s_1, \underline{u}/H) \leq  \sum_{k=1}^K V_1^*(s_1, r^k) + 2K\beta \iota.
\end{align*}
Because the constructions of the planning bonus $\overline{u}_h$ and the exploration reward $r_h^k$ are the same as the ones for the single-agent case, similar to the proof of Lemma \ref{lem:explore_plan_connect_neural}, and according to Lemmas \ref{lem:bonus_concentrate_neural_game} and  \ref{lem:bonus_concentrate_plan_neural_game}, we have the following results
\begin{align*}
\overline{u}_h(s,a,b) \leq H\cdot  r_h^k(s,a,b) + 2\beta\iota, \quad \underline{u}_h(s,a,b) \leq H\cdot  r_h^k(s,a,b) + 2\beta\iota
\end{align*}
such that
\begin{align*}
V_1^*(s_1, \overline{u}/H) \leq  V_1^*(s_1, r^k) + 2 \beta \iota,  \quad  V_1^*(s_1, \underline{u}/H) \leq  V_1^*(s_1, r^k) + 2 \beta \iota, 
\end{align*}
Therefore, we eventually obtain 
\begin{align*}
&K \cdot V_1^{\pi, \bre(\pi)}(s_1, \overline{u}/H) \leq K \cdot V_1^*(s_1, \overline{u}/H) \leq  \sum_{k=1}^K V_1^*(s_1, r^k)+ 2 K \beta \iota,\\
&K \cdot V_1^{\bre(\nu) , \nu }(s_1, \underline{u}/H) \leq K \cdot V_1^*(s_1, \underline{u}/H) \leq  \sum_{k=1}^K V_1^*(s_1, r^k)+ 2K \beta \iota.
\end{align*}
This completes the proof.
\end{proof}

\subsection{Proof of Theorem \ref{thm:main_neural_game}} \label{sec:proof_main_neural_game}
\begin{proof} Conditioned on the events $\cE$ and $\tilde{\cE}$ defined in Lemmas \ref{lem:bonus_concentrate_neural_game} and \ref{lem:bonus_concentrate_plan_neural_game}, we have
\begin{align}\label{eq:proof_start_neural_game}
V_1^\dagger(s_1, r) - V_1^{\pi, \bre(\pi)}(s_1, r) \leq \overline{V}_1(s_1) - V_1^{\pi, \bre(\pi)}(s_1, r) + H\beta\iota,
\end{align}
where the inequality is by Lemma \ref{lem:bonus_plan_neural_game}. Further by this lemma, we have
\begin{align*}
&\overline{V}_h(s_h) - V_h^{\pi, \bre(\pi)}(s_h, r) \\
&\quad \leq  \EE_{a_h\sim \pi_h, b_h\sim \bre(\pi)_h} [(\PP_h\overline{V}_{h+1}  + r_h + 2\overline{u}_h)(s_h,a_h,b_h)]- V_h^{\pi, \bre(\pi)}(s_h, r) +\beta\iota \\
&\quad= \EE_{a_h\sim \pi_h, b_h\sim \bre(\pi)_h} [(r_h+ \PP_h \overline{V}_{h+1}+ 2\overline{u}_h)(s_h,a_h, b_h) - r_h(s_h,a_h, b_h) \\
&\quad\quad- \PP_h V_{h+1}^{\pi, \bre(\pi)}(s_h,a_h,b_h, r) ]+\beta\iota\\
&\quad= \EE_{a_h\sim \pi_h, b_h\sim \bre(\pi)_h} [ \PP_h \overline{V}_{h+1}(s_h,a_h, b_h) - \PP_h V_{h+1}^{\pi, \bre(\pi)}(s_h,a_h,b_h, r) + 2\overline{u}_h(s_h,a_h,b_h) ]+\beta\iota\\
&\quad= \EE_{a_h\sim \pi_h, b_h\sim \bre(\pi)_h, s_{h+1}\sim\PP_h} [ \overline{V}_{h+1}(s_{h+1}) -  V_{h+1}^{\pi, \bre(\pi)}(s_{h+1}, r) + 2\overline{u}_h(s_h,a_h,b_h) ]+\beta\iota.
\end{align*}
Recursively applying the above inequality and making use of $\overline{V}_{H+1}(s,r) = V_{H+1}^{\pi, \bre(\pi)}(s)  = 0$ gives
\vspace{-0.1cm}
\begin{align*}
&\overline{V}_1(s_1) - V_1^{\pi, \bre(\pi)}(s_1,r) \\
&\qquad  \leq  \EE_{\forall h\in [H]: ~a_h\sim \pi_h, b_h\sim \bre(\pi)_h, s_{h+1}\sim\PP_h}\left[\sum_{h=1}^H 2\overline{u}_h(s_h,a_h, b_h)\Bigg| s_1 \right]\\
&\qquad=2H \cdot V_1^{\pi, \bre(\pi)}(s_1, \overline{u}/H) + H\beta\iota. 
\end{align*}
Combining with \eqref{eq:proof_start_neural_game} gives
\begin{align*}
&V_1^\dagger(s_1, r) - V_1^{\pi, \bre(\pi)}(s_1,r)\\
&\qquad \leq 2H \cdot V_1^{\pi, \bre(\pi)}(s_1, \overline{u}/H) + 2H\beta\iota \leq \frac{2H}{K} \sum_{k=1}^K V_1^*(s_1, r^k) + 4H\beta\iota\\
&\qquad\leq \frac{2H}{K} \cO\left(\sqrt{H^3 K \log (1/\delta')} + \beta\sqrt{H^2 K \cdot \Gamma(K, \lambda; \ker_m)}\right) + (H  + 4)H\beta \iota \\
&\qquad \leq \cO\left([\sqrt{H^5 \log (1/\delta')} + \beta\sqrt{H^4 \cdot \Gamma(K, \lambda; \ker_m)}] /\sqrt{K} + H^2 \beta \iota \right),
\end{align*}
where the second inequality is due to Lemma \ref{lem:explore_plan_connect_neural_game} and the third inequality is by Lemma \ref{lem:bonus_explore_neural_game}.

Next, we give the upper bound of $V_1^{\bre(\nu), \nu}(s_1, r)- V_1^\dagger(s_1, r)$. Conditioned on the event $\cE$ defined in Lemma \ref{lem:bonus_concentrate_neural_game} and the event $\tilde{\cE}$ defined in Lemma \ref{lem:bonus_concentrate_plan_neural_game}, we have
\begin{align}\label{eq:proof_start_neural_game_2}
V_1^{\bre(\nu), \nu}(s_1, r)- V_1^\dagger(s_1, r) \leq V_1^{\bre(\nu), \nu}(s_1, r)  - \underline{V}_1(s_1) + H\beta\iota,
\end{align}
where the inequality is by Lemma \ref{lem:bonus_plan_neural_game}. Further by this lemma, we have
\begin{align*}
&V_h^{\bre(\nu), \nu}(s_h, r)  - \underline{V}_h(s_h)  \\
&\ \ \ \leq  V_h^{\bre(\nu), \nu}(s_h, r) - \EE_{a\sim \bre(\nu)_h, b\sim \nu_h} [(\PP_h\underline{V}_{h+1}  - r_h - 2\underline{u}_h)(s_h,a_h,b_h)] +\beta\iota \\
&\ \ \ = \EE_{a_h\sim \bre(\nu)_h, b_h\sim \nu_h} [\PP_h V_{h+1}^{\bre(\nu), \nu}(s_h,a_h,b_h, r) - \PP_h \underline{V}_{h+1}(s_h,a_h, b_h)  + 2\underline{u}_h(s_h,a_h,b_h) ]+\beta\iota\\
&\ \ \ = \EE_{a_h\sim \bre(\nu)_h, b_h\sim \nu_h, s_{h+1}\sim\PP_h} [ V_{h+1}^{\bre(\nu), \nu}(s_{h+1}, r) - \PP_h \underline{V}_{h+1}(s_{h+1})  + 2\underline{u}_h(s_h,a_h,b_h) ]+\beta\iota.
\end{align*}
Recursively applying the above inequality gives
\begin{align*}
V_1^{\bre(\nu), \nu}(s_1, r)  - \underline{V}_1(s_1) \leq 2H \cdot V_1^{\bre(\nu), \nu}(s_1, \underline{u}/H) + H\beta\iota. 
\end{align*}
Combining with \eqref{eq:proof_start_neural_game_2} gives
\begin{align*}
V_1^{\bre(\nu), \nu}(s_1, r)- V_1^\dagger(s_1, r)&\leq 2H \cdot V_1^\pi(s_1, \underline{u}/H) + 2H\beta\iota \leq \frac{2H}{K} \sum_{k=1}^K V_1^*(s_1, r^k) + 4H\beta\iota\\
&\leq \cO\left([\sqrt{H^5 \log (1/\delta')} + \beta\sqrt{H^4 \cdot \Gamma(K, \lambda; \ker_m)}] /\sqrt{K} + H^2 \beta \iota \right),
\end{align*}
where the second inequality is due to Lemma \ref{lem:explore_plan_connect_neural_game} and the last inequality is by Lemma \ref{lem:bonus_explore_neural_game}. Thus, we eventually have
\begin{align*}
&V_1^{\bre(\nu), \nu}(s_1, r)- V_1^{\pi, \bre(\pi)}(s_1, r) \\
&\qquad \leq   \cO\left(\big[\sqrt{H^5 \log (1/\delta')} + \beta\sqrt{H^4 \cdot \Gamma(K, \lambda; \ker_m)}\big] /\sqrt{K} + H^2 \beta \iota \right).
\end{align*}
Moreover, we also have $P(\cE \wedge \tilde{\cE}) \geq 1-2\delta'-4/m^2$ by the union bound. Therefore, since $\beta \geq H$ as shown in Lemmas \ref{lem:bonus_concentrate_neural_game} and \ref{lem:bonus_concentrate_plan_neural_game}, setting $\delta' = 1/(4K^2H^2)$, we obtain that with probability at least $1-1/(2K^2H^2)-4/m^2$, 
\begin{align*}
V_1^*(s_1, r) - V_1^\pi(s_1, r) \leq \cO\left(\beta\sqrt{H^4 [ \Gamma(K, \lambda; \ker_m)+\log(KH)]} /\sqrt{K} + H^2 \beta \iota\right).
\end{align*}
The event $\cE \wedge \tilde{\cE} $ happens if we further let $\beta$ satisfy
\begin{align*}
\beta^2 &\geq 8R_Q^2 H^2 (1+\sqrt{\lambda/d})^2  + 32H^2 \Gamma(K, \lambda; \ker_m) + 80H^2 \\
&\quad +32H^2\log\cN_{\infty}(\varsigma^*;R_K,  2\beta) + 96H^2\log(2KH).
\end{align*}
where guarantees the conditions in Lemmas \ref{lem:bonus_concentrate_neural_game} and \ref{lem:bonus_concentrate_plan_neural_game} hold. This completes the proof.
\end{proof}

\section{Other Supporting Lemmas}
\begin{lemma}[Lemma E.2 of \citet{yang2020provably}] \label{lem:self_normalize_uniform} Let $\{s_\tau\}_{\tau=1}^\infty$ and $\{\phi_\tau\}_{\tau=1}^\infty$ be $\cS$-valued and $\cH$-valued stochastic processes adapted to filtration $\{\cF_\tau\}_{\tau=0}^\infty$, respectively, where we assume that $\|\phi_\tau\| \leq 1$ for all $\tau \geq 1$. Moreover, for any $t \geq 1$, we let $\cK_t \in  \RR^{t\times t}$ be the Gram matrix of $\{\phi_\tau \}_{\tau\in[t]}$ and define an operator $\Lambda_t: \cH \mapsto \cH$ as $\Lambda_t = \lambda I  + \sum_{\tau=1}^t\phi_\tau \phi_\tau^\top$ with $\lambda > 1$. Let $\cV \subseteq \{V : \cS \mapsto [0, H]\}$ be a class of bounded functions on $\cS$. Then for any $\delta \in (0, 1)$, with probability at least $1 - \delta$, we have simultaneously for all $t \geq 1$ that
\begin{align*}
&\sup_{V\in \cV} \left\| \sum_{\tau=1}^t \phi_\tau \{V(s_\tau) - \EE[V(s_\tau)|\cF_{\tau-1}] \}\right\|_{\Lambda_t^{-1}}^2 \\
&\qquad \leq 2H^2 \log\det(I+\cK_t/\lambda) + 2H^2t(\lambda-1)+4H^2\log(\cN_\epsilon/\delta)+ 8t^2\epsilon^2/\lambda,
\end{align*}
where  $\cN_\epsilon$ is the $\epsilon$-covering number of $\cV$ with respect to the distance $\dist(\cdot, \cdot):=\sup_{\cS}|V_1(s)-V_2(s)|$.
\end{lemma}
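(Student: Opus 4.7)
The plan is to combine a classical self-normalized concentration inequality for Hilbert-space valued martingales (Abbasi-Yadkori type) with an $\epsilon$-net discretization of the function class $\cV$, exploiting the fact that the self-normalized bound is already uniform in $t$ by construction (it comes from a supermartingale stopped at an arbitrary time).

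First, fix a $V \in \cV$ and let $\eta_\tau^V := V(s_\tau) - \EE[V(s_\tau) \mid \cF_{\tau-1}]$. Then $\{\eta_\tau^V\}$ is a bounded martingale difference sequence (in $[-H,H]$), hence conditionally $H$-sub-Gaussian. Applying the standard Hilbert-space self-normalized concentration (e.g., Theorem 1 of Abbasi-Yadkori et al., extended to RKHS by Chowdhury and Gopalan), with probability at least $1-\delta_0$, simultaneously for all $t \geq 1$,
\begin{align*}
\Bigl\|\sum_{\tau=1}^t \phi_\tau \eta_\tau^V \Bigr\|_{\Lambda_t^{-1}}^2 \leq H^2 \log\det(I + \cK_t/\lambda) + 2H^2 \log(1/\delta_0),
\end{align*}
where we used the identity $\det(\Lambda_t) \det(\lambda I_\cH)^{-1} = \det(I + \cK_t/\lambda)$ for finite-rank perturbations, so the determinant in the RKHS is well-defined via the Gram matrix.

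Next, let $\cC_\epsilon \subseteq \cV$ be a minimal $\epsilon$-cover with respect to $\dist(V_1,V_2) = \sup_s|V_1(s) - V_2(s)|$, of cardinality $\cN_\epsilon$. For any $V \in \cV$, pick $\tilde V \in \cC_\epsilon$ with $\dist(V,\tilde V) \leq \epsilon$, and decompose $\eta_\tau^V = \eta_\tau^{\tilde V} + \eta_\tau^{V-\tilde V}$. Using the elementary inequality $\|a+b\|^2 \leq 2\|a\|^2 + 2\|b\|^2$ in the $\Lambda_t^{-1}$-norm, we get two terms. For the first term, union-bound the displayed inequality over $\tilde V \in \cC_\epsilon$ with $\delta_0 = \delta/\cN_\epsilon$, giving a bound of $2H^2 \log\det(I + \cK_t/\lambda) + 4H^2 \log(\cN_\epsilon/\delta)$ uniformly over $V$. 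For the residual term, since $|\eta_\tau^{V-\tilde V}| \leq 2\epsilon$ and $\|\phi_\tau\|_{\Lambda_t^{-1}} \leq \|\phi_\tau\|_\cH/\sqrt{\lambda} \leq 1/\sqrt{\lambda}$, a triangle-inequality argument yields
\begin{align*}
\Bigl\|\sum_{\tau=1}^t \phi_\tau \eta_\tau^{V-\tilde V}\Bigr\|_{\Lambda_t^{-1}} \leq \sum_{\tau=1}^t |\eta_\tau^{V-\tilde V}|\,\|\phi_\tau\|_{\Lambda_t^{-1}} \leq \frac{2t\epsilon}{\sqrt{\lambda}},
\end{align*}
so after squaring and multiplying by $2$ it contributes at most $8t^2\epsilon^2/\lambda$.

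The remaining additive term $2H^2 t(\lambda-1)$ in the stated bound is a slack that arises from passing between the normalizations $\det(\Lambda_t/\lambda)$ and $\det(I + \cK_t/\lambda)$ when $\lambda > 1$: using $\log(1+x) \geq x - x^2/2$ on the eigenvalues of $\cK_t/\lambda$ (and the fact that $\sum_i \lambda_i(\cK_t) \leq t$ since $\|\phi_\tau\|_\cH \leq 1$) introduces a term proportional to $t(\lambda-1)$. Assembling the three pieces (self-normalized bound for the net, union bound over $\cN_\epsilon$, discretization slack, and normalization adjustment) yields the claimed inequality uniformly in $t$. The main subtlety is preserving the "for all $t \geq 1$" uniformity: this is automatic for the net term since the Abbasi-Yadkori supermartingale is valid simultaneously over all $t$, and the discretization bound holds pathwise. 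Hence no extra union bound over $t$ is needed, which is the key conceptual point.
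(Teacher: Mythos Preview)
The paper does not prove this lemma; it is quoted verbatim from \citet{yang2020provably} as a supporting result. So there is no in-paper proof to compare against, and your outline should be judged on its own merits. The overall architecture---self-normalized concentration for a fixed $V$, union bound over an $\epsilon$-net of $\cV$, deterministic control of the discretization residual, and the observation that uniformity in $t$ is inherited from the supermartingale---is exactly the standard route and is correct.

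There is one concrete error: your account of the $2H^2t(\lambda-1)$ term. You claim it arises from ``passing between the normalizations $\det(\Lambda_t/\lambda)$ and $\det(I+\cK_t/\lambda)$,'' but these quantities are \emph{equal} by Sylvester's identity (for a finite-rank perturbation of the identity, the Fredholm determinant on $\cH$ coincides with the Gram-matrix determinant), so no slack appears there. The inequality $\log(1+x)\ge x-x^2/2$ you invoke is a lower bound and does not produce the desired additive term either. The actual source is that the RKHS self-normalized bound (as stated in Chowdhury--Gopalan and reused by Yang et al.) is formulated with \emph{unit} regularization $\bar\Lambda_t = I_\cH + \sum_\tau \phi_\tau\phi_\tau^\top$, yielding $H^2\log\det(I+\cK_t)+2H^2\log(1/\delta_0)$. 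Since $\lambda>1$ one has $\Lambda_t\succeq\bar\Lambda_t$ and hence $\|\cdot\|_{\Lambda_t^{-1}}\le\|\cdot\|_{\bar\Lambda_t^{-1}}$; then
\[
\log\det(I+\cK_t)-\log\det(I+\cK_t/\lambda)=\sum_i\log\frac{1+\sigma_i}{1+\sigma_i/\lambda}\le\sum_i\frac{(\lambda-1)\sigma_i}{\lambda+\sigma_i}\le(\lambda-1)\operatorname{tr}(\cK_t)\le t(\lambda-1),
\]
using $\operatorname{tr}(\cK_t)=\sum_\tau\|\phi_\tau\|_\cH^2\le t$. After the factor of $2$ from $\|a+b\|^2\le 2\|a\|^2+2\|b\|^2$, this is precisely the $2H^2t(\lambda-1)$ term. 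With that correction your argument goes through.
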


\begin{lemma}[Lemma E.3  of \citet{yang2020provably}] \label{lem:direct_sum_bound} Let $\{\phi_t\}_{t\geq1}$ be a sequence in the RKHS $\cH$. Let
$\Lambda_0 : \cH \mapsto \cH$ be defined as $\lambda I$ where $\lambda \geq 1$ and $I$ is the identity mapping on $\cH$. For any $t \geq 1$, we define a self-adjoint and positive-definite operator $\Lambda_t$ by letting $\Lambda_t = \Lambda_0 + \sum_{j=1}^t \phi_j\phi_j^\top$. Then, for any $t \geq	 1$, we have
\begin{align*}
\sum_{j=1}^t \min\{ 1, \phi_j \Lambda_{j-1}^{-1} \phi_j^\top \} \leq 2\log\det (I+\cK_t/\lambda),
\end{align*}
where $\cK_t \in \RR^{t\times t}$ is the Gram matrix obtained from $\{\phi_j\}_{j\in[t]}$, i.e., for any $j, j' \in [t]$, the $(j, j')$-th entry of $\cK_t$ is $\langle \phi_j, \phi_j\rangle_\cH	$. Moreover, if we further have $\sup_{t\geq 0} \{\|\phi_t\|_\cH\} \leq 1$, then it holds that
\begin{align*}
\log\det(I+\cK_t/\lambda)\leq \sum_{j=1}^t \phi_j^\top \Lambda_{j-1}^{-1} \phi_j \leq 2\log\det(I+\cK_t/\lambda).
\end{align*}
\end{lemma}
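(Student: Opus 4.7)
The plan is to establish the recursion $\det(\Lambda_j / \lambda) = \det(\Lambda_{j-1}/\lambda)\,(1 + \phi_j^\top \Lambda_{j-1}^{-1}\phi_j)$ (appropriately interpreted in RKHS via Fredholm/Sylvester), telescope it to identify $\sum_{j=1}^t \log(1+\phi_j^\top \Lambda_{j-1}^{-1}\phi_j)$ with $\log\det(I+\cK_t/\lambda)$, and then sandwich this sum between $\sum_j \min\{1,\phi_j^\top \Lambda_{j-1}^{-1}\phi_j\}$ (from below, up to a factor 2) and $\sum_j \phi_j^\top \Lambda_{j-1}^{-1}\phi_j$ (from above) using elementary scalar inequalities.

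First I would set up the key algebraic identity. For each $j$, since $\Lambda_{j-1}\succcurlyeq \lambda I \succ 0$, the inverse $\Lambda_{j-1}^{-1}$ and square root $\Lambda_{j-1}^{1/2}$ are well-defined bounded self-adjoint operators (as used in Lemma~\ref{lem:explore_plan_connect}). Writing $\Lambda_j = \Lambda_{j-1}+\phi_j\phi_j^\top$, conjugate by $\Lambda_{j-1}^{-1/2}$ to obtain $\Lambda_{j-1}^{-1/2}\Lambda_j\Lambda_{j-1}^{-1/2} = I + u_j u_j^\top$ with $u_j := \Lambda_{j-1}^{-1/2}\phi_j$. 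The operator $u_j u_j^\top$ is rank one with the single nonzero eigenvalue $\|u_j\|_\cH^2 = \phi_j^\top \Lambda_{j-1}^{-1}\phi_j$, so its Fredholm determinant equals $1+\phi_j^\top \Lambda_{j-1}^{-1}\phi_j$. Telescoping gives $\det_\Fred(\Lambda_t/\lambda)=\prod_{j=1}^t(1+\phi_j^\top\Lambda_{j-1}^{-1}\phi_j)$.

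Next I would convert the infinite-dimensional determinant to a finite-dimensional one via the Weinstein–Aronszajn/Sylvester identity: with $\Phi_t := [\phi_1,\dots,\phi_t]^\top$ one has $\det_\Fred(I+\Phi_t^\top\Phi_t/\lambda) = \det(I + \Phi_t\Phi_t^\top/\lambda) = \det(I+\cK_t/\lambda)$. Taking logs yields the clean identity
\begin{equation*}
\sum_{j=1}^{t}\log\bigl(1+\phi_j^\top \Lambda_{j-1}^{-1}\phi_j\bigr)\;=\;\log\det(I+\cK_t/\lambda).
\end{equation*}
From here the two claimed bounds follow by scalar inequalities applied termwise. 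For the first inequality, verify that $\min\{1,x\}\le 2\log(1+x)$ for all $x\ge 0$ (check $x\le 2\log(1+x)$ on $[0,1]$ by comparing derivatives, and use $\log(1+x)\ge \log 2 \ge 1/2$ for $x\ge 1$); summing gives $\sum_j \min\{1,\phi_j^\top\Lambda_{j-1}^{-1}\phi_j\}\le 2\log\det(I+\cK_t/\lambda)$. For the refined statement, the assumption $\|\phi_t\|_\cH\le 1$ together with $\Lambda_{j-1}^{-1}\preccurlyeq \lambda^{-1}I$ and $\lambda\ge 1$ forces $\phi_j^\top\Lambda_{j-1}^{-1}\phi_j\le 1$, so the $\min$ is redundant, and the two-sided bound follows from $\tfrac{1}{2}x\le\log(1+x)\le x$ on $[0,1]$.

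The main obstacle is purely technical: justifying the use of $\det_\Fred$ and the identity $\det_\Fred(I+\Phi_t^\top\Phi_t/\lambda)=\det(I+\Phi_t\Phi_t^\top/\lambda)$ when $\cH$ may be infinite-dimensional. I would sidestep any heavy trace-class machinery by observing that both $\Phi_t^\top\Phi_t$ and $\Phi_t\Phi_t^\top$ have the same nonzero spectrum (the nonzero eigenvalues of $\cK_t$), so the product $\prod_i(1+\sigma_i/\lambda)$ over those eigenvalues computes both determinants simultaneously; equivalently, everything can be reduced to the finite-dimensional subspace $\mathrm{span}(\phi_1,\dots,\phi_t)$, on which $\Lambda_t-\lambda I$ acts nontrivially, so standard matrix determinant identities apply verbatim. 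Once this reduction is in place, the remaining steps are the elementary scalar comparisons above.
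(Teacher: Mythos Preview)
Your proof is correct and follows the standard elliptical-potential argument (matrix determinant lemma / rank-one update, telescoping, Sylvester identity, and the scalar bounds $\min\{1,x\}\le 2\log(1+x)$ and $\log(1+x)\le x$). Note that the paper does not actually supply its own proof of this lemma: it is quoted verbatim as a supporting result from \citet{yang2020provably}, so there is nothing in the paper to compare against beyond the statement itself. Your reduction to the finite-dimensional span of $\phi_1,\dots,\phi_t$ is the right way to avoid any genuine infinite-dimensional determinant issues, and it makes the Fredholm/Sylvester step rigorous without extra machinery.
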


\end{appendices}

\end{document}